\newcommand{\oldnewam}[1]{\color{black}{#1} \color{black}}
\renewcommand{\l}{\left}
\renewcommand{\r}{\right}
\def\blfootnote{\xdef\@thefnmark{}\@footnotetext}
\newcommand\mathgreek[1]{\expandafter\@mathgreek\csname c@#1\endcsname}
\newcommand\@mathgreek[1]{%
	\ifcase#1\or C_1\or C_2\or C_3\or C_4\or C_5\or C_6\or
	C_7\or C_8\or C_9\or C_{10}\or C_{11}\or C_{12}\or C_{13}\or  C_{14}\or C_{15}\or
	C_{16}\or C_{17}\or C_{18}\or C_{19}\or C_{20}\or C_{21}\or C_{22}\or C_{23}\or C_{24}\or C_{25}\or C_{26}\or C_{27}\or C_{28}\or C_{29} \or C_{30} \or C_{31} \or C_{32} \or C_{33}
		\@ctrerror\fi}
\newcounter{greekvars}
\renewcommand\thegreekvars{\mathgreek{greekvars}}
\newcommand\constvar[1][]{%
	\if\relax\detokenize{#1}\relax
	\stepcounter{greekvars}%
	\else
	\refstepcounter{greekvars}\ltx@label{#1}%
	\fi
	\thegreekvars
}
\newcommand{\constref}[1]{\ref*{#1}}
\newcounter{NoTableEntry}
\renewcommand*{\theNoTableEntry}{NTE-\the\value{NoTableEntry}}
\renewcommand{\cal}{\mathcal}
\newcommand{\vectorgreek}[1]{\ensuremath \boldsymbol{#1}}
\long\def\symbolfootnote[#1]#2{\begingroup%
	\def\thefootnote{\fnsymbol{footnote}}\footnote[#1]{#2}\endgroup}
\DeclareMathOperator*{\argmin}{arg\,min}
\newcommand {\cX}{\cal{X}}
\newcommand {\cB}{\cal{B}}
\newcommand {\cE}{\cal{E}}
\newcommand {\cS}{\cal{S}}
\newcommand {\cI}{\cal{I}}
\newcommand {\urho}{\underline{\rho}}
\newcommand {\cF}{{\cal{F}}}
\newcommand {\cH}{{\cal{H}}}
\newcommand {\cK}{{\cal{K}}}
\newcommand {\cO}{{\cal{O}}}
\newcommand {\cM}{{\cal{M}}}
\newcommand {\cD}{{\cal{D}}}
\newcommand {\cQ}{{\cal{Q}}}
\newcommand {\cA}{{\cal{A}}}
\newcommand {\cP}{{\cal{P}}}
\newcommand {\cR}{{\cal{R}}}
\newcommand {\cL}{{\cal{L}}}
\newcommand {\cW}{{\cal{W}}}
\newcommand {\cV}{{\cal{V}}}
\newcommand{\sfB}{\mathsf{B}}
\newcommand{\sfp}{\mathsf{p}}
\newcommand{\sfP}{\mathsf{P}}
\newcommand{\sfm}{\mathsf{m}}
\newcommand{\vomega}{\vectorgreek{\omega}}
\newcommand {\beq}{\begin{equation}}
\newcommand {\eeq}{\end{equation}}
\newcommand {\beqn}{\begin{equation*}}
\newcommand {\eeqn}{\end{equation*}}
\newcommand {\bear}{\begin{eqnarray}}
\newcommand {\eear}{\end{eqnarray}}
\newcommand {\bearn}{\begin{eqnarray*}}
	\newcommand {\eearn}{\end{eqnarray*}}
\newcommand {\bal}{\begin{align}}
\newcommand {\eal}{\end{align}}
\newcommand {\baln}{\begin{align*}}
\newcommand {\ealn}{\end{align*}}
\newcommand {\bbmatrix}{\begin{bmatrix}}
	\newcommand {\ebmatrix}{\end{bmatrix}}
\newcommand {\bpmatrix}{\begin{pmatrix}}
	\newcommand {\epmatrix}{\end{pmatrix}}
\newcommand{\Indlr}[1]{\mathbbm{1}\l\{#1 \r\}}
\newcommand {\rR}{\mathbb{R}}
\newcommand {\rE}{\mathbb{E}}
\newcommand {\Var}{\mathbb{V}\mathrm{ar}}
\newcommand {\rP}{\mathbb{P}}
\newcommand {\rI}{\mathbb{I}}
\newcommand {\rN}{\mathbb{N}}
\newcommand{\Holder}{H\"{o}lder}
\newcommand{\rPlr}[1]{\rP\l\{ #1\r\}}
\newcommand{\rElr}[1]{\rE\l[ #1\r]}
\newcommand{\Varlr}[1]{\Var\l[ #1\r]}
\newcommand{\sgn}{\mathrm{sign}}
\newcommand{\bxi}{\bm{\xi}}
\newcommand{\lbeta}{\underline{\beta}}
\newcommand{\ubeta}{\bar{\beta}}
\newcommand{\tbeta}{\text{\scalebox{.6}{$\tilde{\beta}$}}}
\newtheorem{theorem}{Theorem}
\newtheorem{definition}[theorem]{Definition}
\newtheorem{lemma}[theorem]{Lemma}
\newtheorem*{lemma*}{Lemma}
\newtheorem{prop}[theorem]{Proposition}
\newtheorem{corollary}[theorem]{Corollary}
\newtheorem{assumption}{Assumption}
\newtheorem{example}{Example}
\newtheorem{remark}{Remark}
\numberwithin{theorem}{section}
\numberwithin{equation}{section}
\begin{document}
	\title{\vspace{-0.4cm}	
		Smoothness-Adaptive Contextual Bandits\\\vspace{0.3cm}
	}
	\author{
		{\sf Yonatan Gur}
		\\Stanford University\and
		{\sf Ahmadreza Momeni}
		\\Stanford University\and
		{\sf Stefan Wager}\thanks{Correspondence: {\tt ygur@stanford.edu}, {\tt amomenis@stanford.edu}, {\tt swager@stanford.edu}.}
		\\ Stanford University
		\vspace{0.5cm}}
	\date{\today}
	
	\maketitle

	\vspace{-0.1cm}
\setstretch{1.17}
\begin{abstract}
\noindent We study a non-parametric multi-armed bandit problem with stochastic covariates, where a key complexity driver is the smoothness of payoff functions with respect to covariates. Previous studies have focused on deriving minimax-optimal algorithms in cases where it is a priori known how smooth the payoff functions are. In practice, however, the smoothness of payoff functions is typically not known in advance, and misspecification of smoothness may severely deteriorate the performance of existing methods. In this work, we consider a framework where the smoothness of payoff functions is not known, and study when and how algorithms may adapt to unknown smoothness. First, we establish that designing algorithms that adapt to unknown smoothness of payoff functions is, in general, impossible. However, under a self-similarity condition (which does not reduce the minimax complexity of the dynamic optimization problem at hand), we establish that adapting to unknown smoothness is possible, and further devise a general policy for achieving smoothness-adaptive performance. Our policy infers the smoothness of payoffs throughout the decision-making process, while leveraging the structure of off-the-shelf non-adaptive policies. We establish that for problem settings with either differentiable or non-differentiable payoff functions, this policy matches (up to a logarithmic scale) the regret rate that is achievable when the smoothness of payoffs is known a priori.

\vspace{0.18cm}
\noindent{\sc Keywords}: Contextual multi-armed bandits, H\"{o}lder smoothness, self-similarity, non-parametric confidence intervals, non-parametric estimation, experiment design
	\end{abstract}

\setstretch{1.48}
\section{Introduction}

A well-studied dynamic optimization framework that captures the trade-off between new information acquisition (exploration) and optimization of payoffs based on available information (exploitation) is the multi-armed bandit (MAB) framework, originated by the work of \cite{thompson1933likelihood} and \cite{robbins1952aspects}. An important generalization of this framework, where the decision maker also has access to covariates that can be informative about the effectiveness of different actions, is typically referred to as the contextual MAB problem \citep{woodroofe1979one}. The contextual MAB framework has been applied for analyzing sequential experimentation in many application domains, including pricing \citep[e.g.,][]{cohen2016feature,qiang2016dynamic,ban2019personalized,bastani2019meta,javanmard2019dynamic, wang2019multi}, product recommendations \citep[e.g.,][]{chu2011contextual,Chandrashekar2017recommendation,bastani2018sequential,agrawal2019mnl,gur2019adaptive,kallus2020dynamic}, and healthcare \citep[e.g.,][]{tewari2017ads,chick2018bayesian,zhou2019tumor,bastani2020online}. 
	
Following \citet{woodroofe1979one}, most of the analysis of contextual MAB problems assumes a parametric (usually linear) model for the payoff functions that are associated with different actions; see, e.g., \citet{goldenshluger2013linear} and \citet{bastani2020online} for some notable results. Recently, however, there has been a growing interest in studying \emph{non-parametric} contextual MAB formulations, which make fewer structural assumptions, are typically more robust, and can be applied to a more general class of problems, especially when less is known about the structure of payoff functions. One of the main findings of this line of work is that, in non-parametric contextual MAB formulations, the smoothness of the payoff functions is a key driver of the difficulty of the dynamic optimization problem at hand \citep{rigollet2010nonparametric,perchet2013multi,hu2019smooth}.
Qualitatively, the smoother the payoff functions are, the further one may extrapolate payoff patterns over the covariate space---and the less one must explore in order to guarantee good performance.

We next illustrate this phenomenon through the problem of artwork selection on Netflix. When Netflix recommends a title, it also needs to select an image to display along with the recommendation. Different images may induce different probabilities of playing the movie. Given the personal viewing history of the user, for each recommended title Netflix aims to select imagery that maximizes the probability of playing that title. A simple version of this problem is described in \cite{Chandrashekar2017recommendation}, where two different artworks are available for the movie \textit{Good Will Hunting} (see the top parts of Figure \ref{fig-GWH}).

\begin{figure}[ht]
\centering	\includegraphics[width=0.8\textwidth]{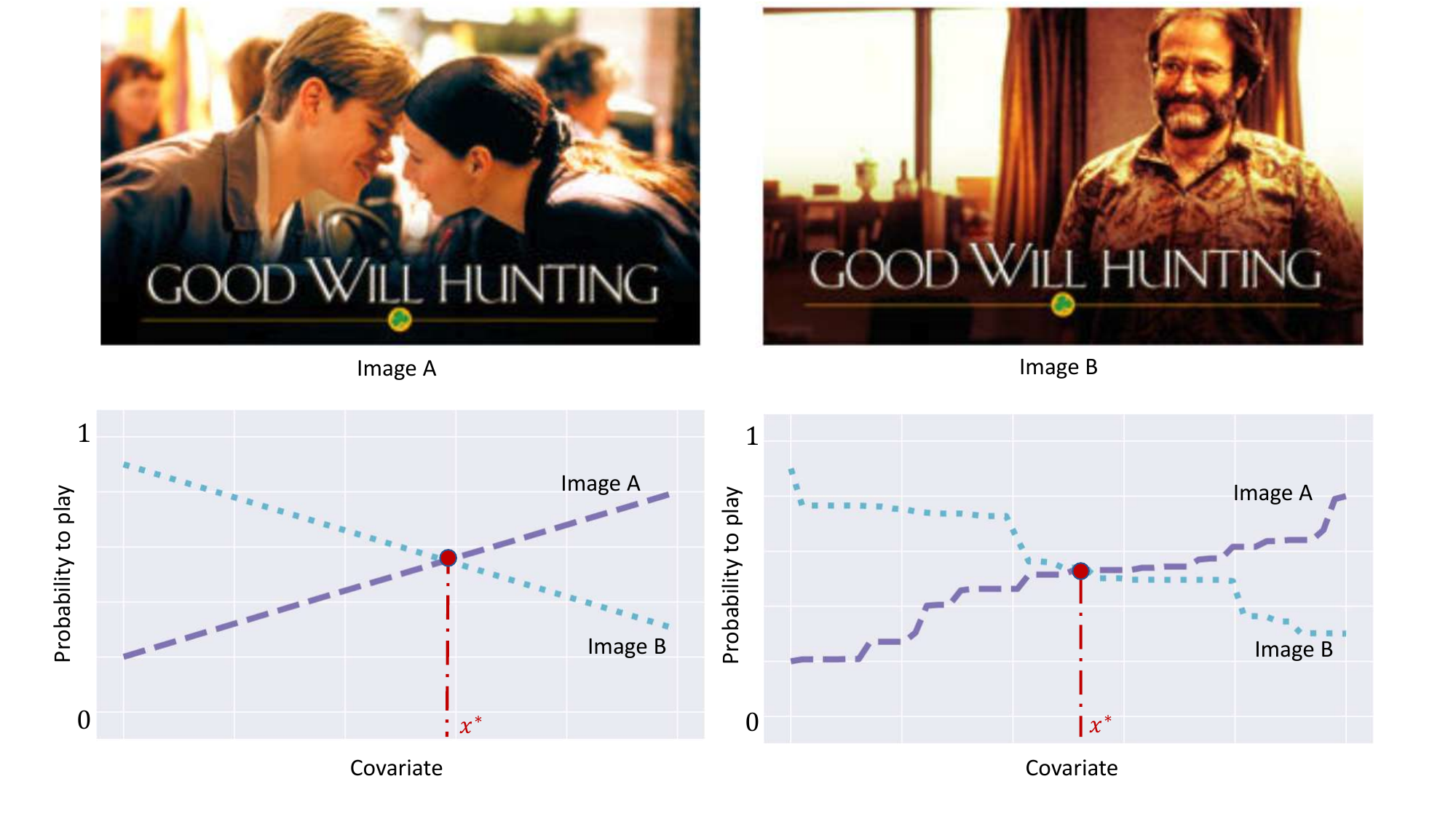}\vspace{-0.1cm}
\caption{\footnotesize \textit{Top:} Example of artwork selection on Netflix for recommending the movie \textit{Good Will Hunting} (for details and discussion see \citealt{Chandrashekar2017recommendation}); \textit{Bottom:} The probability of users playing the recommended title as a function of a covariate (the normalized difference of romance and comedy scores assigned to each user) when either image A (dashed line) or image B (dotted line) is shown, in two different scenarios: \textit{Bottom Left:} users' behavior changes linearly as a function of the covariate; \textit{Bottom Right:} users' behavior changes more abruptly as a function of the covariate. In each case, $x^\ast$ denotes the covariate at which the optimal imagery switches.}\vspace{-0.3cm}
\label{fig-GWH}
\end{figure}

The bottom of Figure~\ref{fig-GWH} includes two plots that illustrate different scenarios of how the probability of playing the title changes as a function of a one-dimensional covariate for two artwork options: image A and image B.
\color{black}(Examples of such covariates include the age of the viewer, their frequency of watching movies, or scores that are based on their particular viewing history, such as their tendency to watch romances versus comedies; for more details see \citealt{Chandrashekar2017recommendation}.)
\color{black}
Let $x^\ast$ denote a covariate in which the optimal imagery switches. In particular, for covariates that belong to the interval $[0,x^\ast]$ the optimal imagery to display is image B; otherwise, it is image A. In the scenario illustrated on the bottom-left part of Figure~\ref{fig-GWH}, users' behavior is smooth and changes linearly with respect to the covariate. In this case, any observation of a user's behavior, even when the covariate is not close to $x^\ast$, is informative and can be utilized to estimate the probability lines and the crossing point $x^\ast$. By contrast, the bottom-right part of Figure~\ref{fig-GWH} depicts a scenario where the probability of playing each title changes more abruptly as a function of the covariate. In this case, observations with covariates that are not close to $x^\ast$ are less informative and cannot be easily utilized to estimate the crossing point~$x^\ast$. As a result, the second scenario requires more experimentation in order to determine optimal decision regions. When payoffs are not monotone or smooth functions of the covariates, optimal decision regions might be non-convex and complex to identify, and required experimentation rates further increase.
	
Previous studies of non-parametric contextual MAB problems typically assume prior knowledge of the worst-case smoothness of payoff functions. A standard approach is to assume that payoff functions are $(\beta, L)$-H\"{o}lder (see Definition \ref{def-holder-class}) for some \textit{known} parameters $\beta$ and $L$, and develop policies that are predicated on this assumption. For example, \citet{rigollet2010nonparametric} and \citet{perchet2013multi} develop minimax rate-optimal algorithms when payoff functions are assumed to be Lipschitz or ``rougher" (that is, when $0 < \beta \leq 1$); more recently, \citet*{hu2019smooth} extends this analysis to the ``smoother'' case (where $\beta > 1$).

In practice, however, the class of functions to which payoff functions belong is often unknown, and misspecification of smoothness may cause significant deterioration in the performance of existing methods (see Example \ref{exp:cost-smoothness-misspecification} in \S\ref{subsec:costExample}). While underestimating the smoothness of payoff functions leads to excessive and unnecessary experimentation, overestimating the smoothness might lead to insufficient experimentation; both cases may result in poor performance relative to that which could have been achieved with accurate information on the smoothness. The focus of this paper is on studying when and how algorithms may \emph{adapt} to unknown smoothness, in the sense of achieving, without prior knowledge of the smoothness of payoffs, the best performance that is achievable when smoothness is a priori known.

\color{black}
\subsection{Main Contributions}\label{subsection-contributions}
\color{black}
Our contributions are in (1) formulating a non-parametric contextual MAB problem where the smoothness of payoff functions is a priori unknown;
(2) analyzing the complexity of adapting to smoothness, and establishing that smoothness adaptivity is in general impossible;
and (3) identifying a self-similarity condition that makes it possible to achieve smoothness adaptivity, and devising a general policy that leverages this condition to guarantee rate optimality without prior information on the smoothness of payoffs. More specifically, our contribution is along the following lines.

\textit{(1) Modeling.} We formulate a non-parametric contextual MAB problem where the smoothness of payoff functions is a priori unknown: the payoff functions are assumed to belong to a H\"{o}lder class of functions with some unknown H\"{o}lder exponent. We identify a policy as \emph{smoothness-adaptive} if for any problem instance it guarantees the optimal regret rate as a function of the H\"{o}lder exponent that characterizes that instance, up to a multiplicative term that is poly-logarithmic in the horizon length, and a multiplicative constant that may depend on other problem parameters (such as the dimension of the covariate space); see Definition \ref{def:adaptivity}. In that sense, smoothness-adaptive policies guarantee (up to a logarithmic factor) the minimax regret rate that characterizes the achievable performance when the smoothness parameter is a priori known. Our formulation allows for any arbitrary range of the smoothness parameter, and thus captures a large variety of real-world phenomena.

\textit{(2) Impossibility of adaptation.} We establish a lower bound on the best achievable performance when two different classes of payoff functions (characterized by two different smoothness exponents) are considered simultaneously. Through this lower bound we show that adaptively achieving rate-optimal performance uniformly over different classes of smooth payoff functions is impossible. In that sense, adapting to unknown smoothness carries a non-trivial cost in sequential experimentation. This is despite the fact that smoothness-adaptive estimation of non-parametric functions is possible (see, e.g., \citealt{lepskii1992asymptotically}). Thus, this impossibility result highlights the fundamental difference between the complexities of non-parametric function estimation and the non-parametric contextual MAB problem.

\color{black}
From a formal perspective, the lower bound we establish is based on reducing the problem at hand to a hypothesis-testing problem by introducing a novel construction of a set of problem instances. This set consists of a nominal problem instance with smoothness parameter $\gamma$ and some other problem instances with smoothness parameter $\beta \le \gamma$, each of which differs from the nominal one only over a specific region of the covariate space. These problem instances connect the amount of exploration to the ability to identify the correct smoothness parameter, and designed for establishing that if a policy guarantees rate-optimal performance over a class of problems with smooth payoff functions, it is likely to underexplore when payoff functions are ``rougher."
\color{black}

\textit{(3) Smoothness adaptivity and policy design under self-similar payoffs.} To advance beyond the general impossibility of adapting to unknown smoothness, we turn to consider smoothness adaptivity when payoff functions are \emph{self-similar}. Self-similarity has been used to study adaptivity problems in the statistics literature. For example, while constructing smoothness-adaptive confidence bands is impossible in general \citep{low1997nonparametric}, it becomes possible under a self-similarity assumption. In particular, smoothness-adaptive confidence bands can be obtained by applying Lepski's approach for identifying the optimal bandwidth (which corresponds to the true smoothness) by ``comparing" estimation bias and stochastic error \citep{lepski1997optimal}; see, e.g., \cite{picard2000adaptive}, \cite{gine2010confidence}, \cite{bull2012honest}, and \cite{bull2013adaptive}. In the absence of direct access to the estimation bias, Lepski's method provides a general approach for constructing a proxy for it through the absolute difference between estimators with different bandwidths. A new variant of Lepski's method that is tailored to the dynamic nature of our problem is advanced in our smoothness estimation sub-routine in~\S\ref{subsec:smoothness-estimation}.

\color{black}
First, we establish that self-similarity does not reduce the minimax complexity of the problem at hand. Then, we show that when payoffs are self-similar, it is possible to design smoothness-adaptive policies. We devise a general policy \oldnewam{termed Smoothness-Adaptive Contextual Bandits (\texttt{SACB})} that, under self-similarity, guarantees rate-optimal performance without prior information on the smoothness of payoffs. \color{black}
The \texttt{SACB} policy adapts Lepski's method to efficiently estimate the smoothness of self-similar payoff functions throughout the sequential decision process, while leveraging the structure of effective off-the-shelf non-adaptive policies that are designed to perform well under accurate smoothness specification.
\color{black}
We establish that when our policy is paired with off-the-shelf input policies that guarantee the optimal regret rate under accurate smoothness specification, it guarantees (up to a logarithmic factor) the latter regret rate without any prior information on the smoothness of payoffs.

\color{black}
The \texttt{SACB} policy and its analysis therefore show how a variant of Lepski's approach, which was designed for learning smoothness in \emph{static} settings, can be appropriately adapted for obtaining performance guarantees in a \emph{dynamic} operational setting that is applied in many practical settings. Particularly, we show that such an approach, together with leveraging the structure of effective non-adaptive policies, can lead to smoothness adaptivity and near-optimality in the non-parametric contextual MAB problem without prior knowledge of the smoothness of payoffs.
\color{black}
We demonstrate our approach by leveraging non-adaptive policies designed for payoff functions that are at most Lipschitz smooth \citep{perchet2013multi} and at least Lipschitz smooth \citep{hu2019smooth} to guarantee rate optimality without prior information on the underlying payoff smoothness.

\subsection{Related Literature}\label{sec:lit-rev}

\noindent \textbf{Parametric and Non-parametric Approaches to Contextual MAB.}
Most of the literature on contextual bandits assumes parametric payoff functions. Some researchers have studied this setting when covariates are independently drawn from an identical distribution. For example, \cite{goldenshluger2013linear}, \cite{bastani2017mostly}, and \cite{bastani2020online} consider linear payoff functions. On the other hand, \cite{langford2008epoch} and \cite{dudik2011efficient} study the problem of finding the best mapping from covariates to arms among a finite set of hypotheses. In addition, \cite{wang2005bandit} considers a general relationships between the parameters of payoff functions and covariates.
In contrast to these studies, some other papers consider settings with covariates that are selected by an \textit{adversary}; see \cite{bubeck2012regret} and the references therein. 

In addition to these parametric approaches, the contextual MAB problem has also been addressed from a \textit{non-parametric} point of view to account for general relationships between covariates and mean rewards. \cite{yang2002randomized}, which initiated this line of research, combined an $\epsilon$-greedy-type policy with non-parametric estimation methods such as nearest neighbors to achieve \textit{strong consistency}. This solution concept ensures that the total reward collected by the agent is almost surely asymptotically equivalent to those obtained by always pulling the best arm. Following this work, stronger results have been established for the regret rate. \cite{rigollet2010nonparametric} introduces the UCBogram policy, which decomposes the covariate space into bins and follows a traditional UCB policy in each bin separately. \cite{perchet2013multi} improves upon this result by introducing the Adaptively Binned Successive Elimination (\texttt{ABSE}) policy, which implements an increasing refinement of the covariate space and achieves the minimax regret rate. Recently, \cite{hu2019smooth} extends this framework to the case of smooth differentiable functions. Finally, \cite{reeve2018k} proposes a kNN-UCB policy that achieves the minimax regret rate and also adapts to the intrinsic dimension of data. All these studies, however, assume that the smoothness of the payoff functions is known a priori.

\textbf{Non-parametric Continuum-Armed Bandit.} A problem in the literature that shares some similarities with the non-parametric contextual MAB problem is the non-parametric continuum-armed bandit problem (see, e.g., \citealt{agrawal1995continuum} and  \citealt{bubeck2009pure}), where, given a single unknown function $f:\rR^d\rightarrow \rR$, at each time $t$ the agent selects an action $a_t$ from the continuous action set, and then collects and observes a noisy reward with mean $f(a_t)$. While the continuum-armed bandit problem and the contextual bandit problem we consider here are fundamentally different problems, they share the feature that smoothness plays a key role in terms of minimax complexity; see, e.g., \cite{kleinberg2005nearly}, \cite{auer2007improved}, and \cite{bubeck2009online}. More recently, \cite{locatelli2018adaptivity} studies the problem of adapting to unknown smoothness in a continuum-armed bandit problem in the case where the unknown function $f(\cdot)$ is at most Lipschitz smooth, and shows that under conventional assumptions it is impossible to minimize the cumulative regret rate when the underlying smoothness is unknown.

\textbf{Model Selection in Contextual MAB.} One may view selecting the right smoothness parameter as a model selection problem. There are a few recent works that study such problems in settings that are different from ours. For instance, \cite{chatterji2020osom} considers a linear contextual MAB where the payoffs may or may not depend on covariates, and provides a policy that can achieve optimality simultaneously for both cases. In addition, \cite{foster2019model} studies the problem of adapting to the dimension of the class of payoff functions in a linear contextual MAB setting.

\textbf{Adaptive Non-parametric Methods.}
For the general theory on adaptive non-parametric estimation we refer the reader to \cite{lepskii1992asymptotically}. This line of research includes various approaches. For example, \cite{donoho1994ideal}, \cite{donoho1995wavelet}, and \cite{juditsky1997wavelet} deploy techniques based on wavelets, \cite{lepski1997optimal} proposes a kernel-based method, and \cite{goldenshluger1997spatially} develops a method that is based on local polynomial regression.

A related line of research studies the construction of adaptive non-parametric confidence intervals. The work of \cite{low1997nonparametric} showed that, in general, it is impossible to construct adaptive confidence bands simultaneously over different classes of H\"{o}lder functions; for recent results on the impossibility of adaptive confidence intervals, see \citet{armstrong2018optimal} and the references therein. Following that work, several studies have focused on identifying conditions under which adaptive confidence band construction is feasible. A well-studied condition is that of self-similarity, which is first used in this covariate by \cite{picard2000adaptive} using wavelet methods for pointwise purposes. Later on, self-similarity is also used by \cite{gine2010confidence} to construct confidence bounds over finite intervals. \oldnewam{In addition, \cite{bull2012honest} and \cite{bull2013adaptive} use a self-similarity condition in the covariate of constructing honest and adaptive confidence bands.} Aside from these studies, self-similarity is used in a variety of other non-parametric problems and applications, including high-dimensional sparse signal estimation \citep{nickl2013confidence}, binary regression \citep{mukherjee2018optimal}, and $L_p$-confidence sets \citep{nickl2016sharp}, to mention a few.

In a contextual MAB setting, \citet{qian2016randomized} are the first to consider a self-similarity condition for establishing performance guarantees without precise smoothness knowledge in a non-differentiable case (with $0<\beta\le 1$). By applying a standard version of Lepski's method \citep{lepski1997optimal, picard2000adaptive, gine2010confidence, bull2012honest, bull2013adaptive}, they propose a policy with a multiplicative smoothness adaptivity ``cost" of order $\l(\log T\r)^{c^\prime(d)}$, where $T$ is the horizon length and $c^\prime(d)$ is a function that grows quadratically with the covariate dimension $d$ (which tends to be large in practical settings). Since $c^\prime(d) $ scales with $d$, the method they provide is not smoothness adaptive in the sense that is defined in the present paper. We propose a novel meta-policy based on a variant of Lepski's method that is tailored to the dynamic nature of the problem at hand, and establish for it a multiplicative smoothness adaptivity ``cost" of order $\l(\log T\r)^{c}$, where $c$ is a constant that does not grow with the dimension $d$. Our paper further grounds self-similarity as an important condition for adapting to unknown smoothness through: $(i)$~establishing that, in general, smoothness-adaptive policy design is impossible without imposing additional conditions; $(ii)$ showing that self-similarity does not reduce the minimax complexity of the problem; and $(iii)$ showing that self-similarity can be leveraged in a new way that allows the design of smoothness-adaptive policies for a general class of problems.

	\vspace{-0.1cm}
\section{Problem Formulation}\label{sec: prob_form}\vspace{-0.1cm}
	
We next formulate the non-parametric contextual MAB problem with unknown smoothness. \S\ref{sec-model-assumptions} includes our main modeling assumptions. In \S\ref{subsec:costExample} we discuss and illustrate the performance reduction that is caused by misspecifying the smoothness under existing methods. In \S\ref{subsec:smoothness-adaptive} we formalize the adaptivity notion that is used as a policy design goal in the analysis that will follow.

\medskip
\noindent \textbf{Reward and Feedback Structure.} Let $\cK = \{1,2\}$ be a set of actions (arms) and let $\mathcal{T} = \{1,\dots,T\}$ denote a sequence of decision epochs. (We focus on a setting with two actions only for ease of exposition and analysis and expect all results to hold for any action set of finite cardinality.) At each time period $t\in\cal{T}$, a decision maker observes a covariate $X_t \in  [0,1]^d$ that is realized according to an unknown distribution $\bm{\mathrm{P}}_X$, and then selects one of the two actions. When an action $k\in\cK$ is selected at time $t\in\cal{T}$, a reward
$
Y_{k,t} \sim \bm{\mathrm{P}}^{(k)}_{Y|X}
$
is realized and observed such that $Y_{k,t} \in \{0,1\}$, where $\bm{\mathrm{P}}^{(k)}_{Y|X}$ denotes the payoff distribution conditional on the covariate $X_t$ and the selected action $k$. Equivalently, the rewards $Y_{k,t}$ may be expressed as follows:\vspace{-0.15cm}
\[
Y_{k,t} = f_k(X_t) + \epsilon_{k,t},\vspace{-0.10cm}
\]
where $f_k(X_t)=\rElr{Y_{k,t} \;\middle|\; X_t}$ and $\epsilon_{k,t}$ is a random variable such that $\rElr{\epsilon_{k,t} \;\middle|\; X_t} = 0$. The conditional distributions $\bm{\mathrm{P}}^{(k)}_{Y|X}$ and the payoff functions $f_k$ are assumed to be unknown.

\medskip
\noindent \textbf{Admissible Policies.} Let $U$ be a random variable defined over probability space $(\mathbb{U}, \mathcal{U}, \bm{\mathrm{P}}_u)$. Let
$\pi_t:[0,1]^{d \times t} \times [0,1]^{t-1} \times \mathbb{U} \rightarrow \mathcal{K}$ for $t = 1,2,3,\dots$ be a sequence of measurable functions given by\vspace{-0.05cm}
\[
\pi_t = \begin{cases}
\pi_1(X_1, U) & t=1,\\
\pi_t(X_{t}, \dots, X_{1}, Y_{t-1}, \dots, Y_1 ,U) & t=2,3,\dots
\end{cases}.
\]

\noindent (We abuse notation by also denoting the action at time $t$ by $\pi_t \in \mathcal{K}$.) The mappings $\{\pi_t;t=1,\dots,T\}$ and the distribution $\bm{\mathrm{P}}_u$ together define the class of admissible policies, denoted by $\Pi$.

\medskip
\noindent \textbf{Performance.} For a problem instance $\sfP=\l(\bm{\mathrm{P}}_X,\bm{\mathrm{P}}^{(1)}_{Y|X},\bm{\mathrm{P}}^{(2)}_{Y|X}\r)$, let $\pi^\ast(\sfP) = (\pi^\ast_t(\sfP); t=1,2,\dots)$ denote the oracle rule, which under knowledge of the problem instance $\sfP$ (including the functions $f_k$), prescribes at each time $t$ the best action given the realized covariate $X_t$; that is, $\pi_t^\ast(\sfP) = \arg\max_{k\in \mathcal{K}} f_{k}(X_t)$ for all~$t\in\cal{T}$.
The performance of a policy $\pi=\{\pi_t;\;t=1,\dots,T\}$ is measured in terms of expected regret relative to the oracle performance:\vspace{-0.1cm}
\[
\mathcal{R}^\pi(\sfP;T) \coloneqq \mathbb{E}^\pi \left[ \sum\limits_{t=1}^T f_{\pi^\ast_t(\sfP)}(X_t) - f_{\pi_t}(X_t) \right].\vspace{-0.1cm}
\]
A prominent characteristic of a given problem instance $\sfP$ that directly impacts the achievable regret rate is the smoothness with which the payoff functions $f_1$ and $f_2$ and, correspondingly, the conditional distributions $\bm{\mathrm{P}}^{(k)}_{Y|X}$ and $\bm{\mathrm{P}}^{(2)}_{Y|X}$ vary over the covariate space. This characteristic is formulated in \S\ref{sec-model-assumptions}, along with other key model assumptions.

\subsection{Model Assumptions}\label{sec-model-assumptions}
We next detail our main model assumptions, which are conventional in the non-parametric contextual MAB literature (see, e.g., \citealt{perchet2013multi}). Our first model assumption addresses the smoothness of payoff functions. Before advancing it, we first formalize how payoff functions can change as a function of the covariates, using H\"{o}lder smoothness. For any multi-index $s=(s_1,\dots,s_d) \in \rN^d$ and any $x=(x_1,\dots,x_d)\in \rR^d$, we define $|s| = \sum_{i=1}^d s_i$, $s! = s_1! \dots s_d!$, $x^s=x_1^{s_1}\dots  x_d^{s_d}$, and $\|x\| = \l(x_1^2+\dots +x_d^2\r)^{\frac{1}{2}}$. Let $D^s$ denote the differential operator $D^s\coloneqq \frac{\partial^{s_1+\dots+s_d}}{\partial x_1^{s_1}\dots \partial x_d^{s_d}}$.
Let $\beta > 0$. Denote by $\lfloor \beta \rfloor$ the maximal integer that is strictly less than $\beta$, e.g., $\lfloor 1 \rfloor = 0$. For any $x\in \rR^d$ and any $\lfloor \beta \rfloor$ times continuously differentiable function $g(\cdot)$ on $\rR^d$, we denote by $g_x$ its Taylor expansion of degree $\lfloor \beta \rfloor$ at point $x$:
$
g_x(x^\prime)
\coloneqq
\sum_{|s| \le \lfloor \beta \rfloor} \frac{(x-x^\prime)^s}{s!} D^sg(x).
$

\begin{definition}[H\"{o}lder functions]\label{def-holder-class}
	The H\"{o}lder class of functions $\cH_\cX(\beta,L)$ for the parameters $\beta>0$ and $L>0$ and the set $\cX\subseteq \rR^d$ is defined as the set of functions $f:\cX \rightarrow \rR$ that are $\lfloor \beta \rfloor$ times continuously differentiable and, for any $x,x^\prime \in \cX$, satisfy the following inequality:
	\[
	\l| f(x^\prime) -f_x(x^\prime) \r|
	\le
	L\|x-x^\prime\|_\infty^{\beta}.
	\]
	Furthermore, let $\cH_\cX(\beta)\coloneqq \bigcup\limits_{0\le L<\infty} \cH_\cX(\beta,L)$. We drop the indication $\cX$ whenever $\cX=[0,1]^d$.
\end{definition}
\begin{assumption}[Smoothness]\label{assumption-Holder-smoothness}
	The payoff functions $f_k$, $k\in \cK$, belong to the H\"{o}lder class of functions $\cH(\beta, L)$ for some $L>0$ and $\beta \in [\lbeta, \ubeta]$ with $0<\lbeta\le 1$.
\end{assumption}

\noindent Our second assumption requires the distribution of covariates to be bounded from above and away from zero. Consequently, in every region of the covariate space, sufficiently many samples can be collected to estimate the payoff functions.
\begin{assumption}[Covariate distribution]\label{assumption-covar-dist}
	The distribution $\bm{\mathrm{P}}_X$ is equivalent to the Lebesgue measure on $[0,1]^d$; that is, there exist constants $0 < \underline{\rho} \le \bar{\rho} $ such that $p_X$, the density of~$\bm{\mathrm{P}}_X$, satisfies $\underline{\rho} \le p_X(x) \le \bar{\rho}$ for all $x \in [0,1]^d$.
\end{assumption}

\noindent Our third assumption, known as the \textit{margin} condition, captures the interplay between the payoff functions and the covariate distribution.

\begin{assumption}[Margin condition]\label{assumption-margin}
	There exist some $\alpha >0$
	and $C_0 > 0 $ such that
	\[
	\bm{\mathrm{P}}_X\l\{
	0 < \l| f_1(X) - f_2(X)\r| \le \delta
	\r\}
	\le
	C_0 \delta^\alpha, \qquad \forall \, 0< \delta \le 1.
	\]
\end{assumption}
\noindent The mass of covariates near the decision boundary is a key complexity driver: the larger the parameter~$\alpha$, the faster this mass shrinks near the boundary, and the easier the problem becomes. Together, the above three assumptions characterize the general class of problems that we consider.

\begin{definition}[Class of problems]\label{def:class-prob}
	For any $\beta\ge0$ and $\alpha\ge 0$, we denote by $\cP(\beta, \alpha, d) = \cP(\beta, L, \alpha, C_0, \underline{\rho}, \bar{\rho})$ the class of problems $\sfP=\l(\bm{\mathrm{P}}_X,\bm{\mathrm{P}}^{(1)}_{Y|X},\bm{\mathrm{P}}^{(2)}_{Y|X}\r)$ that satisfy Assumption~\ref{assumption-Holder-smoothness} for $\beta$ and $L>0$, Assumption~\ref{assumption-covar-dist} for some $\bar{\rho} \ge \underline{\rho}>0$, and Assumption~\ref{assumption-margin} for $\alpha$ and some $C_0>0$.
\end{definition}

\noindent It is worth noting the relation between the smoothness condition and the margin condition. The smoothness of payoff functions also determines how they might change near the decision boundary, which affects the mass of covariates in that region. That is, smooth payoff functions (large $\beta$) imply a larger mass of covariates near the decision boundary (small $\alpha$). This relationship is formalized in the following proposition, which is a simple extension of Proposition 3.1 in \cite{perchet2013multi}.

\begin{prop}[Margin condition and smoothness]\label{prop-margin-smoothness-relationship}
	Assume that Assumption \ref{assumption-Holder-smoothness} holds with parameters $(\beta,L)$, and that Assumption \ref{assumption-margin} holds with parameter $\alpha$. Then, the following statements hold:
	\begin{enumerate}
		\item If $\alpha \cdot \min\left\{1,\beta\right\} >1$, then a given action is either always or never optimal, and the oracle policy $\pi^\ast$ dictates selecting only one action all the time;
		\item If $\alpha \cdot \min\left\{1,\beta\right\} \le 1$, then there exist problem instances in $\cP(\beta,\alpha,d)$ with non-trivial oracle policies.
	\end{enumerate}
\end{prop}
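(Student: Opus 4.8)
\medskip
\noindent\textbf{Proof strategy.} The plan is to reduce everything to a statement about sign changes of $g\coloneqq f_1-f_2$: since $g$ is continuous, the oracle $\pi^\ast$ pulls a single action at every context if and only if $g$ does not change sign on $[0,1]^d$, so Part~1 asserts that $\alpha\min\{1,\beta\}>1$ precludes a sign change, while Part~2 asserts that otherwise a sign change is possible within the class. A preliminary step is to dispose of the case $\beta>1$: then $\lfloor\beta\rfloor\ge1$, so $f_1,f_2$—and hence $g$—are continuously differentiable on the compact cube $[0,1]^d$, whence $\|\nabla g\|_\infty<\infty$ and, by the mean-value inequality, $g$ is Lipschitz with respect to $\|\cdot\|_\infty$. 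Thus, writing $\beta'\coloneqq\min\{1,\beta\}$, in every case $g$ obeys $|g(x')-g(x)|\le L'\|x-x'\|_\infty^{\beta'}$ on $[0,1]^d$ for some finite $L'$, and it suffices to argue with $g$ and $\beta'$.

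For Part~1 I would assume $\alpha\beta'>1$ and, for contradiction, that $g$ attains both a strictly positive and a strictly negative value, and then establish
\[
\bm{\mathrm{P}}_X\!\left\{0<|g(X)|\le\delta\right\}\;\ge\;c\,\delta^{1/\beta'}\qquad\text{for all sufficiently small }\delta>0,
\]
with $c>0$ depending only on $(L',\beta',d,\underline{\rho})$; this is the content of Proposition~3.1 in \citet{perchet2013multi} in the case $\beta\le1$. The construction I have in mind for this bound: pick closed sub-cubes $Q_+\subseteq\{g>0\}$ and $Q_-\subseteq\{g<0\}$ of some fixed side length and at positive distance from $\partial[0,1]^d$, and join each $y\in Q_+$ to its translate $y+v\in Q_-$ by the segment in direction $v$ (the same $v$ for all $y$); along each such segment $g$ drops from a value bounded away from $0$ to a negative value, hence passes through $\delta/2$ at some point, and by the H\"older bound the ``layer'' about that point where $g\in[\delta/4,3\delta/4]\subseteq(0,\delta]$ has thickness at least of order $\delta^{1/\beta'}$; projecting onto the hyperplane orthogonal to $v$, these layers cover a set of $(d-1)$-dimensional measure of order one, and a Fubini argument over the fibers together with $p_X\ge\underline{\rho}$ yields the display. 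Combined with the margin condition $\bm{\mathrm{P}}_X\{0<|g(X)|\le\delta\}\le C_0\delta^\alpha$ this forces $c\,\delta^{1/\beta'}\le C_0\delta^\alpha$ for all small $\delta$, impossible once $1/\beta'<\alpha$. Hence $g$ has constant sign, and if $g\ge0$ (resp.\ $g\le0$) on $[0,1]^d$ then action~$1$ (resp.\ action~$2$) is optimal at every context and $\pi^\ast$ always pulls that action.

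For Part~2 I would exhibit an explicit member of $\cP(\beta,\alpha,d)$ with a non-trivial oracle. Take $\bm{\mathrm{P}}_X$ uniform on $[0,1]^d$ (so Assumption~\ref{assumption-covar-dist} holds with $\underline{\rho}=\bar{\rho}=1$), $f_2\equiv\tfrac12$, and
\[
f_1(x)\;=\;\tfrac12\;+\;c\,\mathrm{sgn}\!\left(x_1-\tfrac12\right)\left|x_1-\tfrac12\right|^{\beta'}
\]
for a small $c\in(0,\tfrac12]$, with $Y_{k,t}\mid X_t\sim\mathrm{Bernoulli}(f_k(X_t))$. Since $t\mapsto\mathrm{sgn}(t)|t|^{\beta'}$ is $\beta'$-H\"older on $[-\tfrac12,\tfrac12]$ (and, when $\beta>1$, $f_1$ is affine and therefore lies in every H\"older class), $f_1\in\cH(\beta,L)$ for a suitable $L$, so Assumption~\ref{assumption-Holder-smoothness} holds; and since $|f_1(X)-f_2(X)|=c\,|X_1-\tfrac12|^{\beta'}$, one has $\bm{\mathrm{P}}_X\{0<|f_1(X)-f_2(X)|\le\delta\}=\bm{\mathrm{P}}_X\{0<|X_1-\tfrac12|\le(\delta/c)^{1/\beta'}\}$, which is of order $\delta^{1/\beta'}$ and hence at most $C_0\delta^\alpha$ for a suitable $C_0$, using $\delta\le1$ and $1/\beta'\ge\alpha$; thus Assumption~\ref{assumption-margin} holds with the prescribed $\alpha$. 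So the instance lies in $\cP(\beta,\alpha,d)$, and as $f_1>f_2$ on $\{x_1>\tfrac12\}$ while $f_1<f_2$ on $\{x_1<\tfrac12\}$, its oracle is non-trivial.

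The routine parts are the $C^1\Rightarrow$ Lipschitz reduction and the bookkeeping checks of Assumptions~\ref{assumption-Holder-smoothness}--\ref{assumption-margin} for the Part~2 instance. The main obstacle is the geometric lower bound in Part~1: one must show that a sign change forces margin mass of order $\delta^{1/\beta'}$, not merely the $\delta^{d/\beta'}$ that a single $\|\cdot\|_\infty$-ball around a zero of $g$ would deliver. The decisive point—already in \citet{perchet2013multi}—is that the zero set of a sign-changing continuous $g$ separates two non-empty open sets, hence has non-trivial $(d-1)$-dimensional extent, and it is this extent that upgrades the exponent from $d/\beta'$ down to $1/\beta'$; turning it into the parallel-segments/Fubini estimate sketched above is the only real work.
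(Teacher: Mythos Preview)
Your proposal is correct and matches the paper's intended approach: the paper does not give its own proof but simply states the result as ``a simple extension of Proposition~3.1 in \citet{perchet2013multi},'' and your argument---reducing to $\beta'=\min\{1,\beta\}$ via the $C^1\Rightarrow$ Lipschitz observation when $\beta>1$, then invoking the Perchet--Rigollet level-set lower bound for Part~1 and an explicit $|x_1-\tfrac12|^{\beta'}$-type instance for Part~2---is precisely that extension. The parallel-segments/Fubini sketch you give for the $\delta^{1/\beta'}$ lower bound is the standard way to carry out the Perchet--Rigollet argument, and the convexity of $[0,1]^d$ guarantees the segments stay inside the domain, so there is no obstruction.
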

\noindent Based on this proposition, when $\alpha > \frac{1}{\min\{1,\beta \}}$, the problem becomes equivalent to the classic stochastic MAB problem without covariates. Hence, we will assume that $0<\alpha \le \frac{1}{\min\{1,\beta \}}$  in the rest of the paper.

\subsection{The Cost of Smoothness Misspecification}\label{subsec:costExample}

We next demonstrate the loss that might be incurred by existing policies when the smoothness parameter is misspecified. When the problem instance belongs to $\cP(\beta, \alpha,d)$, the minimax regret rate is
\begin{equation}
\label{eq:opt_rate}
\inf_{\pi\in\Pi}\sup\limits_{\sfP \in \cP(\beta, \alpha,d)}\mathcal{R}^\pi(\sfP;T)  = \Theta\left(T^{\zeta(\beta, \alpha,d)}\right), \ \ \ \text{where}\ \
\zeta(\beta, \alpha,d) = 1 - \frac{\beta(1 + \alpha)}{2\beta + d}.
\end{equation}
This characterization was established by \citet{rigollet2010nonparametric} and \citet{perchet2013multi}
in the case of $\beta \leq 1$. With further assumptions on the regularity of the decision regions, \citet{hu2019smooth} establishes a similar characterization for $\beta>1$, up to an additional multiplicative term of $\l(\log T\r)^{\frac{2\beta+d}{2\beta}}$ that appears in their upper bound.

\cite{perchet2013multi} provides the \texttt{ABSE} policy and establishes that, when tuned with the correct smoothness parameter, this policy guarantees the minimax regret rate in (\ref{eq:opt_rate}) whenever $\beta \leq 1$. The design of the \texttt{ABSE} policy and the performance it achieves are nevertheless predicated on accurate knowledge of smoothness. The following example demonstrates that when the smoothness parameter is misspecified, the \texttt{ABSE} policy cannot guarantee rate-optimality anymore.

\begin{example}[Cost of smoothness misspecification for \texttt{ABSE}]\label{exp:cost-smoothness-misspecification}
	Fix a smoothness parameter $0<\beta\le1$ and a margin parameter $\alpha\le \frac{1}{\beta}$. Let $\texttt{ABSE}(\tilde{\beta})$ denote the \texttt{ABSE} policy tuned by a misspecified smoothness parameter $0<\tilde{\beta}\le1$. Then, there exist constants $\underline{C}^{\texttt{ABSE}}$ and $T_0$ independent of $T$ such that for all $T\ge T_0$, the following holds:
	\begin{enumerate}
		\item If $ \tilde{\beta} < \beta \le 1$, then
		$
		\sup\limits_{\sfP \in \cP(\beta, \alpha, d)}\mathcal{R}^{\texttt{ABSE}(\tilde{\beta})}(\sfP;T)
		\ge
		\underline{C}^{\texttt{ABSE}} T^{\zeta(\tilde{\beta}, \alpha,d)}
		$;
		\vspace{-.05cm}
		\item If $0< \beta < \tilde{\beta}$, then
		$
		\sup\limits_{\sfP \in \cP(\beta, \alpha, d)} \mathcal{R}^{\texttt{ABSE}(\tilde{\beta})}(\sfP;T)
		\ge
		\underline{C}^{\texttt{ABSE}} T.
		$
	\end{enumerate}
\end{example}

\noindent When smoothness is underestimated, the worst-case regret rate is equal to the minimax regret rate over the class of problems with ``rougher" payoff functions, and when smoothness is overestimated, the worst-case regret is linear in the horizon length. Similar results can be obtained for other policies proposed in the literature, including for the case of $\beta>1$; in \S\ref{section:impossibility} we provide a broad impossibility result that generalizes this example.

\subsection{The Smoothness-Adaptive Property}\label{subsec:smoothness-adaptive}
Next, we formalize a notion of adaptivity as our policy design goal. We say that a policy is smoothness adaptive if, for any problem instance, it achieves the optimal regret rate as a function of the H\"{o}lder exponent $\beta$ that characterizes that instance, up to a multiplicative term that is poly-logarithmic in the horizon length and a multiplicative constant that may depend on other problem parameters.
\begin{definition}[Smoothness-adaptive policies]\label{def:adaptivity}
	Fix two H\"{o}lder exponents $\lbeta < \ubeta$, and dimension $d$. Define the set of problem instances\vspace{-0.1cm}
	\[
	\cP^{\mathrm{all}}\coloneqq
	\cP^{\mathrm{all}}(\lbeta, \ubeta, d)
	=
	\bigcup_{\lbeta \le \beta \le \ubeta} \;\bigcup_{0<\alpha \le 1 \vee \frac{1}{\beta}} \cP(\beta, \alpha, d).
	\vspace{-.1cm}
	\]
Given a family of problem instances $\cP \subseteq \cP^{\mathrm{all}}$, a policy $\pi \in \Pi$ is said to be smoothness adaptive if, for any $\lbeta \le \beta \le \ubeta$ and $0<\alpha \le \frac{1}{\min\{1,\beta\}}$, there exist some function $\iota(\beta, \lbeta, \ubeta, \alpha)>0$ independent of $d$ and some $\bar{C}>0$ such that\vspace{-0.1cm}
	\[
	\sup\limits_{\sfP \in \cP \cap \cP(\beta, \alpha,d)}\mathcal{R}^\pi(\sfP;T)  \le \bar{C} \l( \log T \r)^{\iota(\beta, \lbeta, \ubeta, \alpha)} T^{\zeta(\beta, \alpha,d)},
	\vspace{-.1cm}
	\]
where the function $\zeta(\beta, \alpha,d)$ is as given in (\ref{eq:opt_rate}).
\end{definition}

\noindent Without access to prior knowledge of the smoothness parameters, smoothness-adaptive policies guarantee (up to a logarithmic factor) the minimax regret rate that characterizes the achievable performance when smoothness parameters are a priori known. We note that a related property has been suggested and analyzed in the covariate of adaptive confidence bands, with respect to the width of the confidence band rather than the accumulated regret; see, e.g., \cite{nickl2013confidence}.

\section{Impossibility of Costless Adaptation to Smoothness}\label{section:impossibility}

In this section, we discuss the possibility of adapting to the smoothness of payoff functions. The objective we consider is to design policies that are smoothness adaptive (see Definition~\ref{def:adaptivity}), that is, that achieve the rate of convergence detailed in \eqref{eq:opt_rate} without prior knowledge of the smoothness parameter $\beta$ that characterizes the payoff functions $\left\{f_k\right\}$. Our first key result, however, shows that this is impossible. 

In the following analysis we consider a setting with a pair of smoothness parameters $0< \beta < \gamma $, for which we know that $\sfP$ is $\beta$-smooth, i.e., $\sfP \in \cP(\beta, \alpha, d)$, but we do not know whether $\sfP$ is also $\gamma$-smooth, i.e., whether $\sfP \in \cP(\gamma, \alpha, d)$.
We show that there exist pairs $(\beta, \, \gamma)$ such that any admissible policy $\pi$ that (nearly) achieves the optimal regret rate over the smoother class $\cP(\gamma, \alpha, d)$ cannot simultaneously (nearly) achieve optimal rates over the rougher one. Therefore, without imposing additional requirements on the class of problems $\cP(\beta, \alpha, d)$, no admissible policy can be \emph{smoothness adaptive}.

\begin{theorem}[Impossibility of adapting to smoothness]\label{theorem-impossibility-adaptation-smoothness}
	Fix two H\"{o}lder exponents $0<\beta<\gamma $ and some margin parameter $0< \alpha \le \max\{1, \frac{1}{\gamma}\} $. Then, there exists some $T_0$ such that for any horizon length $T \ge T_0$ and any admissible policy $\pi\in\Pi$ that achieves rate-optimal regret $\cO\left(T^{\zeta(\gamma, \alpha,d)}\right)$ over $\cP(\gamma, \alpha, d)$, there exists a constant $ \underline{C}>0$ independent of $T$ such that the following holds:
	\begin{enumerate}
		\item 	(At most Lipschitz smooth) If $0<\beta<\gamma \le1$, then
\color{black}
\vspace{-.1cm}
		\[
		\sup\limits_{\sfP \in \cP(\beta, \alpha, d)}\mathcal{R}^\pi(\sfP;T)
		\ge
		\underline{C}T^{1-\frac{\beta+d}{(\alpha+1)(2\beta +d  -\alpha\beta)}} \l[T^{\zeta(\gamma, \alpha,d)}\r]^{-\frac{\alpha(\beta+d)}{(\alpha+1)(2\beta +d  -\alpha\beta)}};
		\vspace{-0.8cm}
		\]	\color{black}
		\item (At least Lipschitz smooth)
		If $\beta = 1 < \gamma$, then
		\vspace{-.1cm}
		\oldnewam{
		\[
		\sup\limits_{\sfP \in \cP(1, \alpha, d)}\mathcal{R}^\pi(\sfP;T)
		\ge
		\underline{C}T^{\frac{\alpha}{\alpha+1}} \l[T^{\zeta(\gamma, \alpha,d)}\r]^{-\frac{\alpha}{\alpha+1}}.
		\vspace{-0.9cm}
		\]
	}
	\end{enumerate}
\end{theorem}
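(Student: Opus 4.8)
I would prove both parts through a single reduction to a composite hypothesis-testing problem with one null and many alternatives, in the spirit of the non-parametric minimax lower bounds of \citet{rigollet2010nonparametric} and \citet{perchet2013multi}, but with the new feature that the null and the alternatives are drawn from \emph{different} H\"older classes. Fix a scale $h>0$, to be optimized at the very end, and partition a fixed sub-region of $[0,1]^d$ into disjoint cubes $R_1,\dots,R_M$ of side comparable to $h$. Take the null $\sfP_0$ to be a carefully rescaled instance in $\cP(\gamma,\alpha,d)$, chosen so that the hypothesis $\cR^\pi(\sfP_0;T)=\cO(T^{\zeta(\gamma,\alpha,d)})$ genuinely limits how finely $\pi$ can probe the cubes. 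For each $S\subseteq[M]$ let $\sfP_S$ agree with $\sfP_0$ outside $\bigcup_{i\in S}R_i$ and, on each $R_i$ with $i\in S$, perturb $f_1-f_2$ by a ``rough bump'' --- a smooth cutoff of amplitude of order $h^{\beta}$ in Part~1 (of order $h$ in Part~2) that reverses the identity of the optimal arm on a constant fraction of $R_i$. The number and placement of the cubes are dictated by the requirement that every $\sfP_S$ still satisfy the margin condition with the target constants.

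\textbf{Main steps.} (i) \emph{Membership.} Check $\sfP_0\in\cP(\gamma,\alpha,d)$ and $\sfP_S\in\cP(\beta,\alpha,d)$: a bump of width $h$ and height $h^{\beta}$ is $(\beta,L)$-H\"older with $L$ independent of $h$, whereas a $\gamma$-smooth bump of the same width can have height only of order $h^{\gamma}\ll h^{\beta}$ --- this is exactly the gap between the two classes that a ``rough'' perturbation exploits --- and the margin estimates reduce to bounding volumes of the (pairwise disjoint) perturbed cubes and of the H\"older-forced transition layers around them. (ii) \emph{Budget from optimality on the smooth class.} Since $\pi$ incurs at most $\cO(T^{\zeta(\gamma,\alpha,d)})$ regret on $\sfP_0$, and each suboptimal pull inside the cubes costs $\sfP_0$ a controlled amount, the expected number of informative pulls $\pi$ can allocate to $\bigcup_i R_i$ under $\sfP_0$ is bounded; hence a constant fraction of the cubes receive fewer pulls than the $\Theta(h^{-2\beta})$ (in Part~2, $\Theta(h^{-2})$) needed to identify the sign of a gap of size $h^{\beta}$ (resp.\ $h$). (iii) \emph{Change of measure.} By the divergence decomposition for bandits --- the KL divergence between the observation laws under $\sfP_0$ and under the instance that plants a bump on $R_i$ equals the expected per-arm pull count in $R_i$ times the per-sample Bernoulli KL, of order $h^{2\beta}$ (resp.\ $h^{2}$) --- together with Pinsker's and Le Cam's inequalities, for a constant fraction of the cubes $\pi$ cannot distinguish $\sfP_0$ from that planted instance, and so plays the $\sfP_0$-optimal (hence, under the alternative, wrong) arm on $R_i$ with probability bounded away from zero. (iv) \emph{Regret and optimization.} By Assumption~\ref{assumption-covar-dist}, $\Theta(Th^{d})$ contexts fall in each $R_i$, so each misidentified cube contributes $\Theta(Th^{d}\cdot h^{\beta})$ (resp.\ $\Theta(Th^{d}\cdot h)$) to $\sup_S\cR^\pi(\sfP_S;T)$; summing over the misidentified cubes and then choosing $h$ to balance this against the budget from step (ii) yields the stated exponents. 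Part~2 is the $\gamma>1$ analogue of the same argument, with bump amplitude $h$ and with the decision-region regularity of \citet{hu2019smooth} additionally imposed on $\sfP_0$ so that ``nearly rate-optimal over $\cP(\gamma,\alpha,d)$'' is well-defined there --- which also explains why the hypothesis is stated with $\cO(\cdot)$ rather than a sharp constant, since the minimax rate for $\gamma>1$ carries an extra $(\log T)^{(2\gamma+d)/(2\gamma)}$ factor.

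\textbf{Main obstacle.} The difficulty is concentrated in the construction and the budget bookkeeping of steps (i)--(ii). One must design $\sfP_0$ to be simultaneously $\gamma$-smooth, margin-compliant with the \emph{same} constants as the target class, and ``hard'' enough that the rate-optimality hypothesis forces the claimed per-cube sampling budget, while ensuring that every perturbation keeps $\sfP_S\in\cP(\beta,\alpha,d)$ --- respecting, for instance, that a $\gamma$-smooth transition to an $h^{\beta}$-plateau must be spread over a layer of width of order $h^{\beta/\gamma}\gg h$, which in turn constrains how densely the cubes may be packed and hence $M$ --- and that the null-to-alternative KL is governed by the local pull count. Lining up the three scales (cube width $h$, bump amplitude, and number of cubes $M$) so that all of these hold at once and the final balance reproduces the precise exponents $1-\tfrac{\beta+d}{(\alpha+1)(2\beta+d-\alpha\beta)}$ and $\tfrac{\alpha}{\alpha+1}$ on $T$, together with the matching powers of $T^{\zeta(\gamma,\alpha,d)}$, is where essentially all the work lies; the information-theoretic part (steps (iii)--(iv)) is then routine. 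Tracking the logarithmic slack from \citet{hu2019smooth} in Part~2 is a further, milder, nuisance.
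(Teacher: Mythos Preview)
Your high-level plan --- null $\sfP_0\in\cP(\gamma,\alpha,d)$ with one arm suboptimal on a region, alternatives in $\cP(\beta,\alpha,d)$ obtained by planting a rough bump on a sub-cube, exploration budget from rate-optimality on the null, then KL/Le~Cam --- is exactly the paper's route for Part~1. Two departures matter.

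The first is the ``summing over the misidentified cubes'' in step~(iv). Your change-of-measure in step~(iii) compares $\sfP_0$ to each \emph{single-bump} alternative $\sfP_{\{i\}}$; the pull count in $R_i$ that enters the KL is the count under $\sfP_0$, which is the only instance on which the hypothesis controls $\pi$. You have no handle on pulls in $R_i$ under a multi-bump $\sfP_{S\setminus\{i\}}\in\cP(\beta,\alpha,d)$, so an Assouad-style sum does not go through here. The paper therefore does not sum: it uses pigeonhole to select \emph{one} cube $m^\ast$ with at most $\cS^\ast_{\gamma,\alpha}(T)/M$ suboptimal pulls under $\sfP_0$, and the final lower bound is the regret on the single instance $\phi_{m^\ast}$, namely $\Theta(T\,l^d\,\Delta)$ from that one cube. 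With $M\asymp\Delta^{\alpha-d/\beta}$ and $\Delta$ chosen so that $\Delta^2\cdot\cS^\ast_{\gamma,\alpha}(T)/M\asymp 1$, this single-cube term is what produces the stated exponent; summing over $M$ cubes would replace $\beta+d$ by $\beta(\alpha+1)$ in the numerator, which is a different (and unjustified) rate. Relatedly, ``each suboptimal pull costs a controlled amount'' is not how the budget is obtained --- near the edge of the null's bump the per-pull cost is arbitrarily small --- the paper instead converts regret to inferior sampling via $\cS^\pi\le C\,T^{1/(\alpha+1)}(\cR^\pi)^{\alpha/(\alpha+1)}$ \citep[Lemma~3.1]{rigollet2010nonparametric}, which is where all the $(\alpha+1)$ factors in the exponent come from.

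The second is Part~2. The paper does \emph{not} rerun the multi-cube argument there: the null is a pair of \emph{linear} payoffs crossing at $x_1=\tfrac12$, and there is a \emph{single} alternative with one triangular Lipschitz bump of width $\Delta$ on one side of the crossing; the proof is then a straight two-point Le~Cam with no partition and no pigeonhole. No decision-region regularity from \citet{hu2019smooth} is imposed on $\sfP_0$ --- linear payoffs satisfy it automatically --- and the $\cO(T^{\zeta(\gamma,\alpha,d)})$ hypothesis carries no hidden logarithm; it is used exactly as stated.
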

\noindent Theorem~\ref{theorem-impossibility-adaptation-smoothness}
establishes a lower bound on the achievable performance over a class of problems as a function of the performance over another class of problems with smoother payoff functions. This lower bound depends on the smoothness parameters of the two considered classes of payoff functions as well as the margin parameter $\alpha$. As the examples below illustrate, Theorem~\ref{theorem-impossibility-adaptation-smoothness} implies that there exist pairs of smoothness parameters for which adaptivity is impossible without further assumptions.

\begin{example}[At most Lipschitz smooth]
Part \textit{1} of Theorem~\ref{theorem-impossibility-adaptation-smoothness} can be simplified as follows:
	\vspace{-.1cm}
\begin{equation*}\label{eq:impossibility-simplified}
\sup\limits_{\sfP \in \cP(\beta, \alpha, d)}\mathcal{R}^\pi(\sfP;T)
\ge
\underline{C}^\prime T^{1-\frac{(\beta+d)  (2\gamma + d-\alpha\gamma)}{ (2\gamma+d)(2\beta+d-\alpha\beta)}},\vspace{-.1cm}
\end{equation*}
for some constant $\underline{C}^\prime>0$. Thus, if $\gamma=\frac{15}{100}$, $\beta=\frac{\gamma}{2}$, \oldnewam{$\alpha=\frac{1}{\gamma}$,} and $d=1$, the optimal regret rate with knowledge of smoothness over $\cP(\beta,\alpha,1)$ is \oldnewam{$\cO(T^{0.5})$}, while Part \textit{1} of Theorem \ref{theorem-impossibility-adaptation-smoothness} establishes a lower bound of order \oldnewam{$\Omega\l( T^{0.6183} \r)$} if the policy $\pi$ achieves rate optimal performance over $\cP(\gamma,\alpha,1)$.
\end{example}
\begin{example}[At least Lipschitz smooth]
	Part \textit{2} of Theorem \ref{theorem-impossibility-adaptation-smoothness} can be simplified as follows:
		\vspace{-.1cm}
	\begin{equation*}\label{eq:impossibility-simplified-Lipschitz-smoother}
	\sup\limits_{\sfP \in \cP(1, \alpha, d)}\mathcal{R}^\pi(\sfP;T)
	\ge
	\underline{C}^\prime T^{1-\frac{2\gamma +  d - \gamma \alpha }{2\gamma+d}},
		\vspace{-.1cm}
	\end{equation*}
for some constant $\underline{C}^\prime>0$. Thus, if $\gamma>1$, $\alpha=1$, and $d=1$, the optimal regret rate with knowledge of smoothness over $\cP(1,1,1)$ is $\cO(T^{\frac{1}{3}})$, while Part \textit{2} of Theorem \ref{theorem-impossibility-adaptation-smoothness} establishes a lower bound of order  if the policy $\pi$ achieves rate-optimal performance over $\cP(\gamma,1,1)$. Since $\frac{\gamma}{2\gamma + 1}> \frac{1}{3}$ for any $\gamma>1$, no policy can be simultaneously rate-optimal over both  $\cP(1,1,1)$ and $\cP(\gamma,1,1)$, for $\gamma>1$.
	\end{example}

\color{black}
\noindent Figures~\ref{fig-impossibility-proof-idea} and~\ref{fig-impossibility-proof-idea-at-least-Lipschitz} respectively depict types of at-least-Lipschitz-smooth and at-most-Lipschitz-smooth payoff functions under which the loss specified in Examples~2 and 3 is incurred. The instances depicted in these figures will be further discussed in \S\ref{subsec:prooflowerbound}.

\color{black}
We note that Theorem \ref{theorem-impossibility-adaptation-smoothness}
rules out adaptivity for some, but not necessarily all, pairs of smoothness parameters
$0 < \beta < \gamma$. Understanding whether there exist some pairs for which adaptivity
is possible---and, more broadly, providing a comprehensive characterization of adaptive
rates across mixtures of H\"{o}lder classes---would be of considerable interest.
In the present paper, however, we leave these questions for future work. In the next sections we turn our focus to payoff functions that are self-similar and show that---in this case---there exist policies that are smoothness adaptive with considerable generality.

\begin{remark} \label{remark-full-ffeedback}
		The general impossibility of adapting to unknown smoothness that is established in this section is a consequence of the partial feedback structure detailed in \S2 (often referred to as the bandit-feedback setting), where in each period an obsevation is collected only on the action that is selected in that period. By contrast, in Appendix~\ref{appendix-full-feedback} we show that if in each period the agent has access to reward observations from all the actions (often referred to as the full-feedback setting), then it is possible to adapt to payoff smoothness in the sense of Definition~\ref{subsec:smoothness-adaptive}.
	\end{remark}

\color{black}
\subsection{Key Ideas in the Proof of Theorem~\ref{theorem-impossibility-adaptation-smoothness}}\label{subsec:prooflowerbound}
\color{black}
The proof of Theorem \ref{theorem-impossibility-adaptation-smoothness} adapts to our framework ideas of identifying a worst-case nature ``strategy," while devising a novel construction of instances to reduce the problem to one of hypothesis testing. The proof of the theorem is deferred to the appendix, together with the proofs of all subsequent results. We next illustrate the key ideas of the proof.

\medskip
\noindent
\textbf{At-Most-Lipschitz-Smooth Payoffs.} We next detail the key ideas of the proof of Part 1 of the theorem for the case of $\alpha=\frac{1}{\gamma}$ and $d=1$; the construction of the worst-case instance in this setting is depicted in Figure~\ref{fig-impossibility-proof-idea}.

\begin{figure}[h]
	\centering
	\begin{subfigure}[t]{0.49\textwidth}
		\raisebox{-\height}{\includegraphics[width=\textwidth]{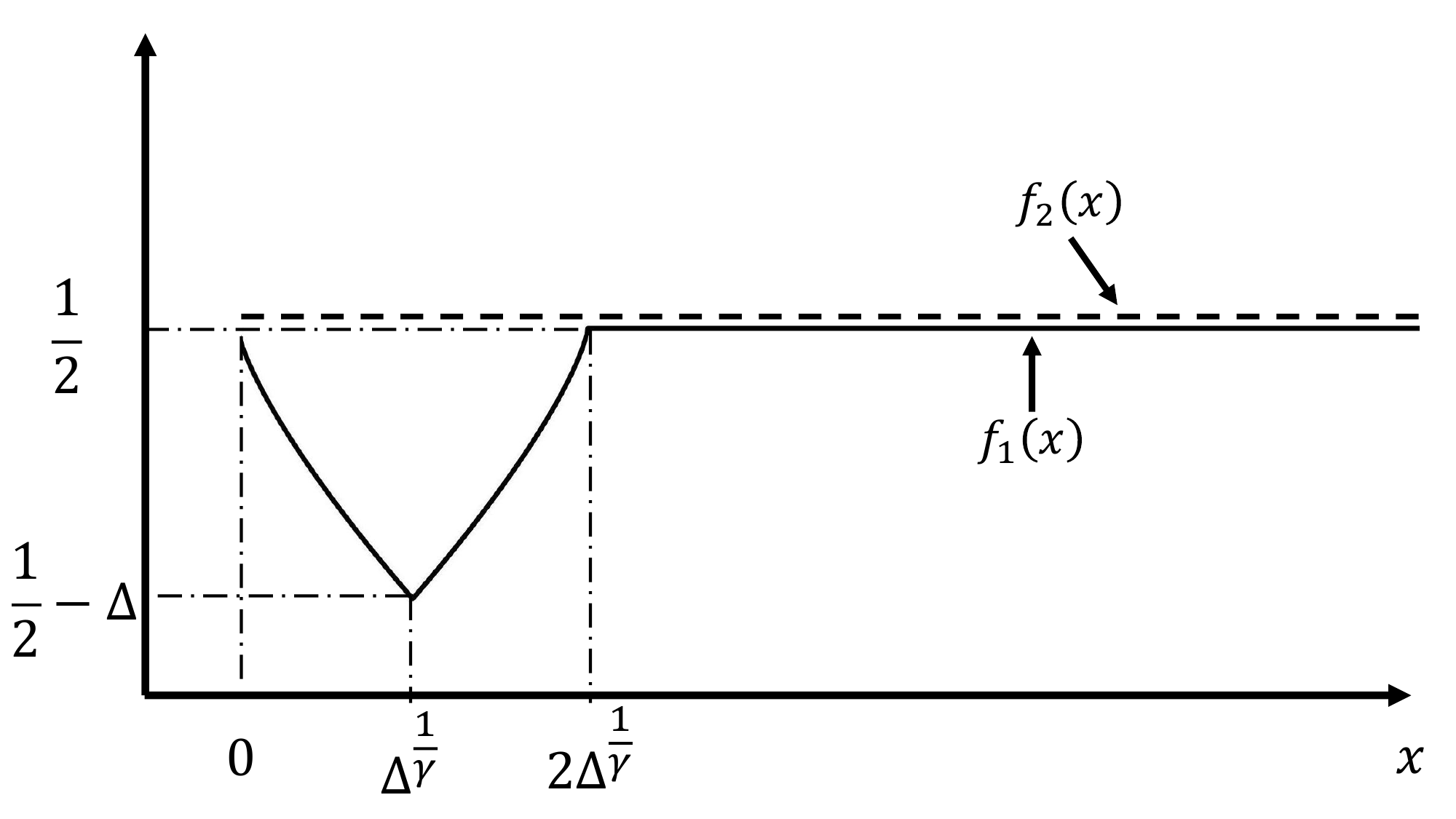}}
	\end{subfigure}
	\hfill
	\begin{subfigure}[t]{0.49\textwidth}
		\raisebox{-\height}{\includegraphics[width=\textwidth]{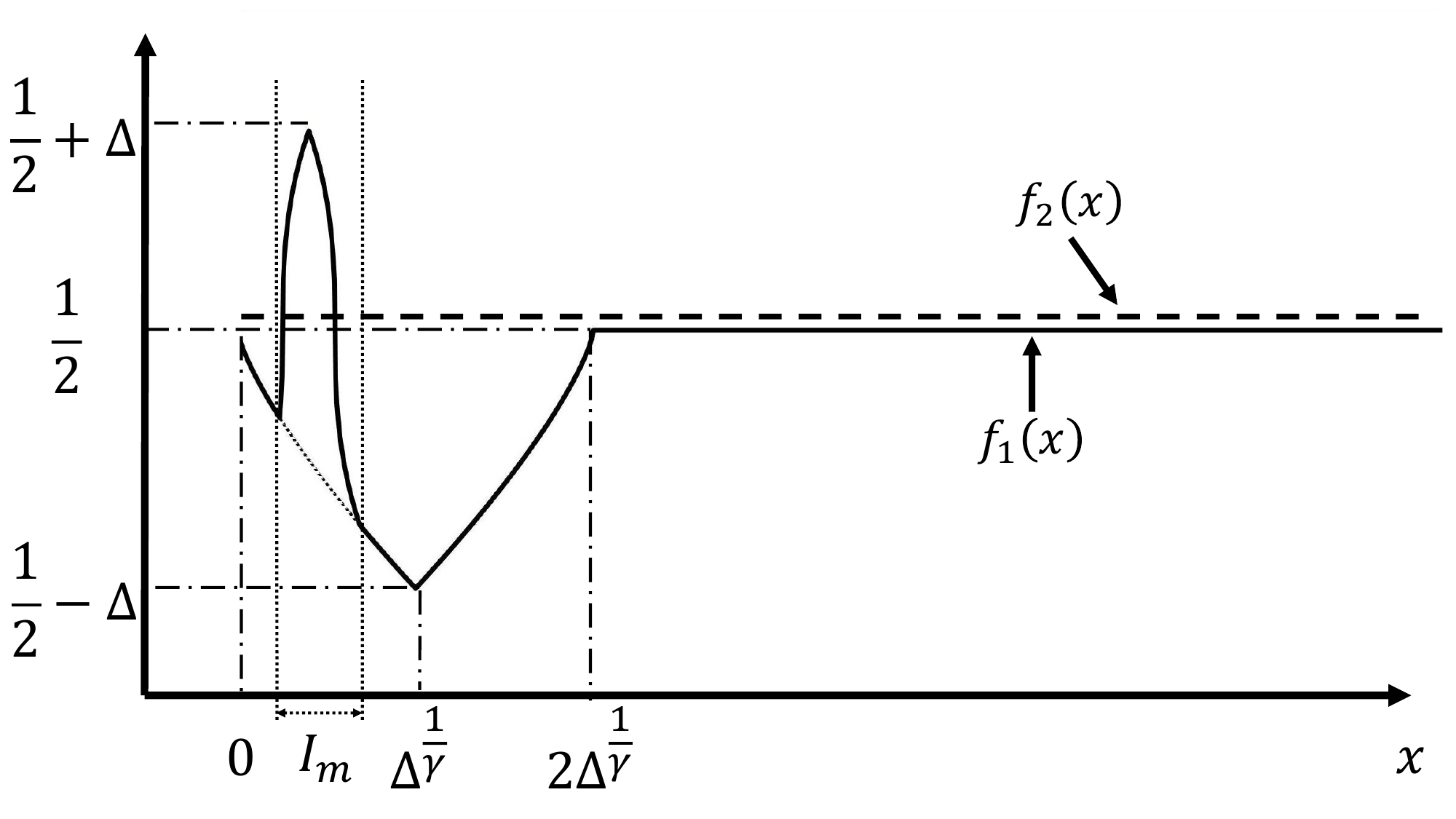}}
	\end{subfigure}
	\caption{\footnotesize Description of the worst-case instance constructed in the proof of Theorem \ref{theorem-impossibility-adaptation-smoothness} \oldnewam{for the at-most-Lipschitz-smooth case.} \textit{Left:} The payoffs of the nominal problem in $\cP(\gamma, \alpha,1)$; \textit{Right:} The payoffs of one of the alternative problems in~$\cP(\beta, \alpha,1)$.}\vspace{-0.2cm}
	\label{fig-impossibility-proof-idea}
\end{figure}

Fix a parameter $\Delta \le \frac{1}{4}$. First, consider a nominal problem instance in $\cP(\gamma, \alpha, 1)$ such that the first action's payoff function is $\frac{1}{2}$ for every covariate except for the interval $[0,2\Delta^{\frac{1}{\gamma}}]$, where it has a ``downward bump" and reaches its minimum, $\frac{1}{2}-\Delta$, and the second action's payoff function is $\frac{1}{2}$ everywhere. Furthermore, for each $1\le m\le M\coloneqq \lfloor \Delta^{\frac{1}{\gamma}-\frac{1}{\beta}}\rfloor$, consider a problem instance in $\cP(\beta, \alpha, 1)$ such that the payoff functions are equal to the aforementioned payoff functions everywhere except for the interval $I_m\coloneqq [2(m-1)\Delta^{\frac{1}{\beta}}, 2m\Delta^{\frac{1}{\beta}}]$, where the first action's payoff function has an ``upward bump" and reaches its maximum, $\frac{1}{2}+\Delta$, as depicted in Figure \ref{fig-impossibility-proof-idea}. That is, for the problem $m$, the first action is optimal over some segment of $I_m$ \oldnewam{with a gap of at least $\frac{\Delta}{2}$.} To meet its performance guarantees over $\cP(\gamma,\alpha,1)$, in at least one of the intervals $I_m$, the number of times $\pi$ selects action $1$ must be ``small." We denote one such interval by $I_{m*}$. Using this observation along with the fact that one can differentiate between the nominal problem described above and problem instance $m^\ast$ based only on the outcomes of action 1 in the interval $I_{m*}$, one may show that any admissible policy cannot distinguish between these two problem instances with strictly positive probability. This causes such a policy not to select action $1$ almost half of the times in which the realized covariates belong to the interval $I_m^\ast$. Interval $I_m^\ast$ contains a segment over which the first action is optimal for the problem $m^\ast$, which guarantees the regret bound stated in the theorem for a carefully selected value for the parameter $\Delta$.

\color{black}
\medskip
\noindent
\textbf{At-Least-Lipschitz-Smooth Payoffs.} The proof of Part 2 of the theorem follows a similar line of reasoning. We next detail the key ideas for the case of $d=1$; the construction of the worst-case instance in this setting is depicted in Figure~\ref{fig-impossibility-proof-idea-at-least-Lipschitz}.

\begin{figure}[h]
	\centering
	\begin{subfigure}[t]{0.49\textwidth}
		\raisebox{-\height}{\includegraphics[width=\textwidth]{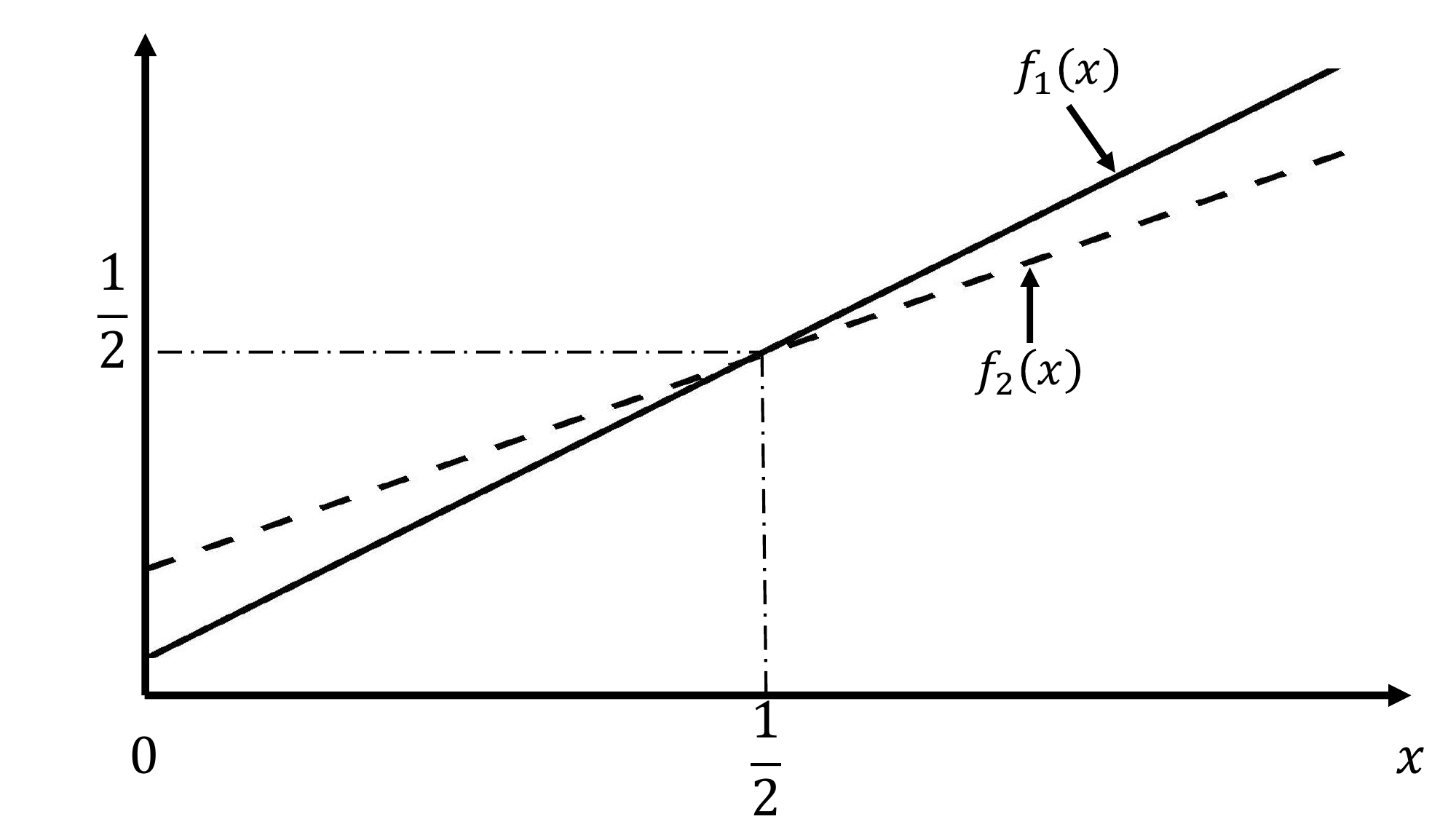}}
	\end{subfigure}
	\hfill
	\begin{subfigure}[t]{0.49\textwidth}
		\raisebox{-\height}{\includegraphics[width=\textwidth]{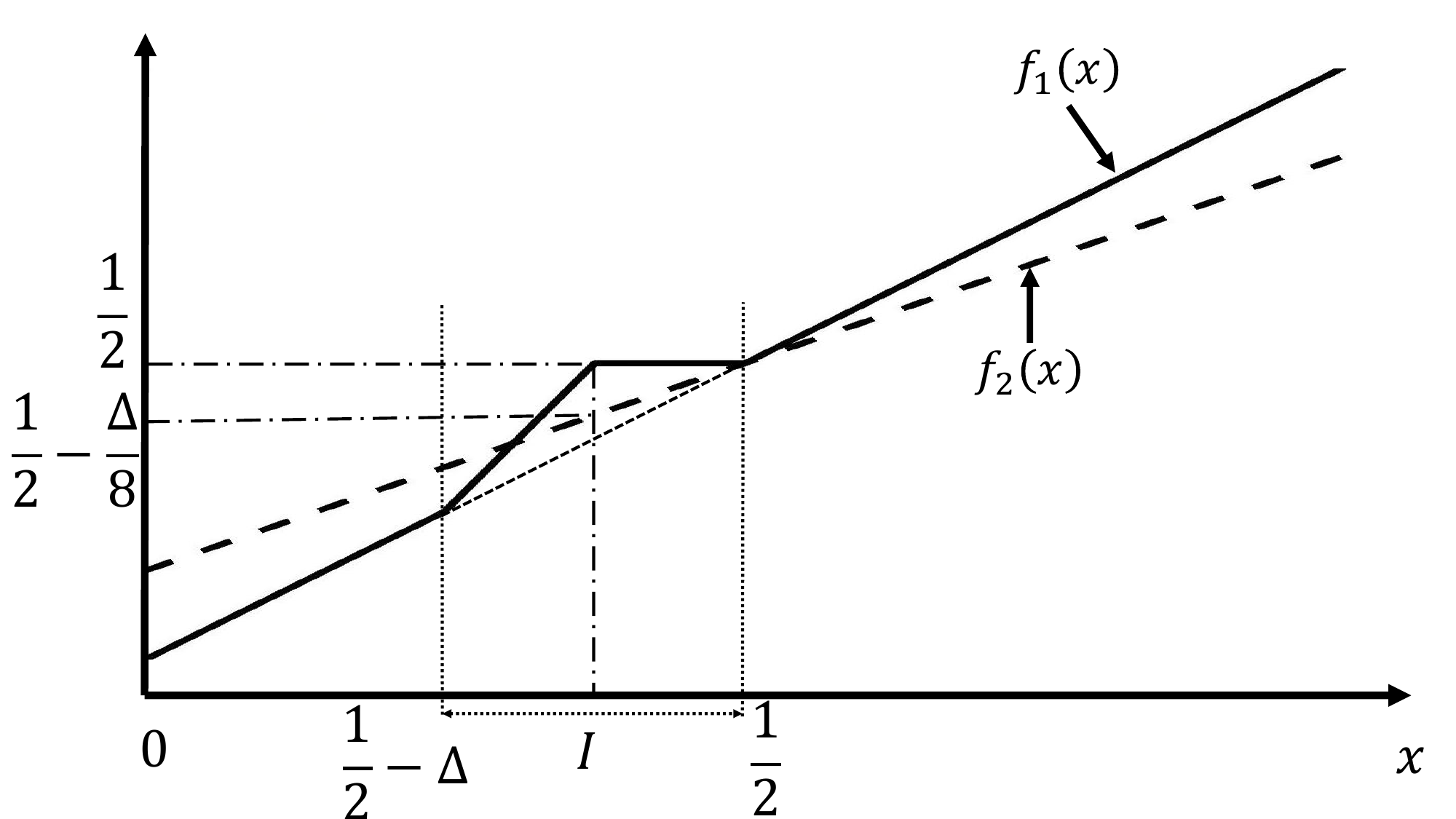}}
	\end{subfigure}
	\caption{\footnotesize \oldnewam{Description of the worst-case instance constructed in the proof of Theorem \ref{theorem-impossibility-adaptation-smoothness} for the at-least-Lipschitz-smooth case. \textit{Left:} The payoffs of the nominal problem in $\cP(\gamma, \alpha,1)$; \textit{Right:} The payoffs of the alternative problem in $\cP(\beta, \alpha,1)$.}}\vspace{-0.0cm}
	\label{fig-impossibility-proof-idea-at-least-Lipschitz}
\end{figure}

First, consider a nominal problem instance in $\cP(\gamma, \alpha, 1)$ consisting of linear payoff functions with a crossing point at $\l(\frac{1}{2}, \frac{1}{2}\r)$ such that the first action is strictly suboptimal for $x<\frac{1}{2}$. Furthermore, consider a problem instance in $\cP(\beta, \alpha, 1)$ consisting of payoff functions that are identical to the aforementioned payoffs everywhere except for the interval $I \coloneqq [\frac{1}{2} - \Delta, \frac{1}{2}]$, where the first action's payoff has a triangular ``upward bump" and reaches $\frac{1}{2}$. 
That is, the first action is optimal over some segment of $I$ with a gap of order~$\Delta$. The rest of the arguments in the proof are similar to those of Part 1.
\color{black}

\vspace{-0.0cm}
\section{Self-Similar Payoffs}\label{sec:SScondition}\vspace{-0.0cm}

In this section we first adapt in \S\ref{subsec:SScondition} a \textit{self-similarity} condition that appears in the literature on non-parametric confidence bands (e.g., \citealt{picard2000adaptive} and \citealt{gine2010confidence}), and then show in \S\ref{subsec:ssminimax} that the assumption that payoff functions are self-similar does not reduce the minimax regret complexity of the problem at hand. Later on, in \S\ref{sec:global}, we will show that self-similarity makes it possible to guarantee rate optimality without prior knowledge of the payoff smoothness, and devise a general policy for achieving smoothness-adaptive performance.

\vspace{-0.0cm}
\subsection{The Self-Similarity Condition}\label{subsec:SScondition}\vspace{-0.0cm}

Before introducing the self-similarity condition we first advance some relevant notation. For a given function $f(\cdot)$ and non-negative integers $l$ and $p$, define $\bm{\Gamma}_{\textcolor{black}{h}}^pf(\cdot; U)$ to be the $L_2(\bm{\mathrm{P}}_X)$-projection of the function $f(\cdot)$ to the class of polynomial functions of degree at most $p$ over the {hypercube} $U$. Formally, for any $x\in U$ we define\vspace{-0.1cm}
\begin{equation}\label{eq-def-polynomial-projection}
\bm{\Gamma}_{\textcolor{black}{h}}^pf(x;U)
\coloneqq
g(x), \qquad \text{s.t.} \qquad g = \arg\min\limits_{q\in \mathrm{Poly}(p)} \int_{U} \l| f(u)-q(u) \r|^2 K\l(\frac{x-u}{h}\r) p_X(u \mid U) du,\vspace{-0.1cm}
\end{equation}
where we use kernel $K(\cdot)=\Indlr{\|\cdot\|_\infty\le 1}$ and bandwidth $h$, and $\mathrm{Poly}(p)$ is the class of polynomials of degree at most $p$. Next, we formalize the notion of self-similarity, using the projection $\bm{\Gamma}_{\textcolor{black}{h}}^p f$. For integers $l\ge0$ and \oldnewam{$q$}, let $\cB_l^{\oldnewam{q}} \coloneqq  \l\{ \sfB_m, \, m=1,\dots,\oldnewam{q}^{ld} \r\}$ be a re-indexed collection of the hypercubes:\vspace{-0.1cm}
\[
\sfB_m = \sfB_{\sfm} \coloneqq \l\{ x \in [0,1]^d:\, \frac{\sfm_i-1}{\oldnewam{q}^l} \le x_i \le \frac{\sfm_i}{\oldnewam{q}^l},\; i\in\{1,\dots,d\} \r\},\vspace{-0.1cm}
\]
for $\sfm=(\sfm_1,\dots,\sfm_d)$ with $\sfm_i \in \{1,\dots,\oldnewam{q}^l\}$.

\begin{definition}[Self-similar payoffs]\label{def:global-ss}\color{black} For some $\beta\in [\underline{\beta}, \bar{\beta}]$, and finite constants $l_0 \ge 0$ and $b>0$, define the class of self-similar sets of payoffs, $ \cF^{ss}(\beta,b,l_0)$, to be the collection of the sets of payoffs $\{f_k\}_{k\in\cK}$ such that $f_k \in \cH(\beta)$, $k\in\cK,$ and for which for any integers $l \ge l_0$, $q>1$, and $\lfloor \beta \rfloor \le p \le \lfloor \ubeta \rfloor$, one has
	\[
	\max_{\sfB \in \cB_l^{\oldnewam{q}}}\max_{k\in\cK}\sup_{x \in\sfB}\l|
	\bm{\Gamma}_{\textcolor{black}{q^{-l}}}^pf_k(x;\sfB) - f_k(x)
	\r|
	\ge
	b \textcolor{black}{q}^{-l\beta}.
	\]
\end{definition}

\noindent
The self-similarity condition complements the H\"{o}lder smoothness condition (Assumption~\ref{assumption-Holder-smoothness}) in the following sense. On the one hand, H\"{o}lder smoothness implies an upper bound on the estimation bias of payoff functions at every point (\emph{estimation bias} refers to the absolute difference between the value of a function and the expected value of its estimator, using, e.g., local polynomial regression). On the other hand, the self-similarity condition effectively implies a global lower bound on the estimation bias. 
More precisely, self-similarity implies that for a set of $\beta$-smooth payoffs, the estimation bias is guaranteed to be at least of order $h^{\beta}$ for bandwidth $h>0$. This provides an opportunity to estimate the smoothness of payoff functions by ``comparing" estimation variance and bias, which is the essence of Lepski's method \citep{lepski1997optimal}. In the conventional version of Lepski's method, one adjusts the estimators to identify the bandwidth that balances estimation bias and stochastic error. The relationship between this bandwidth and the true smoothness enables one to estimate the smoothness. In the absence of direct access to the estimation bias, Lepski's method provides a general approach for constructing a proxy for it through the absolute difference of estimators with different bandwidths. In \S\ref{subsec:smoothness-estimation}, we advance a variant of Lepski's method that is tailored to the dynamic nature of the problem at hand, where we keep the bandwidth fixed and instead adjust the estimators through the number of samples they deploy.

\color{black}
\Copy{ss-intuitive-exp}
One may view the self-similarity condition as ensuring that the regularity of payoff functions is similar on small and large scales. To achieve smoothness adaptivity, one needs to know the regularity of the payoffs on small scales. 
If payoffs are self-similar, 
one may infer such small-scale regularity from the structure of payoffs on larger scales. For further discussion on the self-similarity condition see \cite{bull2012honest}. 
The next example illustrates a set of self-similar payoff functions.

\begin{example}[Self-similar payoffs]\label{exp:self-similar} Fix some $\beta < \ubeta < 1 $ and assume that covariates are one-dimensional. Assume that $f_k\in \cH(\beta)$ for each $k\in\cK$. If $f_1(x) = x^\beta$ for any $x \in [0,\Delta^{\frac{1}{\beta}}]$ and some $\Delta>0$, then the set of payoffs $\{f_k\}_{k=1}^K$ is self-similar with $b \coloneqq \frac{1}{\beta+1}$ and $l_0 \coloneqq -\frac{1}{\beta}\log_2 \Delta$. That is, for any $q>1$, $l\ge l_0 $,  and $p=0$, one has\vspace{-.1cm}
	\begin{equation*}
		\max_{\sfB \in \cB_l^{\textcolor{black}{q}}}\max_{k\in\cK}\sup_{x \in\sfB}\l|
		\bm{\Gamma}_{\textcolor{black}{q^{-l}}}^pf_k(x;\sfB) - f_k(x)
		\r|
		\ge
		\l|
		\frac{1}{\textcolor{black}{q}^{-l}} \int_0^{\textcolor{black}{q}^{-l}}f_1(x) dx - f_1(0)
		\r|
		=
		\frac{1}{\beta + 1} \textcolor{black}{q}^{-l \beta}
		=
		b \textcolor{black}{q}^{-l\beta}.\vspace{-.3cm}
	\end{equation*}
\end{example}
\begin{figure}[h]
	\centering
		\raisebox{-\height}{\includegraphics[width=.5\textwidth]{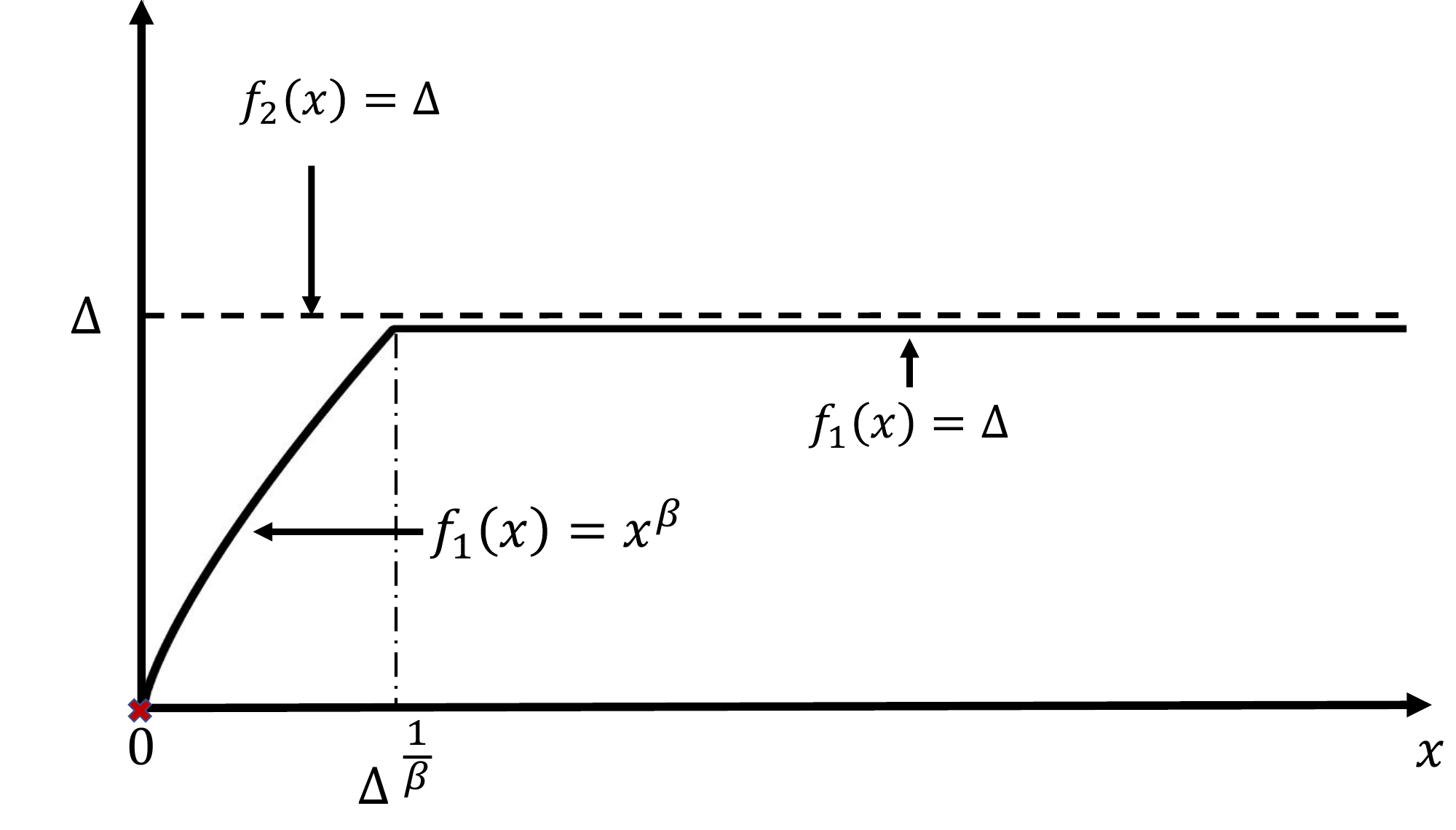}}
	\caption{\footnotesize \oldnewam{A set of payoff functions that can either satisfy or not satisfy the conditions in Example~\ref{exp:self-similar}, 
depending on the value of the parameter $\Delta$ relative to the self-similarity constants $l_0$ and $b$. For a fixed $l_0$, if either the parameter $b$ is sufficiently large, that is, $b >  \frac{1}{\beta + 1}$, or the parameter $\Delta$ is sufficiently small, that is, $ -\frac{1}{\beta}\log_2 \Delta \ge l_0$, then the set of payoffs is not self-similar. 
}}\vspace{-0.0cm}
	\label{fig:ss}
\end{figure}
\noindent
Figure \ref{fig:ss} depicts a set of payoff functions that can either satisfy or not satisfy the conditions in Example~\ref{exp:self-similar}, that is, self-similarity with $b \coloneqq \frac{1}{\beta+1}$ and $l_0 \coloneqq -\frac{1}{\beta}\log_2 \Delta$, depending on the value of the parameter $\Delta$ relative to the self-similarity constants $l_0$ and $b$. Note that for a fixed $l_0$, if the parameter $b$ is sufficiently large, that is, $b >  \frac{1}{\beta + 1}$, or the parameter $\Delta$ is sufficiently small, that is, $ -\frac{1}{\beta}\log_2 \Delta \ge l_0$, then the set of payoffs depicted in Figure~\ref{fig:ss} is not self-similar anymore, as one may not be able to detect the roughness of $f_1$ at $x=0$ based on its neighborhood in a coarse (``large" scale) partitioning $\cB^q_l$ with $l = l_0$. Nevertheless, we note that even when the payoffs depicted in Figure~\ref{fig:ss} are not self-similar anymore, they are still $\beta$-smooth for any $\Delta>0$.

\color{black}

We next provide the self-similarity assumption, followed by a formulation of the class of problem instances with self-similar payoff functions, which is a subset of the more general class from Definition~\ref{def:class-prob}.

\begin{assumption}[Self-similar payoffs]\label{assumption-global-self-similarity}
	\textcolor{black}{$\{f_k\}_{k\in\cK} \in \cF^{ss}(\beta, b, l_0)$ for some finite constants $l_0\ge 0$ and $b>0$.}
\end{assumption}

\begin{definition}[Class of problems with self-similar payoffs]
	\textcolor{black}{For any  $\beta \ge 0$, $\alpha\ge 0$,  and finite constants $l_0\ge 0$ and $b>0$, we define by $\cP^{\mathrm{ss}}(\beta, \alpha, d, b, l_0) \coloneqq \l\{ \sfP \in \cP(\beta, \alpha, d): \{f_k\}_{k\in\cK} \in \cF^{ss}(\beta, b, l_0)\r\}$ the class of problems with self-similar sets of payoffs.}
\end{definition}

\subsection{Minimax Complexity with Self-Similar Payoffs}\label{subsec:ssminimax}

While Example~\ref{exp:self-similar} illustrates a set of particularly simple payoff functions, we note that the class of self-similar payoffs is quite general and includes many different payoff structures. In fact, we next show that the minimax complexity of the dynamic optimization problem at hand is not reduced when the self-similarity condition is introduced.

We establish this result by constructing regret lower bounds that are of the same order as in~\eqref{eq:opt_rate}. To do so, we design worst-case instances consisting of payoff functions that satisfy Assumption~\ref{assumption-global-self-similarity}. More precisely, we show that worst-case instances developed in \cite{rigollet2010nonparametric} for the case of $\beta\le 1$ and in \cite{hu2019smooth} for the case of $\beta \ge 1$ can essentially be constructed using self-similar payoffs.

For consistency with the setting in \cite{hu2019smooth} that allows a more general structure for the support of the covariate distribution in the case of $\beta \ge 1$, we denote by \oldnewam{$\tilde{\cP}^{\mathrm{ss}}( \beta, \alpha, d, b, l_0)$} the class of problems with self-similar payoffs where the covariate density $p_X$ has a compact support $\cX\subseteq [0,1]^d$; 
for further details see Appendix \ref{app:reg-lower-ss}.

\begin{theorem}[Self-similarity assumption does not reduce minimax complexity]\label{theo-reg-lower-ss}
	Fix some non-integer H\"{o}lder exponent \oldnewam{$\beta > 0$}, some margin parameter $\alpha>0$ such that $\alpha  \le \max\{1,\frac{1}{\beta}\}$ and $\alpha \beta \le d$\oldnewam{, and some finite constants $l_0\ge0$ and $b>0$}. Then, there exist $T_0, \underline{C}>0$ such that for any horizon length $T \ge T_0$ and any admissible policy $\pi\in\Pi$, the following lower bounds on the regret hold:
	\begin{enumerate}
		\item (At most Lipschitz smooth) If $\beta\le1$, then
	$
		\sup_{\sfP \in \textcolor{black}{\cP^{\mathrm{ss}}( \beta, \alpha, d, b, l_0)}}\mathcal{R}^\pi(\sfP;T) \ge
		\underline{C} T^{1-\frac{\beta(\alpha+1)}{2\beta + d}}.
	$
	\item (At least Lipschitz smooth) If $\beta\ge1$, then\vspace{-0.1cm}
	$
	\sup_{\sfP \in \textcolor{black}{\tilde{\cP}^{\mathrm{ss}}( \beta, \alpha, d, b, l_0)}}\mathcal{R}^\pi(\sfP;T) \ge
	\underline{C} T^{1-\frac{\beta(\alpha+1)}{2\beta + d}}.
	$

	\end{enumerate}	
\end{theorem}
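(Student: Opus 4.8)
The plan is to reuse, essentially verbatim, the minimax lower-bound constructions of \cite{rigollet2010nonparametric} for the at-most-Lipschitz regime ($\beta\le 1$) and of \cite{hu2019smooth} for the at-least-Lipschitz regime ($\beta\ge 1$), and to argue that the worst-case instances appearing there can be perturbed into \emph{self-similar} instances without changing the order of the regret lower bound. The only obstruction to self-similarity in those constructions is that each payoff $f_k$ is locally a polynomial (indeed, constant) away from the ``bump'' region, so the polynomial-projection bias in \eqref{eq-def-polynomial-projection} vanishes on most cubes and the requirement $\max_{\sfB\in\cB_l^{q}}\max_{k}\sup_{x\in\sfB}|\bm{\Gamma}_{q^{-l}}^p f_k(x;\sfB)-f_k(x)|\ge b\,q^{-l\beta}$ of Definition~\ref{def:global-ss} can fail at coarse scales. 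To fix this I would add, on top of \emph{both} payoff functions, a single fixed, deterministic, genuinely $\beta$-rough function $g$ — for concreteness $g(x)=c\,\phi(x)\,x_1^{\beta}$, where $\phi$ is a smooth cutoff equal to $1$ near the face $\{x_1=0\}$ and vanishing outside a thin slab, $\beta$ is non-integer by hypothesis, and $c>0$ is a small amplitude fixed below. Crucially, $f_1-f_2$, the covariate law $\bm{\mathrm{P}}_X$, and hence the oracle policy, the regret $\mathcal{R}^\pi(\sfP;T)$, and the margin set $\{0<|f_1-f_2|\le\delta\}$ are all unchanged by this modification (the supporting slab of $g$ can be chosen disjoint from the bump region, which is where $f_1\neq f_2$), so Assumptions~\ref{assumption-covar-dist} and~\ref{assumption-margin} continue to hold with the \emph{same} constants, and the information-theoretic machinery of the lower bound is untouched.

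It then remains to check that the perturbed payoffs $\{f_k+g\}$ (i) still lie in $\cH(\beta,L)$ and (ii) lie in $\cF^{ss}(\beta,b,l_0)$. For (i) one simply splits the H\"older budget: take the bumps of \cite{rigollet2010nonparametric} / \cite{hu2019smooth} with H\"older constant at most $L/2$ (rescaling their amplitude, which only affects the eventual constant $\underline C$), and take $c$ small enough that $g\in\cH(\beta,L/2)$; then $f_k+g\in\cH(\beta,L)$. For (ii), fix $l\ge l_0$, $q>1$ and $\lfloor\beta\rfloor\le p\le\lfloor\ubeta\rfloor$, and pick a cube $\sfB\in\cB_l^{q}$ adjacent to the face $\{x_1=0\}$ and contained in $\{\phi\equiv 1\}$; on $\sfB$ one has $g(x)=c\,x_1^{\beta}$, a function of $x_1$ alone on an interval of length $q^{-l}$ whose $(\lfloor\beta\rfloor+1)$-st derivative is singular at $0$. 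A scaling argument together with the equivalence of the $L_2$ and $L_\infty$ norms on the fixed finite-dimensional space $\mathrm{Poly}(p)$ (and boundedness of the density above and below) then yields $\sup_{x\in\sfB}|\bm{\Gamma}_{q^{-l}}^p g(x;\sfB)-g(x)|\ge c\,c_\beta\,q^{-l\beta}$ for a constant $c_\beta>0$ that is uniform in $q,l$ and in $p\le\lfloor\ubeta\rfloor$ (higher degree only helps but cannot beat the order-$\beta$ singularity). Since $f_k\in\cH(\beta,L/2)$ contributes at most $C(\beta,d)\,\tfrac{L}{2}\,q^{-l\beta}$ to the bias on the same cube, the reverse triangle inequality gives a bias there of at least $\bigl(c\,c_\beta-C(\beta,d)L/2\bigr)q^{-l\beta}$; choosing $c$ so that additionally $c\,c_\beta\ge b+C(\beta,d)L/2$ — possible precisely in the regime where $\cP^{\mathrm{ss}}(\beta,\alpha,d,b,l_0)$ is nonempty, i.e.\ $b$ not too large relative to $L$ — delivers $\{f_k+g\}\in\cF^{ss}(\beta,b,l_0)$. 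Hence the perturbed instances lie in $\cP^{\mathrm{ss}}(\beta,\alpha,d,b,l_0)$ (resp.\ in $\tilde{\cP}^{\mathrm{ss}}$ in part~2, where the construction of \cite{hu2019smooth} already lives on a compact support $\cX\subseteq[0,1]^d$, and the same $g$ works since $p$ is capped at $\lfloor\ubeta\rfloor$).

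With this family of self-similar instances in hand, I would run the standard reduction to multiple hypothesis testing (Assouad's lemma / a Fano-type argument) over the $2^{N}$ sign patterns of the $N$ disjoint active bumps: for each bump, once the expected number of rounds with covariate in that bump times the squared payoff gap on it is $O(1)$, the policy fails to identify the sign with constant probability and thereby incurs regret of order (rounds in the bump)$\times$(gap). Calibrating the bump side length to $h_T=\Theta\bigl(T^{-1/(2\beta+d)}\bigr)$, the gap to $\Theta(h_T^{\beta})$, and the number of bumps to $N\asymp h_T^{\alpha\beta-d}$ — the maximum compatible with Assumption~\ref{assumption-margin}, with $\alpha\beta\le d$ ensuring $N\ge 1$ and $\alpha\le\max\{1,1/\beta\}$ ensuring, via Proposition~\ref{prop-margin-smoothness-relationship}, a non-trivial oracle — gives $\sup_\sfP\mathcal{R}^\pi(\sfP;T)\gtrsim N\,T\,h_T^{d}\,h_T^{\beta}=T^{\,1-\beta(\alpha+1)/(2\beta+d)}$, matching \eqref{eq:opt_rate}, for all $T\ge T_0$, with $\underline C>0$ absorbing the $b$- and $L$-dependent amplitudes. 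Part~2 is identical, started from the construction of \cite{hu2019smooth}.

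The step I expect to be the main obstacle is the self-similarity bookkeeping in (ii): one must certify the polynomial-approximation lower bound for the background $g$ \emph{simultaneously} at every scale $l\ge l_0$ and every admissible degree $p\in[\lfloor\beta\rfloor,\lfloor\ubeta\rfloor]$, with a constant that does not degrade with $q$ or $p$, and then ensure that the bump layer — which is itself of size $\asymp h_T^{\beta}$ exactly at the critical scale $q^{-l}=h_T$ — does not cancel this lower bound there; the amplitude $c$ of $g$ must be chosen to win this competition while still fitting under the global H\"older constant $L$. A secondary point requiring care is localizing $g$ strictly away from the decision-boundary region, so that $C_0$ and $\bm{\mathrm{P}}_X$ are literally, not merely asymptotically, unchanged.
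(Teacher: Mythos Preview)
Your proposal is correct and takes essentially the same route as the paper: both graft a background piece proportional to $x_1^{\beta}$ onto the lower-bound constructions of \cite{rigollet2010nonparametric} and \cite{hu2019smooth}, supported in a slab disjoint from the bump region so that $f_1-f_2$, the margin condition, and the hypothesis-testing argument are literally unchanged. The one difference is in the self-similarity verification: the paper computes the polynomial-projection bias of $x_1^{\beta}$ on the cube $[0,q^{-l}]^d$ explicitly via a Cauchy double-alternant determinant identity (nonzero precisely because $\beta$ is non-integer), which cleanly gives the required $\ge b\,q^{-l\beta}$ uniformly in $p\le\lfloor\ubeta\rfloor$ and in $q$, and---since the slab and the bump region are disjoint---renders your reverse-triangle step and the attendant amplitude balancing unnecessary.
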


\medskip
\noindent 
Theorem \ref{theo-reg-lower-ss} establishes that requiring payoff functions to be self-similar does not reduce the minimax (regret) complexity, and therefore implies that the minimax complexity of the problem under self-similar payoffs (Assumption \ref{assumption-global-self-similarity}) is still as stated in~\eqref{eq:opt_rate}. Nevertheless, in the next section we establish that under self-similar payoffs one may design policies that are smoothness adaptive, and essentially guarantee the minimax regret rate without prior information on the smoothness of payoff functions.

\section{Adaptivity to Smoothness}\label{sec:global}
In this section, we first detail in \S\ref{sec:global-main-results} the main result of the section, establishing that under self-similar payoffs one may guarantee smoothness-adaptive performance. This result is based on providing a \textit{Smoothness-Adaptive Contextual Bandits} (\texttt{SACB}) policy, and establishing that this policy is smoothness adaptive. In \S\ref{sec:policy} we then provide a detailed description of the \texttt{SACB} policy and discuss its key components.

\subsection{Smoothness-Adaptive Performance with Self-Similar Payoffs}\label{sec:global-main-results}\vspace{-0.1cm}

We next detail the main results of this section. We show that the \textit{Smoothness-Adaptive Contextual Bandits} (\texttt{SACB}) policy (that is detailed in \S\ref{sec:policy}; see Algorithm \ref{algorithm-GSE}) is smoothness adaptive under Assumption~\ref{assumption-global-self-similarity}.

The key idea of the \texttt{SACB} policy lies in observing that the local polynomial regression estimation of any function $f(\cdot)$ cannot largely deviate from the projection $\bm{\Gamma}_{\textcolor{black}{h}}^p f$ with high probability. That is, Assumption~\ref{assumption-global-self-similarity} is key in establishing that for a set of H\"{o}lder-smooth payoff functions, the estimation bias is not only bounded from above, but also cannot shrink fast (see further discussion and analysis in Appendix~\ref{appendix-local-poly}). This suggests an opportunity to estimate the smoothness of the payoff functions by appropriately examining the estimation bias against its variance over the unit hypercube.

The \texttt{SACB} policy adaptively integrates a smoothness estimation sub-routine with some collection of non-adaptive policies $\{\pi_0(\beta_0)\}_{\beta_0 \in [\lbeta , \ubeta]}$ that are rate optimal under accurate tuning of the smoothness parameter. The estimation sub-routine of \texttt{SACB} consists of three steps: $(i)$ collecting samples over the covariate space; $(ii)$ estimating the payoff functions; and~$(iii)$~conducting a hypothesis test. After the estimation sub-routine is terminated, the produced estimate $\hat{\beta}_{\texttt{SACB}}$ is used to choose the corresponding rate-optimal non-adaptive policy $\pi_0^{(\hat{\beta}_{\texttt{SACB}})}$. The following result characterizes the quality of the smoothness estimation of the \texttt{SACB} policy.

\begin{theorem}[Smoothness estimation with self-similar payoffs]\label{theorem-GSE-smoothness-accuracy}
	Suppose that Assumption \ref{assumption-Holder-smoothness} holds for some $L>0$ and $\beta \in [\lbeta, \ubeta]$, and Assumption~\ref{assumption-global-self-similarity} holds for some $b>0$ and $l_0 \ge 0$. Then,
there exists $T_0>0$ independent of $T$ such that for any horizon length $T \ge T_0$, the \texttt{SACB} policy detailed in Algorithm \ref{algorithm-GSE}, with tuning parameter $\gamma>0$
\color{black}
and under-smoothing coefficient $\upsilon=\frac{d}{\lbeta}+4$,
\color{black}
computes an estimate of $\beta$, denoted by $\hat{\beta}_{\texttt{SACB}}$, by time step $t=\l\lceil\frac{4}{\underline{\rho}} \l( \log T \r)^{\frac{2d}{\underline{\beta}}+4}T^{\frac{(\underline{\beta}+d-1)}{(2\overline{ \beta}+d)}}\r\rceil$ with probability at least $1-2 T^{\constvar[SACB-time-1]}
\exp\l( -\constvar[SACB-time-2] T^{\constvar[SACB-time-3]} \r)$, such that the following bound holds:\vspace{0.00cm}
	\[
	\rPlr{\hat{\beta}_{\texttt{SACB}} \in \left[\beta- \frac{3(2\overline{ \beta} + d)^2\log_{q}\log T}{(\underline{\beta}+d-1)\log_{q} T}   ,\beta\right]}
	\ge
	1 - \constvar[GSE-smoothness-accuracy1] \l( \log T \r)^{\frac{d}{\underline{\beta}}} T^{ -\gamma ^2 \constvar[GSE-smoothness-accuracy2] + \constvar[GSE-smoothness-accuracy3] },\vspace{-0.05cm}
	\]
	where the constants $\constref{SACB-time-1}, \constref{SACB-time-2}, \constref{SACB-time-3}, \constref{GSE-smoothness-accuracy1}, \constref{GSE-smoothness-accuracy2},$ and $ \constref{GSE-smoothness-accuracy3}$ depend only on $\lbeta, \ubeta , b, L,  \underline{\rho}, \bar{\rho}$, and $d$.
\end{theorem}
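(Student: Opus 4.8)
The plan is to decompose the error of the smoothness estimate into three pieces and track how each propagates through the Lepski-type scale-selection rule used by \texttt{SACB}: the stochastic error of the data-collection phase; the stochastic error of the local polynomial fits around their population projections $\bm{\Gamma}_{q^{-l}}^p f_k$; and the deterministic bias of those projections. The two probabilistic guarantees in the statement correspond to two separate good events---the collection phase populating every relevant cell by time $t$, and, conditionally on that, the estimate landing in the announced window. Write $N=t=\lceil\tfrac{4}{\underline\rho}(\log T)^{\upsilon}T^{(\lbeta+d-1)/(2\ubeta+d)}\rceil$ for the length of the collection phase. First I would handle that phase: contexts are i.i.d.\ with density bounded below by $\underline\rho$ (Assumption~\ref{assumption-covar-dist}), and \texttt{SACB} balances per-arm, per-cell observation counts, so the number of usable observations of $Y_{k,\cdot}$ in any cell of the finest partition $\cB_{l_{\max}}^q$ after $t$ rounds stochastically dominates a binomial variable with mean $\gtrsim\underline\rho\,t\,q^{-l_{\max}d}$. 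A multiplicative Chernoff bound, then a union bound over the at most $\mathrm{poly}(T)$ cells and $\cO(\log T)$ scales, gives that every prescribed count is met by time $t$ with probability at least $1-2T^{\constref{SACB-time-1}}\exp(-\constref{SACB-time-2}T^{\constref{SACB-time-3}})$. The scale $l_{\max}$ and the time $t$ are calibrated so that the finest scale is at once coarse enough for every cell to be adequately populated and fine enough that the bias is still detectable in the roughest admissible instance $\beta=\lbeta$; these are compatible because $2\lbeta+d>d$, and the factor $(\log T)^{\upsilon}$, $\upsilon=\tfrac d\lbeta+4$, supplies the extra samples that the uniform concentration bounds below require.

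Conditionally on this event, I would invoke the concentration inequality for local polynomial regression from Appendix~\ref{appendix-local-poly}: uniformly over $k\in\cK$, over $l_0\le l\le l_{\max}$, over all cells $\sfB\in\cB_l^q$, and over a grid of evaluation points inside each cell fine enough that the interpolation error is negligible (this grid contributes the $(\log T)^{d/\lbeta}$ factor), the empirical fit obeys $|\hat{\bm{\Gamma}}_{q^{-l}}^p f_k(x;\sfB)-\bm{\Gamma}_{q^{-l}}^p f_k(x;\sfB)|\le\psi_l$ with variance radius $\psi_l\asymp\gamma\sqrt{q^{ld}(\log T)/N}$, failing with probability at most $\constref{GSE-smoothness-accuracy1}(\log T)^{d/\lbeta}T^{-\gamma^2\constref{GSE-smoothness-accuracy2}+\constref{GSE-smoothness-accuracy3}}$, since each deviation is sub-Gaussian at rate $\gamma^2\log T$ and only $\mathrm{poly}(T)$ events are unioned over. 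The deterministic bias is meanwhile pinned from both sides: Hölder smoothness (Assumption~\ref{assumption-Holder-smoothness}) gives $\sup_{x\in\sfB}|\bm{\Gamma}_{q^{-l}}^p f_k(x;\sfB)-f_k(x)|\le C L q^{-l\beta}$ on every cell, while self-similarity (Assumption~\ref{assumption-global-self-similarity}) gives the matching lower bound $\max_{\sfB}\max_{k}\sup_{x\in\sfB}|\bm{\Gamma}_{q^{-l}}^p f_k(x;\sfB)-f_k(x)|\ge b q^{-l\beta}$.

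On the intersection of the two good events, the test statistic at scale $l$ therefore exceeds its threshold for every $l\le\underline l$ (where $bq^{-l\beta}$ comfortably beats $\psi_l$) and stays below it for every $l\ge\bar l$ (where $CLq^{-l\beta}$ is comfortably below $\psi_l$); solving $bq^{-l\beta}\asymp\psi_l\asymp CLq^{-l\beta}$ shows that $\underline l$ and $\bar l$ both equal $\tfrac{\log(N/\log T)}{(2\beta+d)\log q}$ up to an additive $\cO(1)$, so the scale $\hat l$ that \texttt{SACB} selects is determined to within a bounded band (plus an $\cO(\log\log T)$ under-smoothing offset) around that value. Crucially the argument is one-sided: the self-similar lower bound forbids \texttt{SACB} from stopping its refinement before the bias has genuinely fallen below the noise, so $\hat\beta_{\texttt{SACB}}$ cannot overshoot $\beta$. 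Reporting $\hat\beta_{\texttt{SACB}}=\phi(\hat l)$ with $\phi(l)=\tfrac{\log(N/\log T)}{2l\log q}-\tfrac d2$ (up to the offset) and using $|\phi'(l)|\asymp\tfrac{(2\beta+d)^2\log q}{2\log(N/\log T)}$ together with $\log(N/\log T)=\tfrac{\lbeta+d-1}{2\ubeta+d}\log T+\cO(\log\log T)$ then converts the residual uncertainty in $\hat l$ into an uncertainty of order $\tfrac{(2\ubeta+d)^2}{\lbeta+d-1}\cdot\tfrac{\log\log T}{\log T}$ in $\hat\beta_{\texttt{SACB}}$, which is the announced window, on an event of probability at least $1-\constref{GSE-smoothness-accuracy1}(\log T)^{d/\lbeta}T^{-\gamma^2\constref{GSE-smoothness-accuracy2}+\constref{GSE-smoothness-accuracy3}}$.

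The hard part, I expect, is carrying out the last two stages \emph{jointly} with matching constants: one must choose the radii $\psi_l$, the test thresholds, the finest scale $l_{\max}$, the oversampling exponent $\upsilon$, and the under-smoothing offset so that the uniform concentration of the local polynomial fits holds at the announced probability, the self-similarity lower bound strictly dominates the noise at every scale coarser than the crossover (so that no slack leaks into an overestimate of $\beta$), and yet the cumulative downward slack---the gap between the bias constants $b$ and $CL$, the fluctuations $\psi_l$, the within-cell discretization, and the safety offset---still fits inside the narrow $\cO(\log\log T/\log T)$ window. Making the quantitative local polynomial analysis of Appendix~\ref{appendix-local-poly} sharp enough to certify that the projection bias cannot shrink faster than $q^{-l\beta}$ uniformly over cells---the operational content of self-similarity---is the technical heart of the argument.
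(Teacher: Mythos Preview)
Your plan rests on a misreading of the \texttt{SACB} algorithm. You describe a Lepski scan over partition scales $l\in\{l_0,\dots,l_{\max}\}$, a data-dependent selected scale $\hat l$, and an inversion $\hat\beta_{\texttt{SACB}}=\phi(\hat l)$ with $\phi(l)=\tfrac{\log(N/\log T)}{2l\log q}-\tfrac d2$. That is not what Algorithm~\ref{algorithm-GSE} does. The algorithm fixes a \emph{single} partition level $l=\big\lceil\tfrac{(\lbeta+d-1)\log_q T}{(2\ubeta+d)^2}\big\rceil$; what varies is the per-bin \emph{round index} $r$. In each bin $\sfB\in\cB_l^q$ it collects $q^r$ samples at round $r$, compares two local-polynomial fits at two \emph{fixed} bandwidths $j_1^{(\sfB)}=l$ and $j_2^{(\sfB)}=l+\lceil\tfrac1{\lbeta}\log_q\log T\rceil$, and records the first round $r^{(\sfB)}_{\mathrm{last}}$ at which their difference exceeds the shrinking threshold $\gamma(\log T)^{d/(2\lbeta)+1/2}q^{-r/2}$. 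The estimate is then $\hat\beta_{\texttt{SACB}}=\tfrac1{2l}\bigl[\min_{\sfB}r^{(\sfB)}_{\mathrm{last}}-\upsilon\log_q\log T\bigr]$: the free variable is $r$, not $l$, and the relevant crossover is $q^{-r/2}\asymp q^{-l\beta}$ (threshold vs.\ bias at bandwidth $j_1$), giving $r\approx 2l\beta$. Your crossover $q^{-l\beta}\asymp\sqrt{q^{ld}/N}$ with $l$ variable and $N$ fixed is the wrong inversion for this algorithm.

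Accordingly, the paper's proof does not control a selected scale $\hat l$; it controls $r^{(\sfB)}_{\mathrm{last}}$ via Propositions~\ref{proposition-GSE-lower-bound-round} and~\ref{proposition-GSE-upper-bound-round}. H\"older smoothness (via Proposition~\ref{prop-LPR-converge-to-true-value}) shows that for every bin the test cannot fire before $r\approx 2l\beta+(\tfrac d\lbeta+1)\log_q\log T$, since until then both fits are within threshold of $f_k$; self-similarity guarantees one bin with bias at bandwidth $j_1$ at least $bq^{-l\beta}$, and Proposition~\ref{prop-LPR-converge-to-L(P)-projection} shows the fit there stays near $\bm\Gamma^p_{q^{-l}}f_k$, so the test \emph{must} fire by $r\approx 2l\beta+(\tfrac d\lbeta+3)\log_q\log T$. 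Plugging these two-sided bounds on $\min_{\sfB}r^{(\sfB)}_{\mathrm{last}}$ into the estimator formula yields the window directly. Your bias-vs.-variance intuition and the roles you assign to H\"older and self-similarity are correct, but they must be attached to the round index, not to a nonexistent scale selection. Likewise, the time-bound argument is a Bernstein bound on per-bin context counts at the single level $l$ (union over $q^{ld}$ bins), showing each bin reaches round $\bar r$ by $\bar T_{\texttt{SACB}}$; there is no hierarchy $\cB_{l_{\max}}^q$ to union over.
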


\smallskip
\noindent The proof of Theorem~\ref{theorem-GSE-smoothness-accuracy} follows from Propositions \ref{proposition-GSE-lower-bound-round} and \ref{proposition-GSE-upper-bound-round}, which will be advanced in \S\ref{subsec:smoothness-estimation} for analyzing the performance of the smoothness estimation sub-routine in the \texttt{SACB} policy.
Theorem~\ref{theorem-GSE-smoothness-accuracy} implies that the error of the smoothness estimate grows linearly with the covariate dimension and decays as a function of the time horizon at a rate of $\frac{\log \log T}{\log T}$. This characterization of the smoothness estimation is leveraged in the next theorem to establish that the \texttt{SACB}, when coupled with appropriate off-the-shelf non-adaptive policies, guarantees the optimal regret rate up to poly-logarithmic terms, and smoothness-adaptive performance as stated in Definition \ref{def:adaptivity}.\smallskip

\begin{theorem}[Smoothness-adaptive policy with self-similar payoffs]\label{theorem-GSE-ABSE-regret-bound}
	Let $\pi$ be the \texttt{SACB} policy detailed in Algorithm~\ref{algorithm-GSE}, and let $\{\pi_0(\beta_0)\}_{\beta_0 \in [\lbeta , \ubeta]}$ be a set of non-adaptive policies such that if initialized with the true smoothness parameter, for any $\lbeta \le \beta_0 \le \ubeta$, $\alpha \le \frac{1}{\min\{1,\beta_0 \}}$, and $T\ge 1$, it satisfies the following upper bound on the regret:\vspace{-0.05cm}
	\[
	\sup_{\sfP \in \cP(\beta_0, \alpha, d)}\mathcal{R}^{\pi_0(\beta_0)}(\sfP;T)
	\le
	\bar{C}_0 \l( \log T \r)^{\iota_0(\beta_0, \alpha, d)} T^{\zeta(\beta_0, \alpha,d)},\vspace{0.00cm}
	\]
	for some $\iota_0(\beta_0, \alpha, d)$ and a constant $\bar{C}_0>0$ that is independent of $T$, where the function $\zeta(\beta_0, \alpha,d)$ is as given in (\ref{eq:opt_rate}). \color{black}
	Then, there exist $\gamma_0$ and $\bar{C}>0$, such that for any problem instance $\sfP\in \cP^{\mathrm{ss}}(\beta, \alpha, d, b, l_0)$ with $\lbeta \le \beta \le \ubeta$, $\alpha \le \frac{1}{\min\{1,\beta \}}$ and finite constants $l_0\ge 0$ and $b>0$, any tuning parameter $\gamma \ge \gamma_0$, any horizon length $T$,
and any under-smoothing coefficient $\upsilon=\frac{d}{\lbeta}+4$, one has
\color{black}
\vspace{0.00cm}
	\[
\mathcal{R}^\pi(\sfP;T)
	\le
	\bar{C}
	\l(\log T\r)^{\frac{3d(\alpha+1)(2\overline{ \beta} + d)^2}{(2\beta+d)(\beta+d)(\underline{\beta}+d-1)} + \iota_0\l(\beta- \frac{3(2\overline{ \beta} + d)^2\log_{\textcolor{black}{q}}\log T}{(\underline{\beta}+d-1)\log_{\textcolor{black}{q}} T} , \alpha, d\r)}  T^{\zeta(\beta, \alpha,d)}.\vspace{0.05cm}
\]
\end{theorem}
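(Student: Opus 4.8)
The plan is to decompose the horizon into the \emph{estimation phase}, during which \texttt{SACB} runs its smoothness-estimation subroutine, and the subsequent \emph{exploitation phase}, during which it runs the off-the-shelf policy $\pi_0(\hat\beta_{\texttt{SACB}})$, and to bound the expected (cumulative) regret contributed by each phase separately after conditioning on the high-probability event supplied by Theorem~\ref{theorem-GSE-smoothness-accuracy}. Write $\epsilon_T := \frac{3(2\ubeta+d)^2\log_q\log T}{(\lbeta+d-1)\log_q T}$ and $t^\star := \big\lceil \frac{4}{\urho}(\log T)^{2d/\lbeta+4}\,T^{(\lbeta+d-1)/(2\ubeta+d)}\big\rceil$, and let $\cG$ be the event that the subroutine terminates by time $t^\star$ and returns $\hat\beta_{\texttt{SACB}}\in[\beta-\epsilon_T,\beta]$. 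Theorem~\ref{theorem-GSE-smoothness-accuracy} gives, for $T$ large, $\Prob(\cG^c)\le c_1(\log T)^{d/\lbeta}T^{-\gamma^2 c_2+c_3}$ with $c_1,c_2,c_3>0$ depending only on $(\lbeta,\ubeta,b,L,\urho,\brho,d)$. I would then write $\mathcal{R}^\pi(\sfP;T)\le (\text{phase-I regret on }\cG) + (\text{failure contribution}) + (\text{phase-II regret on }\cG)$; for $T$ below the threshold of Theorem~\ref{theorem-GSE-smoothness-accuracy} the claim is trivial since $\mathcal{R}^\pi\le T\le\bar C T^{\zeta(\beta,\alpha,d)}$ once $\bar C$ is large (recall $\zeta\ge 0$ throughout the admissible range).

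\textbf{Failure event and estimation phase.} On $\cG^c$ bound the regret crudely by $T$, so this event contributes at most $T\Prob(\cG^c)$; since $1-\zeta(\beta,\alpha,d)=\frac{\beta(1+\alpha)}{2\beta+d}\le 1$ over the admissible range, one picks $\gamma_0$ large (in terms of $c_2,c_3$ and the positive quantity $\inf_{\beta,\alpha}\zeta(\beta,\alpha,d)$) so that for $\gamma\ge\gamma_0$ the exponent $1-\gamma^2 c_2+c_3$ is below $\zeta(\beta,\alpha,d)$ with room to absorb the $(\log T)^{d/\lbeta}$ factor, making $T\Prob(\cG^c)=O(T^{\zeta(\beta,\alpha,d)})$. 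On $\cG$ the estimation phase lasts at most $t^\star$ periods, each incurring regret at most $1$; the one elementary fact to check is $\frac{\lbeta+d-1}{2\ubeta+d}\le\zeta(\beta,\alpha,d)$ for all $\lbeta\le\beta\le\ubeta$, $0<\alpha\le\frac{1}{\min\{1,\beta\}}$. This follows from $\zeta(\beta,\alpha,d)\ge\frac{\beta+d-1}{2\beta+d}$ when $\beta\le 1$ (using $\beta(1+\alpha)\le\beta+1$) and $\zeta(\beta,\alpha,d)\ge\frac{d}{2\beta+d}$ when $\beta\ge 1$ (using $\beta(1+\alpha)\le 2\beta$), and both right-hand sides dominate $\frac{\lbeta+d-1}{2\ubeta+d}$ because $\lbeta\le 1$ and $\beta\le\ubeta$ (indeed strictly, away from degenerate corners). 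Hence the estimation phase contributes at most $t^\star=O\!\big((\log T)^{2d/\lbeta+4}T^{\zeta(\beta,\alpha,d)}\big)$, a poly-logarithmic inflation of $T^{\zeta(\beta,\alpha,d)}$ that will be absorbed.

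\textbf{Exploitation phase (the crux).} On $\cG$ the input policy runs with tuning $\hat\beta:=\hat\beta_{\texttt{SACB}}\in[\beta-\epsilon_T,\beta]$. Since $\hat\beta\le\beta$ and $f_k\in\cH(\beta)$, the standard nesting of H\"older classes on the bounded set $[0,1]^d$ gives $f_k\in\cH(\hat\beta)$ with a H\"older constant controlled by $L,\lbeta,\ubeta,d$, so $\sfP\in\cP(\hat\beta,\alpha,d)$ and (running $\pi_0(\hat\beta)$ afresh on the remaining $\le T$ steps) the input guarantee yields conditional-on-$\hat\beta$ expected phase-II regret at most $\bar C_0(\log T)^{\iota_0(\hat\beta,\alpha,d)}T^{\zeta(\hat\beta,\alpha,d)}$. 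Using that $\zeta(\cdot,\alpha,d)$ is decreasing and that $\iota_0(\hat\beta,\alpha,d)\le\max_{\beta_0\in[\beta-\epsilon_T,\beta]}\iota_0(\beta_0,\alpha,d)=\iota_0(\beta-\epsilon_T,\alpha,d)$ (for $T$ large, so $\hat\beta,\beta-\epsilon_T,\beta$ lie on the same side of $1$ and $\iota_0$ is monotone there for the policies of \citealt{perchet2013multi} and \citealt{hu2019smooth}), this is at most $\bar C_0(\log T)^{\iota_0(\beta-\epsilon_T,\alpha,d)}T^{\zeta(\beta-\epsilon_T,\alpha,d)}$. The key step is converting the exponent gap into a poly-logarithmic factor: a direct computation gives
\[
\zeta(\beta-\epsilon_T,\alpha,d)-\zeta(\beta,\alpha,d)=\frac{(1+\alpha)\,d\,\epsilon_T}{(2\beta+d)\big(2(\beta-\epsilon_T)+d\big)}\le\frac{(1+\alpha)\,d\,\epsilon_T}{(2\beta+d)(\beta+d)}
\]
for $T$ large enough that $\epsilon_T\le\lbeta/2$, and since $\epsilon_T\ln T=\frac{3(2\ubeta+d)^2(\ln q)\log_q\log T}{\lbeta+d-1}=\frac{3(2\ubeta+d)^2}{\lbeta+d-1}\ln\log T+O(1)$, exponentiating and using $\exp(c\ln\log T)=(\log T)^c$ gives
\[
T^{\zeta(\beta-\epsilon_T,\alpha,d)-\zeta(\beta,\alpha,d)}\le c_4\,(\log T)^{\frac{3d(\alpha+1)(2\ubeta+d)^2}{(2\beta+d)(\beta+d)(\lbeta+d-1)}}.
\]
Hence the exploitation regret is at most $c_4\bar C_0(\log T)^{\frac{3d(\alpha+1)(2\ubeta+d)^2}{(2\beta+d)(\beta+d)(\lbeta+d-1)}+\iota_0(\beta-\epsilon_T,\alpha,d)}\,T^{\zeta(\beta,\alpha,d)}$.

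\textbf{Assembly and the main obstacle.} Adding the three contributions and absorbing the strictly lower-order estimation- and failure-phase terms into the constant $\bar C$ and the stated exponent yields $\mathcal{R}^\pi(\sfP;T)\le\bar C(\log T)^{\frac{3d(\alpha+1)(2\ubeta+d)^2}{(2\beta+d)(\beta+d)(\lbeta+d-1)}+\iota_0(\beta-\epsilon_T,\alpha,d)}T^{\zeta(\beta,\alpha,d)}$, as claimed. The genuinely hard part is not this decomposition but the estimate it rests on, Theorem~\ref{theorem-GSE-smoothness-accuracy}: showing that the self-similarity lower bound on the local-polynomial estimation bias can be exploited \emph{online}, inside a live decision process, to pin down $\beta$ to accuracy $O(\log\log T/\log T)$ while spending only $O\!\big((\log T)^{O(1)}T^{(\lbeta+d-1)/(2\ubeta+d)}\big)$ rounds on exploration. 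Within the present argument, the delicate point is precisely that the estimation error $\epsilon_T$ must vanish strictly faster than $1/\log T$: otherwise $T^{\epsilon_T}$ would not be poly-logarithmic and the exploitation regret could degrade polynomially, which is exactly why the sub-logarithmic accuracy rate of the Lepski-type subroutine is what makes smoothness-adaptivity (rather than just near-adaptivity) possible.
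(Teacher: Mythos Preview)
Your proposal is correct and follows essentially the same route as the paper: split the horizon at the (random) termination time of the estimation subroutine, bound the failure event by $T\cdot\Prob(\cG^c)$ via Theorem~\ref{theorem-GSE-smoothness-accuracy} with $\gamma$ taken large, bound the estimation-phase regret by $t^\star$ using the exponent comparison $\frac{\lbeta+d-1}{2\ubeta+d}\le\zeta(\beta,\alpha,d)$, and bound the exploitation phase by invoking the input guarantee at $\hat\beta\le\beta$ and converting $T^{\zeta(\hat\beta_T,\alpha,d)-\zeta(\beta,\alpha,d)}$ into the stated poly-logarithmic factor. You are in fact slightly more careful than the paper in two places: you make explicit the H\"older-class nesting $\cP(\beta,\alpha,d)\subseteq\cP(\hat\beta,\alpha,d)$ that licenses applying $\pi_0(\hat\beta)$'s guarantee, and you flag that replacing $\iota_0(\hat\beta,\cdot)$ by $\iota_0(\hat\beta_T,\cdot)$ tacitly uses monotonicity of $\iota_0$ on $[\hat\beta_T,\beta]$---a point the paper's proof passes over silently but which holds for the concrete $\iota_0$'s in Corollaries~\ref{corllary:global-adapt-beta<1} and~\ref{corllary:global-adapt-beta>1}.
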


\noindent The proof of Theorem \ref{theorem-GSE-ABSE-regret-bound} follows from observing that, with high probability, the number of time periods that are required to generate the smoothness estimate (and hence the regret that is incurred throughout the smoothness estimation process) is ``small" relative to the optimal regret rate, and from plugging the lower confidence bound established in Theorem~\ref{theorem-GSE-smoothness-accuracy} for the smoothness estimate into the regret rate of the non-adaptive policy $\pi_0$.

When the policy $\pi_0$ that is deployed in the \texttt{SACB} policy is rate optimal in the sense that $\iota_0(\beta, \alpha, d)=0$, then the resulting \texttt{SACB} policy is smoothness adaptive according to Definition~\ref{def:adaptivity}. More precisely, the adaptation cost for the \texttt{SACB} policy is poly-logarithmic in the horizon length with the degree $\frac{3d(\alpha+1)(2\overline{ \beta} + d)^2}{(2\beta+d)(\beta+d)(\underline{\beta}+d-1)}$, which is bounded from above for any dimension $d$, and hence can be replaced by some function $\iota(\beta, \lbeta, \ubeta, \alpha)$ independent of $d$ as required in Definition~\ref{def:adaptivity}. We next demonstrate this for the cases of at-most-Lipschitz-smooth and at-least-Lipschitz-smooth payoffs.

\subsubsection{Rate optimality with at-most-Lipschitz-Smooth Payoffs}
When the estimated smoothness in the \texttt{SACB} policy is less than 1, that is, $\hat{\beta}_{\texttt{SACB}}\le 1$, one may deploy the \textit{Adaptively Binned Successive Elimination} (\verb|ABSE|) policy from \cite{perchet2013multi} as the non-adaptive input policy $\pi_0$ to guarantee rate-optimal performance without prior information on the smoothness of the payoff functions. This is formalized by the following corollary.

\begin{corollary}[Rate optimality with at-most-Lipschitz-Smooth Payoffs]\label{corllary:global-adapt-beta<1}
	Consider the setting in Theorem \ref{theorem-GSE-ABSE-regret-bound}, and suppose that $\pi_0(\beta_0) = \texttt{ABSE}( \min(1,\beta_0))$. Then, one has
	\begin{equation}\label{eq:up-bound-beta<1}
	\mathcal{R}^\pi(\sfP;T)
	\le
	\bar{C} T^{\zeta(\beta, \alpha,d)} \l(\log T\r)^{\frac{3d(\alpha+1)(2\overline{ \beta} + d)^2}{(2\beta+d)(\beta+d)(\underline{\beta}+d-1)}}
	\qquad\forall \beta \in [\lbeta, 1].
	\end{equation}
\end{corollary}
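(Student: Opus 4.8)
The plan is to read Corollary~\ref{corllary:global-adapt-beta<1} off Theorem~\ref{theorem-GSE-ABSE-regret-bound} by identifying the poly-logarithmic exponent $\iota_0$ of the chosen input policy. The key observation is that the \texttt{ABSE} policy of \citet{perchet2013multi}, when tuned at a \emph{correct} smoothness exponent not exceeding $1$, attains the minimax rate \eqref{eq:opt_rate} with no poly-logarithmic inflation: for every $\beta_0\in[\lbeta,1]$, every $\alpha\le 1/\beta_0$, and every $T\ge 1$,
\[
\sup_{\sfP\in\cP(\beta_0,\alpha,d)}\mathcal{R}^{\texttt{ABSE}(\beta_0)}(\sfP;T)\ \le\ \bar C_0\,T^{\zeta(\beta_0,\alpha,d)},
\]
so the hypothesis of Theorem~\ref{theorem-GSE-ABSE-regret-bound} holds with $\iota_0(\beta_0,\alpha,d)\equiv 0$ over the sub-range $\beta_0\le 1$.

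The one point requiring care is that the family $\pi_0(\beta_0)=\texttt{ABSE}(\min(1,\beta_0))$ meets this hypothesis only for $\beta_0\le 1$: for $\beta_0>1$, running $\texttt{ABSE}(1)$ on a genuinely $\beta_0$-smooth instance attains $T^{\zeta(1,\alpha,d)}$, which is slower than $T^{\zeta(\beta_0,\alpha,d)}$. To bridge this I would revisit the proof of Theorem~\ref{theorem-GSE-ABSE-regret-bound}: since we restrict to true exponents $\beta\in[\lbeta,1]$, Theorem~\ref{theorem-GSE-smoothness-accuracy} guarantees that on a high-probability event the estimate is \emph{downward biased}, $\hat{\beta}_{\texttt{SACB}}\le\beta\le 1$, so $\texttt{ABSE}(\min(1,\hat{\beta}_{\texttt{SACB}}))=\texttt{ABSE}(\hat{\beta}_{\texttt{SACB}})$ is invoked only at exponents $\le 1$, precisely where $\iota_0\equiv 0$ applies; on the complementary event the regret is bounded trivially by $T$ times a probability made negligible by taking the tuning parameter $\gamma\ge\gamma_0$ large, exactly as in the proof of Theorem~\ref{theorem-GSE-ABSE-regret-bound}. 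Hence the conclusion of Theorem~\ref{theorem-GSE-ABSE-regret-bound} applies here with $\iota_0\equiv 0$.

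Finally, substituting $\iota_0\equiv 0$ into the bound of Theorem~\ref{theorem-GSE-ABSE-regret-bound}, the argument $\beta-\tfrac{3(2\ubeta+d)^2\log_q\log T}{(\lbeta+d-1)\log_q T}$ of $\iota_0$ becomes immaterial and the $\log T$ exponent collapses to $\tfrac{3d(\alpha+1)(2\ubeta+d)^2}{(2\beta+d)(\beta+d)(\lbeta+d-1)}$, which is exactly \eqref{eq:up-bound-beta<1}. The only substantive ingredient beyond Theorem~\ref{theorem-GSE-ABSE-regret-bound} is the log-free minimax optimality of \texttt{ABSE} in the regime $\beta_0\le 1$ \citep{perchet2013multi}; the rest is bookkeeping, and I do not anticipate a genuine obstacle.
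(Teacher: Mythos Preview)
Your proposal is correct and follows essentially the same approach as the paper: invoke Theorem~\ref{theorem-GSE-ABSE-regret-bound} together with the log-free minimax rate of \texttt{ABSE} from \citet{perchet2013multi} to set $\iota_0\equiv 0$ on $[\lbeta,1]$, then read off the bound. Your extra paragraph justifying why the hypothesis of Theorem~\ref{theorem-GSE-ABSE-regret-bound} need only be verified for $\beta_0\le 1$ (via the high-probability event $\hat\beta_{\texttt{SACB}}\le\beta\le 1$ from Theorem~\ref{theorem-GSE-smoothness-accuracy}) is a point of rigor the paper's one-line proof leaves implicit, but the argument is otherwise identical.
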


\noindent
The \verb|ABSE| policy from \cite{perchet2013multi} relies on the knowledge of $\beta$, and achieves the rate-optimal regret of order $T^{\zeta(\beta, \alpha,d)}$ for any problem instance with $0<\beta\le1$. The \texttt{SACB} policy resulting from deploying \verb|ABSE| as an input policy when $\hat{\beta}_{\texttt{SACB}}\le 1$ is \emph{smoothness-adaptive} in the regime of smooth non-differentiable payoff functions with the adaptation penalty $\l(\log T\r)^{\frac{3d(\alpha+1)(2\overline{ \beta} + d)^2}{(2\beta+d)(\beta+d)(\underline{\beta}+d-1)}}$. 

\subsubsection{Rate optimality with at-least-Lipschitz-Smooth Payoffs}
When the estimated smoothness in the \texttt{SACB} policy is greater than 1, that is, $\hat{\beta}_{\texttt{SACB}}> 1$, one may deploy the \texttt{SmoothBandit} policy from \cite{hu2019smooth} as the non-adaptive input policy $\pi_0$. The \texttt{SmoothBandit} policy relies on the following additional assumption on the regularity of decision regions.

\begin{assumption}[Regularity] \label{assumption-regularity}
	Let $\cQ_k\coloneqq \l\{x\in[0,1]^d: (-1)^{k-1}(f_1(x)- f_2(x) ) \ge 0\r\}, k\in \cK$,  be the optimal decision regions. Then, each $\cQ_k$ is a non-empty $(c_0,r_0)$-regular set, where a Lebesgue measurable set $\cS$ is said to be $(c_0,r_0)$-regular if for all $x\in\cS$, one has
	$
	\lambda\l[ \cS \cap \mathrm{Ball}_2(x,r) \r]
	\ge
	c_0
	\lambda\l[\mathrm{Ball}_2(x,r) \r],
	$
	where $\mathrm{Ball}_2(x,r)$ is the Euclidean ball of radius $r$ centered around $x$ and $\lambda[\cdot]$ denotes the Lebesgue measure.
\end{assumption}

\noindent Under Assumption \ref{assumption-regularity}, the resulting \texttt{SACB} policy guarantees rate-optimal performance without prior information on the smoothness of the payoff functions. This is formalized by the following corollary.

\begin{corollary}[Rate optimality with at-least-Lipschitz-smooth payoffs]\label{corllary:global-adapt-beta>1}
Consider the setting in Theorem \ref{theorem-GSE-ABSE-regret-bound}, and suppose that $\pi_0(\beta_0) = \texttt{SmoothBandit}(\max(1,\beta_0))$ for $\beta_0\ge 1$. Then, if the decision regions associated with $\sfP$ satisfy the regularity condition in Assumption \ref{assumption-regularity}, one has
		\begin{equation}\label{eq:up-bound-beta>1}
		\mathcal{R}^\pi(\sfP;T)
		\le
		\bar{C}
		T^{\zeta(\beta, \alpha,d)} \l(\log T\r)^{\frac{3d(\alpha+1)(2\overline{ \beta} + d)^2}{(2\beta+d)(\beta+d)(\underline{\beta}+d-1)}+\frac{2\beta+d}{2\beta}}
		\qquad \forall \beta \in [1, \ubeta].
		\end{equation}
	
\end{corollary}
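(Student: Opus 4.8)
\textbf{Proof plan for Corollary~\ref{corllary:global-adapt-beta>1}.}
The strategy is to read the corollary off Theorem~\ref{theorem-GSE-ABSE-regret-bound} by instantiating the off-the-shelf family $\{\pi_0(\beta_0)\}$ with the \texttt{SmoothBandit} policy of \cite{hu2019smooth}. The first step is to check that $\pi_0(\beta_0)=\texttt{SmoothBandit}(\max(1,\beta_0))$ satisfies the regret hypothesis of Theorem~\ref{theorem-GSE-ABSE-regret-bound} with an explicit $\iota_0$. By \cite{hu2019smooth} (recorded in the discussion after \eqref{eq:opt_rate}), for every $\beta_0\ge 1$ and every problem instance in $\cP(\beta_0,\alpha,d)$ whose decision regions satisfy Assumption~\ref{assumption-regularity}, \texttt{SmoothBandit} tuned with the correct exponent incurs regret at most $\bar C_0(\log T)^{\frac{2\beta_0+d}{2\beta_0}}T^{\zeta(\beta_0,\alpha,d)}$; for $\beta_0<1$ the policy is run as $\texttt{SmoothBandit}(1)$ and enjoys the same bound with $\max(1,\beta_0)$ in place of $\beta_0$. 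Hence the hypothesis of Theorem~\ref{theorem-GSE-ABSE-regret-bound} holds with $\iota_0(\beta_0,\alpha,d)=\frac{2\max(1,\beta_0)+d}{2\max(1,\beta_0)}=1+\frac{d}{2\max(1,\beta_0)}$, a quantity that is non-increasing in $\beta_0$ and bounded above by $1+\tfrac d2$. (Since \texttt{SmoothBandit} attains this bound only over the regular sub-class of $\cP(\beta_0,\alpha,d)$, one restricts attention throughout to instances with $(c_0,r_0)$-regular decision regions, which is exactly the hypothesis of the corollary; the proof of Theorem~\ref{theorem-GSE-ABSE-regret-bound} invokes the $\pi_0$-bound only for the instance actually being run, so this restriction is harmless.)

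Next I would substitute this $\iota_0$ into the conclusion of Theorem~\ref{theorem-GSE-ABSE-regret-bound}. Writing
\[
\Delta_T \;:=\; \frac{3(2\ubeta+d)^2\log_{q}\log T}{(\lbeta+d-1)\log_{q}T} \;=\; \frac{3(2\ubeta+d)^2}{\lbeta+d-1}\cdot\frac{\log\log T}{\log T}
\]
(the base of the logarithm cancelling in the ratio), Theorem~\ref{theorem-GSE-ABSE-regret-bound} yields, for the \texttt{SACB} policy with $\gamma\ge\gamma_0$ and $\upsilon=\frac{d}{\lbeta}+4$,
\[
\mathcal{R}^\pi(\sfP;T)\;\le\;\bar C\,(\log T)^{\frac{3d(\alpha+1)(2\ubeta+d)^2}{(2\beta+d)(\beta+d)(\lbeta+d-1)}\,+\,\iota_0(\beta-\Delta_T,\alpha,d)}\;T^{\zeta(\beta,\alpha,d)}.
\]
It remains to replace $\iota_0(\beta-\Delta_T,\alpha,d)$ by $\frac{2\beta+d}{2\beta}$ at the cost of a bounded factor. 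Since $\beta\in[1,\ubeta]$ and $\Delta_T\to0$, for $T$ beyond a threshold (smaller $T$ being absorbed into $\bar C$) we have $\Delta_T\le\tfrac12$, and a short case analysis — separating $\beta-\Delta_T\ge1$ (where $\iota_0=1+\tfrac{d}{2(\beta-\Delta_T)}$ and one uses $\beta\ge1$) from $\beta-\Delta_T<1$ (where $\iota_0=1+\tfrac d2$ and one uses $\beta-1<\Delta_T$) — gives in both cases $\iota_0(\max(1,\beta-\Delta_T),\alpha,d)\le\frac{2\beta+d}{2\beta}+d\,\Delta_T$. Consequently
\[
(\log T)^{\iota_0(\beta-\Delta_T,\alpha,d)}\;\le\;(\log T)^{\frac{2\beta+d}{2\beta}}\cdot(\log T)^{d\Delta_T},\qquad (\log T)^{d\Delta_T}=\exp\!\Big(\tfrac{3d(2\ubeta+d)^2}{\lbeta+d-1}\cdot\tfrac{(\log\log T)^2}{\log T}\Big),
\]
and the last quantity is bounded over $T\ge2$ by a constant depending only on $\lbeta,\ubeta,d$; absorbing it into $\bar C$ produces exactly the claimed bound for all $\beta\in[1,\ubeta]$.

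The only genuinely non-mechanical point is this last propagation step, and within it the boundary regime where the smoothness estimate $\hat\beta_{\texttt{SACB}}\in[\beta-\Delta_T,\beta]$ dips just below $1$ while the true $\beta$ lies in $[1,1+\Delta_T]$: there $\pi_0$ is executed as $\texttt{SmoothBandit}(1)$ (which is precisely why the $\max(1,\cdot)$ appears), and one must verify that the resulting exponent $1+\tfrac d2$ is dominated by $\frac{2\beta+d}{2\beta}$ plus a $(\log T)$-bounded slack — which it is, since $\tfrac d2-\tfrac{d}{2\beta}=\tfrac{d(\beta-1)}{2\beta}\le\tfrac{d\Delta_T}{2}\to0$. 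I expect the rest of the argument (verifying the \texttt{SmoothBandit} regret hypothesis and the plug-in into Theorems~\ref{theorem-GSE-smoothness-accuracy}–\ref{theorem-GSE-ABSE-regret-bound}) to be routine.
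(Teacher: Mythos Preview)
Your proposal is correct and follows the same approach as the paper: invoke Theorem~\ref{theorem-GSE-ABSE-regret-bound} with the \texttt{SmoothBandit} regret guarantee $\iota_0(\beta_0,\alpha,d)=\frac{2\max(1,\beta_0)+d}{2\max(1,\beta_0)}$. The paper's proof is a one-liner that simply cites Theorem~\ref{theorem-GSE-ABSE-regret-bound} together with the \texttt{SmoothBandit} bound, whereas you additionally spell out the propagation step (bounding $\iota_0(\beta-\Delta_T)$ by $\frac{2\beta+d}{2\beta}+d\Delta_T$ and absorbing $(\log T)^{d\Delta_T}$ into $\bar C$), including the boundary case $\beta-\Delta_T<1$; this extra care is sound and in fact fills a gap the paper leaves implicit.
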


\noindent
The \texttt{SmoothBandit} policy relies on the knowledge of $\beta$ and achieves the near-optimal regret of order $\cO\l((\log T)^{\frac{2\beta+d}{2\beta}} T^{\zeta(\beta, \alpha,d)}\r)$ for any problem instance with $\beta\ge1$. The \texttt{SACB} policy, when paired with \texttt{SmoothBandit} as its non-adaptive input policy, guarantees near-optimality without prior knowledge of the smoothness in the regime of differentiable payoff functions, incurring the adaptation penalty~$\l(\log T\r)^{\frac{3d(\alpha+1)(2\overline{\beta} + d)^2}{(2\beta+d)(\beta+d)(\underline{\beta}+d-1)}}$. 

We note that the upper and lower bounds established in this regime with prior knowledge of the smoothness are separated by a factor $ \l(\log T\r)^{\frac{2\beta+d}{2\beta}}$ that is exponential in $d$. If the upper bound of \cite{hu2019smooth} is indeed optimal in the sense that the above factor cannot be removed, then Corollary~\ref{corllary:global-adapt-beta>1} establishes that the resulting \texttt{SACB} policy is smoothness adaptive in the sense of Definition \ref{def:adaptivity}. Otherwise, if another non-adaptive policy can be shown to eliminate the above factor and achieve the lower bound of order $\Omega\l(T^{\zeta(\beta, \alpha,d)}\r)$, then it could be deployed to construct a smoothness-adaptive \texttt{SACB} policy.

We conclude this subsection by noting that Corollaries \ref{corllary:global-adapt-beta<1} and \ref{corllary:global-adapt-beta>1} demonstrate that through the \texttt{SACB} policy one can achieve rate optimality without prior knowledge of the smoothness parameter $\beta$ in each of the two smoothness regimes that have been studied in the literature; that is, $\beta \leq 1$ in, e.g., \cite{perchet2013multi}, and $\beta \geq 1$ in \cite{hu2019smooth}. However, it is important to note that the \texttt{SACB} policy does not require prior knowledge of the regime in which the smoothness parameter lies in order to achieve rate optimality. This is formalized by the following remark.

\begin{remark}[Rate optimality with general smoothness]\label{remark:beta<1-and-beta>1}
Consider the setting in Theorem \ref{theorem-GSE-ABSE-regret-bound}, and suppose that $\pi_0(\beta_0) = \texttt{ABSE}(\beta_0)$ for $\beta_0\le 1$ and $\pi_0(\beta_0) = \texttt{SmoothBandit}(\beta_0)$ for $\beta_0 > 1$. Then, for $\beta\le 1$, one recovers the same regret bound as in \eqref{eq:up-bound-beta<1}, and for $\beta>1$, under Assumption \ref{assumption-regularity}, one recovers the same regret bound as in \eqref{eq:up-bound-beta>1}.
\end{remark}

\subsection{The SACB Policy}\label{sec:policy}

The \textit{Smoothness-Adaptive Contextual Bandits} (\texttt{SACB}) policy adaptively integrates a smoothness estimation sub-routine with an off-the-shelf non-adaptive policy that is rate optimal under prior knowledge of the smoothness. The smoothness estimation sub-routine consists of three consecutive steps: $(i)$ collecting samples in different regions of the covariate space; $(ii)$ estimating the payoff functions; and $(iii)$ examining a hypothesis test over the estimated functions. The policy repeats these steps until the smoothness estimation sub-routine is terminated. Then, the smoothness of the payoff functions is estimated based on the results of the hypothesis tests, and the estimate $\hat{\beta}_{\texttt{SACB}}$ is used as an input to an off-the-shelf non-adaptive policy that is designed to perform well under accurate tuning of the smoothness parmeter. We next formalize the \texttt{SACB} policy (see Algorithm 1), and discuss the estimation sub-routine in more detail in \S\ref{subsec:smoothness-estimation}.

\vspace{-.1cm}
\subsubsection{Smoothness Estimation under the SACB Policy}\label{subsec:smoothness-estimation}\vspace{-.15cm}
\noindent \textbf{Sampling.} In the \verb|SACB| policy, we consider the partition of the unit hypercube corresponding to $\cB_l$ with $l \leftarrow \left\lceil \frac{(\underline{\beta}+d-1)\log_{\oldnewam{q}} T}{(2\overline{ \beta} + d)^2}\right\rceil$. For each {bin} (hypercube) $\sfB\in \cB_l$, we collect samples for both actions in multiple rounds. Define the maximum round index as follows:
$
\bar{r}
\coloneqq
\lceil 2l\ubeta +\oldnewam{\upsilon}\log_{\textcolor{black}{q}} \log T
\rceil
$, \oldnewam{where $\upsilon$ is the under-smoothing parameter that will be introduced later.}
In every round~$r\in \{1,\dots,\bar{r}\}$, we collect $\textcolor{black}{q}^r$ samples for each action by alternating between them every time the covariate belongs to $\sfB$. If for some $\sfB\in \cB_l^{\textcolor{black}{q}}$ we reach $\bar{r}$ before the smoothness estimation sub-routine is terminated, we continue alternating between the arms every time the covariate belongs to $\sfB$. We denote by $T_{\texttt{SACB}}$ the time step at which the smoothness estimation sub-routine is terminated.\vspace{-.01cm}

\bigskip
	\begin{algorithm}[H]\footnotesize
		\caption{Smoothness-Adaptive Contextual Bandits (\texttt{SACB})}\label{algorithm-GSE}
		\textbf{Input:} Set of non-adaptive policies $\{\pi_0(\beta_0)\}_{\beta_0 \in [\lbeta , \ubeta]}$, horizon length $T$, minimum and maximum smoothness exponents $\underline{\beta}$ and $\ubeta$, a tuning parameter $\gamma$,
\color{black}
under-smoothing coefficient $\upsilon$, and a base for partitioning and advancing sampling counts $q$
\color{black}\\
		\textbf{Initialize: } $l \leftarrow \left\lceil \frac{(\underline{\beta}+d-1)\log_{\oldnewam{q}} T}{(2\overline{ \beta} + d)^2}\right\rceil$, $
			\bar{r}
			\leftarrow
			\lceil 2l\ubeta +\oldnewam{\upsilon}\log_{\oldnewam{q}} \log T
			\rceil
			$, and $\xi^{(\sfB)} \leftarrow 0 $, $N_k^{(\sfB)} \leftarrow 0 $  for all $\sfB \in \cB_l$ and $k\in\cK$\\
		\For{$t = 1, \dots$}{
			Determine the bin in which the current covariate is located: $\sfB \in \cB_l$ s.t. $ X_t\in \sfB$ \\	
			Alternate between the arms: $\pi_t \leftarrow 1 + \Indlr{N_{1}^{(\sfB)}>N_{2}^{(\sfB)}}$ \\	
			Update the counters: $N_{k}^{(\sfB)} \leftarrow N_{k}^{(\sfB)} + \Indlr{\pi_t = k}\, \forall k \in \cK$	\\
			\If{$N_{1}^{(\sfB)} + N_{2}^{(\sfB)} \ge 2 \times \textcolor{black}{q}^{r^{(\sfB)}}$ and $r^{(\sfB)} \le \bar{r}$}
			{
				\If{$\xi^{(\sfB)} = 0$ \textbf{\textsc{and}} $\sup_{k\in\cK, x \in \cM^{(\sfB)}} \l| \hat{f}_k^{(\sfB, r^{(\sfB)})}(x;j_1^{(\sfB)}) - \hat{f}_k^{(\sfB, r^{(\sfB)})}(x;j_{2}^{(\sfB)}) \r|  > \frac{ \gamma\l( \log T\r)^{\frac{d}{2\underline{\beta}} + \frac{1}{2}} }{\oldnewam{q}^{r^{(\sfB)}/2}}$ \tcc*[r]{see \eqref{eq:def-estimates}}} {
					Record $r^{(\sfB)}_{\mathrm{last}}$:  $r^{(\sfB)}_{\mathrm{last}} \leftarrow r^{(\sfB)}$; Raise the flag: $\xi^{(\sfB)} \leftarrow 1$
				}
				Update the sampling round index:
				$r^{(\sfB)} \leftarrow r^{(\sfB)} +1$; Reset the counters: $N_{k}^{(\sfB)} \leftarrow 0\; \forall k\in\cK$
			}		
			\If{\textbf{\textsc{[}} $\xi^{(\sfB^\prime)} = 1$ \textbf{\textsc{or}} $r^{(\sfB^\prime)} > \bar{r}$ \textbf{\textsc{]}} for all $\sfB^\prime \in \cB_l$}{
				Record $T_{\texttt{SACB}}:
				T_{\texttt{SACB}} \leftarrow t$\\
				\textbf{break}
			}
		}
		Estimate the smoothness: $\hat{\beta}_{\texttt{SACB}} \leftarrow \frac{1}{2l} \l[\min_{\sfB \in \cB_l} r^{(\sfB)}_{\mathrm{last}} - \oldnewam{\upsilon}  \log_{\textcolor{black}{q}} \log T\r] $
		\\
		Choose the corresponding non-adaptive policy $\pi_0 \leftarrow \pi_0(\min[\max[\lbeta, \hat{\beta}_{\texttt{SACB}}], \ubeta])$
		\\
		\For{$t=T_{\texttt{SACB}}+1,\dots,T$}{
			$\pi_t \leftarrow \pi_0\l(X_t \r)$}
	\end{algorithm}\normalsize

\medskip
\noindent \textbf{Estimation.} We briefly review the local polynomial regression method based on the analysis in \cite{audibert2007fast}; further analysis can be found in Appendix \ref{appendix-local-poly}. Let $\cD = \l\{ (X_i,Y_i) \r\}_{i=1}^n$ be a set of $n$ i.i.d. pairs $(X_i,Y_i) \in \cX\times \rR$, distributed according to a joint distribution~$P$. Denote by $\mu$ the marginal density of $X_i$'s and define the regression function $\eta(x)\coloneqq \rElr{Y\middle |  X=x}$. To estimate the value of the function $\eta$ at any point $x\in \cX$, we define the local polynomial regression method as follows.\vspace{-.1cm}
\begin{definition}[Local polynomial regression]\label{def-LPR}
	Fix a set of pairs $\cD = \l\{ (X_i,Y_i) \r\}_{i=1}^n$,a point $x\in \rR^d$,  a bandwidth $h>0$, an integer $p>0$, and the kernel function $K(\cdot)=\Indlr{\|\cdot\|_\infty\le 1}$.  Define by $\hat{\theta}_x(u; \cD, h, p) = \sum_{|s|\le p} \xi_s u^s$ a polynomial of degree~$p$ on $\rR^d$ that minimizes\vspace{-0.15cm}
	\begin{equation}\label{LPR-minimization-problem}
	\sum_{i=1}^n \l(Y_i -  \hat{\theta}_x(X_i-x; \cD, h, p)\r)^2 K\l( \frac{X_i - x}{h} \r).\vspace{-0.15cm}
	\end{equation}
The local 
estimator $\hat{\eta}^{\mathrm{LP}}(x; \cD, h, p)$ of the value $\eta(x)$ of the regression function $f(\cdot)$ at point $x$ is defined to be $\hat{\eta}^{\mathrm{LP}}(x; \cD, h, p) \coloneqq \hat{\theta}_x(0; \cD, h, p)$ if \eqref{LPR-minimization-problem} has a unique minimizer, and $\hat{\eta}^{\mathrm{LP}}(x; \cD, h, p) \coloneqq 0$ otherwise.
\end{definition}

\noindent Denote by $X_{k,1}^{(\sfB,r)}, X_{k,2}^{(\sfB,r)}, \dots$ and $Y_{k,1}^{(\sfB,r)}, Y_{k,2}^{(\sfB,r)}, \dots$ the successive covariates and outcomes when action $k$ is selected in $\sfB$ in round $r$, respectively. Denote by
$
\cD_k^{(\sfB,r)}
\coloneqq
\l\{
\l(X_{k,\tau}^{(\sfB,r)}, Y_{k,\tau}^{(\sfB,r)}\r)
\r\}_{\tau=1}^{ \textcolor{black}{q}^r}
$
the corresponding set of pairs. Define the two bandwidth exponents:
$
j^{(\sfB)}_1
\coloneqq
l$, and
$j^{(\sfB)}_2
\coloneqq
l + \lceil
\frac{1}{\underline{\beta}} \log_{\textcolor{black}{q}} \log T
\rceil.
$
Let $\tilde{l} \coloneqq \lceil \frac{\ubeta l}{\underline{\beta}} + \frac{ \log_{\textcolor{black}{q}} \log T}{\underline{\beta}}\rceil \vee \lceil (1+\ubeta)l + \log_{\textcolor{black}{q}} \log T\rceil$. For every bin $\sfB$ define the mesh points:\vspace{-0.1cm}
\begin{align*}
\cM^{(\sfB)}
&\coloneqq
\l\{
x = \l(\frac{m_1}{\textcolor{black}{q}^{\tilde{l}}},\dots, \frac{m_d}{\textcolor{black}{q}^{\tilde{l}}}\r): x \in \sfB, m_i \in \{1,\dots,\textcolor{black}{q}^{\tilde{l}}\} \text{ for } i\in\{1,\dots,d\}
\r\}.\vspace{-0.1cm}
\end{align*}
For every mesh point $x \in \cM^{(\sfB)}$, we form two separate estimates of the payoff functions, using local polynomial regression of degree $\lfloor \ubeta \rfloor$:\vspace{-0.1cm}
\begin{equation}\label{eq:def-estimates}
\hat{f}_k^{(\sfB,r)}(x;j)
\coloneqq
\hat{\eta}^{\mathrm{LP}}(x; \cD_k^{(\sfB,r)}, \textcolor{black}{q}^{-j}, \lfloor \ubeta \rfloor),
j\in\{j^{(\sfB)}_1,j^{(\sfB)}_2\}.\vspace{-0.1cm}
\end{equation}

\noindent \textbf{Hypothesis Test.}
At the end of each sampling round $r$ in bin $\sfB$, we check whether the difference between the estimations using the two bandwidth exponents $j^{(\sfB)}_1$ and $j^{(\sfB)}_2$ exceeds a predetermined threshold. Formally, for a tuning parameter $\gamma$, we check whether the following holds:\vspace{-0.1cm}
\begin{dmath}\label{eq-hypothesis-test-GSE}
	\sup_{k \in \cK, x \in \cM^{(\sfB)}} \l| \hat{f}_k^{(\sfB, r)}(x;j^{(\sfB)}_1) - \hat{f}_k^{(\sfB, r)}(x;j^{(\sfB)}_2) \r|
	\ge \frac{ \gamma\l( \log T\r)^{\frac{d}{2\underline{\beta}} + \frac{1}{2}} }{\textcolor{black}{q}^{\frac{r}{2}}}.
\end{dmath}
The left-hand side of \eqref{eq-hypothesis-test-GSE} is driven by two terms: the estimation bias of $\hat{f}_k^{(\sfB, r)}(x;j^{(\sfB)}_1)$, which is potentially larger due to a larger bandwidth; and the standard deviation of $\hat{f}_k^{(\sfB, r)}(x;j^{(\sfB)}_2)$, which is potentially larger since, on average, it is based on fewer samples. The right-hand side of \eqref{eq-hypothesis-test-GSE}, however, is proportional to the standard deviation of the estimate $\hat{f}_k^{(\sfB, r)}(x;j^{(\sfB)}_2)$. That is, by examining \eqref{eq-hypothesis-test-GSE}, we are detecting the number of samples that are required for the estimation bias of $\hat{f}_k^{(\sfB, r)}(x;j^{(\sfB)}_1)$ to dominate the standard deviation of $\hat{f}_k^{(\sfB, r)}(x;j^{(\sfB)}_2)$, which, as we will see, is dependent on the smoothness of the payoff functions. This dependence allows one to infer the smoothness of payoff functions with good precision with high probability. Denote by $r^{(\sfB)}_{\mathrm{last}}$ the smallest round index for which \eqref{eq-hypothesis-test-GSE} holds in {bin} $\sfB$ (upon this event, we set the flag $\xi^{(\sfB)}=1$). If \eqref{eq-hypothesis-test-GSE} never holds in $\sfB$, we simply set $r^{(\sfB)}_{\mathrm{last}} = \bar{r}$.

The quantity $r^{(\sfB)}_{\mathrm{last}}$ closely relates to the smoothness of the payoff functions. In what follows, we show that $\min_{\sfB \in \cB_l}r^{(\sfB)}_{\mathrm{last}}\approx 2 l \beta$ with high probability; this relation stems from $\textcolor{black}{q}^{r^{(\sfB)}_{\mathrm{last}}}$ essentially being the minimal number of samples required for the bias and standard deviation to be balanced for hypercube $\sfB$ under our procedure (in the sense of equation \ref{eq-hypothesis-test-GSE}).

We next develop high-probability bounds for $r^{(\sfB)}_{\mathrm{last}}$; following the above connection, these bounds are used for establishing the smoothness estimate in \eqref{eq:smoothness-est}, as well as in Theorem~\ref{theorem-GSE-smoothness-accuracy}. The next proposition provides a high-probability lower bound for $r^{(\sfB)}_{\mathrm{last}}$ for all the {bins} $\sfB \in \cB_l$.
\begin{prop}[High-probability lower bound for $r^{(\sfB)}_{\mathrm{last}}$] \label{proposition-GSE-lower-bound-round}
	Suppose that Assumption \ref{assumption-Holder-smoothness} holds for some $L>0$ and $\beta \in [\lbeta, \ubeta]$. Then, there exist constants $\underline{C}_r$, $\constvar[GSE-lower-bound-round1], \constvar[GSE-lower-bound-round2]$, and $\constvar[GSE-lower-bound-round3]$ such that for all $T\ge 1$,\vspace{-0.15cm}
	\[
	r^{(\sfB)}_{\mathrm{last}} <
	\underline{C}_r + 2l\beta + \left(\frac{d}{\underline{\beta}} + 1\right) \log_{\textcolor{black}{q}} \log T\vspace{-0.15cm}
	\]
	for some $\sfB\in \cB_l$, with probability less than $\constref{GSE-lower-bound-round1}  \l( \log T \r)^{\frac{d}{\underline{\beta}}} T^{ -\gamma ^2 \constref{GSE-lower-bound-round2} + \constref{GSE-lower-bound-round3} },
	$
	where the constants $\constref{GSE-lower-bound-round1}, \constref{GSE-lower-bound-round2},$ and $ \constref{GSE-lower-bound-round3}$ depend only on $\lbeta, \ubeta ,L,  \underline{\rho}, \bar{\rho}$, and $d$, and $\underline{C}_r$ depends only $\lbeta, \ubeta ,L,  \underline{\rho},$ and $ \bar{\rho}$.
	\end{prop}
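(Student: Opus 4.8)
\textit{Proof plan.} Because $r^{(\sfB)}_{\mathrm{last}}$ is by definition the first round at which the hypothesis test \eqref{eq-hypothesis-test-GSE} fires in bin $\sfB$, the event whose probability we must bound is contained in the event that \eqref{eq-hypothesis-test-GSE} holds for \emph{some} bin $\sfB\in\cB_l$ and some round index $r < r^\star \coloneqq \underline{C}_r + 2l\beta + (\tfrac{d}{\lbeta}+1)\log_q\log T$. Writing $R_r \coloneqq \gamma(\log T)^{\frac{d}{2\lbeta}+\frac12}q^{-r/2}$ for the right-hand side of \eqref{eq-hypothesis-test-GSE}, the plan is to bound, uniformly over all such $(\sfB,r)$ and over $k\in\cK$ and $x\in\cM^{(\sfB)}$, the probability that $\bigl|\hat f_k^{(\sfB,r)}(x;j_1^{(\sfB)}) - \hat f_k^{(\sfB,r)}(x;j_2^{(\sfB)})\bigr| \ge R_r$, and then to finish with a union bound over the polynomially-many such indices.

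First I would condition on the realized covariate sequence (the ``design''). Because the policy alternates the two arms deterministically within each bin, this fixes the number of samples collected for each arm in each bin at each round (exactly $q^r$ at round $r$) as well as the local-polynomial weights; the only remaining randomness is in the bounded, conditionally mean-zero noises $\epsilon_{k,\tau}$. I would then introduce the good-design event $\cG$ on which, for every bin $\sfB$, round $r$, mesh point $x\in\cM^{(\sfB)}$ and bandwidth exponent $j\in\{j_1^{(\sfB)},j_2^{(\sfB)}\}$ for which the effective number of samples --- of order $q^{r}q^{(l-j)d}$ for bandwidth $q^{-j}$ --- exceeds a poly-logarithmic threshold, the corresponding weighted design matrix in \eqref{LPR-minimization-problem} is well-conditioned. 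Using Assumption~\ref{assumption-covar-dist} (bounded density) together with the concentration and conditioning estimates for local polynomial regression from Appendix~\ref{appendix-local-poly} and \cite{audibert2007fast}, $\rP(\cG^c)$ is itself at most a constant times $(\log T)^{d/\lbeta}$ times a negative power of $T$, the union being over the same index set. On $\cG$, Assumption~\ref{assumption-Holder-smoothness} and the standard local-polynomial bias bound give $\bigl|\rElr{\hat f_k^{(\sfB,r)}(x;j)\,\big|\,\text{design}} - f_k(x)\bigr| \le c_{\mathrm{bias}}Lq^{-j\beta}$, so the two estimators' conditional means differ by at most $c_{\mathrm{bias}}L(q^{-l\beta}+q^{-j_2^{(\sfB)}\beta}) \le 2c_{\mathrm{bias}}Lq^{-l\beta}$. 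Rounds $r$ so small that the effective sample sizes fall below the poly-logarithmic threshold (in particular those on which an estimator degenerates to $0$) are treated separately: for such $r$ the threshold $R_r$ is so large that the test cannot fire unless one well-conditioned estimator deviates from its mean by an amount that is super-polynomially unlikely.

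The crucial observation is that $r < r^\star$ forces $R_r$ to dominate twice the bias difference: from $q^{-r/2} > q^{-r^\star/2} = q^{-\underline{C}_r/2}q^{-l\beta}(\log T)^{-(\frac{d}{2\lbeta}+\frac12)}$ one gets $R_r > \gamma q^{-\underline{C}_r/2}q^{-l\beta}$, which exceeds $4c_{\mathrm{bias}}Lq^{-l\beta}$ once $\gamma$ is larger than a constant determined by $\lbeta,\ubeta,L,\urho,\brho$ --- and this is precisely how the constant $\underline{C}_r$ (and the requirement that $\gamma$ be large, which is needed downstream anyway) gets pinned down. Hence, on $\cG$, the test firing at $(\sfB,r,x,k)$ implies $|Z_1|+|Z_2| \ge R_r/2$, where $Z_j \coloneqq \hat f_k^{(\sfB,r)}(x;j) - \rElr{\hat f_k^{(\sfB,r)}(x;j)\,\big|\,\text{design}}$, and so $|Z_j|\ge R_r/4$ for some $j$. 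Each $Z_j$ is, conditionally on the design, a weighted sum of independent mean-zero $[-1,1]$-valued noises whose squared weights sum to at most a constant over the effective sample size; Hoeffding's inequality therefore gives $\rP(|Z_j|\ge R_r/4 \mid \text{design}) \le 2\exp\bigl(-c\,R_r^2\,q^{r}q^{(l-j)d}\bigr)$. For $j=j_2^{(\sfB)}$, where $q^{(l-j_2)d}\asymp(\log T)^{-d/\lbeta}$, the factor $(\log T)^{d/\lbeta}$ carried by $R_r^2$ cancels and the bound becomes $2\exp(-c'\gamma^2\log T) = 2T^{-c'\gamma^2}$; for $j=j_1^{(\sfB)}=l$ it is even smaller.

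To conclude, I would union-bound the per-index probability $\lesssim T^{-c_2\gamma^2}$ over $k\in\cK$ ($2$ values), over $x\in\cM^{(\sfB)}$ (at most $q^{(\tilde l-l)d} = O\bigl(T^{a}(\log T)^{d/\lbeta}\bigr)$ points, the $(\log T)^{d/\lbeta}$ coming from the $\tfrac1{\lbeta}\log_q\log T$ term in $\tilde l$), over $\sfB\in\cB_l$ ($q^{ld}=O(T^{a'})$ bins with $l=\lceil\frac{(\lbeta+d-1)\log_q T}{(2\ubeta+d)^2}\rceil$), and over the at most $\bar r = O(\log T)$ rounds below $r^\star$; absorbing the resulting powers of $T$ and $\log T$ (and $\rP(\cG^c)$, which is of the same order) yields the stated bound $\constref{GSE-lower-bound-round1}(\log T)^{d/\lbeta}T^{-\gamma^2\constref{GSE-lower-bound-round2}+\constref{GSE-lower-bound-round3}}$. \textbf{The main obstacle} will be the uniform control of the local-polynomial estimators: establishing the good-design event $\cG$ with a failure probability that beats the polynomial-in-$T$ union-bound factor, extracting the $O(Lq^{-j\beta})$ bias bound on $\cG$ with constants that depend only on the structural parameters, and cleanly disposing of the small-round regime where estimators may be undefined --- all while keeping the bookkeeping tight enough that the polynomial exponent $\constref{GSE-lower-bound-round3}$ stays dominated by $\gamma^2\constref{GSE-lower-bound-round2}$.
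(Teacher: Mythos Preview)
Your proposal is correct and follows essentially the same route as the paper. The only difference is packaging: the paper does not condition on the design or introduce a separate good-design event, but instead invokes a single black-box concentration result (Proposition~\ref{prop-LPR-converge-to-true-value}, an extension of Theorem~3.2 in \cite{audibert2007fast}) which, for any $\delta\ge \constref{LPR-converge-to-true-value3}h^\beta$, gives $\rP\{|\hat\eta^{\mathrm{LP}}-\eta|>\delta\}\le \constref{LPR-converge-to-true-value1}\exp(-\constref{LPR-converge-to-true-value2}nh^d\underline\mu^2\bar\mu^{-1}\delta^2)$ with the design randomness, the $O(h^\beta)$ bias bound, and the Bernstein-type noise concentration all rolled in. After the same triangle-inequality split $|\hat f(j_1)-\hat f(j_2)|\le|f-\hat f(j_1)|+|f-\hat f(j_2)|$, the paper simply checks that $r\le\tilde r$ forces $R_r/q\ge \constref{LPR-converge-to-true-value3}q^{-\beta j_1}\ge \constref{LPR-converge-to-true-value3}q^{-\beta j_2}$ (your ``threshold dominates bias'' step), applies the proposition at both bandwidths, and union-bounds over $r,k,x,\sfB$ exactly as you do. Your explicit design-conditioning and small-round caveats are thus already absorbed inside that proposition, so your ``main obstacle'' is in fact already handled by an off-the-shelf lemma; otherwise the argument and the resulting constants match.
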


\noindent
The proof of Proposition~\ref{proposition-GSE-lower-bound-round} is based on the discussion provided after \eqref{eq-hypothesis-test-GSE}. Since the payoff functions belong to $\cH(\beta,L)$, their estimation bias is bounded in each {bin} $\sfB\in\cB_l$. This implies that when the number of samples is ``small," the left-hand side of \eqref{eq-hypothesis-test-GSE} is dominated by the standard deviation of $\hat{f}_k^{(\sfB, r)}(x;j^{(\sfB)}_2)$, which is proportional to the right-hand side of \eqref{eq-hypothesis-test-GSE},  with high probability. The next result complements Proposition~\ref{proposition-GSE-lower-bound-round} by providing a high-probability upper bound for $\min_{\sfB \in \cB_l} r^{(\sfB)}_{\mathrm{last}}$.
\begin{prop}[High-probability upper bound for $\min_{\sfB \in \cB_l} r^{(\sfB)}_{\mathrm{last}}$] \label{proposition-GSE-upper-bound-round}
Suppose that Assumption \ref{assumption-Holder-smoothness} holds for some $L>0$ and $\beta \in [\lbeta, \ubeta]$, and that Assumption \ref{assumption-global-self-similarity} holds for some $b>0$ and $l_0 \ge 0 $. Then, there exist some $\sfB \in \cB_l$ and some constants $\overline{C}_r$, $\constvar[GSE-upper-bound-round1],$ and $\constvar[GSE-upper-bound-round2]$ such that for all $T\ge 1$,\vspace{-0.15cm}
	\[
	r^{(\sfB)}_{\mathrm{last}} >
	\overline{C}_r + 2l\beta +\left(\frac{d}{\underline{\beta}} + 3\right) \log_{\textcolor{black}{q}} \log T\vspace{-0.15cm}
	\]
	with probability less than $\constref{GSE-upper-bound-round1} T^{ -\gamma ^2 \constref{GSE-upper-bound-round2}},
	$
where the constants $\constref{GSE-upper-bound-round1}$ and $ \constref{GSE-upper-bound-round2}$ depend only on $\lbeta, \ubeta , L,b,  \underline{\rho}, \bar{\rho}$, and $d$, and $\overline{C}_r$ depends only on $\lbeta, \ubeta ,L, b,  \underline{\rho},$ and $ \bar{\rho}$.\vspace{-0.00cm}
\end{prop}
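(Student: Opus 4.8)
I would prove the (equivalent) statement that $\min_{\sfB\in\cB_l}r^{(\sfB)}_{\mathrm{last}}\le r^{\ast}$ outside a small-probability event, where $r^{\ast}:=\overline{C}_r+2l\beta+(\tfrac{d}{\lbeta}+3)\log_{q}\log T$: it suffices to produce a single bin $\sfB^{\ast}=\sfB^{\ast}(l)\in\cB_l$ with $r^{(\sfB^{\ast})}_{\mathrm{last}}\le r^{\ast}$, since $\min_{\sfB}r^{(\sfB)}_{\mathrm{last}}\le r^{(\sfB^{\ast})}_{\mathrm{last}}$. The bin $\sfB^{\ast}$ will be the one furnished by self-similarity. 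Concretely: for $T$ large the algorithm's level $l=\lceil(\lbeta+d-1)\log_{q}T/(2\ubeta+d)^{2}\rceil$ exceeds $l_0$, so Assumption~\ref{assumption-global-self-similarity} (via Definition~\ref{def:global-ss}), applied at this $l$, at the algorithm's base $q$, and at polynomial degree $p:=\lfloor\ubeta\rfloor$ (the degree used in~\eqref{eq:def-estimates}), produces a bin $\sfB^{\ast}\in\cB_l$, an arm $k^{\ast}$, and a point $x^{\ast}\in\sfB^{\ast}$ with $|\bm{\Gamma}_{q^{-l}}^{p}f_{k^{\ast}}(x^{\ast};\sfB^{\ast})-f_{k^{\ast}}(x^{\ast})|\ge b\,q^{-l\beta}$. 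I would then pass to a mesh point: since $x\mapsto\bm{\Gamma}_{q^{-l}}^{p}f_{k^{\ast}}(x;\sfB^{\ast})$ is a degree-$p$ polynomial on a cube of side $q^{-l}$ with sup-norm bounded by a constant (Assumption~\ref{assumption-covar-dist} and norm-equivalence on $\mathrm{Poly}(p)$), it is $O(q^{l})$-Lipschitz there; combined with the $O((\cdot)^{\min\{1,\beta\}})$ modulus of continuity of $f_{k^{\ast}}$ and the fact that $\cM^{(\sfB^{\ast})}$ has spacing $q^{-\tilde l}$ with $\tilde l\gg l$, this gives a mesh point $\tilde x^{\ast}\in\cM^{(\sfB^{\ast})}$ at which the ``projection deficiency'' is still $\ge\tfrac b2 q^{-l\beta}$ for $T$ large.

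Next I would show that, except on an event of probability $<\constref{GSE-upper-bound-round1}T^{-\gamma^{2}\constref{GSE-upper-bound-round2}}$, the test~\eqref{eq-hypothesis-test-GSE} holds in $\sfB^{\ast}$ at round $r:=\lfloor r^{\ast}\rfloor$ (first checking $r\le\bar r$ for $T$ large, which is where $\beta\le\ubeta$ and the choice $\upsilon=\tfrac{d}{\lbeta}+4$ enter: $\upsilon$ leaves a full $\log_{q}\log T$ of headroom above the ``$+3$'' in $r^{\ast}$). Putting $k=k^{\ast}$, $x=\tilde x^{\ast}$ in~\eqref{eq-hypothesis-test-GSE} and abbreviating the two estimates by $\hat f_{1}:=\hat f_{k^{\ast}}^{(\sfB^{\ast},r)}(\tilde x^{\ast};j^{(\sfB^{\ast})}_{1})$ and $\hat f_{2}:=\hat f_{k^{\ast}}^{(\sfB^{\ast},r)}(\tilde x^{\ast};j^{(\sfB^{\ast})}_{2})$, I would decompose $\hat f_{1}-\hat f_{2}$ into the conditional bias of $\hat f_{1}$ (its expectation given the collected covariates, minus $f_{k^{\ast}}(\tilde x^{\ast})$), minus the conditional bias of $\hat f_{2}$, plus the two centered ``noise'' deviations $\hat f_{i}-\mathbb{E}[\hat f_{i}\mid\text{covariates}]$. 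On a high-probability ``good design'' event --- in $\sfB^{\ast}$ the empirical Gram matrices of the two local polynomial regressions are well conditioned, which holds once the effective window size is large (at round $r$ it is $\asymp T^{\Theta(1)}$, by Assumption~\ref{assumption-covar-dist}; see Appendix~\ref{appendix-local-poly}) --- the conditional bias of $\hat f_{1}$ equals the population deficiency $\bm{\Gamma}_{q^{-l}}^{p}f_{k^{\ast}}(\tilde x^{\ast};\sfB^{\ast})-f_{k^{\ast}}(\tilde x^{\ast})$ up to an $o(q^{-l\beta})$ perturbation (for $j^{(\sfB^{\ast})}_{1}=l$ the estimator's window covers the whole bin with constant kernel weight, so its conditional expectation is the value at $\tilde x^{\ast}$ of the \emph{empirical} $L_{2}$ projection of $f_{k^{\ast}}$ onto $\mathrm{Poly}(p)$ over $\sfB^{\ast}$, which converges to the population projection), hence has magnitude $\ge\tfrac b2 q^{-l\beta}-o(q^{-l\beta})$; and the conditional bias of $\hat f_{2}$ is that of a degree-$p\ge\lfloor\beta\rfloor$ local polynomial estimator at bandwidth $q^{-j^{(\sfB^{\ast})}_{2}}$, which by Assumption~\ref{assumption-Holder-smoothness} is $O(q^{-j^{(\sfB^{\ast})}_{2}\beta})=O(q^{-l\beta}(\log T)^{-\beta/\lbeta})=o(q^{-l\beta})$ since $q^{j^{(\sfB^{\ast})}_{2}-l}\ge(\log T)^{1/\lbeta}$.

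For the noise deviations, conditionally on the covariates these are sums of bounded independent centered terms, so a Bernstein bound gives that, outside an event of probability $<\constref{GSE-upper-bound-round1}T^{-\gamma^{2}\constref{GSE-upper-bound-round2}}$ (in fact far smaller), both are $O\big(\sqrt{\log T}\,q^{-r/2}(\log T)^{d/(2\lbeta)}\big)$ in magnitude --- the $(\log T)^{d/(2\lbeta)}$ being the standard-deviation inflation of $\hat f_{2}$, whose window retains only a $q^{-(j^{(\sfB^{\ast})}_{2}-l)d}\asymp(\log T)^{-d/\lbeta}$ fraction of the round's $q^{r}$ samples --- and by the definition of $r^{\ast}$ this is $O(q^{-l\beta}(\log T)^{-1})=o(q^{-l\beta})$. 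Adding the four pieces, on these events the left side of~\eqref{eq-hypothesis-test-GSE} is $\ge\tfrac b4 q^{-l\beta}$ for $T$ large, while its right side at round $r$ is $\gamma(\log T)^{d/(2\lbeta)+1/2}q^{-r/2}\le\gamma\,q^{-\overline{C}_r/2}q^{-l\beta}(\log T)^{-1}<\tfrac b4 q^{-l\beta}$ once $\log T\ge 4\gamma/b$ (for smaller $T$ the asserted bound is vacuous after enlarging $\constref{GSE-upper-bound-round1}$). Hence~\eqref{eq-hypothesis-test-GSE} holds at round $r=\lfloor r^{\ast}\rfloor$ in $\sfB^{\ast}$, so $r^{(\sfB^{\ast})}_{\mathrm{last}}\le r^{\ast}$ and therefore $\min_{\sfB}r^{(\sfB)}_{\mathrm{last}}\le r^{\ast}$; the $O(1)$ slack absorbed into $\overline{C}_r$ depends only on $\lbeta,\ubeta,L,b,\underline{\rho},\bar{\rho}$ (the dimension entering only through the $\log_{q}\log T$ coefficient), and gathering the failure probabilities --- dominated by that of the good-design event, which is super-polynomially small in $T$ --- yields the stated bound; stating it with the $\gamma^{2}$ exponent simply keeps it uniform with Proposition~\ref{proposition-GSE-lower-bound-round}.

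\textbf{Main obstacle.} The crux is the transfer in the second paragraph: upgrading the \emph{population} self-similarity lower bound of Definition~\ref{def:global-ss}, which is stated for the kernel-weighted $L_{2}(\bm{\mathrm{P}}_X)$ projection $\bm{\Gamma}$, into a lower bound on the actual \emph{conditional bias} of the coarse-bandwidth local polynomial estimator that holds uniformly on a high-probability event. This needs the local-polynomial apparatus of Appendix~\ref{appendix-local-poly} applied at the precise scale $q^{-l}$ (well-conditioning of the empirical Gram matrix; quantitative closeness of the empirical projection to the population one), together with the discretization step that the deficiency is essentially preserved at a mesh point. The Bernstein/union-bound bookkeeping for the noise terms and the verification $\lfloor r^{\ast}\rfloor\le\bar r$ are comparatively routine.
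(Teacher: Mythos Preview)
Your plan is essentially the paper's: use self-similarity at level $l$ to locate a bin $\tilde\sfB$, arm, and point with projection deficiency $\ge b\,q^{-l\beta}$; pass to a nearby mesh point (the paper does this via the Lipschitz bound for $\bm\Gamma$ in Lemma~\ref{lemma-L(P)-projection-closed-form} and the $\cH(\lbeta,L)$ modulus for $f$, exactly as you sketch); then show the test~\eqref{eq-hypothesis-test-GSE} fires at the target round by lower-bounding the coarse-bandwidth bias and upper-bounding the fine-bandwidth error and the stochastic fluctuations. The only organizational difference is that the paper packages the concentration steps into Propositions~\ref{prop-LPR-converge-to-true-value} and~\ref{prop-LPR-converge-to-L(P)-projection} (bounding $|\hat f-f|$ and $|\hat f-\bm\Gamma f|$ directly at threshold $\delta\propto\gamma\,q^{-\hat r/2}$), whereas you write out the equivalent bias--variance decomposition by hand.

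One point to sharpen: the $\gamma^{2}$ in the exponent is not automatic from your argument. In the paper the target round $\hat r$ carries an additive $2\log_q\!\big(4\gamma/(L\wedge\kappa_0)\big)$, so the sample size $q^{\hat r}$ scales like $\gamma^{2}$; applying the concentration bounds at the test threshold then gives $n\delta^{2}\asymp\gamma^{2}\log T$ and hence the $T^{-c\gamma^{2}}$ rate. Your $r^{\ast}$ has $\overline C_r$ independent of $\gamma$, so your Bernstein bounds on the noise yield only $T^{-c}$ for a fixed $c$, and the ``vacuous for small $T$ after enlarging $C_{10}$'' patch cannot absorb this uniformly in $\gamma$ (you would need $C_{10}$ to grow with $\gamma$). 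The fix is exactly what the paper does: let the constant in $r^{\ast}$ absorb a $2\log_q\gamma$ term. (The proposition as stated says $\overline C_r$ is $\gamma$-free, but the paper's own $\hat r$ contains $\log_q\gamma$; this is a harmless imprecision since downstream $\gamma$ is a fixed tuning parameter.) Also a small slip in your last paragraph: the dominant failure probability comes from the noise-deviation events (polynomially small in $T$), not the good-design event, which as you note is super-polynomially small.
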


\noindent
The proof of Proposition~\ref{proposition-GSE-upper-bound-round} is again based on the discussion provided after \eqref{eq-hypothesis-test-GSE}. Since the set of payoff functions is self-similar, the estimation bias of the estimate $\hat{f}_k^{(\sfB, r)}(x;j^{(\sfB)}_1)$ remains ``large" in at least one of the {bins} $\sfB\in\cB_l$ and for one of the arms, which implies that for that specific {bin} and arm, if the number of samples is ``large" enough, then the left-hand side of \eqref{eq-hypothesis-test-GSE} is dominated by the aforementioned bias and eventually exceeds the right-hand side of \eqref{eq-hypothesis-test-GSE} with high probability.

Based on Proposition \ref{proposition-GSE-lower-bound-round} and \ref{proposition-GSE-upper-bound-round}, we estimate the smoothness of the problem as follows:\vspace{-0.15cm}
\begin{equation}\label{eq:smoothness-est}
\hat{\beta}_{\texttt{SACB}} = \frac{1}{2l}\l[\min_{\sfB \in \cB_l} r^{(\sfB)}_{\mathrm{last}} -  \textcolor{black}{\upsilon} \log_{\textcolor{black}{q}} \log T\r].\vspace{-0.15cm}
\end{equation}

\noindent Note that in order to avoid costly estimation errors this estimate is designed to be less than $\beta$ with high probability \oldnewam{when $\upsilon=\left(\frac{2d}{\underline{\beta}} + 4\right)$}, which is commonly referred to as ``under-smoothing" in the construction of confidence intervals; see, e.g., \cite{bickel1973some}, \cite{hall1992effect}, \cite{picard2000adaptive}, and \cite{gine2010confidence}.

We conclude this section with a discussion on the low sample complexity of our smoothness estimation sub-routine relative to the optimal regret rates. In order to achieve rate optimality one is required to estimate the smoothness $\beta$ with precision of order $\frac{1}{\log T}$. Broadly speaking, our proposed estimation sub-routine $(i)$ collects $n$ independent samples from payoff functions that are H\"{o}lder-smooth and self-similar; $(ii)$ partitions the unit hypercube into hypercubes of side-length $h$; and $(iii)$ estimates the payoff functions in each hypercube, using local polynomial regression. The resulting estimation bias is of order~$h^\beta$. However, using our proposed sub-routine, one may evaluate the estimation bias as $c h^\beta$ for some finite $c$. This results in an estimate of $\beta$ of the form $\frac{\log(c h^\beta)}{\log h} = \beta + \frac{c}{\log h}$. Hence, to achieve precision of order $\frac{1}{\log T}$, it suffices to have $h=T^{p}$ for some $p$. In order for the estimation sub-routine to perform well, one requires the estimation bias (which is of order $h^\beta$) and the estimation standard deviation (which is of order $\frac{1}{\sqrt{n h ^d}}$) to be balanced; that is, $n$ should be of order $T^q$ for some $q$. Finally, one may set $q$ to be arbitrarily small such that $n$ is not large relative to the optimal regret rate.

\vspace{-0.1cm}
\color{black}
\section{Numerical Analysis}\label{sec-numerics}\vspace{-0.1cm}
We simulate the performance of the policies \texttt{SACB}, \texttt{ABSE}($\beta$) that is initiated by the correct smoothness parameter $\beta$, and \texttt{ABSE}($\tilde \beta$) that is initiated by some misspecified smoothness parameter $\tilde \beta$. The simulation code can be found at \url{https://github.com/armmn/Smoothness-Adaptive-Contextual-Bandits}, and a summary of results in various settings is provided in Appendix~\ref{appendix:numerics}.

\medskip
\noindent 
\textbf{Setup.} We consider here a setting with one-dimensional covariates with support in the segment $[0,1]$, and rewards that are Gaussian with standard deviation $\sigma=0.05$. (While for simplicity our model assumes binary rewards, we note that theoretical results that are similar to the ones established in this paper could be obtained for other sub-Gaussian reward distributions.) In order to demonstrate the cost of smoothness misspecification (and, correspondingly, the value of adaptation), we use payoff functions that are inspired by those that were used in the analysis of Example~\ref{exp:cost-smoothness-misspecification}. Let\vspace{-0.1cm}
$
\phi(x)
=
	\Indlr{|x|\le1}(1-|x|)^\beta
$.
For some parameters $C,L_1,M,$ and $m$, the payoff functions are defined as follows:\vspace{-0.1cm}
\begin{align*}
	f_1(x)
	&=
	\begin{cases}
		\frac{1  + L_1(\frac{1}{2})^\beta - L_1x^\beta}{2} & \text{ if } 0 \le  x \le \frac{1}{2};\\
		\frac{1}{2} + \sum\limits_{j=1}^{m} (-1)^j(2M)^{-\beta} C \phi\l( 2M[(2-2x)-a_j] \r)  & \text{ if } \frac{1}{2} <  x \le 1;
	\end{cases}
	\\ & &\\
	\bigskip f_2(x)
	&=
	\begin{cases}
		\frac{1  + L_1(\frac{1}{2})^\beta - L_1x^\beta}{2} & \text{ if } 0 \le  x \le \frac{1}{2};\\
		\frac{1}{2}   & \text{ if } \frac{1}{2} <  x \le 1,
	\end{cases}
\end{align*}
where $a_j = \frac{j + \frac{1}{2}}{M}$ for each $j=1,\ldots,m$. We have considered two different settings in terms of the parameter selection: Settings I and II that are inspired by the problem design in Parts 1 and 2 of Example~\ref{exp:cost-smoothness-misspecification}, respectively, and are detailed in Table \ref{table-num-values}.

\newcolumntype{C}[1]{%
	>{\vbox to 5ex\bgroup\vfill\centering}%
	p{#1}%
	<{\egroup}}
\begin{table}[H]
		
	\centering          
	\small   
	\begin{tabular}{|l|l|l|l|l|l|l|l|}
		\hline
		\textbf{Setting} & \boldmath{$M$} & \boldmath{$\alpha$} & \boldmath{$m$} & \boldmath{$\tau$} &\boldmath{$L_1$} & \boldmath{$C$}
		\\
		\hline
		\textbf{I} & $\frac{1}{16} \l \lfloor \frac{1}{2c_0} \l( \frac{2 \log 2 }{T} \r)^{\frac{-\tau}{2\tau+1}} \r\rfloor ^{\frac{1}{\beta}}$ & 0.01 & $\lfloor  M^{1-\alpha \beta}\rfloor$ & 0.8 & 1 & 1
		 \\
		\hline
		\textbf{II}  & $2^{\lceil \frac{\log_2\l( T /2\log 2\r)}{\tau+1 }\rceil}/4$ & $\frac{1}{\beta}$ & $\lfloor  M^{1-\alpha \beta}\rfloor$ & 0.6 & 1 & 50
		\\
		\hline
	\end{tabular}
	\caption{\footnotesize Numerical values of parameters}\vspace{-0.0cm}
\label{table-num-values}
\end{table}
\vspace{-.2cm}
The \texttt{ABSE} policy is tuned by two parameters $c_0 = 2$ and $\gamma_{\texttt{ABSE}} = 2$, and the $\texttt{SACB}$ policy is tuned by $\gamma_{\texttt{SACB}} = 0.145$, $q=1.1$, $\upsilon=0.325$, $\lbeta=0.4$, and $\ubeta = 1$.

\medskip
\noindent 
\textbf{Results.} Plots comparing the performance of the aforementioned policies for the horizon length $T =~2\times 10^6$ in Settings I, and II  appear in Figure~\ref{fig:sim_fixed_T} where $\beta = 0.9$ and  $\beta = 0.5$, respectively, and where $\tilde \beta \in \{0.4,0.45,\dots,1\}$ in both cases.  We note that the results are consistent across different smoothness values, horizon lengths, and payoff structures; for a summary of results for additional values of $\beta$ and $T$, as well as payoff functions that are generated randomly according to fractional Brownain motion and Brownian bridge models see Appendix~\ref{appendix:numerics}.\vspace{-0.3cm}
\begin{figure}[h]
	\centering
	\begin{subfigure}[t]{0.44\textwidth}
		\raisebox{-\height}{\includegraphics[width=\textwidth]{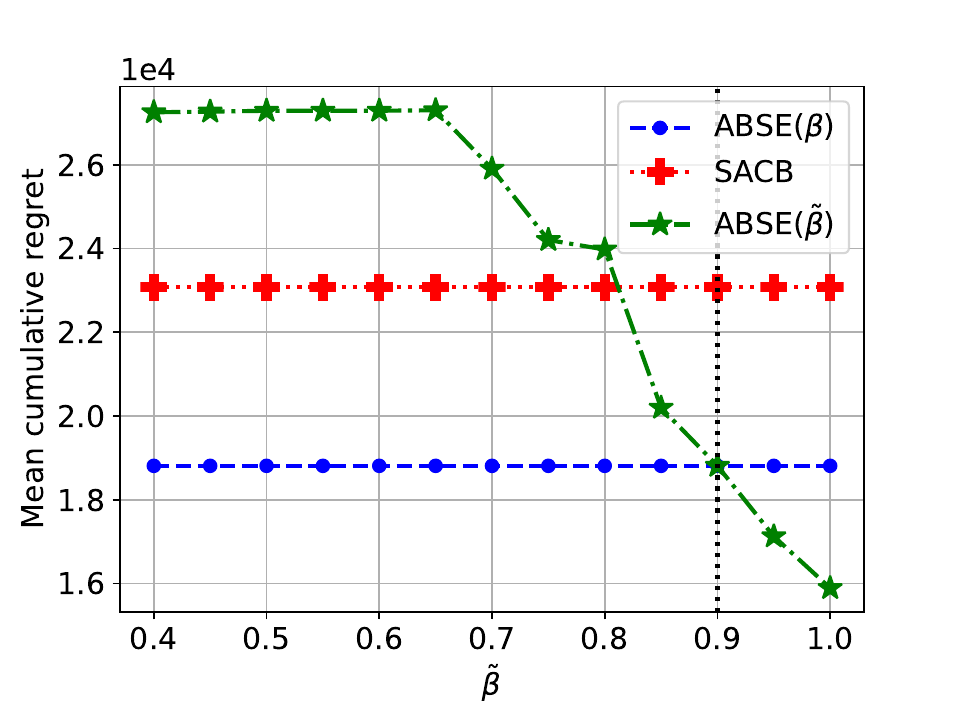}}
	\end{subfigure}
	\begin{subfigure}[t]{0.44\textwidth}		\raisebox{-\height}{\includegraphics[width=\textwidth]{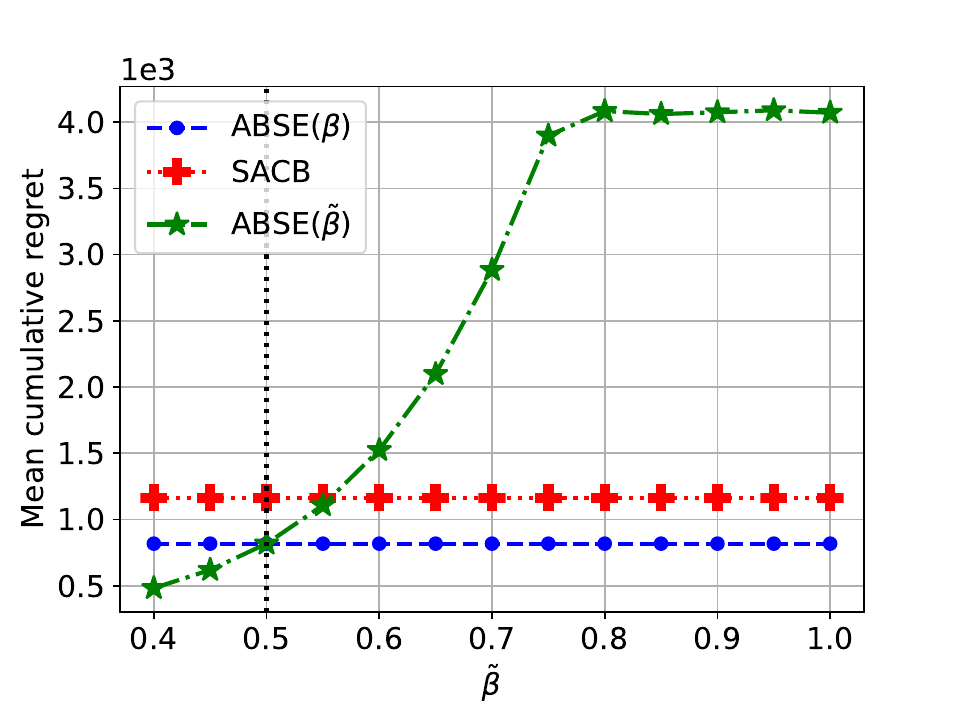}}
	\end{subfigure}
	\vspace{-0.2cm}
	\caption{\footnotesize \color{black}Average cumulative regret for horizon length $T=2\times 10^6$. The value of smoothness parameter $\beta$ is denoted using the vertical dashed line. \textit{Left:} Setting I with $\beta = 0.9$; \textit{Right:} Setting II with $\beta = 0.5$. The maximum 95\% confidence interval widths for \texttt{ABSE}($\beta$), \texttt{SACB}, and \texttt{ABSE}($\tilde{\beta}$) are $0.003 \times 10^4$, $0.057\times 10^4$, and $0.004\times 10^4$ in Setting I, and $0.005\times 10^3$, $0.122\times 10^3$, and $0.024\times 10^3$ in Setting II, respectively. }\label{fig:sim_fixed_T}\vspace{-0.1cm}
\end{figure}

Each point in the plots corresponds to the cumulative regret of the respective policy, averaged over $40$ iterations. In Setting I (Setting II), as $\tilde \beta$ gets smaller (larger) compared to $\beta$, the cost of smoothness misspecification incurred by the misspecified \texttt{ABSE}($\tilde{\beta}$) policy increases and dominates the cost of adaptation incurred by the \texttt{SACB} policy (relative to the performance achieved by \texttt{ABSE}($\beta$), tuned by the accurate smoothness parameter $\beta$). As one may expect, we observe that the value of adaptation relative to deploying a misspecified \texttt{ABSE} policy increases with the horizon length; see Appendix~\ref{appendix:numerics}.
However, as $\tilde{\beta}$ gets closer to $\beta$ the cost of smoothness misspecification decreases and eventually, when the smoothness misspecification is sufficiently small, the cost associated with it becomes smaller than the cost of adaptation. These results therefore demonstrate that the value captured by adapting to the smoothness of the payoff functions might be particularly large for long decision horizons, and when there is a risk of significant misspecification of the smoothness parameter.

Note that, in some cases (including the ones demonstrated in Figure~\ref{fig:sim_fixed_T}), the performance of \texttt{ABSE}($\tilde{\beta}$) with a misspecified value of $\tilde{\beta}$ may be slightly better than the performance of \texttt{ABSE}($\beta$), tuned by the true smoothness parameter $\beta$. This is not in contrast with the analysis provided here and in prior work: recall that \cite{perchet2013multi} only established the \emph{minimax} optimality of \texttt{ABSE}($\beta$) tuned by the true smoothness $\beta$ over the class of problems with payoff functions of smoothness $\beta$. In particular, \texttt{ABSE}($\beta$) is not guaranteed to minimize regret under every function in that class.

\color{black}

\section{Concluding Remarks}

\textbf{Summary and Implications.} In this paper, we studied the problem of designing algorithms that adapt to unknown smoothness of payoff functions in a non-parametric contextual MAB setting. First, we showed that, in general, it is impossible to achieve rate-optimal performance simultaneously over different classes of payoff functions in the following sense: there exist some pairs of smoothness parameters $(\gamma, \beta)$ such that no policy can simultaneously attain optimal regret rates over the problems $ \cP(\gamma,\alpha, d)$ and $ \cP(\beta,\alpha, d)$. This implies that, in general, one may incur a non-trivial adaptation cost when the smoothness of payoffs is misspecified or a priori unknown.

We overcame the impossibility of adaptation by leveraging a self-similarity condition that, as we established, does not reduce the minimax complexity of the problem. We devised a general policy based on: $(i)$ inferring the smoothness of the payoff functions from observations that are collected throughout the decision-making process; and $(ii)$ using effective non-adaptive policies like off-the-shelf input policies. We showed that this approach allows one to guarantee the best regret rate that is achievable given the underlying smoothness exponent $\beta$ that characterizes the problem instance, without requiring prior knowledge of that smoothness. Our policy is \emph{smoothness adaptive}, in the sense of achieving that rate up to a multiplicative term that is poly-logarithmic in the horizon length and a multiplicative constant that may depend on other problem parameters. We demonstrate our approach by leveraging non-adaptive policies designed for payoff functions that are at most Lipschitz smooth and at least Lipschitz smooth to guarantee rate optimality without prior information on the underlying payoff smoothness.

\color{black}
The SACB policy demonstrates how Lepski's approach, which was originally designed for learning smoothness in static settings, can be appropriately adapted for obtaining performance guarantees in a dynamic operational setting. Particularly, our analysis shows that such a new variant of this approach, together with leveraging the structure of effective non-adaptive policies, leads to smoothness adaptivity and rate optimality in the non-parametric contextual MAB problem without prior knowledge of the smoothness of payoff functions.

\color{black}
\medskip\noindent
\textbf{Avenues for Future Research.} Our study presents several new research directions. One open question is whether our impossibility statement holds for any pair of H\"{o}lder exponents $(\gamma, \beta)$. More precisely, it is left to understand whether there is any policy that can achieve rate-optimal performance simultaneously over two different problem instances characterized by different smoothness parameters, without additional assumptions such as that of self-similarity. If not, it would be desirable to extend our analysis in order to establish impossibility of adaptation for any pair of H\"{o}lder exponents $(\gamma, \beta)$.

Another path is to study how tight the lower bound provided in Theorem \ref{theorem-impossibility-adaptation-smoothness} is. In other words, for pairs of smoothness parameters $(\gamma, \beta)$ over which impossibility of adaptation is established, can one design a MAB policy that achieves rate-optimal performance over problem instances characterized by $\gamma$, and incurs the regret rate provided in Theorem \ref{theorem-impossibility-adaptation-smoothness} for problem instances characterized by $\beta$?

Another interesting question is whether there exists any assumption weaker than that of self-similarity that allows for designing smoothness-adaptive policies. If not, a natural direction would be to study the adaptation cost that one has to incur with respect to the self-similarity constant $b$ in Definition~\ref{def:global-ss}.

\oldnewam{
Last but not least, one may design a unified approach for adapting to the smoothness of payoffs and the intrinsic dimension of covariates. In fact, after our work, \cite{li2020dimension} proposed a method for adapting to the intrinsic dimension of covariates in a non-parametric contextual MAB setting. It would be interesting to design a variant of the \texttt{SACB} policy that, when paired with a base policy that adapts to the intrinsic dimension of covariates, achieves adaptivity over both smoothness and intrinsic dimension with a small adaptation cost that does not grow fast with the covariate space dimension.
}

\vspace{0.05cm}
	\footnotesize
\setstretch{0.8}
\bibliographystyle{chicago}
\bibliography{references}
\bibliographystyle{informs2014}
\normalsize
\setstretch{1.41}
\newpage
\appendix

\section{Proofs of main results}\label{appendix: proof of main results}\vspace{-0.1cm}
	\subsection{Proof of Part \text{1} of Theorem \ref{theorem-impossibility-adaptation-smoothness}}
	\label{section-proof-impossibility-part1}\vspace{-0.1cm}
	
	In this section, we describe the proof of Part \text{1} of Theorem \ref{theorem-impossibility-adaptation-smoothness}. The proof follows the next steps. In Step 1, we discuss some notations and definitions including the definition of inferior sampling rate. In Step 2, we leverage a result from \cite{rigollet2010nonparametric} to connect regret and inferior sampling rate, which enables one to simplify analysis by focusing on the inferior sampling rate throughout the proof.
	
	In Step 3, which is a key step of the proof, we reduce the problem at hand to a hypothesis testing problem by introducing a novel construction of a set of problem instances. This set consists of a nominal problem instance with smoothness parameter $\gamma$ and some other problem instances with smoothness parameter $\beta$, each of which differs from the nominal one only over a specific region of the covariate space. These problem instances are designed to connect between the amount of exploration and the ability to identify the correct smoothness parameter. This construction is designed for showing that if a policy achieves rate-optimal performance for smooth problems, it is likely to underexplore in ``rougher" problems, and hence not be able to differentiate between the two.
	
	In Step 4, we verify that the aforementioned problem instances satisfy the margin condition. In Step~5, we show that with high probability, the number of covariates that belong to the regions mentioned in Step~3 grow linearly with respect to the time horizon and the volume of the regions. In Steps 6 and 7, we show that since the policy is rate optimal for $\gamma$-smooth problems, it cannot distinguish between the nominal problem and at least one of the $\beta$-smooth problems. In Step 8, we lower bound the inferior sampling rate due to not being able to identify the correct smoothness parameter (some high-level ideas in Steps 7 and 8 are adopted from the proof of Theorem 3 in \cite{locatelli2018adaptivity}). In Step 9, we revert back the lower bound on inferior sampling rate to a lower bound on regret.

	\paragraph{Step 1 (Preliminaries).}
	For any policy $\pi$ and decision horizon length $T$, let $\cS^\pi(\sfP; T)$ be the \textit{inferior sampling rate} defined as\vspace{-0.15cm}
	\begin{equation}\label{eq:inferior-sampling-def}
		\cS^\pi(\sfP; T) \coloneqq \mathbb{E}^\pi \left[ \sum\limits_{t=1}^T \Indlr{ f_{\pi^\ast_t}(X_t) \neq f_{\pi_t}(X_t)} \right].
	\end{equation}
	Fix a covariate distribution $\bm{\mathrm{P}}_X$. For any policy $\pi$ and function $f:[0,1]^d \rightarrow [0,1]$, denote by $\cS^\pi(f;T)$ the inferior sampling rate of $\pi$ when $\bm{\mathrm{P}}_X$ is the covariate distribution, $\rElr{Y_{1,t} \;\middle|\; X_t} = f(X_t)$, and $\rElr{Y_{2,t} \;\middle|\; X_t} = \frac{1}{2}$.  Notably, the oracle policy $\pi_f^\ast$ is given by $\pi_f^\ast(x) = 2-\Indlr{f(x) \ge \frac{1}{2}}$. We further denote by $\rP_{\pi,f}$ and $\rE_{\pi,f}$ the corresponding probability and expectation. Finally, for any H\"{o}lder exponent $\beta>0$ and margin parameter $\alpha>0$, define: \vspace{-0.1cm}
	\[
	\cR^\pi_{\beta,\alpha}(T) \coloneqq \sup_{\sfP \in \cP(\beta, \alpha, d)}\mathcal{R}^\pi(\sfP;T);
	\qquad
	\cS^\pi_{\beta,\alpha}(T) \coloneqq \sup_{\sfP \in \cP(\beta, \alpha, d)}\mathcal{S}^\pi(\sfP;T).\vspace{-0.1cm}
	\]
	Fix $T\ge1$, two H\"{o}lder exponents $0<\beta < \gamma \le 1 $, a margin parameter $0\le \alpha \le \frac{1}{\gamma}$, a~positive Lipschitz constant $L$, and positive constants $\underline{\rho},\bar{\rho}$ such that $\bm{\mathrm{P}}_X$ satisfies Assumption~\ref{assumption-covar-dist} with parameters $\underline{\rho},\bar{\rho}$.

	\paragraph{Step 2 (From regret to inferior sampling rate).}
	The following lemma implies that it suffices to first analyze inferior sampling rate and then, revert the result back to regret.
	\begin{lemma}[\protect{\citealt[Lemma 3.1]{rigollet2010nonparametric}}]
		\label{lemma-regret-inferior-sampling-rate}	For any $\alpha >0 $ under the margin condition in Assumption \ref{assumption-margin}, one has\vspace{-0.1cm}
		\[
		\cS^\pi(\sfP; T)
		\le
		C_{sr} T^{\frac{1}{\alpha+1}} [\cR^\pi(\sfP; T)]^{\frac{\alpha}{\alpha+1}},
		\]
		for any policy $\pi$ and some positive constant $C_{sr}$.
	\end{lemma}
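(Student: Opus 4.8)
The statement is the standard ``margin-type'' comparison between inferior sampling rate and regret, and the plan is to relate the two pointwise and then balance a small-gap contribution, controlled by Assumption~\ref{assumption-margin}, against a large-gap contribution, controlled by the regret itself. Write $\Delta(x) \coloneqq |f_1(x) - f_2(x)|$ for the gap function and set $e_t \coloneqq \Indlr{f_{\pi^\ast_t}(X_t) \neq f_{\pi_t}(X_t)}$, so that $\mathcal{S}^\pi(\sfP;T) = \mathbb{E}^\pi\big[\sum_{t=1}^T e_t\big]$. The elementary observation driving the proof is that the instantaneous regret equals $f_{\pi^\ast_t}(X_t) - f_{\pi_t}(X_t) = \Delta(X_t)\, e_t$: if $\pi_t$ agrees with an oracle action, or if the two arms are tied at $X_t$, both sides vanish, and otherwise the instantaneous regret is precisely the gap $\Delta(X_t)$ between the chosen arm and the optimal one. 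Consequently $\mathcal{R}^\pi(\sfP;T) = \mathbb{E}^\pi\big[\sum_{t=1}^T \Delta(X_t)\, e_t\big]$, so the two performance measures differ only through the weight $\Delta(X_t)$, which is small exactly on the region the margin condition controls; note also $e_t = e_t\,\Indlr{\Delta(X_t) > 0}$, since $e_t = 1$ forces $f_1(X_t) \neq f_2(X_t)$.

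Next I would fix a threshold $\delta \in (0,1]$ and decompose $e_t = e_t\,\Indlr{0 < \Delta(X_t) \le \delta} + e_t\,\Indlr{\Delta(X_t) > \delta}$. For the first piece, bound $e_t \le 1$ and use that each $X_t$ is marginally $\bm{\mathrm{P}}_X$-distributed, so Assumption~\ref{assumption-margin} yields $\mathbb{E}^\pi\big[\sum_{t=1}^T \Indlr{0 < \Delta(X_t) \le \delta}\big] \le C_0 T \delta^\alpha$. For the second piece, on $\{\Delta(X_t) > \delta\}$ we have $e_t \le \delta^{-1}\Delta(X_t)\, e_t$, hence this part contributes at most $\delta^{-1}\mathbb{E}^\pi\big[\sum_{t=1}^T \Delta(X_t)\, e_t\big] = \delta^{-1}\mathcal{R}^\pi(\sfP;T)$. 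Adding the two,
\[
\mathcal{S}^\pi(\sfP;T) \;\le\; C_0 T \delta^\alpha + \delta^{-1}\mathcal{R}^\pi(\sfP;T), \qquad \text{for every } \delta \in (0,1].
\]

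Finally I would optimize over $\delta$. The unconstrained minimizer of $\delta \mapsto C_0 T \delta^\alpha + \delta^{-1}\mathcal{R}^\pi(\sfP;T)$ is $\delta^\star = \big(\mathcal{R}^\pi(\sfP;T)/(\alpha C_0 T)\big)^{1/(\alpha+1)}$, and substituting it back gives precisely a bound of the claimed form $C_{sr}\, T^{1/(\alpha+1)}[\mathcal{R}^\pi(\sfP;T)]^{\alpha/(\alpha+1)}$, with $C_{sr} = C_0^{1/(\alpha+1)}\big(\alpha^{1/(\alpha+1)} + \alpha^{-\alpha/(\alpha+1)}\big)$ depending only on $\alpha$ and $C_0$. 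The only point needing a little care is the feasibility constraint $\delta \le 1$: when $\delta^\star > 1$ one has $\mathcal{R}^\pi(\sfP;T) > \alpha C_0 T$, and then the trivial bound $\mathcal{S}^\pi(\sfP;T) \le T$ already delivers the inequality (after possibly enlarging $C_{sr}$), since in that regime $T \le (\alpha C_0)^{-\alpha/(\alpha+1)} T^{1/(\alpha+1)}[\mathcal{R}^\pi(\sfP;T)]^{\alpha/(\alpha+1)}$. I do not anticipate a genuine obstacle; the only subtlety worth double-checking is that the identity ``instantaneous regret $= \Delta(X_t)\, e_t$'' correctly absorbs ties in the maximizer of $k \mapsto f_k(X_t)$, which it does because $e_t$ is defined via $f_{\pi^\ast_t}(X_t) \neq f_{\pi_t}(X_t)$ rather than via $\pi_t \neq \pi^\ast_t$.
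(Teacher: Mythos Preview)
Your proof is correct and is exactly the standard margin-splitting argument. The paper does not give its own proof of this lemma; it simply cites it as Lemma~3.1 of \citet{rigollet2010nonparametric}, and your argument is essentially the one found there.
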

	\noindent By Lemma \ref{lemma-regret-inferior-sampling-rate}, we have $\cS^\pi_{\beta,\alpha}(T)
	\le
	C_{sr} T^{\frac{1}{\alpha+1}} \l[\cR^\pi_{\beta,\alpha}(T)\r]^{\frac{\alpha}{\alpha+1}}$.
	Note that when $\pi$ is rate-optimal over $\cP(\gamma,\alpha,d)$, Lemma~\ref{lemma-regret-inferior-sampling-rate} implies that for some constants $C_r,C_s>0$, one has:\vspace{-0.15cm}
	\[
	\cR^\pi_{\gamma,\alpha}(T)
	\le
	C_r T^{1-\frac{\gamma(1+\alpha)}{2\gamma+d}}
	\eqqcolon
	\cR^\ast_{\gamma,\alpha}(T);
	\qquad
	\cS^\pi_{\gamma,\alpha}(T)
	\le
	C_s T^{1-\frac{\gamma\alpha}{2\gamma+d}}
	\eqqcolon
	\cS^\ast_{\gamma,\alpha}(T).\vspace{-0.15cm}
	\]

	\paragraph{Step 3 (Constructing problem instances).}
	In this step we reduce our problem to a hypothesis testing problem. In order to do so, we first construct some problem instances. Defining $M\coloneqq \lceil \Delta^{\alpha-\frac{d}{\beta}} \rceil$ and $C_{\phi} \coloneqq
	\frac{L}{2^{2+2\beta}}$, fix the parameter~$\Delta>0$ such that\vspace{-0.15cm}
	\[
	\frac{64C_{\phi}^2\Delta^2\cS^\ast_{\gamma,\alpha}(T)}{3M}
	=
	\frac{1}{2}.\vspace{-0.15cm}
	\]
	This selection of $\Delta$ implies that for large enough $T$ one has
	$
	C_{\phi}\Delta \le \frac{1}{4}.
	$
	For any $0<\kappa \le 1$, define the functions $\tilde{\psi}_{\kappa}$ and $\hat{\psi}_{\kappa}$ as follows:\vspace{-0.2cm}
	\begin{align*}
		\tilde{\psi}_{\kappa}(x)
		\coloneqq
		\begin{cases}
			\l| 1-\|x\|_\infty \r|^\kappa & \text{if } 0\le \|x\|_\infty \le 1;\\
			0 & \text{o.w.};
		\end{cases}
		\qquad
		\hat{\psi}_{\kappa}(x)
		\coloneqq
		\begin{cases}
			\l| 1-\|x\|_\infty \r|^\kappa & \text{if } 0\le \|x\|_\infty \le 1;\\
			-\l| \|x\|_\infty -1  \r|^\kappa & \text{if } 1\le \|x\|_\infty \le 2;\\
			-1 & \text{o.w.}
		\end{cases}
	\end{align*}
	Note that $\tilde{\psi}_{\kappa}\in \cH_{\rR^d}(\kappa,1)$ and $\hat{\psi}_{\kappa}\in \cH_{\rR^d}(\kappa,2)$. The following two lemmas (proved in Appendix \ref{app:auxiliary}) are the main tools to analyze the smoothness of the payoff functions that we construct in this step.
	\begin{lemma}[Scaling and smoothness]\label{lemma-multip-presrves-smoothness}
		Suppose $f\in \cH_{\rR^d}(\beta, L)$ for some  $0<\beta\le 1$ and $L>0$, and define the function $g$ such that $g(x) = C^{-\beta} f(Cx)$ for all $x\in \rR^d$ and some $C>0$. Then, $g \in   \cH_{\rR^d}(\beta, L)$.
	\end{lemma}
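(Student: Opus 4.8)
The plan is to verify directly the two defining properties of the class $\cH_{\rR^d}(\beta,L)$ for the function $g$, taking advantage of the fact that $0<\beta\le 1$ forces $\lfloor\beta\rfloor=0$. First I would record that in this regime the degree-$\lfloor\beta\rfloor$ Taylor expansion of any admissible function at a point $x$ is just the constant value $f(x)$, so that membership in $\cH_{\rR^d}(\beta,L)$ is equivalent to the two requirements that $g$ is continuous on $\rR^d$ and that $|g(x')-g(x)|\le L\|x-x'\|_\infty^{\beta}$ for all $x,x'\in\rR^d$.

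Continuity of $g$ is the routine step: every \Holder function is continuous, and $x\mapsto Cx$ is continuous, so $g(x)=C^{-\beta}f(Cx)$ is continuous as a composition.

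For the \Holder bound I would fix $x,x'\in\rR^d$ and write $g(x')-g(x)=C^{-\beta}\bigl(f(Cx')-f(Cx)\bigr)$. Applying the \Holder property of $f$ at the rescaled points $Cx,Cx'$ gives $|f(Cx')-f(Cx)|\le L\|Cx-Cx'\|_\infty^{\beta}$, and since $\|Cx-Cx'\|_\infty=C\|x-x'\|_\infty$ for $C>0$ this equals $L\,C^{\beta}\|x-x'\|_\infty^{\beta}$. Multiplying through by the prefactor $C^{-\beta}$ cancels the $C^{\beta}$, leaving $|g(x')-g(x)|\le L\|x-x'\|_\infty^{\beta}$, which is exactly the claim, so $g\in\cH_{\rR^d}(\beta,L)$.

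There is no genuine obstacle here; the only point demanding any care is the bookkeeping of the constant $C$. Rescaling the argument by $C$ multiplies the sup-norm distance by $C$, hence the \Holder bound by $C^{\beta}$, and this is precisely compensated by the normalization $C^{-\beta}$ built into the definition of $g$, so the \Holder constant $L$ is preserved exactly with no loss. Note that no smoothness beyond $\beta\le 1$ is exploited, which is why the reduction to the degree-$0$ Taylor expansion suffices and the argument stays this short.
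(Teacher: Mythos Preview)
Your proof is correct and follows essentially the same approach as the paper: both reduce to the observation that $|g(x)-g(y)|=C^{-\beta}|f(Cx)-f(Cy)|\le C^{-\beta}L\|Cx-Cy\|_\infty^\beta=L\|x-y\|_\infty^\beta$. Your version is slightly more explicit about why $\lfloor\beta\rfloor=0$ collapses the \Holder condition to this simple form and about continuity, but the core computation is identical.
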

	
	\begin{lemma}[Min/Max and smoothness]\label{lemma-max-presrves-smoothness}
		Suppose $f,g\in \cH_{\cX}(\beta, L)$ for some $\cX \subseteq \rR^d$,  $0<\beta\le 1$ and $L>0$, and define the functions $h_1 \coloneqq \max(f,g)$ and $h_2 \coloneqq \min(f,g)$. Then, $h_1, h_2\in   \cH_{\cX}(\beta, L)$.
	\end{lemma}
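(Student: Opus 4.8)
The plan is to reduce the claim to the plain modulus-of-continuity form of the Hölder condition, which is available precisely because $0<\beta\le1$. For such $\beta$ one has $\lfloor\beta\rfloor=0$, so the degree-$\lfloor\beta\rfloor$ Taylor polynomial $g_x(\cdot)$ of any function $g$ is just the constant $g(x)$, and hence membership in $\cH_\cX(\beta,L)$ is equivalent to continuity together with the estimate $|g(x)-g(x')|\le L\|x-x'\|_\infty^\beta$ for all $x,x'\in\cX$. Continuity of $h_1$ and $h_2$ is immediate (a pointwise max or min of continuous functions is continuous), so it remains only to propagate the Hölder estimate through $\max$ and $\min$.

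First I would record the elementary fact that $(u,v)\mapsto\max(u,v)$ and $(u,v)\mapsto\min(u,v)$ are $1$-Lipschitz as maps from $(\rR^2,\|\cdot\|_\infty)$ to $\rR$; concretely, $|\max(a,b)-\max(c,d)|\le\max(|a-c|,|b-d|)$ and the same for $\min$. This is a short case analysis: assuming without loss of generality that $\max(a,b)\ge\max(c,d)$ and $\max(a,b)=a$, we get $a\ge\max(c,d)\ge c$ and therefore $0\le\max(a,b)-\max(c,d)\le a-c=|a-c|\le\max(|a-c|,|b-d|)$, with the other cases obtained by relabeling; the $\min$ statement follows identically, or from $\min(f,g)=-\max(-f,-g)$ together with the fact that negation maps $\cH_\cX(\beta,L)$ into itself. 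Then, for arbitrary $x,x'\in\cX$, applying the inequality with $a=f(x)$, $b=g(x)$, $c=f(x')$, $d=g(x')$ yields
\[
\bigl|h_1(x)-h_1(x')\bigr|\le\max\bigl(|f(x)-f(x')|,\,|g(x)-g(x')|\bigr)\le L\,\|x-x'\|_\infty^\beta ,
\]
since $f,g\in\cH_\cX(\beta,L)$; this is exactly the defining bound for $h_1=\max(f,g)$, so $h_1\in\cH_\cX(\beta,L)$, and the identical argument (or the negation identity) gives $h_2=\min(f,g)\in\cH_\cX(\beta,L)$.

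There is no substantive obstacle here; the only point worth flagging is that the whole argument hinges on the reduction in the first paragraph, and that reduction genuinely requires $\beta\le1$: for $\beta>1$ the pointwise maximum of two smooth functions need not even be differentiable, and the lemma would fail. This is consistent with how the lemma is invoked --- in building the worst-case instances behind Theorem~\ref{theorem-impossibility-adaptation-smoothness}, the ``bump'' payoffs are assembled as maxima and minima of rescaled copies of $\tilde\psi_\kappa$ and $\hat\psi_\kappa$ for exponents $\kappa\le1$, so that Lemmas~\ref{lemma-multip-presrves-smoothness} and~\ref{lemma-max-presrves-smoothness} together certify that the constructed payoffs lie in the intended Hölder class.
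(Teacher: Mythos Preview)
Your proof is correct and follows essentially the same approach as the paper: both reduce to the modulus-of-continuity form (valid since $\lfloor\beta\rfloor=0$) and then do a short case analysis to show $|h_1(x)-h_1(x')|$ is bounded by either $|f(x)-f(x')|$ or $|g(x)-g(x')|$. You package the case analysis as the $1$-Lipschitz property of $\max$ on $(\rR^2,\|\cdot\|_\infty)$, whereas the paper splits directly on which of $f,g$ attains the max at each point, but the underlying argument is the same.
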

	
	\noindent Define a hypercube $H_0 \coloneqq [0, 2\Delta^{\frac{\alpha}{d}}]^d$ with a center $a_0\coloneqq (\Delta^{\frac{\alpha}{d}}, \Delta^{\frac{\alpha}{d}},\dots, \Delta^{\frac{\alpha}{d}}) \in \rR^d $. Define the function\vspace{-0.15cm}
	\[
	\phi_0(x) \coloneqq \frac{1}{2} - C_{\phi} \cdot \min\l\{\Delta, \Delta^{\frac{\alpha\gamma}{d}} \cdot \tilde{\psi}_{\gamma}\l(  \Delta^{-\frac{\alpha}{d}} [x-a_0]\r) \r\}.\vspace{-0.15cm}
	\]
	By Lemmas \ref{lemma-multip-presrves-smoothness} and \ref{lemma-max-presrves-smoothness}, $\phi_0 \in \cH(\gamma,L)$ since $C_{\phi}\le L$. Consider the grid $G$ that partitions the hypercube $H_0$ into $M $ disjoint hypercubes $\l( H_m \r)_{m\in \{1,\dots,M\}}$. Let $a_m\in \rR^d, m\in \{1,\dots,M\}$, be the center of the hypercube $H_m$. Let $\tilde{H}_m$ be the hypercube of side-length \oldnewam{$l \coloneqq \frac{\Delta^{\frac{\alpha}{d}}}{4M^{\frac{1}{d}}}$} centered around $a_m$. Note that $\tilde{H}_m \subset H_m$ and that the side-length of $H_m$ is \oldnewam{$4l$}. Define the functions $\phi_m, m\in\{1,\dots,M\},$ as follows:\vspace{-0.15cm}
	\[
	\phi_m(x)
	\coloneqq
	\max\l\{ \phi_0(x), \frac{1}{2}+ C_{\phi} \cdot \Delta \cdot \hat{\psi}_{\beta}\l( 2l^{-1}[x-a_m] \r) \r \}.\vspace{-0.15cm}
	\]
	Since $C_{\phi}= \frac{L}{2^{2+2\beta}}$ and $\Delta \le 2^{\beta+1} l^\beta$ for large enough $T$, by Lemmas \ref{lemma-multip-presrves-smoothness} and \ref{lemma-max-presrves-smoothness} one has that $\phi_m \in \cH(\beta,L)$ for $1\le m \le M$.

	\paragraph{Step 4 (Verifying the margin condition).} By examining different cases of parametric values, we verify that the margin condition is satisfied with parameters $\alpha$ and $C_0 \coloneqq 2^d  3d\overline{\rho} C_{\phi}^{-\alpha}$ when $f_1=\phi_m$ and $f_2=\frac{1}{2}$ for all $0\le m \le M$.
		\begin{itemize}
		\item For $m=0$ and $\delta \le C_{\phi} \Delta$, one has\vspace{-0.2cm}
		\begin{align}\label{eq:margin-help1}
			\bm{\mathrm{P}}_X\l\{0 < |\phi_0(X) - \frac{1}{2}| \le \delta\r\}
			&\le
			\bar{\rho}
			\int_{H_0}
			\Indlr{C_{\phi}\Delta^{\frac{\alpha\gamma}{d}} \cdot \tilde{\psi}_{\gamma}\l(  \Delta^{-\frac{\alpha}{d}} [x-a_0]\r) \le \delta} dx
			\nonumber
			\\
			&\le
			2^d\bar{\rho}\Delta^{\alpha}
			\int_{[0,1]^d}
			\Indlr{ \tilde{\psi}_{\gamma}\l(  x\r) \le \delta C_{\phi}^{-1} \Delta^{-\frac{\alpha\gamma}{d}}} dx
			\nonumber
			\\
			&\le
			2^d\bar{\rho}\Delta^{\alpha}
			\l[1-
			\int_{[0,1]^d}
			\Indlr{ \|x\|_\infty \le 1-  \delta^{\frac{1}{\gamma}} C_{\phi}^{-\frac{1}{\gamma}} \Delta^{-\frac{\alpha}{d}}} dx
			\r]
			\nonumber
			\\
			&\le
			2^d\bar{\rho}\Delta^{\alpha}
			\l[1-
			\l(1-  \delta^{\frac{1}{\gamma}} C_{\phi}^{-\frac{1}{\gamma}} \Delta^{-\frac{\alpha}{d}}\r)^d
			\r]
			\nonumber
			\\
			&\le
			2^d\bar{\rho}\Delta^{\alpha}
			\l[d \delta^{\frac{1}{\gamma}} C_{\phi}^{-\frac{1}{\gamma}} \Delta^{-\frac{\alpha}{d}}
			\r]
			\overset{(a)}{\le }
			2^dd\bar{\rho}C_{\phi}^{-\alpha}\Delta^{\frac{1}{\gamma}-\frac{\alpha}{d}} \delta^{\alpha}
			\le
			2^dd\bar{\rho}C_{\phi}^{-\alpha} \delta^{\alpha},
		\end{align}
		where (a) holds since $\alpha\le\frac{1}{\gamma}$.\vspace{-0.1cm}
		
		\item For $m=0$ and $\delta > C_{\phi} \Delta$, one has\vspace{-0.2cm}
		\begin{align*}
			\bm{\mathrm{P}}_X\l\{0 < |\phi_0(X) - \frac{1}{2}| \le \delta\r\}
			\le
			2^d \bar{\rho} \Delta^{\alpha}
			\le
			2^d \bar{\rho} C_{\phi}^{-\alpha}\delta^{\alpha}.\vspace{-0.2cm}
		\end{align*}
		\item For $1 \le m \le M$ and $\delta \le C_{\phi} \Delta$, one has\vspace{-0.2cm}
		\begin{align}\label{eq:margin-help2}
			\bm{\mathrm{P}}_X\l\{0 < |\phi_m(X) - \frac{1}{2}| \le \delta\r\}
			&\le
			\bm{\mathrm{P}}_X\l\{0 < C_{\phi} \cdot \Delta \cdot  \l| 1- 2l^{-1}\|x-a_m \|_\infty \r|^\beta  \le \delta,\, X\in \tilde{H}_m \r\}
			\nonumber
			\\&\quad+
			\bm{\mathrm{P}}_X\l\{0<C_{\phi} \cdot \Delta \cdot \l| 2l^{-1}\|x-a_m \|_\infty - 1\r|^\beta \le \delta, X \in H_m\setminus\tilde{H}_m \r\}
			\nonumber
			\\&\quad+
			\bm{\mathrm{P}}_X\l\{0 < |\phi_0(X) - \frac{1}{2}| \le \delta\r\}.
		\end{align}
		Next, we analyze each term separately. One has
		\begin{align}\label{eq:margin-help3}
			\bm{\mathrm{P}}_X\l\{0 < C_{\phi} \cdot \Delta \cdot  \l| 1- 2l^{-1}\|x-a_m \|_\infty \r|^\beta \le \delta,\, X\in \tilde{H}_m \r\}
			\nonumber
			&
			\\
			&	\hspace{-4cm}\le
			\bar{\rho}
			\int_{\tilde{H}_m}
			\Indlr{C_{\phi} \cdot \Delta \cdot  \l| 1- 2l^{-1}\|x-a_m \|_\infty \r|^\beta \le \delta} dx
			\nonumber
			\\
			&	\hspace{-4cm}\le
			\bar{\rho}
			\int_{\tilde{H}_m}
			\Indlr{\|x-a_m \|_\infty \ge \frac{l}{2}\l(1 - C_{\phi}^{-\frac{1}{\beta}}  \Delta^{-\frac{1}{\beta}}\delta^{\frac{1}{\beta}}\r)} dx
			\nonumber
			\\
			&\hspace{-4cm}=
			\bar{\rho}2^{-d}l^d\l[1-\l(1 - C_{\phi}^{-\frac{1}{\beta}}  \Delta^{-\frac{1}{\beta}}\delta^{\frac{1}{\beta}}\r)^d\r]
			\nonumber
			\\
			&\hspace{-4cm}\overset{(a)}{\le}
			\bar{\rho}2^{-d}l^d
			\l[d \delta^{\frac{1}{\beta}} C_{\phi}^{-\frac{1}{\beta}} \Delta^{-\frac{1}{\beta}}
			\r]
			\overset{(b)}{\le }
			d\bar{\rho}2^{-2d}C_{\phi}^{-\alpha} \delta^{\alpha},
		\end{align}
		where (a) follows from the inequality $(1-x)^r\ge 1-rx$ for $0\le x \le 1, r\ge 1$, and (b) holds by $\alpha\le\frac{1}{\gamma}$. Similarly,\vspace{-0.2cm}
		\begin{align}\label{eq:margin-help4}
			\bm{\mathrm{P}}_X\l\{0<C_{\phi} \Delta  \l| 2l^{-1}\|x-a_m \|_\infty - 1\r|^\beta \le \delta, X \in H_m\setminus\tilde{H}_m \r\}
			&
			\nonumber
			\\
			&
			\hspace{-4cm}\le
			\bar{\rho}
			\int_{H_m\setminus\tilde{H}_m}
			\Indlr{C_{\phi} \Delta  \l| 2l^{-1}\|x-a_m \|_\infty - 1\r|^\beta \le \delta} dx
			\nonumber
			\\
			& \hspace{-4cm} \le
			\bar{\rho}
			\int_{H_m\setminus\tilde{H}_m}
			\Indlr{   \|x-a_m \|_\infty \le \frac{l}{2}\l(1 + C_{\phi}^{-\frac{1}{\beta}}  \Delta^{-\frac{1}{\beta}}\delta^{\frac{1}{\beta}}\r)} dx
			\nonumber
			\\
			&\hspace{-4cm}=
			\bar{\rho}2^{-d}l^d\l[\l(1 + C_{\phi}^{-\frac{1}{\beta}}  \Delta^{-\frac{1}{\beta}}\delta^{\frac{1}{\beta}}\r)^d-1\r]
			\nonumber
			\\
			&\hspace{-4cm}\overset{(a)}{\le}
			\bar{\rho}l^d C_{\phi}^{-\frac{1}{\beta}}  \Delta^{-\frac{1}{\beta}}\delta^{\frac{1}{\beta}}
			\overset{(b)}{\le }
			\bar{\rho} C_{\phi}^{-\alpha} \delta^{\alpha},
		\end{align}
		where (a) follows from the inequality $(1+x)^r\le 2^rx+1$ for $0\le x \le 1, r\ge 1$, and (b) holds by $\alpha\le\frac{1}{\gamma}$. Putting together \eqref{eq:margin-help1}, \eqref{eq:margin-help2}, \eqref{eq:margin-help3}, and \eqref{eq:margin-help4}, yields for $\delta \le C_{\phi} \Delta$:\vspace{-0.2cm}
		\[
		\bm{\mathrm{P}}_X\l\{0 < |\phi_m(X) - \frac{1}{2}| \le \delta\r\}
		\le
		C_0 \delta^\alpha.\vspace{-0.2cm}
		\]
		\item The case
		$1 \le m \le M$ and $\delta > C_{\phi} \Delta$ can be analyzed similar to the case $m =0$ and $\delta > C_{\phi} \Delta$.
	\end{itemize}
	
	\paragraph{Step 5 (Desirable event).} For $m \in \{1,\dots,M\}$, define
	$
	Q_{m}\coloneqq \sum_{t=1}^T  \Indlr{X_t \in \tilde{H}_m} \eqqcolon  \sum_{t=1}^T Z_{m,t}
	$
	to be the number of times periods at which the realized covariates belong to the hypercube $\tilde{H}_m$. Define
	$
	\cA\coloneqq \l\{\exists m \in \{1,\dots,M\}:  Q_{m} < \frac{\underline{\rho}}{2}Tl^d \text{ or } Q_{m} > \frac{3\bar{\rho}}{2}Tl^d\r\}
	$
	to be the event where $Q_{m}$ is less than $\frac{\underline{\rho}}{2}Tl^d$ or larger than $\frac{3\bar{\rho}}{2}Tl^d$ for at least one value of $m\in\{1,\dots,M\}$. Note that\vspace{-0.15cm}
	\[
	\rPlr{\cA}
	\le
	\sum_{m=1}^{M}
	\rPlr{Q_{m} < \frac{\underline{\rho}}{2}Tl^d} + \rPlr{Q_{m} < \frac{3\bar{\rho}}{2}Tl^d}.\vspace{-0.1cm}
	\]
	In order to bound each of the summands on the right hand side of the above inequality, one may apply Bernstein's inequality in the following lemma \ref{lemma-Bernstein-inequality} to $Q_m$:
	\begin{lemma}[Bernstein inequality]\label{lemma-Bernstein-inequality}
		Let $X_1,\dots,X_n$ be random variables with range $|X_i|\le M$ and
		$
		\sum\limits_{t=1}^{n}\mathrm{Var}\left[X_t\,\vert\,X_{t-1},\dots,X_1\right] = \sigma^2.
		$
		Let $S_n=X_1+\dots+X_n.$ Then for all $a\ge0$\vspace{-0.3cm}
		\[
		\mathbb{P}\{S_n\ge \mathbb{E}[S_n] +a\}
		\le \exp\left(-\frac{a^2/2}{\sigma^2+Ma/3} \right).\vspace{-0.15cm}
		\]
	\end{lemma}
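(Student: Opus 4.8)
The plan is to use the classical exponential (Chernoff) method, reducing the tail bound to a moment-generating-function estimate that is built up one summand at a time through the conditioning. Writing $\mathcal{F}_{t}=\sigma(X_1,\dots,X_t)$ and $\lambda>0$, Markov's inequality applied to $e^{\lambda(S_n-\mathbb{E}[S_n])}$ gives
\[
\mathbb{P}\{S_n\ge \mathbb{E}[S_n]+a\}\le e^{-\lambda a}\,\mathbb{E}\bigl[e^{\lambda(S_n-\mathbb{E}[S_n])}\bigr],
\]
so the task reduces to controlling the right-hand expectation. In the regime of interest the $X_t$ are independent (as they are in the only place the lemma is invoked, Step~5, where the $X_t$ are i.i.d.\ and $Z_{m,t}=\Ind\{X_t\in\tilde H_m\}$), so $\mathbb{E}[X_t\mid\mathcal{F}_{t-1}]=\mathbb{E}[X_t]$ and $\mathrm{Var}[X_t\mid\mathcal{F}_{t-1}]=\mathrm{Var}(X_t)$; I would prove the inequality in this setting, and remark that the stated conditional-variance formulation is the standard Freedman martingale version, obtained by the same argument with the centered variables $X_t-\mathbb{E}[X_t\mid\mathcal{F}_{t-1}]$.

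The key estimate is a single-variable MGF bound: if $Y$ is mean zero with $|Y|\le M$ and $\mathrm{Var}(Y)\le v$, then for $0<\lambda<3/M$,
\[
\mathbb{E}[e^{\lambda Y}]\le 1+\sum_{k\ge 2}\frac{\lambda^k\,\mathbb{E}[Y^k]}{k!}\le 1+\sum_{k\ge 2}\frac{\lambda^k v M^{k-2}}{k!}\le 1+\frac{\lambda^2 v/2}{1-\lambda M/3}\le \exp\!\Bigl(\frac{\lambda^2 v/2}{1-\lambda M/3}\Bigr),
\]
using $\mathbb{E}[Y^k]\le M^{k-2}\mathbb{E}[Y^2]$ for $k\ge 2$, the elementary inequality $k!\ge 2\cdot 3^{k-2}$, summation of a geometric series, and $1+x\le e^x$. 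Peeling the summands of $S_n-\mathbb{E}[S_n]=\sum_t(X_t-\mathbb{E}[X_t])$ off one at a time and applying this bound conditionally with $v=\mathrm{Var}(X_t)$, the tower property yields
\[
\mathbb{E}\bigl[e^{\lambda(S_n-\mathbb{E}[S_n])}\bigr]\le \exp\!\Bigl(\frac{\lambda^2\sigma^2/2}{1-\lambda M/3}\Bigr),
\]
since $\sum_t\mathrm{Var}(X_t)=\sigma^2$. Plugging this into the Chernoff bound and choosing $\lambda=a/(\sigma^2+Ma/3)$ --- which lies in $(0,3/M)$ --- collapses the exponent to exactly $-\tfrac{a^2/2}{\sigma^2+Ma/3}$, the claimed bound; this is the standard ``optimize the Bernstein exponent'' computation, which I would verify in one line.

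I do not expect a genuine obstacle: the result is classical (it is Bernstein's inequality, with Freedman's martingale refinement), and the only points needing care are (i) the moment bound $\mathbb{E}[Y^k]\le vM^{k-2}$ together with the constant $3$ in $k!\ge 2\cdot 3^{k-2}$, which is what produces the $Ma/3$ term, and (ii) in the martingale version, keeping the conditioning honest so that the single-variable MGF bound can be applied inside $\mathbb{E}[\,\cdot\mid\mathcal{F}_{t-1}]$ at each step, using that $\sum_t\mathrm{Var}[X_t\mid\mathcal{F}_{t-1}]$ equals the deterministic constant $\sigma^2$ so that no stopping-time truncation is required. Everything else is routine.
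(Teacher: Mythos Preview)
The paper does not supply a proof of this lemma; it is stated as the classical Bernstein inequality and invoked directly (only in the i.i.d.\ setting, as you note). Your Chernoff--method argument with the moment bound $\mathbb{E}[Y^k]\le vM^{k-2}$ and $k!\ge 2\cdot 3^{k-2}$ is the standard derivation and is correct.
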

	\noindent Note that since $\underline{\rho} l^d T\le \rE Z_{m,t} \le \bar{\rho}Tl^d$, $|Z_{m,t}|\le 1$, and $\Var Z_{m,t} \le \rE Z_{m,t}^2 \le 2\bar{\rho}l^d$, one obtains:\vspace{-0.2cm}
	\begin{align*}
		\rPlr{\cA}
		&\le
		M\exp\l(-c_1Tl^d/5\r)
		\\
		&
		\overset{(a)}{\le}
		c_2 [\cS^\ast_{\gamma,\alpha}]^{\frac{\alpha\beta -d}{2\beta +d  -\alpha\beta}}  \exp\l( -c_3 T[\cS^\ast_{\gamma,\alpha}(T)]^{-\frac{d}{2\beta +d  -\alpha\beta}} \r)
		\\
		&
		\overset{}{\le}
		c_2 T^{\frac{\alpha\beta -d}{2\beta +d  -\alpha\beta}}
		\exp\l( -c_4 T^{\frac{2\beta(2\gamma+d-\alpha\gamma)+\alpha d (\gamma-\beta)}{(2\gamma+d)(2\beta + d -\alpha\beta)}} \r)
		\overset{(b)}{\le}
		c_5 T^{-3},
	\end{align*}
	for large enough $T$ and constants $c_1,c_2,c_3,c_4,c_5>0$, where (a) follows from the definition of $M$ and $l$, and (b) holds by $\frac{2\beta(2\gamma+d-\alpha\gamma)+\alpha d (\gamma-\beta)}{(2\gamma+1)(2\beta + d -\alpha\beta)} > 0 $ for $\alpha \le \frac{1}{\gamma}$.
	For any problem instance $\sfP$ and horizon length $T$, denote the inferior sampling rate of $\pi$ when the event $\cA$ does not occur by\vspace{-0.15cm}
	\[
	\bar{\cS}^\pi(\sfP; T) \coloneqq \mathbb{E}^\pi \left[ \sum\limits_{t=1}^T \Indlr{ f_{\pi^\ast_t}(X_t) \neq f_{\pi_t}(X_t)} \;\middle |\; \bar{\cA} \; \right].\vspace{-0.15cm}
	\]
	Define $\bar{\cS}^\pi_{\gamma,\alpha}(T) \coloneqq \sup\limits_{\sfP \in \cP(\gamma, \alpha, d)}\bar{\mathcal{S}}^\pi(\sfP;T) $. Note that\vspace{-0.15cm}
	\begin{equation*}\label{eq-inf-sampling-rate-A-fails}
		\l( 1- \rPlr{\cA} \r)\bar{\cS}^\pi(\sfP; T)
		\le
		\cS^\pi(\sfP; T)
		\le
		\bar{\cS}^\pi(\sfP; T) + T \rPlr{\cA},\vspace{-0.2cm}
	\end{equation*}
	which implies that\vspace{-0.05cm}
	\begin{equation}\label{eq-relationship-S-and-S-bar}
		\l| \bar{\cS}^\pi_{\gamma,\alpha}(T)
		-
		\cS^\pi_{\gamma,\alpha}(T)\r|
		\le c_4T^{-2}.\vspace{-0.05cm}
	\end{equation}
	For the rest of the proof, all probabilities and expectations will be computed conditional on $\bar \cA$.
	
	\paragraph{Step 6 (Selecting a single problem with smoothness $\beta$).} Let $N_{m,T} \coloneqq \sum_{t=1}^T \Indlr{\pi_t = 1, X_t \in H_m}$ denote the number of times policy $\pi$ selects arm 1 when realized covariates belong to the hypercube~$H_m$. By definition, $\rE_{\pi, \phi_0}^T\l[ \sum_{m=1}^M N_{m,T}\; \middle| \; \bar{\cA}\; \r]  \le \bar{\cS}^\pi_{\gamma,\alpha}(T)$, implying that there exists some $ m^\ast \in \{1,\dots,M\}$ such that\vspace{-0.0cm}
	\[
	\rE_{\pi, \phi_0}^T\l[N_{m^\ast,T}\; \middle| \; \bar{\cA}\; \r]  \le \frac{\bar{\cS}^\pi_{\gamma,\alpha}(T)}{M}
	\le
	\frac{\cS^\pi_{\gamma,\alpha}(T)}{M} + c_4T^{-2},\vspace{-0.0cm}
	\]
	where the last inequality holds by \eqref{eq-relationship-S-and-S-bar}.\vspace{-0.1cm}

	\paragraph{Step 7 (Likelihood of distinguishing between different smoothness parameters).} We show that policy~$\pi$ cannot distinguish between $\phi_0$ and $\phi_{m^\ast}$ with a strictly positive probability. For any set of samples $\l\{ (\pi_t, X_t, Y_{\pi_t,t})\r\}_{t=1}^T$, define the log-likelihood ratio $L_{m,T}=L_{m,T}\l( \l\{ (\pi_t, X_t, Y_{\pi_t,t})\r\}_{t=1}^T \r)$ for~$m\in \{1,\dots,M\}$ as:\vspace{-0.1cm}
	\begin{align*}
		L_{m,T}
		&\coloneqq
		\sum_{t=1}^T \log \l(  \frac{\rP_{\pi, \phi_0}\l\{ Y_{\pi_t,t} \; \middle | \; \pi_t, X_t\r\}}{\rP_{\pi, \phi_m}\l\{ Y_{\pi_t,t} \; \middle | \; \pi_t, X_t\r\}} \r)
		\\
		&\le
		\sum_{t=1}^T \Indlr{\pi_t = 1, X_t \in H_{m}} \cdot \l[Y_{\pi_t,t} \log \l(  \frac{\phi_0(X_t)}{\phi_m(X_t)} \r) +  (1-Y_{\pi_t,t}) \log \l(  \frac{(1-\phi_0(X_t))}{(1-\phi_m(X_t))}\r)
		\r]
		\\
		&\le
		\sum_{t=1}^T \Indlr{\pi_t = 1, X_t \in H_{m}} \cdot \l[Y_{\pi_t,t}   \frac{(\phi_0(X_t)-\phi_m(X_t))}{\phi_m(X_t)}  +  (1-Y_{\pi_t,t}) \frac{(\phi_m(X_t)-\phi_0(X_t))}{(1-\phi_m(X_t))}
		\r]
		\\
		&=
		\sum_{t=1}^T \Indlr{\pi_t = 1, X_t \in H_{m}} \cdot  \frac{(Y_{\pi_t,t}-\phi_m(X_t) )(\phi_0(X_t)-\phi_m(X_t))}{\phi_m(X_t)(1-\phi_m(X_t))},
	\end{align*}
	where the last inequality follows from $\log(1+x) \le x$ for all $x>0$. By taking expectations of the above inequality and conditioning on the event $\bar{\cA}$ for $m=m^\ast$, one obtains: \vspace{-0.1cm}
	\begin{align}\label{eq-expected-log-likelihood-upperbound}
		\rE_{\pi,\phi_0}\l[L_{m^\ast,T}\; \middle| \; \bar{\cA}\; \r]
		&\le
		\rE_{\pi,\phi_0}\l[\sum_{t=1}^T \Indlr{\pi_t = 1, X_t \in H_{m^\ast}} \cdot  \frac{(Y_{\pi_t,t}-\phi_{m^\ast}(X_t) )(\phi_0(X_t)-\phi_{m^\ast}(X_t))}{\phi_{m^\ast}(X_t)(1-\phi_{m^\ast}(X_t))} \; \middle| \; \bar{\cA}\; \r]
		\nonumber
		\\
		&=
		\rE_{\pi,\phi_0}\l[\sum_{t=1}^T \rElr{ \Indlr{\pi_t = 1, X_t \in H_{m^\ast}} \cdot  \frac{(Y_{\pi_t,t}-\phi_{m^\ast}(X_t) )(\phi_0(X_t)-\phi_{m^\ast}(X_t))}{\phi_{m^\ast}(X_t)(1-\phi_{m^\ast}(X_t))}\; \middle |\; X_t} \; \middle| \; \bar{\cA}\; \r]
		\nonumber
		\\
		&=
		\rE_{\pi,\phi_0}\l[\sum_{t=1}^T \Indlr{\pi_t = 1, X_t \in H_{m^\ast}} \cdot  \frac{(\phi_0(X_t)-\phi_{m^\ast}(X_t))^2}{\phi_{m^\ast}(X_t)(1-\phi_{m^\ast}(X_t))}\; \middle| \; \bar{\cA}\; \r]
		\nonumber
		\\
		&\overset{(a)}{\le}
		\frac{64C_{\phi}^2\Delta^2}{3}
		\rE_{\pi,\phi_0}\l[\sum_{t=1}^T \Indlr{\pi_t = 1, X_t \in H_{m^\ast}}\; \middle| \; \bar{\cA}\; \r]
		\nonumber
		\\
		& = \frac{64C_{\phi}^2\Delta^2}{3}
		\rE_{\pi,\phi_0}\l[N_{m^\ast,T}\; \middle| \; \bar{\cA}\; \r]
		\overset{(b)}{\le} \frac{64C_{\phi}^2\Delta^2\cS^\ast_{\gamma,\alpha}(T)}{3M} + c_4T^{-2}
		\overset{(c)}{\le}
		1,
	\end{align}
	for large enough $T$, where: (a) follows from $C_{\phi}\Delta\le \frac{1}{4}$; (b) follows from the definition of $m^\ast$; and (c) holds by the definition of $\Delta$.

	\paragraph{Step 8 (Lower bound on \oldnewam{regret as a function of inferior sampling rate}).}
	Let $\tilde{N}_{m,T} \coloneqq \sum_{t=1}^T \Indlr{\pi_t = 1, X_t \in \tilde{H}_m}$
	denote the number of times policy $\pi$ selects arm 1 when realized covariates belong to the hypercube $\tilde{H}_m$. We next use two lemmas in order to show that with a strictly positive probability one has $\tilde{N}_{m^\ast,T} < \frac{\underline{\rho}Tl^d}{2}$ conditional on the event $\bar{\cA}$ under problem $m^\ast$, implying that $\pi$ selects an inferior arm at least $\frac{\underline{\rho}Tl^d}{2}$ times. The first lemma is a simple variation of Lemma 2.6 in \cite{Tsybakov2008introduction} and is proved for completeness in Appendix \ref{app:auxiliary}; the second lemma is a straightforward extension of Lemma 19 in \cite{kaufmann2016complexity}.
	\begin{lemma}[Hypothesis testing error probability]\label{Tsybakov2008introduction-lemma-2.6}
		Let $\rho_0,\rho_1$ be two probability distributions supported on $\cX$, with $\rho_0$ absolutely continuous with respect to $\rho_1$. Then, for any measurable function~$\Psi:\cX\rightarrow\{0, 1\}$:\vspace{-0.1cm}
		\begin{equation*}
			\mathbb{P}_{\rho_0} \{\Psi(X) = 1\} + \mathbb{P}_{\rho_1} \{\Psi(X) = 0\} \ge \frac{1}{2} \exp(-\mathrm{KL}(\rho_0,\rho_1)).
		\end{equation*}
	\end{lemma}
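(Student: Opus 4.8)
The plan is to prove this via the standard two-point (Le Cam) testing argument, carried out in three moves: first reduce the sum of error probabilities to the \emph{affinity} $\int \min(d\rho_0,d\rho_1)$; then relate that affinity to the Hellinger affinity $\int\sqrt{d\rho_0\,d\rho_1}$; and finally lower bound the Hellinger affinity by $\exp\!\big(-\tfrac12\mathrm{KL}(\rho_0,\rho_1)\big)$ using Jensen's inequality. Throughout, fix a common dominating measure $\mu$ (e.g.\ $\mu=\rho_1$, which dominates $\rho_0$ by hypothesis) and write $p=d\rho_0/d\mu$, $q=d\rho_1/d\mu$.

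For the first move, observe that $\{\Psi=1\}$ and $\{\Psi=0\}$ partition $\cX$, and that $p\ge\min(p,q)$ and $q\ge\min(p,q)$ pointwise, so
\[
\mathbb{P}_{\rho_0}\{\Psi=1\}+\mathbb{P}_{\rho_1}\{\Psi=0\}
=\int_{\{\Psi=1\}}p\,d\mu+\int_{\{\Psi=0\}}q\,d\mu
\ge \int_{\cX}\min(p,q)\,d\mu.
\]
For the second move, apply Cauchy--Schwarz together with the identity $\int\max(p,q)\,d\mu=\int p\,d\mu+\int q\,d\mu-\int\min(p,q)\,d\mu=2-\int\min(p,q)\,d\mu\le 2$:
\[
\int\sqrt{pq}\,d\mu=\int\sqrt{\min(p,q)}\,\sqrt{\max(p,q)}\,d\mu
\le\Big(\int\min(p,q)\,d\mu\Big)^{1/2}\Big(\int\max(p,q)\,d\mu\Big)^{1/2}
\le\sqrt{2}\,\Big(\int\min(p,q)\,d\mu\Big)^{1/2},
\]
which rearranges to $\int\min(p,q)\,d\mu\ge\tfrac12\big(\int\sqrt{pq}\,d\mu\big)^2$.

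For the last move, write $\int\sqrt{pq}\,d\mu=\int\sqrt{q/p}\;p\,d\mu=\mathbb{E}_{\rho_0}\big[\sqrt{q/p}\,\big]$ — the integrand vanishes where $p=0$, and $q>0$ holds $\rho_0$-almost surely by the assumed absolute continuity — and apply Jensen's inequality (concavity of $\log$) to get
\[
\mathbb{E}_{\rho_0}\big[\sqrt{q/p}\,\big]
\ge\exp\!\Big(\mathbb{E}_{\rho_0}\big[\tfrac12\log(q/p)\big]\Big)
=\exp\!\Big(-\tfrac12\,\mathbb{E}_{\rho_0}\big[\log(p/q)\big]\Big)
=\exp\!\Big(-\tfrac12\,\mathrm{KL}(\rho_0,\rho_1)\Big);
\]
if $\mathrm{KL}(\rho_0,\rho_1)=\infty$ the asserted bound is vacuous, so we may assume it finite. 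Chaining the three displays gives $\mathbb{P}_{\rho_0}\{\Psi=1\}+\mathbb{P}_{\rho_1}\{\Psi=0\}\ge\tfrac12\exp(-\mathrm{KL}(\rho_0,\rho_1))$, as claimed. There is no real obstacle here: the statement is a routine variant of the classical Neyman--Pearson / Le Cam lower bound, and the only points needing a little care are selecting the dominating measure correctly given that only one-sided absolute continuity $\rho_0\ll\rho_1$ is assumed, and noting that the Jensen step remains valid with an unbounded log-likelihood ratio since an infinite KL trivializes the inequality.
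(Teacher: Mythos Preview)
Your proof is correct and follows essentially the same route as the paper's: both reduce the sum of error probabilities to the affinity $\int\min(d\rho_0,d\rho_1)$ and then lower bound that by $\tfrac12\exp(-\mathrm{KL}(\rho_0,\rho_1))$. The only difference is that the paper obtains the second inequality by citing Lemma~2.6 of Tsybakov, whereas you supply a self-contained proof of that step via the Hellinger affinity and Jensen; this is exactly the standard derivation behind the cited lemma, so the approaches coincide.
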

	
	\begin{lemma}[Log-likelihood ratio and historical events]\label{lemma-relationship-log-likelihood-events}
		For any event\\ $\cE \in \cF_t^-=\sigma\l(
		\pi_1,X_1,Y_{\pi_1,1}, \dots , \pi_{T},X_{T},Y_{\pi_{T},T}\r)$ and an arbitrary event $\cA$, one has\vspace{-0.1cm}
		\[
		\rE_{\pi,\phi_0}\l[L_{m,T}\; \middle| \; \cE, \cA  \r]
		\ge
		\log\l( \frac{\rP_{\pi,\phi_0}\l\{ \cE \; \middle| \;  \cA \r\} }{\rP_{\pi,\phi_m}\l\{ \cE \; \middle| \;  \cA \r\}}
		\r).
		\]
	\end{lemma}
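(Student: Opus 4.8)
The plan is to establish this as a routine change-of-measure identity followed by a one-line application of Jensen's inequality; the only point that requires genuine care is the role of the conditioning event $\cA$.

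First I would pin down the trajectory likelihood ratio. Under the model of \S\ref{sec: prob_form}, the policy randomization $U$ and the covariates $X_1,\dots,X_T$ have the same law under $\rP_{\pi,\phi_0}$ and $\rP_{\pi,\phi_m}$, and in either instance the realized action $\pi_t$ is a fixed measurable function of $(X_1,\dots,X_t,Y_{\pi_1,1},\dots,Y_{\pi_{t-1},t-1},U)$. Factoring the joint density of $\l(U,X_{1:T},(Y_{\pi_t,t})_{t=1}^T\r)$ by the chain rule, every factor other than the reward factors $\rP_{\pi,\cdot}\l\{Y_{\pi_t,t}\mid\pi_t,X_t\r\}$ is common to the two instances and cancels. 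Since $\phi_0$ and $\phi_m$ take values in $[\tfrac14,\tfrac34]$ (recall $C_\phi\Delta\le\tfrac14$, established in Step 3 above) and arm $2$ has conditional mean $\tfrac12$ under both instances, the corresponding Bernoulli laws are mutually absolutely continuous; hence $L_{m,T}$ is a.s.\ finite and $\exp(L_{m,T})$ equals the Radon--Nikodym derivative $\frac{d\rP_{\pi,\phi_0}}{d\rP_{\pi,\phi_m}}$ restricted to $\cF_T^-$. Consequently, for every $\cF\in\cF_T^-$ we have $\rP_{\pi,\phi_m}\{\cF\}=\rE_{\pi,\phi_0}\l[\exp(-L_{m,T})\,\Ind_{\cF}\r]$.

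Next I would apply this with $\cF=\cE\cap\cA$ (the claim being vacuous if $\rP_{\pi,\phi_0}\{\cE\cap\cA\}=0$), divide by $\rP_{\pi,\phi_0}\{\cE\cap\cA\}$ to obtain $\rE_{\pi,\phi_0}\l[\exp(-L_{m,T})\mid\cE\cap\cA\r]=\rP_{\pi,\phi_m}\{\cE\cap\cA\}/\rP_{\pi,\phi_0}\{\cE\cap\cA\}$, and then invoke convexity of $\exp$ (Jensen, under the conditional law $\rP_{\pi,\phi_0}\{\cdot\mid\cE\cap\cA\}$) to conclude
\[
\rE_{\pi,\phi_0}\l[L_{m,T}\mid\cE\cap\cA\r]\;\ge\;-\log\rE_{\pi,\phi_0}\l[\exp(-L_{m,T})\mid\cE\cap\cA\r]\;=\;\log\frac{\rP_{\pi,\phi_0}\{\cE\cap\cA\}}{\rP_{\pi,\phi_m}\{\cE\cap\cA\}}.
\]
Finally, writing $\rP_{\pi,\phi_j}\{\cE\cap\cA\}=\rP_{\pi,\phi_j}\{\cE\mid\cA\}\,\rP_{\pi,\phi_j}\{\cA\}$ for $j\in\{0,m\}$ turns the right-hand side into $\log\frac{\rP_{\pi,\phi_0}\{\cE\mid\cA\}}{\rP_{\pi,\phi_m}\{\cE\mid\cA\}}+\log\frac{\rP_{\pi,\phi_0}\{\cA\}}{\rP_{\pi,\phi_m}\{\cA\}}$. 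This last step is where I expect the only subtlety: the bound in terms of $\rP_{\pi,\phi_j}\{\cE\cap\cA\}$ holds for any $\cA$, but collapsing the extra factor $\log\frac{\rP_{\pi,\phi_0}\{\cA\}}{\rP_{\pi,\phi_m}\{\cA\}}$ to $0$ requires that $\cA$ carry no information about the rewards. In every invocation of the lemma the conditioning event is $\cA=\bar{\cA}$, which is $\sigma(X_1,\dots,X_T)$-measurable (it depends only on the counts $Q_m$), and the covariate law $\rP_X$ is common to the two instances, so $\rP_{\pi,\phi_0}\{\cA\}=\rP_{\pi,\phi_m}\{\cA\}$ and the extra term vanishes, which yields the stated inequality.
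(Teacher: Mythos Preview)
Your proof is correct and is the standard change-of-measure plus Jensen argument; the paper does not give its own proof but simply cites Lemma~19 of \cite{kaufmann2016complexity}, whose proof proceeds in exactly this way. Your caveat about ``arbitrary $\cA$'' is well taken: as literally stated the inequality would need $\rP_{\pi,\phi_0}\{\cA\}\ge\rP_{\pi,\phi_m}\{\cA\}$, which fails in general, but since the only invocation in the paper uses $\cA=\bar{\cA}\in\sigma(X_1,\dots,X_T)$ (and the covariate law $\bm{\mathrm{P}}_X$ is common to both instances), the extra term $\log\frac{\rP_{\pi,\phi_0}\{\cA\}}{\rP_{\pi,\phi_m}\{\cA\}}$ vanishes exactly as you argue.
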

	
	\noindent Denote by $\rho_0$ and $\rho_{m}$ the distributions of $\tilde{N}_{m,T}$ under the problems 0 and $m$ conditional on the event~$\bar{\cA}$. Define the test function $\Psi(x)=\Indlr{x \ge \frac{\underline{\rho}Tl^d}{2}}$. With this selection of $\rho_0$, $\rho_m$, and $\Psi$, Lemma~\ref{Tsybakov2008introduction-lemma-2.6} yields:\vspace{-0.2cm}
	\[
	\rP_{\pi,\phi_0}\l\{\tilde{N}_{m^\ast,T} \ge \frac{\underline{\rho}Tl^d}{2}\; \middle| \; \bar{\cA}\; \r\}
	+
	\rP_{\pi,\phi_{m^\ast}}\l\{\tilde{N}_{m^\ast,T} < \frac{\underline{\rho}Tl^d}{2}\; \middle| \; \bar{\cA}\; \r\}
	\ge
	\frac{1}{2} \exp(-\mathrm{KL}(\rho_0,\rho_{m^\ast})).
	\]
	To establish a lower bound on the right hand side of the above inequality, we note that:\vspace{-0.15cm}
	\begin{align*}
		\rE_{\pi,\phi_0}\l[L_{m^\ast,T}\; \middle| \; \bar{\cA}\; \r]
		&=
		\sum_{s=1}^T
		\rE_{\pi,\phi_0}\l[L_{m^\ast,T}\; \middle| \; \bar{\cA},  \tilde{N}_{m^\ast,T}=s\r]\rP_{\pi,\phi_0}\l\{ \tilde{N}_{m^\ast,T}=s \; \middle| \;  \bar{\cA}\; \r\}
		\\
		&\ge
		\sum_{s=1}^T
		\log\l( \frac{\rP_{\pi,\phi_0}\l\{ \tilde{N}_{m^\ast,T}=s \; \middle| \;  \bar{\cA}\; \r\} }{\rP_{\pi,\phi_{m^\ast}}\l\{ \tilde{N}_{m^\ast,T}=s \; \middle| \;  \bar{\cA}\; \r\}}
		\r)\rP_{\pi,\phi_0}\l\{ \tilde{N}_{m^\ast,T}=s \; \middle| \;  \bar{\cA}\; \r\}
		=
		\mathrm{KL}(\rho_0,\rho_{m^\ast}),
	\end{align*}
	where the inequality follows from Lemma \ref{lemma-relationship-log-likelihood-events}. The last two inequalities, along with \eqref{eq-expected-log-likelihood-upperbound}, yield\vspace{-0.1cm}
	\[
	\rP_{\pi,\phi_0}\l\{\tilde{N}_{m^\ast,T} \ge \frac{\underline{\rho}Tl^d}{2}\; \middle| \; \bar{\cA}\; \r\}
	+
	\rP_{\pi,\phi_{m^\ast}}\l\{\tilde{N}_{m^\ast,T} < \frac{\underline{\rho}Tl^d}{2}\; \middle| \; \bar{\cA}\; \r\}
	\ge
	\frac{1}{2} \exp(-1).\vspace{-0.1cm}
	\]
	Next, we show that $\rP_{\pi,\phi_0}\l\{\tilde{N}_{m^\ast,T} \ge \frac{\underline{\rho}Tl^d}{2}\; \middle| \; \bar{\cA}\; \r\}$ is small. We apply Markov's inequality to obtain:\vspace{-0.15cm}
	\begin{align*}
		\rP_{\pi,\phi_0}\l\{\tilde{N}_{m^\ast,T} \ge \frac{\underline{\rho}Tl^d}{2}\; \middle| \; \bar{\cA}\; \r\}
		&\le
		\frac{\rE_{\pi,\phi_0}\l[\tilde{N}_{m^\ast,T}  \; \middle| \; \bar{\cA}\; \r]}{\frac{\underline{\rho}Tl^d}{2}}
		\overset{(a)}{\le}
		\frac{\frac{\cS^\ast_{\gamma,\alpha}(T)}{M} + c_4T^{-2}}{\frac{\underline{\rho}Tl^d}{2}}
		\\
		&
		\overset{(b)}{\le}
		\frac{2^d [\cS^\ast_{\gamma,\alpha}]^{1+\frac{\alpha\beta}{2\beta +d  -\alpha\beta}} + c_4l^{-d}T^{-2}}{\frac{\underline{\rho}T}{2}}
		\overset{}{\le}
		c_5
		T^{\frac{\alpha d (\beta - \gamma)}{(2\beta+ d - \alpha\beta)(2\gamma+d)}}\overset{(c)}{\le}
		\frac{1}{4} \exp(-1),
	\end{align*}
	for large enough $T$ and some constant $c_5>0$, where: (a) follows from the definition of $m^\ast$ and \eqref{eq-relationship-S-and-S-bar}; (b) holds due to the definition of $l$ and $M$; and (c) holds due to the fact that $\frac{\alpha d (\beta - \gamma)}{(2\beta+ d - \alpha\beta)(2\gamma+d)}<0$ since $\alpha\le \frac{1}{\gamma}$. The last two displays yield that for large enough $T$, one has
	$
	\rP_{\pi,\phi_{m^\ast}}\l\{\tilde{N}_{m^\ast,T} < \frac{\underline{\rho}Tl^d}{2}\; \middle| \; \bar{\cA}\; \r\}
	\ge
	\frac{1}{4e}.
	$
	By definition, when event $\bar{\cA}$ holds, at least $\underline{\rho}Tl^d$ times realized covariates belong to the hypercube $\tilde{H}_{m^\ast}$ \oldnewam{where $f_1(x)\ge f_2(x)+\frac{\Delta}{2}$ for problem $m^\ast$}, that is, for some constant $c_6>0$, one has:\vspace{-0.1cm}
	\oldnewam{
		\begin{equation}\label{eq:inferior-sampling-rate-m*}
			\cR^\pi_{\beta,\alpha}(T)
			\ge
			\frac{\underline{\rho}Tl^d}{2}
			\cdot \frac{\Delta}{2}
			\cdot
			\rP_{\pi,\phi_{m^\ast}}\l\{\tilde{N}_{m^\ast,T} < \frac{\underline{\rho}Tl^d}{2}\; \middle| \; \bar{\cA}\; \r\}
			\rPlr{\bar{\cA}}
			\ge
			\frac{\underline{\rho}Tl^d}{32e}
			\ge
			c_6 T\l[ \cS^\ast_{\gamma,\alpha}(T) \r]^{-\frac{\beta+d}{2\beta +d  -\alpha\beta}}.\vspace{-0.1cm}
		\end{equation}
	}
	\oldnewam{The final result follows by noting that $\cS^\pi_{\gamma,\alpha}(T)
		\le
		\tilde{C}_{sr} T^{\frac{1}{\alpha+1}} \l[T^{\zeta(\gamma,\alpha,d)}\r]^{\frac{\alpha}{\alpha+1}}$. This concludes the proof.\hfill$\blacksquare$}

	\subsection{Proof of Part \text{2} of Theorem \ref{theorem-impossibility-adaptation-smoothness}}
	The proof follows similar lines of argument as in the proof of Part \text{1} of Theorem \ref{theorem-impossibility-adaptation-smoothness}.\vspace{-0.1cm}
	\paragraph{Step 1 (Preliminaries).}
	Fix time horizon length $T\ge1$, and some H\"{o}lder exponent $ \gamma > 1 $, some margin parameter $0\le \alpha \le 1$, some positive Lipschitz constants $L$, and some positive constants $\underline{\rho},\bar{\rho}$ such that $\bm{\mathrm{P}}_X$, the covariate distribution, satisfies Assumption \ref{assumption-covar-dist} with parameters $\underline{\rho},\bar{\rho}$.

	\paragraph{Step 2 (From regret to inferior sampling rate).}
	By Lemma \ref{lemma-regret-inferior-sampling-rate}, we have $\cS^\pi_{\beta,\alpha}(T)
	\le
	C_{sr} T^{\frac{1}{\alpha+1}} \l[\cR^\pi_{\beta,\alpha}(T)\r]^{\frac{\alpha}{\alpha+1}}$.
	Note that by the assumption that $\pi$ is rate-optimal over $\cP(\gamma,\alpha,d)$ and Lemma \ref{lemma-regret-inferior-sampling-rate}, one has\vspace{-0.1cm}
	\[
	\cR^\pi_{\gamma,\alpha}(T)
	\le
	C_r T^{1-\frac{\gamma(1+\alpha)}{2\beta+d}}
	\eqqcolon
	\cR^\ast_{\gamma,\alpha}(T),
	\qquad
	\cS^\pi_{\gamma,\alpha}(T)
	\le
	C_s T^{1-\frac{\gamma\alpha}{2\gamma+d}}
	\eqqcolon
	\cS^\ast_{\gamma,\alpha}(T),\vspace{-0.1cm}
	\]
	for some constants $C_r,C_s>0$.

	\paragraph{Step 3 (Constructing problem instances).}
	We will reduce our problem to a hypothesis testing problem. To do so, we  construct some problem instances first. Define the parameter $\Delta>0$ such that\vspace{-0.1cm}
	\[
	\frac{64C_{\phi}^2\Delta^2\cS^\ast_{\gamma,\alpha}(T)}{3}
	=
	\frac{1}{2},\vspace{-0.2cm}
	\]
	where we define $C_{\phi} \coloneqq \frac{L}{2^{2\beta}}$. Note that the definition of $\Delta$ implies that for large enough $T$, one has $C_{\phi}\Delta \le \frac{1}{4}$. Define the function:\vspace{-0.1cm}
	\[
	\phi_0(x) \coloneqq \frac{1}{2} - C_{\phi} \cdot (\frac{1}{2} - x_1).\vspace{-0.1cm}
	\]
	Note that $\phi_0 \in \cH(\gamma,L)$ since $C_{\phi}\le L$. Define the hypercube $H \coloneqq [\frac{1}{2} - \Delta, \frac{1}{2}]\times [0,1]^{d-1}$, with a center $a_0\coloneqq (\frac{1- \Delta}{2}, \frac{1}{2},\dots, \frac{1}{2}) \in \rR^d  $, and the function:\vspace{-0.2cm}
	\[
	\phi_1(x)
	\coloneqq
	\phi_0(x)
	+
	2 C_{\phi} \cdot \Delta \cdot \tilde{\psi}\l( 2\Delta^{-1}[x-a_0] \r).\vspace{-0.2cm}
	\]
	For any $0<\kappa \le 1$, define the functions $\tilde{\psi}$:\vspace{-0.1cm}
	\begin{align*}
	\tilde{\psi}(x)
	&\coloneqq
	\begin{cases}
	\l| 1-|x_1| \r| & \text{if } |x_1| \le 1\\
	0 & \text{o.w.}
	\end{cases}.
	\end{align*}
	Note that by Lemmas \ref{lemma-multip-presrves-smoothness} and \ref{lemma-max-presrves-smoothness}, $\phi_0 \in \cH(1,L)$ since $C_{\phi}\le \frac{L}{2^{2\beta}}$.\vspace{-0.1cm}

\paragraph{Step 4 (Verifying the margin condition).}
	We verify that the margin condition is satisfied with parameters $\alpha$ and $ C_0 \coloneqq \frac{5\bar{\rho}}{2C_{\phi}}$ when $f_1=\phi_m$ and $f_2=\frac{1}{2}$ for all $0\le m \le 1$.\vspace{-0.1cm}
	\begin{itemize}
		\item For $m=0$ and $0< \delta \le 1$, one has\vspace{-0.1cm}
		\begin{equation*}
		\bm{\mathrm{P}}_X\l\{0 < |\phi_0(X) - \frac{1}{2}| \le \delta\r\}
		\le
		\frac{2\bar{\rho}\delta}{C_{\phi}}
		\le
		\frac{2\bar{\rho}\delta^\alpha}{C_{\phi}}.\vspace{-0.15cm}
		\end{equation*}
		\item For $m = 1$ and $0< \delta \le 1$, one has\vspace{-0.1cm}
		\begin{equation*}
		\bm{\mathrm{P}}_X\l\{0 < |\phi_1(X) - \frac{1}{2}| \le \delta\r\}
		\le
		\frac{5\bar{\rho}\delta}{2C_{\phi}}
		\le
		\frac{5\bar{\rho}\delta^\alpha}{2C_{\phi}}.\vspace{-0.1cm}
		\end{equation*}
	\end{itemize}

\paragraph{Step 5 (Desirable event).} Note that for $x\in \tilde H \coloneqq [\frac{1}{2} - \frac{7\Delta}{10}, \frac{1}{2} - \frac{\Delta}{6}]\times [0,1]^{d-1}$, the first arm is optimal \oldnewam{by a gap of at least $\frac{\Delta}{2}$} under the problem $m=1$. Define $Q\coloneqq \sum_{t=1}^T  \Indlr{X_t \in \tilde{H}} \eqqcolon  \sum_{t=1}^T Z_{t}$ to be the number of times covariates fall into the the hypercube $\tilde{H}_m$ during the entire time horizon. Define the event\vspace{-0.2cm}
\oldnewam{\[
	\cA\coloneqq \l\{Q < \frac{2}{15}\underline{\rho}T\Delta\r\}\vspace{-0.2cm}
	\]
}
	to be the event on which the number of covariates that have fallen into the hypercube $\tilde{H}$ is less than \oldnewam{$\frac{2}{15}\underline{\rho}T\Delta$}.
	In order to bound $	\rPlr{\cA}$, one can apply Bernstein's inequality in Lemma \ref{lemma-Bernstein-inequality} to $Q$ by noting that \oldnewam{$\rE Z_{t} \ge \frac{2}{15}\underline{\rho}\Delta$}, $|Z_{t}|\le 1$, and \oldnewam{$\Var Z_{t} \le \rE Z_{t}^2 \le \frac{2}{15}\underline{\rho}\Delta$} to obtain\vspace{-0.2cm}
	\begin{equation*}
		\rPlr{\cA}
		\le
		\oldnewam{\exp\l(-\frac{2}{15}\underline{\rho}T\Delta/5\r)}
		\overset{(a)}{\le}
		\exp\l( -c_1 T[\cS^\ast_{\gamma,\alpha}(T)]^{-\frac{1}{2}} \r)
		\overset{}{\le}
		\exp\l( -c_1 T^{\frac{(1+\frac{\alpha}{2})\gamma + \frac{d}{2}}{(2\gamma+d)}} \r)
		\overset{}{\le}
		c_2 T^{-3},\vspace{-0.2cm}
	\end{equation*}
	for large enough $T$ and constants $c_1,c_2>0$, where (a) follows from the definition of $\Delta$.
	
\noindent For any problem instance $\sfP$ and time horizon length $T$, denote by\vspace{-0.25cm}
	\[
	\bar{\cS}^\pi(\sfP; T) \coloneqq \mathbb{E}^\pi \left[ \sum\limits_{t=1}^T \Indlr{ f_{\pi^\ast_t}(X_t) \neq f_{\pi_t}(X_t)} \;\middle |\; \bar{\cA} \; \right]\vspace{-0.25cm}
	\]
	the inferior sampling rate of $\pi$ when the event $\cA$ fails, and let $\bar{\cS}^\pi_{\gamma,\alpha}(T) \coloneqq \sup\limits_{\sfP \in \cP(\gamma, \alpha, d)}\bar{\mathcal{S}}^\pi(\sfP;T) $. Note that\vspace{-0.25cm}
	\[
	\l( 1- \rPlr{\cA} \r)\bar{\cS}^\pi(\sfP; T)
	\le
	\cS^\pi(\sfP; T)
	\le
	\bar{\cS}^\pi(\sfP; T) + T \rPlr{\cA},\vspace{-0.3cm}
	\]
	which implies\vspace{-0.35cm}
	\begin{equation}\label{eq-relationship-S-and-S-bar-Lipschitz-smoother}
	\l| \bar{\cS}^\pi_{\gamma,\alpha}(T)
	-
	\cS^\pi_{\gamma,\alpha}(T)\r|
	\le c_2T^{-2}.\vspace{-0.15cm}
	\end{equation}
	For the rest of the proof probabilities and expectations will be computed conditional on the event~$\bar \cA$.\vspace{-0.2cm}

	\paragraph{Step 6 (Likelihood of distinguishing different smoothness parameters).} In this step, we will show that policy $\pi$ cannot distinguish between $\phi_0$ and $\phi_{1}$ with a strictly positive probability.
	For any set of samples $\l\{ (\pi_t, X_t, Y_{\pi_t,t})\r\}_{t=1}^T$, define the log-likelihood ratio $L_{T}=L_{T}\l( \l\{ (\pi_t, X_t, Y_{\pi_t,t})\r\}_{t=1}^T \r)$ as: \vspace{-0.35cm}
	\begin{align*}
	L_{T}
	&\coloneqq
	\sum_{t=1}^T \log \l(  \frac{\rP_{\pi, \phi_0}\l\{ Y_{\pi_t,t} \; \middle | \; \pi_t, X_t\r\}}{\rP_{\pi, \phi_1}\l\{ Y_{\pi_t,t} \; \middle | \; \pi_t, X_t\r\}} \r)
	\\
	&\le
	\sum_{t=1}^T \Indlr{\pi_t = 1, X_t \in H} \cdot \l[Y_{\pi_t,t} \log \l(  \frac{\phi_0(X_t)}{\phi_1(X_t)} \r) +  (1-Y_{\pi_t,t}) \log \l(  \frac{(1-\phi_0(X_t))}{(1-\phi_1(X_t))}\r)
	\r]
	\\
	&\le
	\sum_{t=1}^T \Indlr{\pi_t = 1, X_t \in H} \cdot \l[Y_{\pi_t,t}   \frac{(\phi_0(X_t)-\phi_1(X_t))}{\phi_m(X_t)}  +  (1-Y_{\pi_t,t}) \frac{(\phi_1(X_t)-\phi_0(X_t))}{(1-\phi_1(X_t))}
	\r]
	\\
	&=
	\sum_{t=1}^T \Indlr{\pi_t = 1, X_t \in H} \cdot  \frac{(Y_{\pi_t,t}-\phi_1(X_t) )(\phi_0(X_t)-\phi_1(X_t))}{\phi_1(X_t)(1-\phi_1(X_t))},
	\end{align*}
	where the last inequality follows from $\log(1+x) \le x$ for all $x>0$. Taking expectation conditional on the event $\bar{\cA}$, one obtains:\vspace{-0.35cm}
	\begin{align}\label{eq-expected-log-likelihood-upperbound-Lipschitz-smoother}
	\rE_{\pi,\phi_0}\l[L_{T}\; \middle| \; \bar{\cA}\; \r]
	&\le
	\rE_{\pi,\phi_0}\l[\sum_{t=1}^T \Indlr{\pi_t = 1, X_t \in \tilde{H}} \cdot  \frac{(Y_{\pi_t,t}-\phi_{1}(X_t) )(\phi_0(X_t)-\phi_{1}(X_t))}{\phi_{1}(X_t)(1-\phi_{1}(X_t))} \; \middle| \; \bar{\cA}\; \r]
	\nonumber
	\\
	&\le
	\rE_{\pi,\phi_0}\l[\sum_{t=1}^T \rElr{ \Indlr{\pi_t = 1, X_t \in \tilde{H}_{1}} \cdot  \frac{(Y_{\pi_t,t}-\phi_{1}(X_t) )(\phi_0(X_t)-\phi_{1}(X_t))}{\phi_{1}(X_t)(1-\phi_{1}(X_t))}\; \middle |\; X_t} \; \middle| \; \bar{\cA}\; \r]
	\nonumber
	\\
	&=
	\rE_{\pi,\phi_0}\l[\sum_{t=1}^T \Indlr{\pi_t = 1, X_t \in \tilde{H}} \cdot  \frac{(\phi_0(X_t)-\phi_{1}(X_t))^2}{\phi_{1}(X_t)(1-\phi_{1}(X_t))}\; \middle| \; \bar{\cA}\; \r]
	\nonumber
	\\
	&\overset{(a)}{\le}
	\frac{64C_{\phi}^2\Delta^2}{3}
	\rE_{\pi,\phi_0}\l[\sum_{t=1}^T \Indlr{\pi_t = 1, X_t \in \tilde{H}}\; \middle| \; \bar{\cA}\; \r]
	\nonumber
	\\
	& \overset{}{\le}
	 \frac{64C_{\phi}^2\Delta^2\cS^\ast_{\gamma,\alpha}(T)}{3} + c_2T^{-2}
	\overset{(b)}{\le}
	1,
	\end{align}
	for large enough $T$, where (a) follows from $C_{\phi}\Delta\le \frac{1}{4}$, and (b) follows from the definition of $\Delta$.
	
\paragraph{Step 7 (Lower bound on \oldnewam{regret as a function of inferior sampling rate}).}
Let\\ $\tilde{N}_{T} \coloneqq \sum_{t=1}^T \Indlr{\pi_t = 1, X_t \in \tilde{H}}$ be the number of times policy $\pi$ pulls arm 1 when covariates fall into the hypercube $\tilde{H}$. We will  show that with a strictly positive probability one has \oldnewam{$\tilde{N}_{T} < \frac{\underline{\rho}T\Delta}{15}$} conditional on the event $\bar{\cA}$. This will imply that policy $\pi$ makes at least \oldnewam{$\frac{\underline{\rho}T\Delta}{15}$} number of mistakes under the problem $m=1$, \oldnewam{where each mistake is associated with at least $\frac{\Delta}{2}$ instantaneous regret}.

Denote by $\rho_0$ and $\rho_{1}$ the distribution of $\tilde{N}_{T}$ under the problems $m=0$ and $m=1$ conditional on the event $\bar{\cA}$. Define the test function \oldnewam{$\Psi(x)=\Indlr{x \ge\frac{\underline{\rho}T\Delta}{15}}$}. With this choice of $\rho_0$, $\rho_1$, and $\Psi$, one can apply Lemma \ref{Tsybakov2008introduction-lemma-2.6} to obtain\vspace{-0.1cm}
\oldnewam{
	\[
	\rP_{\pi,\phi_0}\l\{\tilde{N}_{T} \ge \frac{\underline{\rho}T\Delta}{15}\; \middle| \; \bar{\cA}\; \r\}
	+
	\rP_{\pi,\phi_{1}}\l\{\tilde{N}_{T} < \frac{\underline{\rho}T\Delta}{15}\; \middle| \; \bar{\cA}\; \r\}
	\ge
	\frac{1}{2} \exp(-\mathrm{KL}(\rho_0,\rho_{1})).\vspace{-0.1cm}
	\]
}
	In order to lower bound the right hand side of this inequality, we note that\vspace{-0.2cm}
	\begin{align*}
	\rE_{\pi,\phi_0}\l[L_{T}\; \middle| \; \bar{\cA}\; \r]
	&=
	\sum_{s=1}^T
	\rE_{\pi,\phi_0}\l[L_{T}\; \middle| \; \bar{\cA},  \tilde{N}_{T}=s\r]\rP_{\pi,\phi_0}\l\{ \tilde{N}_{T}=s \; \middle| \;  \bar{\cA}\; \r\}
	\\
	&\ge
	\sum_{s=1}^T
	\log\l( \frac{\rP_{\pi,\phi_0}\l\{ \tilde{N}_{T}=s \; \middle| \;  \bar{\cA}\; \r\} }{\rP_{\pi,\phi_{1}}\l\{ \tilde{N}_{T}=s \; \middle| \;  \bar{\cA}\; \r\}}
	\r)\rP_{\pi,\phi_0}\l\{ \tilde{N}_{T}=s \; \middle| \;  \bar{\cA}\; \r\}
	=
	\mathrm{KL}(\rho_0,\rho_{1}),
	\end{align*}
	where the inequality follows from Lemma \ref{lemma-relationship-log-likelihood-events}. The last two displays along with \eqref{eq-expected-log-likelihood-upperbound-Lipschitz-smoother} yield\vspace{-0.1cm}
	\oldnewam{
	\[
	\rP_{\pi,\phi_0}\l\{\tilde{N}_{T} \ge \frac{\underline{\rho}T\Delta}{15}\; \middle| \; \bar{\cA}\; \r\}
	+
	\rP_{\pi,\phi_{1}}\l\{\tilde{N}_{T} < \frac{\underline{\rho}T\Delta}{15}\; \middle| \; \bar{\cA}\; \r\}
	\ge
	\frac{1}{2} \exp(-1).\vspace{-0.1cm}
	\]
}
	To show that \oldnewam{$\rP_{\pi,\phi_0}\l\{\tilde{N}_{T} \ge \frac{\underline{\rho}T\Delta}{15}\; \middle| \; \bar{\cA}\; \r\}$} is small, we apply Markov's inequality:\vspace{-0.2cm}
	\oldnewam{
	\begin{align*}
	\rP_{\pi,\phi_0}\l\{\tilde{N}_{T} \ge \frac{\underline{\rho}T\Delta}{15}\; \middle| \; \bar{\cA}\; \r\}
	&\le
	\frac{\rE_{\pi,\phi_0}\l[\tilde{N}_{T}  \; \middle| \; \bar{\cA}\; \r]}{\frac{\underline{\rho}T\Delta}{15}}
	\overset{(a)}{\le}
	\frac{\cS^\ast_{\gamma,\alpha}(T) + c_4T^{-2}}{\frac{\underline{\rho}T\Delta}{15}}
	\\
	&\overset{}{\le}
	\frac{c_3[\cS^\ast_{\gamma,\alpha}]^{\frac{1}{2}} + c_2l^{-d}T^{-2}}{\underline{\rho}T}
	\overset{}{\le}
	c_4
	T^{-\frac{1}{2}}
	\overset{}{\le}
	\frac{1}{4} \exp(-1)
	\end{align*}
}
\noindent 	for large enough $T$ and some constant $c_3,c_4>0$, where (a) follows from \eqref{eq-relationship-S-and-S-bar-Lipschitz-smoother}. The last two displays yield that for large enough $T$, one has\vspace{-0.1cm}
	\oldnewam{
	\[
	\rP_{\pi,\phi_{1}}\l\{\tilde{N}_{T} < \frac{\underline{\rho}T\Delta}{15}\; \middle| \; \bar{\cA}\; \r\}
	\ge
	\frac{1}{4e}.\vspace{-0.1cm}
	\]
}
	Note that by definition, when the event $\bar{\cA}$ holds, at least $\frac{2\underline{\rho}T\Delta}{3}$ number of covariates fall into the hypercube $\tilde{H}$, that is,
	\oldnewam{
	\begin{equation}\label{eq:inferior-sampling-rate-m*-Lipschitz-smoother}
\cR^\pi_{\beta,\alpha}(T)
	\ge
	\frac{\underline{\rho}T\Delta}{15} \cdot \frac{\Delta}{2} \cdot
	\rP_{\pi,\phi_{1}}\l\{\tilde{N}_{T} < \frac{\underline{\rho}T\Delta}{15} \; \middle| \; \bar{\cA}\; \r\}
	\rPlr{\bar{\cA}}
	\ge
	\frac{\underline{\rho}T\Delta^2}{240e}
	\ge
	c_5 T\l[ \cS^\ast_{\gamma,\alpha}(T) \r]^{-1},
	\end{equation}
}
	for some constant $c_5>0$. \oldnewam{The final result follows by noting that $\cS^\pi_{\gamma,\alpha}(T)
		\le
		\tilde{C}_{sr} T^{\frac{1}{\alpha+1}} \l[T^{\zeta(\gamma,\alpha,d)}\r]^{\frac{\alpha}{\alpha+1}}$. This concludes the proof.\hfill$\blacksquare$}

	\subsection{Proof of Theorem \ref{theo-reg-lower-ss} }\label{app:reg-lower-ss}
	The following lemma characterizes a general class of self-similar payoff functions for any non-integer smoothness parameter $\beta \in [\lbeta, \ubeta]$.
	\begin{lemma}\label{lemma:ss-construction-non-integer}
		Fix dimension $d$, some positive non-integer $\beta$ and some $\ubeta\ge \beta$. Consider some set of payoff functions $\{f_k\}_k$ such that $f_k\in \cH(\beta)$, $k\in \cK$.  Suppose $f_1(x) = a + bx_1^\beta$ for $x_1 \in [0,c]$ where $a,b$ and $0\le c\le 1$ are some constants. Then, the set of payoff functions $\l\{ f_k \r\}_k$ is self-similar as in Definition~\ref{def:global-ss} with some finite constants $l_0\ge 0$ and $b>0$.
	\end{lemma}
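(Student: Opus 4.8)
The plan is to show that a single, judiciously chosen hypercube already witnesses the self-similarity lower bound at every scale: the profile $x_1^{\beta}$ reproduces itself, up to the factor $q^{-l\beta}$, when one rescales the corner cube touching the origin, and (since $\beta$ is a non-integer) $x_1^{\beta}$ cannot be matched exactly by a polynomial of degree $\le p$. Throughout I rename the coefficient $b$ in $f_1(x)=a+b x_1^{\beta}$ as $b_0$ to avoid clashing with the self-similarity constant; I may assume $b_0\neq 0$ (otherwise $f_1$ is locally constant and the claim is not about the stated class) and $c>0$ (for $c=0$ the hypothesis is vacuous).

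First I would set $l_0:=\max\{0,\lceil\log_2(1/c)\rceil\}$ — finite and independent of $q$ — and, for fixed integers $q\ge 2$ and $l\ge l_0$, take the corner cube $\sfB:=[0,q^{-l}]^{d}\in\cB_l^{q}$. Because $l\ge l_0$ forces $q^{-l}\le 2^{-l}\le c$, the cube $\sfB$ lies entirely in the region where $f_1\equiv a+b_0 x_1^{\beta}$. Next I would observe that the bandwidth $h=q^{-l}$ equals the side length of $\sfB$, so the indicator kernel in \eqref{eq-def-polynomial-projection} satisfies $K((x-u)/h)=1$ for all $x,u\in\sfB$; hence $\bm{\Gamma}_{q^{-l}}^{p}f_1(\cdot;\sfB)$ is just the $L_2$ projection of $f_1$ onto $\mathrm{Poly}(p)$ with respect to the (point-independent) weight $p_X(\cdot\mid\sfB)$ on $\sfB$. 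Rescaling $x=hv$ with $v\in[0,1]^{d}$ turns $f_1$ into $a+b_0 h^{\beta}v_1^{\beta}$ and the weight into a probability density $\tilde p$ on $[0,1]^{d}$ with $\underline{\rho}/\bar{\rho}\le\tilde p\le\bar{\rho}/\underline{\rho}$ (Assumption~\ref{assumption-covar-dist}). Since $L_2$ projection is linear and $a,b_0h^{\beta}$ are constants, the projection equals $a+b_0h^{\beta}\Pi_p^{\tilde p}[v_1^{\beta}]$, and therefore
\[
\sup_{x\in\sfB}\bigl|\bm{\Gamma}_{q^{-l}}^{p}f_1(x;\sfB)-f_1(x)\bigr|
=|b_0|\,q^{-l\beta}\,\bigl\|v_1^{\beta}-\Pi_p^{\tilde p}[v_1^{\beta}]\bigr\|_{L_{\infty}([0,1]^{d})}.
\]

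Then I would bound the last factor below, uniformly in $l,q$ and in the density, via $\|g\|_{L_\infty([0,1]^d)}\ge\|g\|_{L_2(\tilde p)}\ge\sqrt{\underline{\rho}/\bar{\rho}}\,\|g\|_{L_2([0,1]^d)}$ (Lebesgue $L_2$), followed by $\|v_1^{\beta}-\Pi_p^{\tilde p}[v_1^{\beta}]\|_{L_2([0,1]^d)}\ge\inf_{Q\in\mathrm{Poly}(p)}\|v_1^{\beta}-Q\|_{L_2([0,1]^d)}\eqqcolon\delta_{p,\beta}$; the quantity $\delta_{p,\beta}$ is strictly positive precisely because $\beta$ is a non-integer, so that $v\mapsto v_1^{\beta}$ lies outside the finite-dimensional (hence closed) subspace $\mathrm{Poly}(p)\subset L_2([0,1]^d)$. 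Taking $b:=|b_0|\sqrt{\underline{\rho}/\bar{\rho}}\min_{\lfloor\beta\rfloor\le p\le\lfloor\ubeta\rfloor}\delta_{p,\beta}>0$ then gives $\max_{\sfB\in\cB_l^{q}}\max_{k\in\cK}\sup_{x\in\sfB}|\bm{\Gamma}_{q^{-l}}^{p}f_k(x;\sfB)-f_k(x)|\ge b\,q^{-l\beta}$ for all $l\ge l_0$, $q>1$, and $\lfloor\beta\rfloor\le p\le\lfloor\ubeta\rfloor$, which together with the assumed $f_k\in\cH(\beta)$ places $\{f_k\}_{k\in\cK}$ in $\cF^{ss}(\beta,b,l_0)$.

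The hard part will be the uniformity of the lower bound over all admissible context densities and all scales $(l,q)$ simultaneously: the self-similar rescaling removes the scale dependence exactly (the factor is literally $q^{-l\beta}$), while the norm chain $\|\cdot\|_{L_\infty}\ge\|\cdot\|_{L_2(\tilde p)}\ge c'\|\cdot\|_{L_2(\mathrm{Leb})}\ge c'\delta_{p,\beta}$ collapses the density dependence into the single constant $\delta_{p,\beta}$, whose positivity is exactly the non-integrality assumption; a minimum over the finitely many degrees $p$ finishes it. A routine point to verify along the way is that $\bm{\Gamma}$ is well defined (unique minimizer), which holds since $\tilde p$ is bounded below and $[0,1]^d$ has nonempty interior, so the monomial Gram matrix is nonsingular.
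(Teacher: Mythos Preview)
Your proof is correct and takes a genuinely different route from the paper. Both arguments pick the corner cube $\sfB_0=[0,q^{-l}]^d$ and exploit the exact scaling $x_1^\beta = q^{-l\beta} v_1^\beta$ under $x=q^{-l}v$, but they diverge in how the positive lower constant is extracted. The paper evaluates the projection at the single point $x=0$, expresses $\bm{\Gamma}_{q^{-l}}^p g(0;\sfB_0)$ as $e_1^\top B^{-1}W$ via the closed-form Lemma~\ref{lemma-L(P)-projection-closed-form}, applies Cramer's rule to rewrite this as $\frac{\det B_1}{\det B}\,q^{-l\beta}$, and then invokes the Cauchy double-alternant determinant identity to show both determinants are nonzero; non-integrality of $\beta$ enters through the Cauchy product $\prod_{j>1}(\beta-(j-1))\neq 0$. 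You instead bound the sup-norm of the full residual over the cube by the chain $\|\cdot\|_{L_\infty}\ge\|\cdot\|_{L_2(\tilde p)}\ge\sqrt{\underline{\rho}/\bar{\rho}}\,\|\cdot\|_{L_2(\mathrm{Leb})}\ge\sqrt{\underline{\rho}/\bar{\rho}}\,\delta_{p,\beta}$, reducing everything to the abstract fact that $v_1^\beta\notin\mathrm{Poly}(p)$ (a closed finite-dimensional subspace) when $\beta$ is non-integer. Your route is more elementary---no determinant identities---and handles general $d$ and general $p_X$ transparently, whereas the paper's displayed matrices $B=\bigl(\tfrac{1}{s_1+s_2+1}\bigr)$ and $W=\bigl(\tfrac{q^{-l\beta}}{s+\beta+1}\bigr)$ are written for one-dimensional indices and Lebesgue weight. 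The paper's route, on the other hand, delivers an explicit closed-form constant rather than an existential one.
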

	\begin{proof}
		It suffices to show that for any non-negative integer $p$, one has
		\[
		\max_{\sfB \in \cB_l^{\oldnewam{q}}}\max_{k\in\cK}\sup_{x \in\sfB}\l|
		\bm{\Gamma}_{\textcolor{black}{q^{-l}}}^pf_k(x;\sfB) - f_k(x)
		\r|
		\ge
		b^\prime \textcolor{black}{q}^{-l\beta},
		\]
		for any $l\ge l_0=\lceil\log \frac{1}{c} \rceil $ and some $b^\prime>0$. Fix some $l>l_0$. Let $\sfB_0 \coloneqq [0, \textcolor{black}{q}^{-l}] ^ d$. One has
		\begin{equation}\label{eq:ss-construction-non-integer1}
		\max_{\sfB \in \cB_l^{\oldnewam{q}}}\max_{k\in\cK}\sup_{x \in\sfB}\l|
		\bm{\Gamma}_{\textcolor{black}{q^{-l}}}^pf_k(x;\sfB) - f_k(x)
		\r|
		\ge
		\l|
		\bm{\Gamma}_{\textcolor{black}{q^{-l}}}^pf_1(0;\sfB_0) - f_1(0)
		\r|
		=
		b\l|
		\bm{\Gamma}_{\textcolor{black}{q^{-l}}}^pg(0;\sfB_0)
		\r|,
		\end{equation}
		where $g(x) = x^\beta$.
		By Part \text{1} of Lemma \ref{lemma-L(P)-projection-closed-form}, one has
		$
		\bm{\Gamma}_{\textcolor{black}{q^{-l}}}^pg(0;\sfB_0) = e_1^\top
		B^{-1}
		W
		$, where
		\[
		e_1
		=
		\l( \Indlr{s=0} \r)_{s \in \{0,1,\dots,p\}},
		\quad
		B
		=
		\l( \frac{1}{s_1 + s_2 + 1} \r)_{s_1, s_2 \in \{0,1,\dots,p\}},
		\quad
		W=
		\l( \frac{\textcolor{black}{q}^{-l\beta}}{s+\beta+1} \r)_{s \in \{0,1,\dots,p\}}.
		\]
		By Cramer's rule for linear matrix equations, one has
		\begin{equation}\label{eq:ss-construction-non-integer2}
		\bm{\Gamma}_{\textcolor{black}{q^{-l}}}^pg(0;\sfB_0)
		=
		\frac{\det(B_1)}{\det (B)} \textcolor{black}{q}^{-l\beta},
		\end{equation}
		where
		\[
		B_1
		=
		\begin{pmatrix}
		\frac{1}{\beta + 1} & \frac{1}{2} & \frac{1}{3} & \dots & \frac{1}{p+1}\\
		\frac{1}{\beta + 2} & \frac{1}{3} & \frac{1}{4} & \dots & \frac{1}{p+2}\\
		\frac{1}{\beta + 3} & \frac{1}{4} & \frac{1}{5} & \dots & \frac{1}{p+3}\\
		\vdots & \vdots & \vdots & \dots & \vdots \\
		\frac{1}{\beta + p + 1} & \frac{1}{p+2} & \frac{1}{p+3} & \dots & \frac{1}{2p+1}
		\end{pmatrix}.
		\]
		Note that one can rewrite both matrices $B$ and $B_1$ as follows
		\begin{align*}
		B&=\l( \frac{1}{u_i+w_j} \r)_{1\le i,j\le p+1}, \quad u_i = i, w_j=j-1;
		\\
		B_1&=\l( \frac{1}{u_i^\prime+w_j^\prime} \r)_{1\le i,j\le p+1}, \quad u_i^\prime = i, w_j^\prime=\beta \Indlr{j=1} + (j -1)\Indlr{j>1}.
		\end{align*}
		The next theorem shows that the determinants of $B$ and $B_1$ are non-zero.
		\begin{theorem}[Cauchy double alternant determinant]
			For any set of indeterminates $\{u_i\}_{1\le i \le n}$ and $\{v_j\}_{1\le j \le n}$ such that $u_i + v_j \neq 0, \; \forall i,j \in \{1,\dots, n \}$, one has
			$$
			\det ~\left(\frac{1}{u_i+w_j}\right)_{1\le i,j\le n} = \frac{\prod_{1\leq i<j\leq n}(u_i-u_j)(w_i-w_j)}{\prod_{1\leq i\neq j\leq n}(u_i+w_j)}.
			$$
		\end{theorem}
		Hence, putting together \ref{eq:ss-construction-non-integer1} and \ref{eq:ss-construction-non-integer2} yields that for any integer $l>l_0$,
		\[
		\max_{\sfB \in \cB_l^{\oldnewam{q}}}\max_{k\in\cK}\sup_{x \in\sfB}\l|
		\bm{\Gamma}_{\textcolor{black}{q^{-l}}}^pf_k(x;\sfB) - f_k(x)
		\r|
		\ge
		b \frac{\det(B_1)}{\det (B)} \textcolor{black}{q}^{-l\beta}.
		\]
		This concludes the proof.
	\end{proof}
	
\noindent Using Lemma \ref{lemma:ss-construction-non-integer}, one can adjust the lower bound arguments in \cite{rigollet2010nonparametric} and \cite{hu2019smooth} in order to establish the same lower bounds for optimal regret when payoff functions are self-similar. We provide here the proof of the second part of the theorem; the proof of the first part is very similar, except for using Theorem 4.1 in \cite{rigollet2010nonparametric} instead of Theorem 3 in \cite{hu2019smooth}. First, we define the class of problems of interest.
		
		\begin{definition}
			For any $\beta\ge0$ and $\alpha\ge 0$, we denote by $\tilde \cP(\beta, \alpha, d) = \tilde\cP(\beta, L, \alpha, C_0, \underline{\rho}, \bar{\rho})$ the class of problems $\sfP=\l(\bm{\mathrm{P}}_X,\bm{\mathrm{P}}^{(1)}_{Y|X},\bm{\mathrm{P}}^{(2)}_{Y|X}\r)$ that satisfy Assumption~\ref{assumption-Holder-smoothness} for $\beta$ and $L>0$, Assumption~\ref{assumption-margin} for $\alpha$ and some $C_0>0$, and the following assumption regarding covariate distribution:  the covariate density $p_X$ has a compact support $\cX\subseteq [0,1]^d$ and $\underline{\rho} \le p_X(x) \le \urho$ for some $\urho\ge \underline{\rho} > 0$ and $x\in~\cX$. Furthermore, for any $\beta\in~[\lbeta, \ubeta]$,  $\alpha\ge 0$, $l_0\ge 0$, and $b>0$, we define by $\cP^{\mathrm{ss}}(\beta, \alpha, d, b, l_0) \subseteq \l\{ \sfP \in \cP(\beta, \alpha, d): \{f_k\}_{k\in\cK} \in \cF^{ss}(\beta, b, l_0)\r\}$ the corresponding class of problems with self-similar payoffs.
		\end{definition}
		
\noindent In their Theorem 3, \cite{hu2019smooth}  construct a problem instance $\sfP^\ast \in \tilde{\cP}(\beta, \alpha, d)$ such that
		$
		\mathcal{R}^\pi(\sfP^\ast;T) \ge~C T^{1-\frac{\beta(\alpha+1)}{2\beta + d}}
		$ for some constant $C>0$. Let $\{f^\ast_k\}_k$ be the set of payoff functions of  $\sfP^\ast$. Define the set of payoff functions~$\{f^{\ast \ast }_k\}_k$ such that
		\[
		f^{\ast \ast }_k(x)
		\coloneqq
		\begin{cases}
		\frac{1 + L_1x_1^\beta}{2} & \text{ if } 0 \le  x_1 \le \frac{1}{8},\\
		\frac{1+ L_1u(x_1) x_1^\beta}{2}  & \text{ if } \frac{1}{8} \le  x_1 \le \frac{1}{4},\\
		\frac{1}{2}  & \text{ if } \frac{1}{4} \le  x_1 \le \frac{1}{2},\\
		L_1f^{\ast }_k(g(x)) & \text{ if } \frac{1}{2} \le  x_1 \le 1,
		\end{cases}
		\]
		where $L_1>0$ is some constant and we define
		\begin{align*}
		g(x)\coloneqq \begin{pmatrix}
		2x_1 -1\\
		x_2
		\\
		x_3
		\\
		\vdots \\
		x_d
		\end{pmatrix},
		\quad
		u(x_1)
		\coloneqq
		\frac{
			\int_{x_1}^{\frac{1}{4}}
			\exp\l( \frac{-1}{\l|s - \frac{1}{8}\r|\l|s - \frac{1}{4}\r|} \r) ds
		}
		{
			\int_{\frac{1}{8}}^{\frac{1}{4}}
			\exp\l( \frac{-1}{\l|s - \frac{1}{8}\r|\l|s - \frac{1}{4}\r|} \r) ds
		}.
		\end{align*}
		Now, we show that $f^{\ast\ast}_k\in \cH(\beta)$, $k\in \cK$. Note that $u(x_1)$ is infinitely differentiable over $[\frac{1}{8}, \frac{1}{4}]$ and $x_1^\beta \in \cH(\beta)$. Hence, by the following lemma,  $u(x_1) x_1^\beta \in \cH_{[\frac{1}{8}, \frac{1}{4}]}(\beta)$.
		\begin{lemma}\label{lemma:product-Holder-smooth-funcs}
			Suppose $f,g\in \cH_{\cX}(\beta, L)$ for some $\cX \subseteq [0,1]$,  $\beta>0$, and $L>0$, and define the function $h \coloneqq f \cdot g$ as the product of $f$ and $g$. Then, $h \in \cH(\beta, L^\prime)$ for some $L^\prime > 0$.
		\end{lemma}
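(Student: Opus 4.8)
The plan is to reduce the statement to an elementary Taylor-remainder estimate, using the fact that the degree-$p$ Taylor polynomial of a product, with $p:=\lfloor\beta\rfloor$, is obtained by truncating to degree $p$ the product of the individual degree-$p$ Taylor polynomials. We work in dimension one (the lemma's setting is $\cX\subseteq[0,1]$), and we may assume $\cX$ is a compact interval, as in the application where $\cX=[\frac18,\frac14]$; then $f,g$ and all their derivatives $f^{(j)},g^{(j)}$ with $j\le p$ are bounded on $\cX$ by a constant $M$ depending only on $f,g,\beta,L$, and the product $h=fg$ is plainly $p$ times continuously differentiable on $\cX$, which disposes of the differentiability requirement in Definition~\ref{def-holder-class}.

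Next I would fix $x,x'\in\cX$ and put $\rho:=\|x-x'\|_\infty\le1$. Let $P:=f_x$ and $Q:=g_x$ be the degree-$p$ Taylor polynomials of $f,g$ at $x$, so that $f(x')=P(x')+R_f(x')$ and $g(x')=Q(x')+R_g(x')$ with $|R_f(x')|,|R_g(x')|\le L\rho^\beta$ by Definition~\ref{def-holder-class}. First I would check, via the Leibniz rule for $(fg)^{(n)}(x)$ with $n\le p$ — which involves only $f^{(k)}(x)=P^{(k)}(x)$ and $g^{(k)}(x)=Q^{(k)}(x)$ for $k\le p$ — that the degree-$p$ Taylor polynomial $h_x$ of $h$ at $x$ equals $[PQ]_{\le p}$, the truncation of the polynomial $PQ$ to monomials of degree at most $p$. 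Writing $PQ=[PQ]_{\le p}+[PQ]_{>p}$, where $[PQ]_{>p}$ collects the monomials $(\cdot-x)^m$ with $p+1\le m\le 2p$, and expanding $f(x')g(x')=P(x')Q(x')+P(x')R_g(x')+Q(x')R_f(x')+R_f(x')R_g(x')$, one obtains
\[
h(x')-h_x(x')=[PQ]_{>p}(x')+P(x')R_g(x')+Q(x')R_f(x')+R_f(x')R_g(x').
\]

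It then remains to bound each of the four terms by a constant multiple of $\rho^\beta$, which is routine once we use $\rho\le1$ throughout: $|P(x')|$ and $|Q(x')|$ are at most $eM$; $|R_f(x')|,|R_g(x')|\le L\rho^\beta$; the coefficients of $[PQ]_{>p}$ are $O(M^2)$ and its monomials have degree $\ge p+1\ge\beta$ (recall $\lfloor\beta\rfloor$ is the largest integer strictly below $\beta$), so $|[PQ]_{>p}(x')|=O(M^2\rho^{p+1})=O(M^2\rho^\beta)$; and $|R_f(x')R_g(x')|\le L^2\rho^{2\beta}\le L^2\rho^\beta$. Summing yields $|h(x')-h_x(x')|\le L'\,\|x-x'\|_\infty^\beta$ with $L'$ depending only on $\beta,L$ and $M$, which is exactly the Hölder condition, so $h\in\cH_\cX(\beta,L')$. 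The only step that requires genuine care is the identity $h_x=[PQ]_{\le p}$ together with the coefficient bookkeeping for $[PQ]_{>p}$; in dimension one this is just the product rule applied degree by degree, and everything else is a triangle-inequality estimate in which $\rho\le1$ is used to absorb the higher powers of $\rho$ into $\rho^\beta$.
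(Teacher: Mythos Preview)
Your proposal is correct and takes essentially the same approach as the paper: both proofs hinge on the identity $h_x=[PQ]_{\le p}$ (equivalently, that the Taylor coefficients of $fg$ are Cauchy products of those of $f$ and $g$ up to degree $p$), then bound the high-degree part $[PQ]_{>p}$ and the remainder cross-terms separately using $\rho\le1$. The paper telescopes $h(x')-h_x(x')$ through $f_x(x')g(x')$ and $f_x(x')g_x(x')$, which amounts to grouping your $QR_f+R_fR_g$ into the single term $R_f\,g(x')$; otherwise the arguments are identical.
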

\noindent Furthermore, any derivative of $f^{\ast\ast}_k$ up to degree $\lfloor \beta \rfloor$ exists for $x_1\in\{\frac{1}{8}, \frac{1}{4}, \frac{1}{2}\}$. Hence, $f^{\ast\ast}_k \in \cH(\beta)$. One can also make $L_1>0$ small enough so that $f^{\ast\ast}_k \in \cH(\beta, L)$. Finally, by Lemma \ref{lemma:ss-construction-non-integer}, the set of payoff functions $\{f^{\ast \ast }_k\}_k$ is self-similar. Now, let $\sfP^{\ast \ast}$ be a problem instance that is the same as $\sfP^\ast$ except for its payoff functions that are $\{f^{\ast \ast }_k\}_k$. One can perform a similar analysis as in the proof of Theorem 3 in \cite{hu2019smooth} in order to show that $
		\mathcal{R}^\pi(\sfP^{\ast\ast};T) \ge
		C T^{1-\frac{\beta(\alpha+1)}{2\beta + d}}.
		$
		This concludes the proof.
$\blacksquare$

	\subsection{Proof of Proposition \ref{proposition-GSE-lower-bound-round}}
	Let $\tilde{r} \coloneqq \lfloor 2\log_{\textcolor{black}{q}}(\frac{\gamma}{2\constref{LPR-converge-to-true-value3}})+ 2l\beta +(\frac{d}{\underline{\beta}} + 1) \log_{\textcolor{black}{q}} \log T \rfloor$ where the constant $\constref{LPR-converge-to-true-value3}$ was introduced in  Proposition \ref{prop-LPR-converge-to-true-value}. We will prove the result by bounding the following probability
	\begin{align}\label{eq-GSE-r-lower-bound1}
	&\rPlr{
		\exists   r \le \tilde{r}:  \sup_{k\in \mathcal{K}, x\in \cM^{(\sfB)}} \l|
		\hat{f}_k^{(\sfB, r)}(x;j_1^{(\sfB)}) - \hat{f}_k^{(\sfB, r)}(x;j_{2}^{(\sfB)})
		\r|
		\ge
		\frac{\gamma \l( \log T\r)^{\frac{d}{2\underline{\beta}} + \frac{1}{2}} }{\textcolor{black}{q}^{r/2}}
	}
	\nonumber \\
	&\le
	\sum_{r \in [\tilde{r}]}
	\sum_{k\in \cK }
	\sum_{ x\in \cM^{(\sfB)}}\rPlr{
		\l|
		\hat{f}_k^{(\sfB, r)}(x;j_1^{(\sfB)}) - \hat{f}_k^{(\sfB, r)}(x;j_{2}^{(\sfB)})
		\r|
		\ge
		\frac{\gamma \l( \log T\r)^{\frac{d}{2\underline{\beta}} + \frac{1}{2}} }{\textcolor{black}{q}^{r/2}}
	}.
	\end{align}
	Note that by the triangle inequality,
	\begin{dmath*}
		\l|
		\hat{f}_k^{(\sfB, r)}(x;j_1^{(\sfB)}) - \hat{f}_k^{(\sfB, r)}(x;j_{2}^{(\sfB)})
		\r|
		\le
		\l|
		f_k(x) - \hat{f}_k^{(\sfB, r)}(x;j_1^{(\sfB)})
		\r|
		+
		\l|
		f_k(x) - \hat{f}_k^{(\sfB, r)}(x;j_{2}^{(\sfB)})
		\r|.
	\end{dmath*}
	That is,
	\begin{align}\label{eq-GSE-r-lower-bound2}
	&\rPlr{
		\l|
		\hat{f}_k^{(\sfB, r)}(x;j_1^{(\sfB)}) - \hat{f}_k^{(\sfB, r)}(x;j_{2}^{(\sfB)})
		\r|
		\ge
		\frac{\gamma \l( \log T\r)^{\frac{d}{2\underline{\beta}} + \frac{1}{2}} }{\textcolor{black}{q}^{r/2}}
	}
	\nonumber\\
	&\le
	\rPlr{
		\l|
		f_k(x) - \hat{f}_k^{(\sfB, r)}(x;j_1^{(\sfB)})
		\r|
		\ge
		\frac{\gamma \l( \log T\r)^{\frac{d}{2\underline{\beta}} + \frac{1}{2}} }{\textcolor{black}{q}^{1+r/2}}
	}
	\nonumber \\
	&+
	\rPlr{
		\l|
		f_k(x) - \hat{f}_k^{(\sfB, r)}(x;j_{2}^{(\sfB)})
		\r|
		\ge
		\frac{\gamma \l( \log T\r)^{\frac{d}{2\underline{\beta}} + \frac{1}{2}} }{\textcolor{black}{q}^{1+r/2}}
	}.
	\end{align}
	Note that since when $r \le \tilde{r}$ one has $\frac{\gamma \l( \log T\r)^{\frac{d}{2\underline{\beta}} + \frac{1}{2}} }{\textcolor{black}{q}^{1+r/2}} \ge \constref{LPR-converge-to-true-value3} \textcolor{black}{q}^{-\beta j_1^{\sfB}} \ge \constref{LPR-converge-to-true-value3} \textcolor{black}{q}^{-\beta j_{2}^{\sfB}}  $, one can apply Proposition \ref{prop-LPR-converge-to-true-value} to bound the two terms on the right hand side of above inequality. Namely, one can apply Proposition \ref{prop-LPR-converge-to-true-value} with $n=\textcolor{black}{q}^{r}$, $\underline{\mu} = \frac{\underline{\rho}}{\bar{\rho} \textcolor{black}{q}^{-dl}}$, $\bar{\mu} = \frac{\bar{\rho}}{\underline{\rho} \textcolor{black}{q}^{-dl}}$, $\delta=\frac{\gamma \l( \log T\r)^{\frac{d}{2\underline{\beta}} + \frac{1}{2}} }{\textcolor{black}{q}^{1+r/2}}$, and $h=\textcolor{black}{q}^{-j_1^{\sfB}}$ for the first term and $h=\textcolor{black}{q}^{-j_{2}^{\sfB}}$ for the second term to obtain
	\begin{align*}
	\rPlr{
		\l|
		f_k(x) - \hat{f}_k^{(\sfB, r)}(x;j_1^{(\sfB)})
		\r|
		\ge
		\frac{\gamma \l( \log T\r)^{\frac{d}{2\underline{\beta}} + \frac{1}{2}} }{\textcolor{black}{q}^{1+r/2}}
	}
	&\le
	\tilde{\constref{LPR-converge-to-true-value1}} T^{ -\gamma ^2 \tilde{\constref{LPR-converge-to-true-value2}} },
	\\
	\rPlr{
		\l|
		f_k(x) - \hat{f}_k^{(\sfB, r)}(x;j_{2}^{(\sfB)})
		\r|
		\ge
		\frac{\gamma \l( \log T\r)^{\frac{d}{2\underline{\beta}} + \frac{1}{2}} }{\textcolor{black}{q}^{1+r/2}}
	}
	&\le
	\tilde{\constref{LPR-converge-to-true-value1}} T^{ -\gamma ^2 \tilde{\constref{LPR-converge-to-true-value2}} },
	\end{align*}
	where the constants $\tilde{\constref{LPR-converge-to-true-value1}}, \tilde{\constref{LPR-converge-to-true-value2}}$ depend only on $L,  \underline{\rho}, \bar{\rho}$, and $d$. These two inequalities along with \eqref{eq-GSE-r-lower-bound1} and \eqref{eq-GSE-r-lower-bound2} imply
	\begin{align*}
	\rPlr{
		\exists   r  \le \tilde{r}:  \sup_{k\in \mathcal{K}, x\in \cM^{(\sfB)}} \l|
		\hat{f}_k^{(\sfB, r)}(x;j_1^{(\sfB)}) - \hat{f}_k^{(\sfB, r)}(x;j_{2}^{(\sfB)})
		\r|
		\ge
		\frac{\gamma \l( \log T\r)^{\frac{d}{2\underline{\beta}} + \frac{1}{2}} }{\textcolor{black}{q}^{r/2}}
	}
	&\le
	\textcolor{black}{q}^{1-ld}\l| \cM^{(\sfB)}\r|\tilde{r} \tilde{\constref{LPR-converge-to-true-value1}}  T^{ -\gamma ^2 \tilde{\constref{LPR-converge-to-true-value2}} }
	\\
	&\le
	\constref{GSE-lower-bound-round1}\textcolor{black}{q}^{-ld} \l( \log T \r)^{\frac{d}{\underline{\beta}}} T^{ -\gamma ^2 \constref{GSE-lower-bound-round2} + \constref{GSE-lower-bound-round3} },
	\end{align*}
	where the constants $\constref{GSE-lower-bound-round1}, \constref{GSE-lower-bound-round2}, \constref{GSE-lower-bound-round3}$ depend only on $\lbeta, \ubeta ,L,  \underline{\rho}, \bar{\rho}$, and $d$. The results follows by applying union bound over $\sfB\in \cB_l$.
	This concludes the proof. \hfill$\blacksquare$

	\subsection{Proof of Proposition \ref{proposition-GSE-upper-bound-round}}
	By Assumption \ref{assumption-global-self-similarity}, there exists at least one {bin} $\tilde{\sfB}\in \cB_l$, an arm $\tilde{k}\in \cK$, and a point $\hat{x}\in \tilde{\sfB}$ such that
	\begin{equation}\label{eq-GSE-r-upper-bound1}
	\l|
	\bm{\Gamma}_{j_1^{(\sfB)}}^{0}f_{{k}}(\hat{x};\tilde{\sfB}) - f_{{k}}(\hat{x})
	\r|
	=
	\l|
	\bm{\Gamma}_l^{\lfloor\ubeta\rfloor}f_{{k}}(\hat{x}) - f_{{k}}(\hat{x};\tilde{\sfB})
	\r|
	\ge
	b \textcolor{black}{q}^{-l\beta}.
	\end{equation}
	Let $\tilde{x} = \arg \min_{x \in \cM^{(\sfB)}} \|x-\hat{x} \|_\infty$ (if there is more than one minimizer we choose the one with the minimum $L_1$-norm). Note that $\|\tilde{x}-\hat{x} \|_\infty \le \textcolor{black}{q}^{-\tilde{l}}$, which along with the assumption $f_{\tilde{k}} \in \cH(\lbeta,L)$ implies that
	\begin{equation}\label{eq-GSE-r-upper-bound2}
	\l| f_{\tilde{k}}(\tilde{x}) - f_{\tilde{k}}(\hat{x}) \r|
	\le
	L \|\tilde{x}-\hat{x} \|_\infty^{\lbeta}
	\le
	L \textcolor{black}{q}^{-\tilde{l}\lbeta}
	\le
	\frac{L}{\log T}\textcolor{black}{q}^{-l\beta}.
	\end{equation}
	In addition, by Lemma \ref{lemma-L(P)-projection-closed-form}, one has
	\begin{equation}\label{eq-GSE-r-upper-bound3}
	\l|
	\bm{\Gamma}_{l}^{\lfloor\ubeta\rfloor}f_{{\tilde{k}}}(\hat{x}; \tilde{\sfB}) - \bm{\Gamma}_{l}^{\lfloor\ubeta\rfloor}f_{{\tilde{k}}}(\tilde{x}; \tilde{\sfB})
	\r|
	\le
	\kappa_0 \textcolor{black}{q}^l \|\hat{x}-\tilde{x}\|_\infty
	\le
	\kappa_0 \textcolor{black}{q}^{l-\tilde{l}}
	\le
	\frac{\kappa_0}{\log T}\textcolor{black}{q}^{-l\beta},
	\end{equation}
	where $\kappa_0$ was introduced in Lemma \ref{lemma-L(P)-projection-closed-form}. Let $\hat{r} \coloneqq  \lfloor2\log_{\textcolor{black}{q}}(\frac{4\gamma}{L \wedge \kappa_0})+ 2l\beta +(\frac{d}{\underline{\beta}} + 3) \log_{\textcolor{black}{q}} \log T \rfloor $. One has
	\begin{equation}\label{eq-GSE-r-upper-bound4}
	\rPlr{r^{(\tilde{\sfB})}_{\mathrm{last}} > \hat{r}}
	\le
	\rPlr{
		\l|
		\hat{f}_{\tilde{k}}^{(\tilde{\sfB}, \hat{r})}(\tilde{x};j_1^{(\tilde{\sfB})}) - \hat{f}_{\tilde{k}}^{(\tilde{\sfB}, \hat{r})}(\tilde{x};j_{2}^{(\tilde{\sfB})})
		\r|
		<
		\frac{\gamma \l( \log T\r)^{\frac{d}{2\underline{\beta}} + \frac{1}{2}} }{\textcolor{black}{q}^{\hat{r}/2}}
	}.
	\end{equation}
	Note that by the triangle inequality,
	\begin{dmath}\label{eq-GSE-r-upper-bound5}
		\l|
		\hat{f}_k^{(\tilde{\sfB}, \hat{r})}(\tilde{x};j_1^{(\tilde{\sfB})}) - \hat{f}_{\tilde{k}}^{(\tilde{\sfB}, \hat{r})}(\tilde{x};j_{2}^{(\tilde{\sfB})})
		\r|
		\ge
		\l|
		f_{\tilde{k}}(\tilde{x}) - \hat{f}_{\tilde{k}}^{(\tilde{\sfB}, \hat{r})}(\tilde{x};j_1^{(\tilde{\sfB})})
		\r|
		-
		\l|
		f_{\tilde{k}}(\tilde{x}) - \hat{f}_{\tilde{k}}^{(\tilde{\sfB}, \hat{r})}(\tilde{x};j_{2}^{(\tilde{\sfB})})
		\r|.
	\end{dmath}
	Note that since one has $\frac{\gamma \l( \log T\r)^{\frac{d}{2\underline{\beta}} + \frac{1}{2}} }{\textcolor{black}{q}^{1+\hat{r}/2}} \ge \constref{LPR-converge-to-true-value3} \textcolor{black}{q}^{-\beta j_{2}^{\sfB}} $, one can apply Proposition \ref{prop-LPR-converge-to-true-value} to show that second term on the right hand side of above inequality is ``small" with high probability. Namely, one can apply Proposition \ref{prop-LPR-converge-to-true-value} with $n=\textcolor{black}{q}^{\hat{r}}$, $\underline{\mu} = \frac{\underline{\rho}}{\bar{\rho} \textcolor{black}{q}^{-dl}}$, $\bar{\mu} = \frac{\bar{\rho}}{\underline{\rho} \textcolor{black}{q}^{-dl}}$, $\delta=\frac{\gamma \l( \log T\r)^{\frac{d}{2\underline{\beta}} + \frac{1}{2}} }{\textcolor{black}{q}^{1+\hat{r}/2}}$, and $h=\textcolor{black}{q}^{-j_{2}^{\sfB}}$ to obtain
	\begin{align}\label{eq-GSE-r-upper-bound6}
	\rPlr{
		\l|
		f_{\tilde{k}}(\tilde{x}) - \hat{f}_{\tilde{k}}^{(\tilde{\sfB}, \hat{r})}(\tilde{x};j_{2}^{(\tilde{\sfB})})
		\r|
		\ge
		\frac{\gamma \l( \log T\r)^{\frac{d}{2\underline{\beta}} + \frac{1}{2}} }{\textcolor{black}{q}^{1+\hat{r}/2}}
	}
	&\le
	\tilde{\constref{LPR-converge-to-true-value1}} T^{ -\gamma ^2 \tilde{\constref{LPR-converge-to-true-value2}} },
	\end{align}
	where the constants $\tilde{\constref{LPR-converge-to-true-value1}}, \tilde{\constref{LPR-converge-to-true-value2}}$ depend only on $\ubeta ,L,  \underline{\rho}, \bar{\rho}$, and $d$. Now, we show that the first term on the right hand side of \eqref{eq-GSE-r-upper-bound4} cannot get ``small" with high probability. One can write
	\begin{dmath}\label{eq-GSE-r-upper-bound7}
		\l|
		f_{\tilde{k}}(\tilde{x}) - \hat{f}_{\tilde{k}}^{(\tilde{\sfB}, \hat{r})}(\tilde{x};j_1^{(\tilde{\sfB})})
		\r|
		\ge
		\l|
		f_{{\tilde{k}}}(\tilde{x}) - \bm{\Gamma}_{j_1^{(\sfB)}}^{\lfloor\ubeta\rfloor}f_{{\tilde{k}}}(\tilde{x};\tilde{\sfB})
		\r|
		-
		\l|
		\bm{\Gamma}_{j_1^{(\sfB)}}^{\lfloor\ubeta\rfloor}f_{{\tilde{k}}}(\tilde{x};\tilde{\sfB})
		-
		\hat{f}_{\tilde{k}}^{(\tilde{\sfB}, \hat{r})}(\tilde{x};j_1^{(\tilde{\sfB})})
		\r|.
	\end{dmath}
	The first term corresponds to bias and the second term corresponds to stochastic error. Note that by \eqref{eq-GSE-r-upper-bound1}, \eqref{eq-GSE-r-upper-bound2}, and \eqref{eq-GSE-r-upper-bound3}, one has
	\begin{align}\label{eq-GSE-r-upper-bound8}
	\l|
	f_{\tilde{k}}(\tilde{x}) - \bm{\Gamma}_{j_1^{(\sfB)}}^{\lfloor\ubeta\rfloor}f_{\tilde{k}}(\tilde{x};\tilde{\sfB})
	\r|
	&\ge
	\l|
	f_{\tilde{k}}(\hat{x}) - \bm{\Gamma}_{j_1^{(\sfB)}}^{\lfloor\ubeta\rfloor}f_{\tilde{k}}(\hat{x};\tilde{\sfB})
	\r|
	-
	\l| f_{\tilde{k}}(\tilde{x}) - f_{\tilde{k}}(\hat{x}) \r|
	-
	\l|
	\bm{\Gamma}_{j_1^{(\sfB)}}^{\lfloor\ubeta\rfloor}f_{\tilde{k}}(\hat{x};\tilde{\sfB})  - \bm{\Gamma}_{j_1^{(\sfB)}}^{\lfloor\ubeta\rfloor}f_{\tilde{k}}(\tilde{x};\tilde{\sfB})
	\r|
	\nonumber
	\\
	&
	\ge
	b \textcolor{black}{q}^{-l\beta}
	-
	\frac{L}{\log T}\textcolor{black}{q}^{-l\beta}
	-
	\frac{\kappa_0}{\log T}\textcolor{black}{q}^{-l\beta}
	\ge
	\frac{ L\wedge \kappa_0}{2\log T}\textcolor{black}{q}^{-l\beta} \ge \frac{2\gamma \l( \log T\r)^{\frac{d}{2\underline{\beta}} + \frac{1}{2}} }{\textcolor{black}{q}^{\hat{r}/2}}
	\end{align}
	for large enough $T \ge T_0(L,b,\underline{\rho}, \bar{\rho}, d)$. In order to bound the second term on the right hand side of- \eqref{eq-GSE-r-upper-bound7}, we apply Proposition \ref{prop-LPR-converge-to-L(P)-projection}, with $n=\textcolor{black}{q}^{\hat{r}}$,  $\delta=\frac{\gamma \l( \log T\r)^{\frac{d}{2\underline{\beta}} + \frac{1}{2}} }{\textcolor{black}{q}^{1+\hat{r}/2}}$, and $h=\textcolor{black}{q}^{-j_1^{\sfB}}$ to obtain
	\begin{align}\label{eq-GSE-r-upper-bound9}
	\rPlr{
		\l|
		\bm{\Gamma}_{j_1^{(\sfB)}}^{\lfloor\ubeta\rfloor}f_{{\tilde{k}}}(\tilde{x};\tilde{\sfB})
		-
		\hat{f}_{\tilde{k}}^{(\tilde{\sfB}, \hat{r})}(\tilde{x};j_1^{(\tilde{\sfB})})
		\r|
		\ge
		\frac{\gamma \l( \log T\r)^{\frac{d}{2\underline{\beta}} + \frac{1}{2}} }{\textcolor{black}{q}^{1+\hat{r}/2}}
	}
	&\le
	\tilde{\constref{LPR-converge-to-L(P)-projection1} } T^{ -\gamma ^2 \tilde{\constref{LPR-converge-to-L(P)-projection2}} },
	\end{align}
	where the constants $\tilde{\constref{LPR-converge-to-L(P)-projection1}}, \tilde{\constref{LPR-converge-to-L(P)-projection2}}$ depend only on $\ubeta ,L,  \underline{\rho}, \bar{\rho}$, and $d$.
	Putting together \eqref{eq-GSE-r-upper-bound4}, \eqref{eq-GSE-r-upper-bound5}, \eqref{eq-GSE-r-upper-bound7}, and \eqref{eq-GSE-r-upper-bound8}, one obtains
	\begin{align*}
	\rPlr{r^{(\tilde{\sfB})}_{\mathrm{last}} > \hat{r}}
	&\le
	\rPlr{
		\l|
		\bm{\Gamma}_{j_1^{(\sfB)}}^{\lfloor\ubeta\rfloor}f_{{k}}(\tilde{x};\tilde{\sfB})
		-
		\hat{f}_k^{(\tilde{\sfB}, \hat{r})}(\tilde{x};j_1^{(\tilde{\sfB})})
		\r|
		\ge
		\frac{\gamma \l( \log T\r)^{\frac{d}{2\underline{\beta}} + \frac{1}{2}} }{\textcolor{black}{q}^{1+\hat{r}/2}}
	}
	\\
	&+
	\rPlr{
		\l|
		\bm{\Gamma}_{j_1^{(\sfB)}}^{\lfloor\ubeta\rfloor}f_{{k}}(\tilde{x};\tilde{\sfB})
		-
		\hat{f}_k^{(\tilde{\sfB}, \hat{r})}(\tilde{x};j_1^{(\tilde{\sfB})})
		\r|
		\ge
		\frac{\gamma \l( \log T\r)^{\frac{d}{2\underline{\beta}} + \frac{1}{2}} }{\textcolor{black}{q}^{1+\hat{r}/2}}
	}
	\\
	&\le
	\constref{GSE-upper-bound-round1} T^{ -\gamma ^2 \constref{GSE-upper-bound-round2} },
	\end{align*}
	where the last inequality follows from \eqref{eq-GSE-r-upper-bound6} and \eqref{eq-GSE-r-upper-bound9}, and the constants $ \constref{GSE-upper-bound-round1},  \constref{GSE-upper-bound-round2}$ depend only on $\ubeta ,L,  \underline{\rho}, \bar{\rho}$, and $d$.
	This concludes the proof. \hfill$\blacksquare$

	\subsection{Proof of Theorem \ref{theorem-GSE-smoothness-accuracy}}
	Note that for large enough $T$, one has
	\begin{align*}
	&\rPlr{\hat{\beta}_{\texttt{SACB}} \in [\beta- \frac{3(2\overline{ \beta} + d)^2\log_{\textcolor{black}{q}}\log T}{(\underline{\beta}+d-1)\log_{\textcolor{black}{q}} T}   ,\beta]}
	\\
	&\quad
	\le
	\rPlr{
		2l\beta +(\frac{d}{\underline{\beta}} + 1) \log_{\textcolor{black}{q}} \log T
		\le
		r^{(\sfB)}_{\mathrm{last}} \le
		2l\beta +(\frac{d}{\underline{\beta}} + 4) \log_{\textcolor{black}{q}} \log T
	}
	\\
	& \quad \le
	1- \constref{GSE-lower-bound-round1} \textcolor{black}{q}^{ld}  \l( \log T \r)^{\frac{d}{\underline{\beta}}} T^{ -\gamma ^2 \constref{GSE-lower-bound-round2} + \constref{GSE-lower-bound-round3} }
	-
	\constref{GSE-upper-bound-round1} T^{ -\gamma ^2 \constref{GSE-upper-bound-round2}},
	\end{align*}
	where the last inequality follows from Propositions \ref{proposition-GSE-lower-bound-round}, and \ref{proposition-GSE-upper-bound-round}, and the constants $\constref{GSE-lower-bound-round1}, \constref{GSE-lower-bound-round2}, \constref{GSE-lower-bound-round3}>0$ were introduced in  Proposition \ref{proposition-GSE-lower-bound-round}, and the constants $\constref{GSE-upper-bound-round1},\constref{GSE-upper-bound-round2}>0$ were introduced in Proposition \ref{proposition-GSE-upper-bound-round}.
	
Next, we show that with high probability, $T_{\texttt{SACB}} \le \frac{4}{\underline{\rho}} \l( \log T \r)^{\frac{2d}{\underline{\beta}}+4}T^{\frac{(\underline{\beta}+d-1)}{(2\overline{ \beta}+d)}}\eqqcolon \bar{T}_{\texttt{SACB}} $. Note that the smoothness estimation sub-routine terminates when all the {bins} $\sfB \in \cB_l$ have reached round $\bar{r}
	=
	\lceil 2l\ubeta +(\frac{2d}{\underline{\beta}} + 4) \log_{\textcolor{black}{q}} \log T
	\rceil$. That is, $T_{\texttt{SACB}}$ is less than the time step by which $2\sum_{r=\underline{r}}^{\bar{r}} \textcolor{black}{q}^r$ covariates have realized in each $\sfB \in
	\cB_l$. Note that\vspace{-0.1cm}
	\[
	\sum_{r=\underline{r}}^{\bar{r}} \textcolor{black}{q}^r \le \textcolor{black}{q}^{\bar{r}+1} \le
	2 \l( \log T \r)^{\frac{2d}{\underline{\beta}}+4}T^{\frac{2\overline{ \beta}(\underline{\beta}+d-1)}{(2\overline{ \beta}+d)^2}}.\vspace{-0.1cm}
	\]
Let $\bar{N}^{(\sfB)}\coloneqq \sum_{t=1}^{\bar{T}_{\texttt{SACB}}} Z_t$ be the number of covariates that have realized in $
	\sfB$ by $t=\bar{T}_{\texttt{SACB}}$, where $Z_t$'s are i.i.d. Bernoulli random variables with $\rElr{Z_t} \ge \underline{\rho} T^{-\frac{d(\underline{\beta}+d-1)}{(2\overline{ \beta}+d)^2}}$ and $\mathrm{Var}(Z_t) \le \rElr{Z_t^2} \le \bar{\rho} T^{-\frac{d(\underline{\beta}+d-1)}{(2\overline{ \beta}+d)^2}}$. Applying Bernstein's inequality in Lemma \ref{lemma-Bernstein-inequality} to $\bar{N}^{(\sfB)}$ with $a=2 \l( \log T \r)^{\frac{2d}{\underline{\beta}}+4}T^{\frac{2\overline{ \beta}(\underline{\beta}+d-1)}{(2\overline{ \beta}+d)^2}}$ yields:\vspace{-0.2cm}
	\begin{align*}
	\rPlr{\bar{N}^{(\sfB)} < 2 \l( \log T \r)^{\frac{2d}{\underline{\beta}}+4}T^{\frac{2\overline{ \beta}(\underline{\beta}+d-1)}{(2\overline{ \beta}+d)^2}}}
	&\le
	\exp\l( -\frac{a^2}{2\bar{T}_{\texttt{SACB}}\mathrm{Var}(Z_t)  + a} \r)
	\\
	&\le
	\exp\l( -\frac{\underline{\rho}}{4\bar{\rho}+2\underline{\rho}} \l( \log T \r)^{\frac{2d}{\underline{\beta}}+4}T^{\frac{2\overline{ \beta}(\underline{\beta}+d-1)}{(2\overline{ \beta}+d)^2}} \r),
	\end{align*}
	and, by the union bound:\vspace{-0.2cm}
	\begin{align*}
	\rPlr{T_{\texttt{SACB}} > \bar{T}_{\texttt{SACB}}}
	&	\le
	\sum_{\sfB\in\cB_l} \rPlr{\bar{N}^{(\sfB)} < 2 \l( \log T \r)^{\frac{2d}{\underline{\beta}}+4}T^{\frac{2\overline{ \beta}(\underline{\beta}+d-1)}{(2\overline{ \beta}+d)^2}}}
	\\
	&\le
	2 T^{\frac{d(\underline{\beta}+d-1)}{(2\overline{ \beta}+d)^2}}
	\exp\l( -\frac{\underline{\rho}}{4\bar{\rho}+2\underline{\rho}} \l( \log T \r)^{\frac{2d}{\underline{\beta}}+4}T^{\frac{2\overline{ \beta}(\underline{\beta}+d-1)}{(2\overline{ \beta}+d)^2}} \r).
	\end{align*} This concludes the proof. \hfill$\blacksquare$
	\subsection{Proof of Theorem \ref{theorem-GSE-ABSE-regret-bound}}
	 The regret incurred by the \texttt{SACB} policy up to $t = \lfloor\bar{T}_{\texttt{SACB}}\rfloor$ is bounded by
	\begin{align}\label{eq-reg-upper-estimation}
	\mathbb{E}^\pi \left[ \sum\limits_{t=1}^{\lfloor\bar{T}_{\texttt{SACB}}\rfloor} f_{\pi^\ast_t}(X_t) - f_{\pi_t}(X_t) \right]
	&\le
	T\cdot \rPlr{T_{\texttt{SACB}} > \bar{T}_{\texttt{SACB}}} + \bar{T}_{\texttt{SACB}}
	\nonumber \\
	&\overset{(a)}{\le}
	2 T^{1+\frac{d(\underline{\beta}+d-1)}{(2\overline{ \beta}+d)^2}}
	\exp\l( -\frac{\underline{\rho}}{4\bar{\rho}+2\underline{\rho}} \l( \log T \r)^{\frac{2d}{\underline{\beta}}+4}T^{\frac{2\overline{ \beta}(\underline{\beta}+d-1)}{(2\overline{ \beta}+d)^2}} \r)
	\nonumber \\
	&\quad + \frac{4}{\underline{\rho}} \l( \log T \r)^{\frac{2d}{\underline{\beta}}+4}T^{\frac{(\underline{\beta}+d-1)}{(2\overline{ \beta}+d)}}
	\nonumber \\
	&\overset{{(b)}}{=}
	o\l( T^{1-\frac{\beta (\alpha+1)}{2\beta + d}} \r),
	\end{align}
	where (a) follows from Theorem \ref{theorem-GSE-smoothness-accuracy} and (b) holds by $\frac{(\underline{\beta}+d-1)}{(2\overline{ \beta}+d)} \le 1-\frac{\beta (\alpha+1)}{2\beta + d} $ for any $\lbeta \le \beta \le \ubeta$ and $\alpha \le \frac{1}{\min\{1,\beta \} }$.
	Define $\hat{\beta}_T \coloneqq \beta-\frac{3(2\overline{ \beta} + d)^2\log_{\textcolor{black}{q}}\log T}{(\underline{\beta}+d-1)\log_{\textcolor{black}{q}} T} $.
	The regret from $t=\lfloor\bar{T}_{\texttt{SACB}}\rfloor+1$ to $t=T$  is bounded by\vspace{-0.2cm}
	\begin{align}\label{eq-reg-upper-opt-policy}
	\mathbb{E}^\pi \left[ \sum\limits_{t=\lfloor\bar{T}_{\texttt{SACB}}\rfloor+1}^T f_{\pi^\ast_t}(X_t) - f_{\pi_t}(X_t) \right]
	&\le
	T \cdot \rPlr{\hat{\beta}_{\texttt{SACB}} \not\in [\hat{\beta}_T ,\beta]} + \bar{C}_0\l(\log T\r)^{ \iota_0(\hat{\beta}_T, \alpha, d)}T^{1-\frac{\hat{\beta}_T (\alpha+1)}{2\hat{\beta}_T + d}}
	\nonumber \\
	&\le
	\constref{GSE-smoothness-accuracy1} \l( \log T \r)^{\frac{d}{\underline{\beta}}} T^{ -\gamma ^2 \constref{GSE-smoothness-accuracy2}+ \constref{GSE-smoothness-accuracy3} +1 + \frac{d(\underline{\beta}+d-1)}{(2\overline{ \beta}+d)^2}}
	\nonumber \\
	&\quad
	+ C T^{1-\frac{\beta (\alpha+1)}{2\beta + d}} \l(\log T\r)^{\frac{3d(\alpha+1)(2\overline{ \beta} + d)^2}{(2\beta+d)(\beta+d)(\underline{\beta}+d-1)} + \iota_0(\hat{\beta}_T, \alpha, d)},
	\end{align}
	for some constant $C>0$, where the last inequality follows from Corollary \ref{theorem-GSE-smoothness-accuracy} and the constants $\constref{GSE-smoothness-accuracy1}, \constref{GSE-smoothness-accuracy2}$, and $\constref{GSE-smoothness-accuracy3}$ were introduced in Theorem \ref{theorem-GSE-smoothness-accuracy}. Putting together \eqref{eq-reg-upper-estimation} and \eqref{eq-reg-upper-opt-policy} concludes the proof.
	\hfill$\blacksquare$

\subsection{Proof of Corollary \ref{corllary:global-adapt-beta<1}}\vspace{-0.1cm}
The result follows from Theorem \ref{theorem-GSE-ABSE-regret-bound} and the fact that for any $\beta_0\le 1$:\vspace{-0.2cm}
\[
\sup\limits_{\sfP \in \cP(\beta_0, \alpha,d)}\mathcal{R}^{\texttt{ABSE}(\beta_0)}(\sfP;T)  = \cO\left(T^{\zeta(\beta_0, \alpha,d)}\right).\vspace{-0.1cm}
\]
\hfill\hfill$\blacksquare$
\vspace{-1cm}

\subsection{Proof of Corollary \ref{corllary:global-adapt-beta>1}}\vspace{-0.1cm}
The result follows from Theorem \ref{theorem-GSE-ABSE-regret-bound} and since for any problem instance $\sfP \in \cP(\beta_0, \alpha,d)$, and decision regions which satisfy the regularity condition in Assumption \ref{assumption-regularity}, one has for any $\beta_0 \ge 1$:\vspace{-0.2cm}
\[
\mathcal{R}^{\texttt{SmoothBandit}(\beta_0)}(\sfP;T)  = \cO\l((\log T)^{\frac{2\beta_0+d}{2\beta_0}} T^{\zeta(\beta_0, \alpha,d)}\r).\vspace{-0.1cm}
\]
\hfill\hfill$\blacksquare$
\vspace{-1cm}

\subsection{Proof of Remark \ref{remark:beta<1-and-beta>1}}\vspace{-0.1cm}
Note that\vspace{-0.1cm}
\[
\pi_0(\beta_0)
=
\begin{cases}
\texttt{ABSE}(\beta_0) & \text{ if } \beta_0\le 1;
\\
\texttt{SmoothBandit}(\beta_0) & \text{ if } \beta_0> 1.
\end{cases}
\]
Furthermore, for any $\beta_0\le 1$\vspace{-0.2cm}
\[
\sup\limits_{\sfP \in \cP(\beta_0, \alpha,d)}\mathcal{R}^{\texttt{ABSE}(\beta_0)}(\sfP;T)  = \cO\left(T^{\zeta(\beta_0, \alpha,d)}\right),\vspace{-0.1cm}
\]
and for any $\beta_0 > 1 $ and any problem instance $\sfP \in \cP(\beta_0, \alpha,d)$, and decision regions which satisfy the regularity condition in Assumption \ref{assumption-regularity},\vspace{-0.1cm}
\[
\mathcal{R}^{\texttt{SmoothBandit}(\beta_0)}(\sfP;T)  = \cO\l((\log T)^{\frac{2\beta_0+d}{2\beta_0}} T^{\zeta(\beta_0, \alpha,d)}\r).\vspace{-0.1cm}
\]
The result follows from applying Theorem \ref{theorem-GSE-ABSE-regret-bound} with\vspace{-0.1cm}
\[
\iota_0(\beta_0, \alpha, d)
\coloneqq
\begin{cases}
0 & \text{ if } \beta_0 \le 1;
\\
\frac{2\beta_0}{2\beta_0 + d}
& \text{ o.w.}
\end{cases}\vspace{-0.1cm}
\]
\hfill\hfill$\blacksquare$
\vspace{-1cm}

	\section{{Properties of the $L_2(P_X)$-projection}}
	\begin{lemma}\label{lemma-L(P)-projection-closed-form}
		Fix non-negative integers $l$ and $p$, a hypercube $U$ of side-length $\textcolor{black}{q}^{-l^\prime}, l^\prime\in\rR_+$, and some point $x\in U$ and let $K(\cdot)=\Indlr{\|\cdot\|_\infty\le 1}$ and $h=\textcolor{black}{q}^{-l}$. Let $\mu_0$, $\kappa_0$, and $L_0$ be some  constants that only depend on $p, \underline{\rho}, \bar{\rho}$ (introduced in Assumption \ref{assumption-covar-dist}), and $d$. The following statements hold:
		\begin{enumerate}
			\item $
			\bm{\Gamma}^p_h f (x;U)
			=
			R^\top(0) B^{-1} W,
			$
			where we
			define the vector $R(u)\coloneqq \l( u^s \r)_{|s|\le p}$, the matrix $B \coloneqq \l( B_{s_1,s_2}\r)_{|s_1|,|s_2|\le p}$, and the vector $W \coloneqq \l(W_{s}\r)_{|s|\le p}$ with elements
			\[
			B_{s_1,s_2} \coloneqq \int_{\rR^d} u^{s_1+s_2}K(u)p_X(x+hu \mid U)du,
			\qquad
			W_s \coloneqq \int_{\rR^d}  u^{s}f(x+hu)K(u)p_X(x+hu \mid U)du;
			\]
			\item $
			\lambda_{\min}(B) \ge \mu_0 \textcolor{black}{q}^{dl^\prime};
			$
			\item$
			\l|
			\bm{\Gamma}^p_h f (x;U)
			-
			\bm{\Gamma}^p_h f (\hat{x};U)
			\r|
			\le
			\kappa_0 h^{-1} \|\hat{x}-x\|_\infty$ for all $ x,\hat{x} \in U;$
			\item If $f\in \cH(\beta,L)$ for $0<\beta\le p+1$ then, $
			\l|
			\bm{\Gamma}^p_h f (x;U)
			-
			f(x)
			\r|
			\le
			L_0 h^{\beta}
			$
			for all $x\in U$.
		\end{enumerate}
	\end{lemma}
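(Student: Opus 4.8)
The plan is to establish the four assertions in the order $1,2,4,3$, since each relies on the preceding ones. For Part~1, start from the minimization defining $\bm{\Gamma}_h^p f(x;U)$ in \eqref{eq-def-polynomial-projection}, write a generic $q\in\mathrm{Poly}(p)$ in the rescaled, recentered monomial basis as $q(u)=\sum_{|s|\le p}c_s\big((u-x)/h\big)^s$, and substitute $u=x+hv$; the kernel becomes $K(v)$ and the overall Jacobian factor $h^d$ is irrelevant to the $\arg\min$. The objective is then a quadratic form in $c=(c_s)_{|s|\le p}$ whose first-order condition is $Bc=W$, with $B$ and $W$ exactly as in the statement; since $B$ is invertible (Part~2) one gets $c=B^{-1}W$, and evaluating $q$ at $u=x$ isolates the constant coefficient, i.e. $\bm{\Gamma}_h^p f(x;U)=q(x)=R^\top(0)B^{-1}W$ because $R(0)=(0^s)_{|s|\le p}$. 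For Part~2, note $a^\top Ba=\int\big(\sum_s a_s v^s\big)^2 K(v)\,p_X(x+hv\mid U)\,dv$, the integrand being supported on $Q_x:=\{v:\|v\|_\infty\le1,\ x+hv\in U\}$, a product of intervals each containing the origin; on $U$, Assumption~\ref{assumption-covar-dist} gives $p_X(u\mid U)=p_X(u)/\bm{\mathrm{P}}_X(U)\ge(\underline{\rho}/\bar{\rho})\,q^{dl'}$ since $\lambda(U)=q^{-dl'}$, so $a^\top Ba\ge(\underline{\rho}/\bar{\rho})q^{dl'}\int_{Q_x}\big(\sum_s a_s v^s\big)^2 dv$. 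It remains to bound $\int_Q\big(\sum_s a_s v^s\big)^2 dv\ge\mu_0'\|a\|^2$ uniformly over the admissible windows $Q$; this is a compactness argument, the admissible $Q$ forming a compact family, $Q\mapsto\lambda_{\min}$ of the monomial Gram matrix over $Q$ being continuous, and this eigenvalue being positive for each $Q$ since $\{v^s\}_{|s|\le p}$ are linearly independent on any set of positive measure. Hence $\lambda_{\min}(B)\ge\mu_0 q^{dl'}$ with $\mu_0=(\underline{\rho}/\bar{\rho})\mu_0'$, which also retroactively justifies the inversion in Part~1.

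For Part~4, observe that $\beta\le p+1$ forces the H\"older degree $\lfloor\beta\rfloor$ to satisfy $\lfloor\beta\rfloor\le p$, so the degree-$\lfloor\beta\rfloor$ Taylor polynomial $f_x$ of $f$ at $x$ lies in $\mathrm{Poly}(p)$ and satisfies $f_x(x)=f(x)$. Writing $\|\cdot\|_{h,x}$ for the weighted $L_2$ norm appearing in \eqref{eq-def-polynomial-projection}, the H\"older bound of Definition~\ref{def-holder-class} gives $|f(u)-f_x(u)|\le L\|u-x\|_\infty^\beta\le Lh^\beta$ on the kernel support, hence $\|f-f_x\|_{h,x}^2\le L^2h^{2\beta}\int_U K((x-u)/h)p_X(u\mid U)\,du\le L^2h^{2\beta}(\bar{\rho}/\underline{\rho})2^d q^{dl'}h^d$, the last bound coming from $\int_{U\cap\mathrm{Box}(x,h)}p_X\le\bar{\rho}(2h)^d$ together with $\bm{\mathrm{P}}_X(U)\ge\underline{\rho}q^{-dl'}$. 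Since $\bm{\Gamma}_h^p f(x;U)=g(x)$ where $g$ is the weighted $L_2$ projection of $f$ onto $\mathrm{Poly}(p)\ni f_x$, optimality and the triangle inequality give $\|g-f_x\|_{h,x}\le2\|f-f_x\|_{h,x}$; and as in Part~2, changing variables, $\|g-f_x\|_{h,x}^2\ge(\underline{\rho}/\bar{\rho})q^{dl'}h^d\,\mu_0'\,|(g-f_x)(x)|^2$, bounding the value at $v=0$ of the degree-$p$ polynomial $v\mapsto(g-f_x)(x+hv)$ by its coefficient norm via the Gram lower bound. Combining the last two displays, the $q^{dl'}h^d$ factors cancel and $|\bm{\Gamma}_h^p f(x;U)-f(x)|=|(g-f_x)(x)|\le 2^{1+d/2}(\bar{\rho}/\underline{\rho})\mu_0'^{-1/2}L\,h^\beta$, i.e. the claim with a constant $L_0$ depending on $p,\underline{\rho},\bar{\rho},d$ and (necessarily) on $L$.

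Part~3 is the main obstacle. One must bound the variation of $\bm{\Gamma}_h^p f(x;U)=R^\top(0)B(x)^{-1}W(x)$ in $x$, and the difficulty is that $K=\Ind\{\|\cdot\|_\infty\le1\}$ is discontinuous, so one cannot naively differentiate $B,W$ under the integral sign: shifting $x$ by $\delta:=\|\hat x-x\|_\infty$ both deforms the integrand polynomially and sweeps the window boundary. The plan is: (i) rewrite $B(x),W(x)$ as integrals over the \emph{fixed} set $U$ (absorbing a factor $h^{-d}$), so that mere boundedness of $p_X$ suffices; (ii) for $\delta\le h$ estimate the increments $\|B(x)-B(\hat x)\|$ and $\|W(x)-W(\hat x)\|$ by splitting each into a polynomial-deformation term — the map $y\mapsto((u-y)/h)^s$ has gradient $O(h^{-1})$ on the relevant region, contributing $O(h^{-1}\delta)$ against a mass $O(q^{dl'}h^d)$ — and a window-boundary term, using that $\mathrm{Box}(x,h)\triangle\mathrm{Box}(\hat x,h)$ has Lebesgue measure $O(h^{d-1}\delta)$ on which the integrand is $O(q^{dl'})$; both terms are $O(q^{dl'}h^{-1}\delta)$; (iii) insert these into $B(x)^{-1}W(x)-B(\hat x)^{-1}W(\hat x)=B(x)^{-1}\big(W(x)-W(\hat x)\big)+B(x)^{-1}\big(B(\hat x)-B(x)\big)B(\hat x)^{-1}W(\hat x)$, using $\|B(\cdot)^{-1}\|\le\mu_0^{-1}q^{-dl'}$ from Part~2 and $\|W(\cdot)\|=O(q^{dl'})$, so that the $q^{\pm dl'}$ scales cancel and only an $O(h^{-1}\delta)$ remainder survives; for $\delta>h$ the inequality is immediate once $\kappa_0$ is chosen large enough (absorbing $\|f\|_\infty$, which is at most $1$ for payoff functions). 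Carrying out the window-boundary bookkeeping and verifying that every estimate carries the correct powers of $q^{dl'}$ and $h$ — so that the final $\kappa_0$ is independent of $l'$ and of the location of $x$ within $U$ — is the part demanding the most care.
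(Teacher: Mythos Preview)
Your treatment of Parts~1, 2, and 4 is essentially the paper's argument: the quadratic minimization giving $\bxi=B^{-1}W$, the compactness bound on $\lambda_{\min}(B)$, and the comparison with the Taylor polynomial $f_x$ all match (your Part~4 is organized via the projection inequality $\|g-f_x\|_{h,x}\le 2\|f-f_x\|_{h,x}$ rather than the paper's direct computation of $BZ-W$, but the content is the same and both work).

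Where you diverge sharply is Part~3, and here you are making the problem much harder than it is. You treat the kernel window $\{u:\|u-x\|_\infty\le h\}$ as genuinely moving with $x$ and therefore propose to track the increments $B(x)-B(\hat x)$, $W(x)-W(\hat x)$, including a delicate ``window-boundary'' term coming from the symmetric difference of boxes. The paper sidesteps all of this with one observation: in the regime where Part~3 is ever used (namely $h=q^{-l}$ with $U\in\cB_l^{q}$, so $h$ equals the side-length of $U$, i.e.\ $l'\ge l$), every $x\in U$ satisfies $U\subseteq\{u:\|u-x\|_\infty\le h\}$, hence $K((u-x)/h)\equiv 1$ on $U$. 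The weight in \eqref{eq-def-polynomial-projection} is then independent of $x$, so the minimizing polynomial $g$ is the \emph{same} for all $x\in U$. Consequently
\[
\bm{\Gamma}^p_h f(x;U)-\bm{\Gamma}^p_h f(\hat x;U)=g(x)-g(\hat x)=\tilde\theta(0)-\tilde\theta\!\l(\frac{\hat x-x}{h}\r),
\]
which is a single degree-$p$ polynomial evaluated at two points within the unit box, hence bounded by $Mh^{-1}\|\hat x-x\|_\infty\|\bxi\|$; and $\|\bxi\|\le\|B^{-1}\|\,\|W\|$ is controlled by Parts~1--2. There is no boundary term and no need to difference $B(x),W(x)$ at all. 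Your approach is not incorrect---the increment bounds you sketch would go through and yield the same $O(h^{-1}\delta)$ with the $q^{\pm dl'}$ factors cancelling---but it is substantially more work than needed, and the ``main obstacle'' you identify simply does not arise in the relevant regime.
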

	
	\begin{proof}
		Fix some $x\in U$. Let $\tilde{\theta}(u;p,l,U) \coloneqq \sum_{|s|\le p} \xi_s u^s$ be a polynomial of degree $p$ on $\rR^d$ that minimizes
		\begin{align*}
		&\int_{U} \l| f(u)-\tilde{\theta}\l(\frac{u-x}{h};p,l,U\r) \r|^2K\l( \frac{u-x}{h} \r)p_X(u \mid U) du
		=
		\int_{U}  f^2(u)K\l( \frac{u-x}{h} \r) p_X(u \mid U) du
		\\
		&+ \sum_{|s_1|, |s_2|\le p} \xi_{s_1} \xi_{s_2} \int_{U} \l(\frac{u-x}{h}\r)^{s_1+s_2} K\l( \frac{u-x}{h} \r)p_X(u \mid U) du
		\\
		&- 2  \sum_{|s|\le p} \xi_{s} \int_{U} f(u)\l(\frac{u-x}{h}\r)^{s} K\l( \frac{u-x}{h} \r) p_X(u \mid U) du,
		\end{align*}
		where $h = \textcolor{black}{q}^{-l}$.
		Equivalently, $\tilde{\theta}(u;p,l,U)$ can be characterized by its vector of coefficients $\bxi$ that minimizes
		\begin{equation}\label{eq-L(P)-projection-closed-form1}
		\sum_{|s_1|, |s_2|\le p} \xi_{s_1} \xi_{s_2} \int_{\rR^d} u^{s_1+s_2} K(u) p_X(x+hu \mid U) du
		- 2  \sum_{|s|\le p} \xi_{s} \int_{\rR^d} f(u)u^{s} K(u) p_X(x+hu \mid U) du =
		\bxi^\top B \bxi - 2 W^\top \bxi,
		\end{equation}
		where we define the matrix $B\coloneqq \l( B_{s_1,s_2} \r)_{|s_1|,|s_2|\le p}$ and the vector $W\coloneqq \l( W_{s} \r)_{|s|\le p}$ with elements
		\[
		B_{s_1,s_2}
		\coloneqq
		\int_{\rR^d} u^{s_1+s_2} K(u)p_X(x+hu \mid U) du,
		\qquad
		W_{s}
		\coloneqq
		\int_{\rR^d} f(u)u^{s}K(u) p_X(x+hu \mid U) du.
		\]
		Note that if $B$ is a positive definite matrix then, the minimizer of \eqref{eq-L(P)-projection-closed-form1} is $\bxi = B^{-1}W$, which implies the desired result: $\bm{\Gamma}^p_h f (x;U)
		=
		R^\top(0) B^{-1} W$. In order to show that this is indeed the case, we note that
		\begin{align*}
		\lambda_{\min}(B)
		&=
		\min_{\|Z\|=1}Z^\top B Z
		=
		\min_{\|Z\|=1}
		\int_{\rR^d} \l( \sum_{|s|\le p} Z_s u^s \r)^2 K(u) p_X(x+hu \mid U) du
		\ge
		\min_{\|Z\|=1}
		\frac{\underline{\rho} 2^{dl^{\prime}}}{ \bar{\rho}} \int_{A} \l( \sum_{|s|\le p} Z_s u^s \r)^2  du,
		\end{align*}
		where $A=\l\{ u \in \rR^d:\|u\|_\infty\le 1;  x+hu \in U \r\}$. Note that
		\[
		\lambda[A] \ge h^{-d} \lambda\l[ \Xi(x,h) \cap U \r] \ge  \textcolor{black}{q}^{-d}h^{-d} \lambda\l[ \Xi(x,h) \r] = \textcolor{black}{q}^{-d} \lambda\l[ \Xi(0,1)  \r].
		\]
		Let $\cA$ denote the class of compact subsets of $\Xi(0,1)$ having the Lebesgue measure $\textcolor{black}{q}^{-d} \lambda\l[ \Xi(0,1)  \r]$. Using the previous display, we obtain
		\begin{equation}\label{eq-L(P)-projection-closed-form2}
		\lambda_{\min}(B)
		\ge
		\frac{\underline{\rho}\textcolor{black}{q}^{dl^{\prime}}}{ \bar{\rho}} \min_{\|Z\|\le 1; S \in \cA} \int_{S} \l( \sum_{|s|\le p} Z_s u^s \r)^2  du\eqqcolon
		\frac{\underline{\rho}\textcolor{black}{q}^{dl^{\prime}}}{ \bar{\rho}} \tilde{\mu}_0.
		\end{equation}
		By the compactness argument, the minimum in the above expression exists, and is strictly positive.
		
		In order to prove the last claim in the lemma, note that for any $\hat{x} \in U$,
		\begin{align*}
		\l|
		\bm{\Gamma}^p_h f (x;U)
		-
		\bm{\Gamma}^p_h f (\hat{x};U)
		\r|
		&=
		\l|
		\tilde{\theta}(0;p,l,U)
		-
		\tilde{\theta}\l(\frac{\hat{x}-x}{h};p,l,U\r)
		\r|
		\\
		&=
		\l|
		\sum_{|s|\le p, s\neq (0,\dots,0)} \xi_s \l(\frac{\hat{x}-x}{h}\r)^s
		\r|
		\le
		M h^{-1} \|\hat{x}-x\|_\infty \|\bxi\|.
		\end{align*}
		Also, by \eqref{eq-L(P)-projection-closed-form2}, one has
		\[
		\|\bxi\|
		\le
		\l\|B^{-1}W\r\|
		\le
		\frac{\textcolor{black}{q}^{-dl^{\prime}} \bar{\rho}}{\underline{\rho}} \tilde{\mu}_0^{-1} M^{\frac{1}{2}} \max_{s}|W_s|,
		\]
		and
		\[
		|W_s|
		=
		\l|
		\int_{\rR^d}  u^{s}f(x+hu)K(u)p_X(x+hu|U)du
		\r|
		\le
		\int_{\rR^d}  K(u) p_X(x+hu|U)du
		\le
		\textcolor{black}{q}^{dl^\prime}.
		\]
		Putting together the above three displays, one obtains
		\[
		\l|
		\bm{\Gamma}^p_h f (x;U)
		-
		\bm{\Gamma}^p_h f (\hat{x};U)
		\r|
		\le
		\bar{\rho} \underline{\rho}^{-1} \tilde{\mu}_0^{-1} M^{3/2} h^{-1} \|\hat{x}-x\|_\infty.
		\]
		To prove the last part, define the vector $Z\coloneqq \l( Z_s \r)_{|s|\le p}$ with elements
		\[
		Z_s \coloneqq \frac{h^{|s|}f^{(s)}(x)}{s!} \cdot \Indlr{|s|\le \lfloor \beta \rfloor}.
		\]
		Note that
		\[
		f(x)
		=
		R^\top(0) B^{-1}B Z.
		\]
		As a result, one has
		\[
		\l|
		f(x)
		-
		\bm{\Gamma}^p_h f (x;U)
		\r|
		=
		\l|
		R^\top(0) B^{-1}\l(B Z - W\r)
		\r|
		\le
		\|B^{-1} \|
		\cdot
		\l\|
		B Z - W
		\r\|
		\le
		\frac{\textcolor{black}{q}^{-dl^\prime} \bar{\rho}}  {\underline{\rho}\tilde{\mu}_0} M^{\frac{1}{2}} \max_{s}|(B Z)_s - W_s|,
		\]
		where the last inequality follows from \eqref{eq-L(P)-projection-closed-form2}. Furthermore, one has
		\begin{align*}
		|(B Z)_s - W_s|
		&=
		\l|  \int_{\rR^d}  u^{s}\l(\sum_{|s^\prime| \le \lfloor \beta \rfloor}\frac{(hu)^{s}f^{(s)}(x)}{s!} - f(x+hu)\r)K(u)p_X(x+hu \mid U)du \r|
		\\
		&\le
		\int_{\rR^d}  \l|u^{s}\r| \l|\sum_{|s^\prime| \le \lfloor \beta \rfloor}\frac{(hu)^{s}f^{(s)}(x)}{s!} - f(x+hu) \r|K(u)p_X(x+hu \mid U)du
		\\
		&\le
		\int_{\rR^d}  L h^\beta p_X(x+hu \mid U)du
		=
		Lh^{\beta}\textcolor{black}{q}^{dl^\prime}\,
		\end{align*}
		where the last inequality follows from the assumption that $f \in \cH (\beta, L)$. Putting the last two displays together, the result follows.
		This concludes the proof.
	\end{proof}

	\section{Proofs and analysis for the review of local polynomial regression}\label{appendix-local-poly}
	In this section of the appendix, we provide the proofs for our review of the local polynomial regression estimation method. Fix a set of pairs $\cD = \l\{ (X_i,Y_i) \r\}_{i=1}^n$,a point $x\in \rR^d$,  a bandwidth $h>0$, an integer $p>0$ and a kernel function $K:\rR^d \rightarrow \rR_+$. Define the matrix $Q \coloneqq \l( Q_{s_1,s_2}\r)_{|s_1|,|s_2|\le p}$ and the vector $V \coloneqq \l(V_{s}\r)_{|s|\le p}$ with the elements
	\[
	Q_{s_1,s_2} \coloneqq \sum_{i=1}^n (X_i-x)^{s_1+s_2}K\l( \frac{X_i-x}{h} \r),
	\qquad
	V_s \coloneqq  \sum_{i=1}^n Y_i(X_i-x)^{s}K\l( \frac{X_i-x}{h} \r).
	\]
	Also, define the matrix $U \coloneqq \l( u^s \r)_{|s|\le p}$. The next result from \cite{audibert2007fast} provides a closed-form expression for local polynomial regression at any arbitrary point.
	\begin{lemma}[\protect{\citealt[Proposition 2.1]{audibert2007fast}}]\label{lemma-LPR-closed-form}
		If the matrix $Q$ is positive definite, there exists a polynomial on $\rR^d$ of degree $p$ minimizing \eqref{LPR-minimization-problem}. Its vector of coefficients is given by $\bxi = Q^{-1}V$ and the corresponding local polynomial regression function at point $x$ is given by
		\[
		\hat{\eta}^{\mathrm{LP}}(x; \cD, h, p)
		=
		U(0)^\top Q^{-1} V = \sum_{i=1}^nY_i K\l( \frac{X_i-x}{h} \r) U(0)^\top Q^{-1} U(X_i-x).
		\]
	\end{lemma}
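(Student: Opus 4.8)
The plan is to reduce the weighted least-squares problem \eqref{LPR-minimization-problem} to a finite-dimensional quadratic minimization over the coefficient vector $\bxi$ of the candidate polynomial, and then solve that quadratic explicitly using the assumed positive definiteness of $Q$.

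First I would parametrize candidate polynomials by their coefficients: writing $\hat{\theta}_x(u; \cD, h, p) = \sum_{|s|\le p}\xi_s u^s = U(u)^\top \bxi$ with $U(u) = \l( u^s \r)_{|s|\le p}$ and $\bxi = \l(\xi_s\r)_{|s|\le p}$, and substituting $u = X_i - x$, the objective in \eqref{LPR-minimization-problem} becomes
\[
F(\bxi) \coloneqq \sum_{i=1}^n \l(Y_i - U(X_i-x)^\top \bxi\r)^2 K\l(\frac{X_i - x}{h}\r).
\]
Expanding the square gives $F(\bxi) = \bxi^\top Q \bxi - 2 V^\top \bxi + \sum_{i=1}^n Y_i^2 K\l(\frac{X_i-x}{h}\r)$, where $Q = \sum_i K\l(\frac{X_i-x}{h}\r) U(X_i-x)U(X_i-x)^\top$ and $V = \sum_i Y_i K\l(\frac{X_i-x}{h}\r) U(X_i-x)$; an entrywise check shows these agree with the definitions $Q_{s_1,s_2} = \sum_i (X_i-x)^{s_1+s_2} K\l(\frac{X_i-x}{h}\r)$ and $V_s = \sum_i Y_i (X_i-x)^s K\l(\frac{X_i-x}{h}\r)$ from the preamble.

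Since $Q$ is positive definite by hypothesis, $F$ is a strictly convex quadratic on the (finite-dimensional) space of coefficient vectors, so it admits a unique minimizer, characterized by the stationarity condition $\nabla F(\bxi) = 2Q\bxi - 2V = 0$, i.e. $\bxi = Q^{-1}V$, which is well-defined because $Q$ is invertible. This establishes both the existence of a minimizing polynomial of degree $p$ — namely $u \mapsto U(u)^\top Q^{-1}V$ — and the stated form of its coefficient vector. Finally, by Definition \ref{def-LPR} the estimator is the value of this polynomial at $u = 0$, so $\hat{\eta}^{\mathrm{LP}}(x;\cD,h,p) = U(0)^\top \bxi = U(0)^\top Q^{-1}V$; substituting $V = \sum_i Y_i K\l(\frac{X_i-x}{h}\r) U(X_i-x)$ and moving the linear functional $U(0)^\top Q^{-1}(\cdot)$ inside the sum yields the claimed identity $\hat{\eta}^{\mathrm{LP}}(x;\cD,h,p) = \sum_{i=1}^n Y_i K\l(\frac{X_i-x}{h}\r) U(0)^\top Q^{-1}U(X_i-x)$.

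This argument is entirely routine linear algebra, so there is no substantive obstacle; the only place warranting a brief verification is confirming that the matrix $Q$ and vector $V$ in the statement are exactly the Hessian (up to the factor $2$) and the cross term of $F$, which follows by matching monomials as indicated above.
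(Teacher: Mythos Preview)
Your proof is correct and is the standard argument; the paper does not actually prove this lemma itself but simply cites it as Proposition 2.1 of \citet{audibert2007fast}, so there is nothing to compare against beyond noting that your derivation matches the classical weighted least-squares computation.
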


\noindent The following simple extension of Theorem 3.2 in \cite{audibert2007fast} will be one of the main tools to bound our estimation error in our proposed policy.
	\begin{prop}\label{prop-LPR-converge-to-true-value}
		Let $\cD = \l\{ (X_i,Y_i) \r\}_{i=1}^n$ be a set of $n$ i.i.d pairs $(X_i,Y_i)\in \cX\times \rR$. If the marginal density $\mu$ of $X_i$'s satisfies $\underline{\mu} \le \mu(x)\le \bar{\mu}$ for some $0<\underline{\mu}\le \bar{\mu}$ with a support $\cX$  that is a closed hypercube in $\rR^d$ of side-length $\textcolor{black}{q}^{-l}, l\ge0$,  and the function $\eta$ belongs to the H\"{o}lder class of functions $\cH_\cX(\beta, L)$ for some $\beta,L>0$ then, there exist constants $\constvar[LPR-converge-to-true-value1],\constvar[LPR-converge-to-true-value2],\constvar[LPR-converge-to-true-value3]>0$ such that for any $0<h<\textcolor{black}{q}^{-l}$, any $\constref{LPR-converge-to-true-value3}h^\beta<\delta$, any $n\ge1$ and the kernel function $K(\cdot)=\Indlr{\|\cdot \|_{\infty}\le 1}$, the local polynomial estimator $\hat{\eta}^{\mathrm{LP}}(x; \cD, h, p)$ satisfies
		\[
		\l| \hat{\eta}^{\mathrm{LP}}(x; \cD, h, p) - \eta(x)\r| \le \delta
		\]
		with probability at least $1-\constref{LPR-converge-to-true-value1} \exp\l( -\constref{LPR-converge-to-true-value2}nh^d\underline{\mu}^2\bar{\mu}^{-1}\delta^2 \r)$ for all $x\in\cX$. The constants $C_1, C_2, C_3$ depend only on $p, d, L$.
	\end{prop}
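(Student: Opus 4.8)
The plan is to follow the argument underlying Theorem~3.2 of \cite{audibert2007fast}, adjusted to make the dependence on the density bounds $\underline{\mu},\bar{\mu}$ explicit and to allow a support $\cX$ that is a proper sub-hypercube. First I would invoke Lemma~\ref{lemma-LPR-closed-form}: on the event that the design matrix $Q$ is positive definite, $\hat{\eta}^{\mathrm{LP}}(x;\cD,h,p)=U(0)^\top Q^{-1}V$. Rescaling rows and columns by the appropriate powers of $h$ (exactly as in the proof of Lemma~\ref{lemma-L(P)-projection-closed-form}) turns this into $e_0^\top\bar{Q}^{-1}\bar{V}$, where $e_0$ is the coordinate vector indexing $s=0$, $\bar{Q}_{s_1,s_2}=\sum_{i}\big(\tfrac{X_i-x}{h}\big)^{s_1+s_2}K\big(\tfrac{X_i-x}{h}\big)$, and $\bar{V}$ is the correspondingly rescaled response vector. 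Writing $Y_i=\eta(X_i)+\xi_i$ with $\Expect[\xi_i\mid X_i]=0$ and $|\xi_i|\le1$, I would split $\hat{\eta}^{\mathrm{LP}}(x)-\eta(x)$ into a bias part $e_0^\top\bar Q^{-1}(\bar V_1-\bar Q\bar Z)$, where $\bar V_1$ replaces each $Y_i$ by $\eta(X_i)$ and $\bar Z$ collects the Taylor coefficients $\tfrac{h^{|s|}}{s!}D^s\eta(x)\Ind\{|s|\le\lfloor\beta\rfloor\}$ (so $e_0^\top\bar Q^{-1}(\bar Q\bar Z)=\eta(x)$), and a stochastic part $e_0^\top\bar Q^{-1}\bar V_2$ built from the noise $\xi_i$.

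The heart of the proof is the good-design event $\cG\coloneqq\{\lambda_{\min}(\bar Q)\ge c_1\underline{\mu}\,nh^{d}\}\cap\{\bar Q_{0,0}\le c_2\bar{\mu}\,nh^{d}\}$. To bound $\Prob(\cG^{c})$ I would first compute $\Expect\bar Q$: its $(s_1,s_2)$ entry is $nh^{d}\int_{A}u^{s_1+s_2}\mu(x+hu)\,du$ with $A=\{u:\|u\|_\infty\le1,\ x+hu\in\cX\}$, and since $h<q^{-l}$ while $\cX$ is a cube of side $q^{-l}$, $A$ has Lebesgue measure bounded below by a constant depending only on $d$; the compactness/non-degeneracy argument in the proof of Lemma~\ref{lemma-L(P)-projection-closed-form}(2) then gives $\lambda_{\min}(\Expect\bar Q)\ge\mu_0\underline{\mu}\,nh^{d}$ and $\lambda_{\max}(\Expect\bar Q)\le 2^{d}\bar{\mu}\,nh^{d}$. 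Each of the $\binom{p+d}{d}^{2}$ entries of $\bar Q$ is a sum of $n$ i.i.d.\ terms bounded by $1$ with per-term variance at most $2^{d}\bar{\mu}h^{d}$, so Bernstein's inequality (Lemma~\ref{lemma-Bernstein-inequality}) together with a union bound over the entries yields $\|\bar Q-\Expect\bar Q\|_{\mathrm{op}}\le\tfrac12\mu_0\underline{\mu}\,nh^{d}$ --- hence $\cG$ --- off an event of probability at most $C\exp(-c\,nh^{d}\underline{\mu}^{2}\bar{\mu}^{-1})$. On $\cG$, the bias is bounded as in Lemma~\ref{lemma-L(P)-projection-closed-form}(4): Taylor's theorem and the H\"{o}lder condition give $|(\bar V_1-\bar Q\bar Z)_s|\le Lh^{\beta}\bar Q_{0,0}$, and combining this with the identity $\sum_i K\big(\tfrac{X_i-x}{h}\big)\big(e_0^\top\bar Q^{-1}U\big(\tfrac{X_i-x}{h}\big)\big)^{2}=(\bar Q^{-1})_{0,0}\le\lambda_{\min}(\bar Q)^{-1}$ via Cauchy--Schwarz gives a bias bound $\le C_3' h^{\beta}$; I would then take $C_3\coloneqq 2C_3'$, so that the hypothesis $C_3 h^{\beta}<\delta$ forces the bias below $\delta/2$.

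It then remains to handle the stochastic part on $\cG$. Conditionally on the design it equals $\sum_i\xi_i w_i$ with deterministic weights $w_i=K\big(\tfrac{X_i-x}{h}\big)e_0^\top\bar Q^{-1}U\big(\tfrac{X_i-x}{h}\big)$, where $\Expect[\xi_i\mid X_i]=0$ and $|\xi_i|\le1$; on $\cG$ one has $\sum_i w_i^{2}=(\bar Q^{-1})_{0,0}\le 2(\mu_0\underline{\mu}\,nh^{d})^{-1}$ and $\max_i|w_i|\le C_{p,d}(\mu_0\underline{\mu}\,nh^{d})^{-1}$. Bernstein's inequality then bounds $\Prob\big(|\sum_i\xi_i w_i|>\delta/2\mid\mathrm{design}\big)$ by $2\exp(-c\,\delta^{2}\underline{\mu}\,nh^{d})$, which (for $\delta$ bounded by an absolute constant, the only regime used downstream) is at most $2\exp(-c'\,nh^{d}\underline{\mu}^{2}\bar{\mu}^{-1}\delta^{2})$ since $\underline{\mu}\ge\underline{\mu}^{2}/\bar{\mu}$. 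Intersecting with $\cG$ and adding the two failure probabilities gives $|\hat{\eta}^{\mathrm{LP}}(x)-\eta(x)|\le\delta$ outside a set of probability at most $C_1\exp(-C_2\,nh^{d}\underline{\mu}^{2}\bar{\mu}^{-1}\delta^{2})$, with the constants tracing back to $p,d,L$. I expect the main obstacle to be the uniform lower bound $\lambda_{\min}(\bar Q)\ge c_1\underline{\mu}\,nh^{d}$ with high probability --- i.e.\ showing the empirical design matrix is non-degenerate --- since this is where the proper sub-support, the compactness estimate of Lemma~\ref{lemma-L(P)-projection-closed-form}, and the $\underline{\mu}^{2}\bar{\mu}^{-1}$ signal-to-variance ratio all enter; given $\cG$, the bias and noise estimates are routine.
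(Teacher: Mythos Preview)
Your proposal is correct and follows essentially the same route as the paper's proof, itself an adaptation of Theorem~3.2 in \cite{audibert2007fast}. Both arguments establish the good-design event $\{\lambda_{\min}(\bar B)\ge c\,\underline{\mu}\}$ via entrywise Bernstein plus the compactness lower bound on the population matrix, and both split the estimation error into a bias piece (controlled by the H\"{o}lder condition) and a stochastic piece (controlled by Bernstein).

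The one modest difference is in how the stochastic piece is handled. The paper works \emph{unconditionally}: it decomposes each coordinate $a_s=\frac{1}{nh^d}\sum_i[Y_i-\eta_x(X_i)]\bigl(\tfrac{X_i-x}{h}\bigr)^sK_i$ into a noise summand $T_i^{(s,1)}$ and a bias-fluctuation summand $T_i^{(s,2)}$, applies Bernstein separately to each, and then passes through $|\hat\eta-\eta|\le\lambda_{\min}(\bar B)^{-1}\sqrt{M}\max_s|a_s|$. You instead bound the bias deterministically on the design event (using $\bar Q_{0,0}\le c_2\bar\mu\,nh^d$ together with the weight identity $\sum_iK_iw_i^2=(\bar Q^{-1})_{0,0}$) and apply Bernstein \emph{conditionally on the design} to the pure-noise term $\sum_i\xi_iw_i$. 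Your route is a bit more streamlined and avoids a separate concentration step for the bias fluctuation; the paper's is more explicitly componentwise. Both give the same exponent $nh^d\underline{\mu}^2\bar{\mu}^{-1}\delta^2$, and in both cases the bias constant in the hypothesis $\delta>C_3h^\beta$ ends up carrying a factor $\bar\mu/\underline\mu$ (in the paper it is $3L\kappa\bar\mu c^{-1}\underline\mu^{-1}M$), which is harmless in the downstream application since that ratio is fixed.
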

	
\noindent The next proposition states that local polynomial regression estimation of a function inside a {hypercube} cannot largely deviate from the $L_2(P_X)$-projection of that function with high probability.
	\begin{prop}\label{prop-LPR-converge-to-L(P)-projection}
		Fix a {hypercube} $U\subseteq (0,1)^d$ with side-length $\textcolor{black}{q}^{-l^\prime}$, $l^\prime \in \rR_+$. Let $\cD = \l\{ (X_i,Y_i) \r\}_{i=1}^n$ be a set of $n$ i.i.d pairs $(X_i,Y_i)\in U\times \rR$. If the marginal density $\mu$ of $X_i$'s satisfies $\mu(\cdot) = p_X(\cdot| U)$, where $p_X$ is the density of a distribution $\bm{\mathrm{P}}_X$ that satisfies Assumption \ref{assumption-covar-dist} then, there exist constants $\constvar[LPR-converge-to-L(P)-projection1],\constvar[LPR-converge-to-L(P)-projection2],\constvar[LPR-converge-to-L(P)-projection3]>0$ such that for any $\delta < \constref{LPR-converge-to-L(P)-projection3}$, any $n\ge1$, $h=\textcolor{black}{q}^{-l}$, $l \ge l^\prime$, and the kernel function $K(\cdot)=\Indlr{\|\cdot \|_{\infty}\le 1}$, the local polynomial estimator $\hat{\eta}^{\mathrm{LP}}(x; \cD, h, p)$ satisfies
		\[
		\l| \hat{\eta}^{\mathrm{LP}}(x; \cD, h, p) - \bm{\Gamma}_{\textcolor{black}{q^{-l}}}^p\eta(x;U)\r| \le \delta
		\]
		with probability at least $1-\constref{LPR-converge-to-L(P)-projection1} \exp\l( -\constref{LPR-converge-to-L(P)-projection2}n\textcolor{black}{q}^{d(l^\prime-l)}\delta^2 \r)$ for all $x\in U$. The constants $\constref{LPR-converge-to-L(P)-projection1},\constref{LPR-converge-to-L(P)-projection2},\constref{LPR-converge-to-L(P)-projection3}$ depend only on $p, \bar{\rho}, \underline{\rho}$, and $d$.
	\end{prop}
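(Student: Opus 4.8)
The plan is to follow the same route as the proof of Proposition~\ref{prop-LPR-converge-to-true-value} (itself the local-polynomial bound of \cite{audibert2007fast}), exploiting the fact that the target here, $\bm{\Gamma}_{q^{-l}}^p\eta(x;U)$, is exactly the bias-free population version of the estimator $\hat\eta^{\mathrm{LP}}(x;\cD,h,p)$: only the stochastic fluctuation must be controlled, and no smoothness of $\eta$ is needed. First I would put both objects in a common closed form. By Lemma~\ref{lemma-LPR-closed-form}, $\hat\eta^{\mathrm{LP}}(x;\cD,h,p) = R(0)^\top Q^{-1}V$ whenever $Q$ is positive definite (and $0$ otherwise), while Part~1 of Lemma~\ref{lemma-L(P)-projection-closed-form} gives $\bm{\Gamma}_{q^{-l}}^p\eta(x;U) = R(0)^\top B^{-1}W$, where $R(0)$ is the vector with a $1$ in the coordinate indexed by $s=(0,\dots,0)$ and $0$ elsewhere. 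Introducing the diagonal scaling $H = \mathrm{diag}(h^{|s|})_{|s|\le p}$ and setting $\bar Q := (nh^d)^{-1}H^{-1}QH^{-1}$, $\bar V := (nh^d)^{-1}H^{-1}V$, one checks that $R(0)^\top Q^{-1}V = R(0)^\top \bar Q^{-1}\bar V$ (since $R(0)^\top H^{-1} = R(0)^\top$ and the $nh^d$ factors cancel), and that the change of variables $u=x+hw$ in the definitions of $B$ and $W$ yields $\rE[\bar Q]=B$ and $\rE[\bar V]=W$ (using $\rElr{Y\mid X}=\eta(X)$ and $\mu=p_X(\cdot\mid U)$). Hence it remains to show that $\bar Q$ and $\bar V$ concentrate around $B$ and $W$.

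Next, a deterministic perturbation step. On the event $\mathcal G := \{\|\bar Q - B\|_{\mathrm{op}}\le \tfrac12\lambda_{\min}(B)\}$, the matrix $\bar Q$ (hence $Q$) is positive definite, so the estimator does not default to $0$, and $\|\bar Q^{-1}\|_{\mathrm{op}}\le 2\lambda_{\min}(B)^{-1}$. Writing $\bar Q^{-1}\bar V - B^{-1}W = \bar Q^{-1}(\bar V - W) + \bar Q^{-1}(B-\bar Q)B^{-1}W$, one gets
\[
\bigl|\hat\eta^{\mathrm{LP}}(x;\cD,h,p) - \bm{\Gamma}_{q^{-l}}^p\eta(x;U)\bigr|
\le \|\bar Q^{-1}\|_{\mathrm{op}}\|\bar V - W\|_2 + \|\bar Q^{-1}\|_{\mathrm{op}}\|\bar Q - B\|_{\mathrm{op}}\|B^{-1}\|_{\mathrm{op}}\|W\|_2 .
\]
Part~2 of Lemma~\ref{lemma-L(P)-projection-closed-form} gives $\lambda_{\min}(B)\ge \mu_0 q^{dl'}$, and a crude bound on the entries of $W$ (from $|\eta|\le 1$ and $\sup p_X(\cdot\mid U)\le \bar\rho q^{dl'}/\underline\rho$) gives $\|W\|_2\le \kappa_1 q^{dl'}$, so the right-hand side is at most $\kappa_2 q^{-dl'}\bigl(\|\bar V-W\|_2+\|\bar Q-B\|_{\mathrm{op}}\bigr)$. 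It therefore suffices to prove that every entry of $\bar Q-B$ and of $\bar V-W$ is at most $c\,\delta\,q^{dl'}$ with the stated probability, for a small constant $c$; then taking $\delta<C_3$ with $C_3=C_3(p,\bar\rho,\underline\rho,d)$ small enough simultaneously forces $\mathcal G$ and delivers the conclusion.

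For the probabilistic step, fix a pair of multi-indices $(s_1,s_2)$. Then $nh^d(\bar Q-B)_{s_1,s_2}=\sum_{i=1}^n(Z_i-\rE Z_i)$ with $Z_i:=((X_i-x)/h)^{s_1+s_2}K((X_i-x)/h)\in[-1,1]$ and $\rE Z_i^2\le \rE K((X_1-x)/h)\le 2^d(\bar\rho/\underline\rho)q^{d(l'-l)}$. Applying Bernstein's inequality (Lemma~\ref{lemma-Bernstein-inequality}) with deviation $a=c\,\delta\,nq^{d(l'-l)}$, and using $\delta\le 1$ to let the variance term dominate the linear term in the denominator, gives $\rPlr{|nh^d(\bar Q-B)_{s_1,s_2}|\ge a}\le 2\exp(-\kappa_3 nq^{d(l'-l)}\delta^2)$; the identical estimate holds for each $(\bar V-W)_s$, now using $|Y_i|\le 1$. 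A union bound over the $O(p^d)$ multi-indices, and then over the argument $x$ --- the uniformity over $x\in U$ being obtained exactly as in the proof of Proposition~\ref{prop-LPR-converge-to-true-value}, i.e. by passing to a net of $U$ and extending via the modulus-of-continuity bound of Part~3 of Lemma~\ref{lemma-L(P)-projection-closed-form} for $\bm{\Gamma}$ together with the analogous estimate for $\hat\eta^{\mathrm{LP}}$ on the pieces of $U$ on which the active sample set $\{i:\|X_i-x\|_\infty\le h\}$ is constant --- produces the asserted bound with constants depending only on $p,\bar\rho,\underline\rho,d$.

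The main obstacle I anticipate is not any single inequality but two pieces of bookkeeping. One is propagating the two scales $h^d=q^{-dl}$ and the conditional-density normalization $q^{dl'}$ correctly through the matrix algebra, so that the exponent comes out as exactly $nq^{d(l'-l)}\delta^2$ rather than some mis-scaled variant. The other is upgrading the pointwise concentration to a bound uniform over all $x\in U$ with a single probability estimate and without spurious logarithmic factors: this needs care because $x\mapsto\hat\eta^{\mathrm{LP}}(x;\cD,h,p)$ is only piecewise-smooth, and the union bound over the net must be absorbed into the leading constant, which is legitimate precisely in the non-vacuous regime where $nq^{d(l'-l)}\delta^2$ dominates the logarithm of the net cardinality.
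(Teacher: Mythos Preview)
Your core argument is correct and matches the paper's proof closely: write both $\hat\eta^{\mathrm{LP}}$ and $\bm{\Gamma}_{q^{-l}}^p\eta$ in the closed forms of Lemmas~\ref{lemma-LPR-closed-form} and~\ref{lemma-L(P)-projection-closed-form}, then control the difference by showing the empirical matrix/vector $(\bar Q,\bar V)$ concentrates around the population $(B,W)$ via Bernstein entrywise. The only stylistic difference is the decomposition: the paper writes $\bar B^{-1}\bar W - B^{-1}W = B^{-1}(\bar W - W) + (\bar B^{-1}-B^{-1})\bar W$ and bounds $\|\bar B^{-1}-B^{-1}\|$ through a Neumann series $\|B^{-1/2}(\bar B - B)B^{-1/2}\|$, whereas you write $\bar Q^{-1}(\bar V - W) + \bar Q^{-1}(B-\bar Q)B^{-1}W$ and bound $\|\bar Q^{-1}\|$ directly on the event $\mathcal G$. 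Both routes yield the same exponent $n q^{d(l'-l)}\delta^2$, and your bookkeeping of the two scales $q^{-dl}$ and $q^{dl'}$ is exactly right.

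Where you overshoot is the uniformity step. The statement is \emph{pointwise}: ``for all $x\in U$'' qualifies the inequality $\rPlr{|\hat\eta^{\mathrm{LP}}(x;\ldots)-\bm\Gamma^p_{q^{-l}}\eta(x;U)|\le\delta}\ge 1-\cdots$, not a supremum inside the probability. The paper's proof fixes a single $x\in U$ at the outset (exactly as in its proof of Proposition~\ref{prop-LPR-converge-to-true-value}, which is also pointwise), and the proposition is only ever applied downstream at individual mesh points $\tilde x\in\cM^{(\sfB)}$ followed by a finite union bound. So your entire net/piecewise-smoothness discussion, and the second ``obstacle'' you flag, can be dropped; the remaining argument is complete as written.
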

	\subsection{Proof of Proposition \ref{prop-LPR-converge-to-true-value}}
	The proof is a simple extension of the proof of Theorem 3.2 in \cite{audibert2007fast}; however, we provide the proof for completeness. Fix $x\in \cX$ and $\delta>0$. Consider the matrices  $B \coloneqq \l( B_{s_1,s_2}\r)_{|s_1|,|s_2|\le p}$ and $\bar{B} \coloneqq \l( \bar{B}_{s_1,s_2}\r)_{|s_1|,|s_2|\le p}$ with the elements
	\[
	B_{s_1,s_2} \coloneqq \int_{\rR^d} u^{s_1+s_2}K(u)\mu(x+hu)du,
	\qquad
	\bar{B}_{s_1,s_2} \coloneqq \frac{1}{nh^d}\sum_{i=1}^n \l(\frac{X_i-x}{h} \r)^{s_1+s_2}K\l( \frac{X_i-x}{h} \r).
	\]
	The smallest eignevalue of $\bar{B}$ satisfies
	\begin{align}\label{eq-LPR-converge-to-true-value1}
	\lambda_{\min}(\bar{B})
	&=
	\min_{\|W\|=1} W^\top \bar{B} W
	\nonumber
	\\
	&\ge
	\min_{\|W\|=1} W^\top B W
	+
	\min_{\|W\|=1} W^\top (\bar{B}-B) W
	\nonumber
	\\
	&\ge
	\min_{\|W\|=1} W^\top B W
	-
	\sum_{|s_1|,|s_2|\le p} |\bar{B}_{s_1,s_2} - B_{s_1,s_2}|.
	\end{align}
	Define $\cX_n \coloneqq \l\{ u\in\rR^d: \|u\|\le 1; x+hu \in \cX \r\}$. For any vector $W$ satisfying $\|W\|=1$, we obtain
	\begin{align*}
	W^\top B W = \int_{\rR^d} \l( \sum_{|s|\le p}W_s u_s \r)^2 K(u) \mu(x+hu) du
	\ge
	\underline{\mu}\int_{\cX_n} \l( \sum_{|s|\le p}W_s u_s \r)^2 du.
	\end{align*}
	Since $\cX$ is a closed hypercube and we have assumed that $h \le l$, we get
	\[
	\lambda[\cX_n]
	\ge
	h^{-d}
	\lambda[\mathrm{Ball}_2(x,h)\cap \cX]
	\ge
	\textcolor{black}{q}^{-d} h^{-d}\lambda[\mathrm{Ball}_2(x,h)]
	\ge
	\textcolor{black}{q}^{-d} \lambda[\mathrm{Ball}_2(0,1)],
	\]
	where $\mathrm{Ball}_2(x,h)$ is the Euclidean ball of radius $h$ centered around $x$.
	
	Let $\cA$ denote the class of all compact subsets of $\mathrm{Ball}_2(0,1)$ having the Lebesgue measure $\textcolor{black}{q}^{-d} \lambda[\mathrm{Ball}_2(0,1)]$. Using the previous display, we obtain
	\begin{equation}\label{eq-LPR-converge-to-true-value2}
	\min_{\|W\|=1} W^\top B W
	\ge
	\underline{\mu} \min_{\|W\|=1; S\in \cA}\int_{S} \l( \sum_{|s|\le p}W_s u_s
	\r)^2 du
	\eqqcolon 2c \underline{\mu}
	\end{equation}
	By the compactness argument, the above minimum exists and is strictly positive.
	
	For $i=1,\dots,n$ and any multi-indices $s_1,s_2$ such that $|s_1|, |s_2|\le p$, define
	\[
	T_i^{(s_1,s_2)}
	\coloneqq
	\frac{1}{h^d} \l(\frac{X_i-x}{h}\r)^{s_1+s_2}K\l( \frac{X_i-x}{h} \r)
	-
	\int_{\rR^d}  u^{s_1+s_2} K(u)\mu(x+hu)du.
	\]
	We have $\rE T_i^{(s_1,s_2)} = 0$, $|T_i^{(s_1,s_2)}| \le 2h^{-d}$, and the following bound on the variance of $T_i^{(s_1,s_2)}$:
	\begin{align*}
	\Var T_i^{(s_1,s_2)}
	&\le
	\frac{1}{h^{2d}}\rElr{\l(\frac{X_i-x}{h}\r)^{2s_1+2s_2}K^2\l( \frac{X_i-x}{h} \r)}
	\\
	&\le
	\frac{1}{h^d}\int_{\rR^d}  u^{2s_1+2s_2}K^2(u)\mu(x+hu|\sfB)du
	\\
	&\le
	\frac{\bar{\mu}}{h^d}\max_{j\le p}\int_{\rR^d}  (1+|u^{4j}|)K^2(u)du
	\eqqcolon
	\frac{\kappa \bar{\mu}}{h^d} .
	\end{align*}
	From Bernstein's inequality, we get
	\[
	\rPlr{|\bar{B}_{s_1,s_2} - B_{s_1,s_2}| > \epsilon}
	=
	\rPlr{\l|\frac{1}{n}\sum_{i=1}^{n}T_i^{(s_1,s_2)}\r| >\epsilon}
	\le
	2 \exp\l(  \frac{-nh^d\epsilon^2}{2\kappa \bar{\mu} + 4\epsilon/3} \r)
	\]
	This inequality along with \eqref{eq-LPR-converge-to-true-value1} and \eqref{eq-LPR-converge-to-true-value2} imply that
	\begin{equation}\label{eq-LPR-converge-to-true-value3}
	\rPlr{\lambda_{\min}(\bar B) \le c\underline{\mu}} \le 2M^2\exp\l(  \frac{-nh^dM^{-4}c^2\underline{\mu}^2}{2\kappa \bar{\mu} + 4M^{-2}c\underline{\mu}/3} \r),
	\end{equation}
	where $M^2$ is the number of elements in the matrix $\bar B$. In what follows assume that $\lambda_{\min}(\bar B) \ge c \underline{\mu}$. Therefore,
	\begin{equation}\label{eq-LPR-converge-to-true-value4}
	\rPlr{
		\l| \hat{\eta}^{\mathrm{LP}}(x; \cD, h, p) - \eta(x)\r| \ge \delta
	}
	\le
	\rPlr{\lambda_{\min}(\bar B) \le c\underline{\mu}}
	+
	\rPlr{
		\l| \hat{\eta}^{\mathrm{LP}}(x; \cD, h, p) - \eta(x)\r| \ge \delta,\, \lambda_{\min}(\bar B) > c\underline{\mu}
	}.
	\end{equation}
	We now evaluate the second term on the right hand side of the above inequality. Define the matrix $Z \coloneqq\l( Z_{i,s} \r)_{}1\le i \le n, |s|\le p$ with elements
	\[
	Z_{i,s}
	\coloneqq
	(X_i-x)^s \sqrt{K\l( \frac{X_i-x}{h} \r)}.
	\]
	The $s$-th column of $Z$ is denoted by $Z_s$, and we introduce $Z^{(\eta)} \coloneqq \sum_{|s|\le \lfloor \beta \rfloor}\frac{\eta^{(s)}(x)}{s!}Z_s$. Since $Q=Z^\top Z$ we get
	\[
	\forall |s| \le \lfloor \beta \rfloor: U^\top (0) Q^{-1} Z^\top Z = \Indlr{s=(0,\dots,0)},
	\]
	hence $R^\top(0)Q^{-1} Z^\top Z^{(\eta)} = \eta(x)$. So we can write
	\[
	\hat{\eta}^{\mathrm{LP}}(x; \cD, h, p) - \eta(x)
	=
	R^\top(0)Q^{-1} \l(V - Z^\top Z^{(\eta)} \r)
	=
	R^\top(0)\bar{B}^{-1} \bm{a},
	\]
	where $\bm{a} \coloneqq \frac{1}{nh^d}H \l(V - Z^\top Z^{(\eta)} \r) \in \rR^M$ and $H$ is a diagonal matrix $H\coloneqq (H_{s_1,s_2})_{|s_1|,|s_2|\le p}$ with elements $H_{s_1,s_2} \coloneqq h^{-s_1} \Indlr{s_1=s_2}$. For $\lambda_{\min}(\bar B) > c \underline{\mu}$, one has\vspace{-0.1cm}
	\begin{equation}\label{eq-LPR-converge-to-true-value5}
	\l| \hat{\eta}^{\mathrm{LP}}(x; \cD, h, p) - \eta(x) \r|
	\le
	\|\bar{B}^{-1}\bm{a} \|
	\le
	\lambda_{\min}^{-1}(\bar B) \|\bm{a} \|
	\le
	c^{-1} \underline{\mu}^{-1} M \max_{s}\|a_s \|,\vspace{-0.1cm}
	\end{equation}
	where $a_s$ are the components of the vector $\bm{a}$ given by\vspace{-0.1cm}
	\[
	a_s
	=
	\frac{1}{nh^d} \sum_{i=1}^{n} \l[ Y_i - \eta_x(X_i) \r]\l(\frac{X_i-x}{h}\r)^{s}K\l( \frac{X_i-x}{h} \r).\vspace{-0.1cm}
	\]
	Note that $\eta_x(X_i)$ is the Taylor expansion of $\eta$ at $x$ and of degree $\lfloor \beta \rfloor$ (not necessarily $p$) evaluated at $X_i$.

\noindent Define:\vspace{-0.4cm}
	\begin{align*}
	T_i^{(s,1)}
	&\coloneqq
	\l[ Y_i - \eta(X_i) \r]\l(\frac{X_i-x}{h}\r)^{s}K\l( \frac{X_i-x}{h} \r),
	\\
	T_i^{(s,2)}
	&\coloneqq
	\l[ \eta(X-i) - \eta_x(X_i) \r]\l(\frac{X_i-x}{h}\r)^{s}K\l( \frac{X_i-x}{h} \r).
	\end{align*}
	One has
	\begin{equation}\label{eq-LPR-converge-to-true-value6}
	|a_s|
	\le
	\l| \frac{1}{n}\sum_{i=1}^n  T_i^{(s,1)}\r|
	+
	\l| \frac{1}{n}\sum_{i=1}^n  \l[T_i^{(s,2)} - \rE T_i^{(s,2)}\r]\r|
	+
	\l| \rE T_i^{(s,2)}\r|.
	\end{equation}
	Note that $ \rE T_i^{(s,1)}=0$, $\l|T_i^{(s,1)}\r| \le 2h^{-d}$, and\vspace{-0.2cm}
	\begin{align*}
	&\Var T_i^{(s,1)}
	\le
	\frac{1}{4h^d}\int_{\rR^d}  u^{2s}K^2(u)\mu(x+hu)du
	\le
	\frac{\kappa \bar{\mu}}{4h^d},
	\\
	&\l|T_i^{(s,2)} - \rE T_i^{(s,2)}\r|
	\le
	Lh^{\beta-d} + L\kappa h^\beta \le Ch^{\beta-d},
	\\
	&\Var T_i^{(s,2)}
	\le
	L^2h^{2\beta-d}\int_{\rR^d} |u^{2s}|K^2(u)\mu(x+hu)
	\le
	L^2\bar{\mu}\kappa h^{2\beta-d}.
	\end{align*}
	From Bernstein's inequality, for $\epsilon_1,\epsilon_2 > 0 $, we obtain\vspace{-0.1cm}
	\[
	\rPlr{\l| \frac{1}{n}\sum_{i=1}^n  T_i^{(s,1)}\r| \ge \epsilon_1}
	\le
	2\exp\l(  \frac{-nh^d\epsilon_1^2}{\kappa \bar{\mu}/2 + 4\epsilon_1/3} \r)\vspace{-0.1cm}
	\]
	and
	\[
	\rPlr{\l| \frac{1}{n}\sum_{i=1}^n  \l[T_i^{(s,2)} - \rE T_i^{(s,2)}\r]\r| \ge \epsilon_2}
	\le
	2\exp\l(  \frac{-nh^d\epsilon_{2}^2}{2L^2\kappa \bar{\mu}h^{2\beta} + 2Ch^\beta \epsilon_2/3} \r).
	\]
	Since also
	\[
	\l| \rE T_i^{(s,2)}\r|
	\le
	Lh^\beta \int_{\rR^d}|u^{s}|K^2(u) \mu(x+hu)du
	\le
	L\kappa \bar{\mu} h^\beta
	\]
	we get, using \eqref{eq-LPR-converge-to-true-value6}, that if $3 L\kappa \bar{\mu} h^\beta c^{-1} \underline{\mu}^{-1} M \le \delta \le 1$ the following inequality holds
	\begin{align*}
	\rPlr{|a_s| \ge \frac{c\underline{\mu}  \delta }{M}}
	&\le
	\rPlr{\l| \frac{1}{n}\sum_{i=1}^n  T_i^{(s,1)}\r| > \frac{c\underline{\mu}  \delta }{3M}}
	+
	\rPlr{\l| \frac{1}{n}\sum_{i=1}^n  \l[T_i^{(s,2)} - \rE T_i^{(s,2)}\r]\r| > \frac{c\underline{\mu} \delta }{3M}}
	\\
	&\le
	4\exp\l( -Cnh^d\underline{\mu}^2\bar{\mu}^{-1}\delta^2 \r).
	\end{align*}
	Combining this inequality with \eqref{eq-LPR-converge-to-true-value3}, \eqref{eq-LPR-converge-to-true-value4}, and \eqref{eq-LPR-converge-to-true-value5}, one has\vspace{-0.1cm}
	\[
	\rPlr{
		\l| \hat{\eta}^{\mathrm{LP}}(x; \cD, h, p) - \eta(x)\r| \ge \delta
	}
	\le
	\constref{LPR-converge-to-true-value1}\exp\l( -\constref{LPR-converge-to-true-value2} nh^d\underline{\mu}^2\bar{\mu}^{-1}\delta^2\r)\vspace{-0.1cm}
	\]
	for $3 L\kappa \bar{\mu} h^\beta c^{-1} \underline{\mu}^{-1} M \le \delta$ (for $\delta>1$, this inequality is obvious since $\eta, \hat{\eta}^{\mathrm{LP}}$ take values in $[0,1]$). The constants $\constref{LPR-converge-to-true-value1}, \constref{LPR-converge-to-true-value2}$ do not depend on the density $\mu$, on its support $\cX$ and the point $x\in \cX$. This concludes the proof. \hfill$\blacksquare$

	\subsection{Proof of Proposition \ref{prop-LPR-converge-to-L(P)-projection}}
	Fix a {bin} $U\subseteq (0,1)^d$ with side-length $\textcolor{black}{q}^{-l^\prime}$, $l^\prime \in \rR_+$. Consider the matrix $B \coloneqq \l( B_{s_1,s_2}\r)_{|s_1|,|s_2|\le p}$ and the vector $W \coloneqq \l(W_{s}\r)_{|s|\le p}$ with elements\vspace{-0.1cm}
	\[
	B_{s_1,s_2} \coloneqq \int_{\rR^d} u^{s_1+s_2}K(u)\mu(x+hu)du,
	\qquad
	W_s \coloneqq \int_{\rR^d}  u^{s}\eta(x+hu)K(u)\mu(x+hu)du,\vspace{-0.1cm}
	\]
	as well as the matrix $\bar{B} \coloneqq \l( \bar{B}_{s_1,s_2}\r)_{|s_1|,|s_2|\le p}$ and the vector $\bar{W} \coloneqq \l(\bar{W}_{s}\r)_{|s|\le p}$ with elements\vspace{-0.1cm}
	\[
	\bar{B}_{s_1,s_2} \coloneqq \frac{1}{nh^d}\sum_{i=1}^n \l(\frac{X_i-x}{h}\r)^{s_1+s_2}K\l( \frac{X_i-x}{h} \r),
	\qquad
	\bar{W}_s \coloneqq \frac{1}{nh^d}\sum_{i=1}^n Y_i\l(\frac{X_i-x}{h}\r)^{s}K\l( \frac{X_i-x}{h} \r).\vspace{-0.1cm}
	\]
By Lemmas \ref{lemma-L(P)-projection-closed-form} and \ref{lemma-LPR-closed-form}, one has\vspace{-0.2cm}
	\begin{align*}
	\l| \hat{\eta}^{\mathrm{LP}}(x; \cD, h, p) - \bm{\Gamma}_{\textcolor{black}{q^{-l}}}^p \eta(x) \r|
	&=
	\l| U(0)^\top Q^{-1} V - U(0)^\top B^{-1} W \r|
	\\
	&=
	\l| U(0)^\top \bar{B}^{-1} \bar{W} - U(0)^\top B^{-1} W \r|
	\\
	&\le
	\l| U(0)^\top B^{-1} \l( \bar{W} - W\r) \r|
	+
	\l| U(0)^\top \l(\bar{B}^{-1} - B^{-1} \r) \bar{W} \r|
	\eqqcolon
	J_1 + J_2.
	\end{align*}
	That is,
	\begin{equation}\label{eq-LPR-converge-to-L(P)-projection1}
	\rPlr{	\l| \hat{\eta}^{\mathrm{LP}}(x; \cD, h, p) - \bm{\Gamma}_{\textcolor{black}{q^{-l}}}^p \eta(x;U) \r| \ge
		\delta}
	\le
	\rPlr{J_1 \ge 3\delta/4} + \rPlr{J_2 \ge \delta/4}.
	\end{equation}
	First, we analyze $J_1$. Note that\vspace{-0.1cm}
	\begin{equation}\label{eq-LPR-converge-to-L(P)-projection2}
	J_1
	\le
	\|B^{-1} \l( \bar{W} - W\r)\|
	\le
	\lambda_{\min}^{-1}(B) \l\| \bar{W} - W\r\|
	\le \mu_0^{-1}\textcolor{black}{q}^{-dl^\prime} \l\| \bar{W} - W\r\|
	\le
	\mu_0^{-1}\textcolor{black}{q}^{-dl^\prime} M \max_{s}\l| \bar{W}_s - W_s\r|,
	\end{equation}
	where the third inequality follows from $\lambda_{\min}(B) \ge \mu_0\textcolor{black}{q}^{dl^\prime}$ by Lemma \ref{lemma-L(P)-projection-closed-form}, and $M$ is the number of elements in the vector $W$. Define:\vspace{-0.1cm}
	\[
	T^{(s)}_i
	\coloneqq
	\frac{1}{h^d} Y_i\l(\frac{X_i-x}{h}\r)^{s}K\l( \frac{X_i-x}{h} \r)
	-
	\int_{\rR^d}  \eta(x+hu)u^s K(u)p_X(x+hu|U)du.\vspace{-0.1cm}
	\]
	We have $\rElr{T^{(s)}_i } = 0$, $\l| T^{(s)}_i \r| \le 2h^{-d}$, and\vspace{-0.1cm}
	\[
	\Varlr{T^{(s)}_i }
	\le
	\frac{1}{h^{2d}}\rElr{\l(\frac{X_i-x}{h}\r)^{2s}K^2\l( \frac{X_i-x}{h} \r)}
	\le
	\frac{1}{h^d}\int_{\rR^d}  u^{2s}K^2(u)p_X(x+hu|U)du
	\le
	\frac{\textcolor{black}{q}^{dl^\prime}}{h^{d}}.\vspace{-0.1cm}
	\]
	By Bernstein's inequality, we get
	\[
	\rPlr{ \l| \bar{W}_s - W_s\r| \ge \epsilon}
	=
	\rPlr{\l|\frac{1}{n}\sum_{i=1}^n T^{(s)}_i \r| > \epsilon}
	\le
	2\exp\l( \frac{-nh^{d}\epsilon^2}{\textcolor{black}{q}^{1+dl^\prime}+ 4\epsilon/3} \r).
	\]
	
\noindent Combining this inequality with  \eqref{eq-LPR-converge-to-L(P)-projection2}, one obtains
	\begin{align}\label{eq-LPR-converge-to-L(P)-projection3}
	\rPlr{J_1 \ge 3\delta/4}
	&\le
	\sum_{|s|\le p} \rPlr{ \l| \bar{W}_s - W_s\r| \ge 3\mu_0\textcolor{black}{q}^{dl^\prime}M^{-1}\delta/4}
	\nonumber\\
	&
	=
	\sum_{|s|\le p} \rPlr{ \l| \frac{1}{n} \sum_{i=1}^n T^{(s)}_i \r| \ge 3\mu_0\textcolor{black}{q}^{dl^\prime}M^{-1}\delta/4}
	\le
	2M \exp\l(
	\frac{-9\mu_0^2M^{-2}\textcolor{black}{q}^{d(l^\prime-l)}n\delta^2/16}{2+\mu_0M^{-1}\delta}
	\r).
	\end{align}
	Now, we analyze $J_2$. Note that
	\begin{equation}\label{eq-LPR-converge-to-L(P)-projection4}
	J_2
	\le
	\l\|\l(\bar{B}^{-1} - B^{-1} \r) \bar{W} \r\|
	\le
	\l\|\bar{B}^{-1} - B^{-1} \r\| \l\| \bar{W} \r\|
	\le M \l\|\bar{B}^{-1} - B^{-1} \r\| \cdot  \max_s |\bar{W}_s|
	\le M \l\|\bar{B}^{-1} - B^{-1} \r\| h^{-d}.
	\end{equation}
	Define $Z \coloneqq \bar{B} - B$. One has
	\[
	\lambda_{\max}(Z)
	\le
	\sum_{|s_1|,|s_2|\le p} |Z_{s_1,s_2}|.
	\]
	Define
	\[
	T_i^{(s_1,s_2)}
	\coloneqq
	\frac{1}{h^d} \l(\frac{X_i-x}{h}\r)^{s_1+s_2}K\l( \frac{X_i-x}{h} \r) -
	\int_{\rR^d} u^{s_1+s_2}K(u)p_X(x+hu\mid U)du.
	\]
	We have $\rElr{T_i^{(s_1,s_2)}} = 0$, $|T_i^{(s_1,s_2)}| \le 2 h^{-d}$, and
	\[
	\Varlr{T_i^{(s_1,s_2)}}
	\le
	\rElr{	\frac{1}{h^{2d}} \l(\frac{X_i-x}{h}\r)^{2s_1+2s_2}K^2\l( \frac{X_i-x}{h} \r) }
	=
	\frac{1}{h^d} \int_{\rR^d}  u^{2s_1+2s_2}K^2(u)p_X(x+hu|U)du
	\le
	\frac{\textcolor{black}{q}^{dl^\prime}}{h^{d}}.
	\]
	By Bernstein's inequality, one obtains
	\begin{align*}
	\rPlr{\lambda_{\max}(Z) \ge \textcolor{black}{q}^{dl^\prime}M^{-1}\mu_0^2 \delta /8 }
	&\le
	\rPlr{\sum_{|s_1|,|s_2|\le p} |Z_{s_1,s_2}| \ge \textcolor{black}{q}^{dl^\prime}M^{-1}\mu_0^2 \delta /8}
	\\
	& \le
	\sum_{|s_1|,|s_2|\le p}\rPlr{ |Z_{s_1,s_2}| \ge \textcolor{black}{q}^{dl^\prime}M^{-3}\mu_0^2 \delta /8}
	\\
	&=
	\sum_{|s_1|,|s_2|\le p}\rPlr{ \l|\frac{1}{n}\sum_{i=1}^nT_i^{(s_1,s_2)}\r| \ge h^{-d}M^{-3}\mu_0^2 \delta /8}
	\\
	&
	\le
	2M^2 \exp\l( \frac{-n\textcolor{black}{q}^{d(l^\prime-l)}M^{-6}\mu_0^4\delta^2/64}{2+M^{-3}\mu_0^2 \delta /6} \r).
	\end{align*}
	By Lemma \ref{lemma-L(P)-projection-closed-form}, $\l\|B^{-1}\r\| \le \textcolor{black}{q}^{-dl^\prime}\mu_0^{-1}$. That is, on the event $\l\{ \lambda_{\max}(Z) \le \textcolor{black}{q}^{dl^\prime}M^{-1}\mu_0^2 \delta /8 \r\}$, one has $\l\| B^{-\frac{1}{2}} Z B^{-\frac{1}{2}}\r\| \le M^{-1}\mu_0 \delta /8 $ in which case if $M^{-1} \mu_0\delta /8 < \frac{1}{2}$, one obtains
	\begin{align*}
	\l\|\bar{B}^{-1} - B^{-1} \r\|
	&=
	\l\|B^{-\frac{1}{2}} \l( \l( I + B^{-\frac{1}{2}} Z B^{-\frac{1}{2}} \r )^{-1} - I \r) B^{-\frac{1}{2}}\r\|
	\\
	&\le
	\l\|B^{-1}\r\| \l\| \l(I + B^{-\frac{1}{2}} Z B^{-\frac{1}{2}} \r )^{-1} - I \r\|
	\\
	&\le
	\textcolor{black}{q}^{-dl^\prime}\mu_0^{-1} \sum_{j=1}^\infty \l\| B^{-\frac{1}{2}} Z B^{-\frac{1}{2}}\r\|^j
	\\
	&\le
	\textcolor{black}{q}^{-dl^\prime}\mu_0^{-1} \sum_{j=1}^\infty(M^{-1} \mu_0\delta /8)^j
	\le
	\textcolor{black}{q}^{-dl^\prime}M^{-1} \delta/4.
	\end{align*}
	This inequality along with \eqref{eq-LPR-converge-to-L(P)-projection4} imply $J_2 \le \delta/4$. In other words,
	\[
	\rPlr{J_2 \ge \delta/4}
	\le
	\rPlr{\lambda_{\max}(Z) \ge \textcolor{black}{q}^{dl^\prime}M^{-1}\mu_0^2 \delta /8 }
	\le
	2M^2 \exp\l( \frac{-n\textcolor{black}{q}^{d(l^\prime-l)}M^{-6}\mu_0^4\delta^2/64}{2+M^{-3}\mu_0^2 \delta /6} \r).
	\]
	Combining this inequality with \eqref{eq-LPR-converge-to-L(P)-projection1} and \eqref{eq-LPR-converge-to-L(P)-projection3} gives
	\[
	\rPlr{	\l| \hat{\eta}^{\mathrm{LP}}(x; \cD, h, p) - \bm{\Gamma}_{\textcolor{black}{q^{-l}}}^p \eta(x;U) \r| \ge
		\delta}
	\le
	\constref{LPR-converge-to-L(P)-projection1} \exp\l( -\constref{LPR-converge-to-L(P)-projection2}n\textcolor{black}{q}^{d(l^\prime-l)}\delta^2 \r)
	\]
	if $M^{-1} \mu_0\delta /8 < \frac{1}{2}$.  This concludes the proof. \hfill$\blacksquare$

	\section{Auxiliary analysis for Section \ref{sec-model-assumptions}}

	\subsection{Analysis of Part 1 of Example \ref{exp:cost-smoothness-misspecification}}
	\paragraph{Step 1.} Following the proof of Theorem 4.1 in \cite{rigollet2010nonparametric}, we first construct a problem instance in $\cP(\beta, \alpha, d)$. Define $M \coloneqq \l \lfloor 2^{-1} c_0^{-1} \l( \frac{2 \log 2 }{T} \r)^{\frac{-\tilde{\beta}}{2\tilde{\beta}+d}} \r\rfloor ^{\frac{1}{\beta}}$ and let $\cB \coloneqq  \l\{ \sfB_m, \, m=1,\dots, M^d \r\}$ be a re-indexed collection of the hypercubes
	\[
	\sfB_m = \sfB_{\sfm} \coloneqq \l\{ x \in [0,1]^d:\, \frac{\sfm_i-1}{2^l} \le x_i \le \frac{\sfm_i}{2^l},\; i\in\{1,\dots,d\} \r\},
	\]
	for $\sfm=(\sfm_1,\dots,\sfm_d)$ with $\sfm_i \in \{1,\dots,M\}$. Consider the regular grid $\cQ = \l\{ a_1, a_2,\dots,  a_{M^{d}}\r\}$, where $a_k$ denotes the center of bin $\sfB_k, k=1,\dots, M^d$. Define $C \coloneqq 2^{\beta - 1} L \wedge \frac{1}{4}$ and let $\phi$ be defined as follows:
	\[
	\phi(x)
	= \begin{cases}
	(1-\|x\|_\infty)^\beta & \text{if } \|x\|_\infty\le1 \\
	0 & \text{o.w.}
	\end{cases}.
	\]

\noindent Define $m\coloneqq \lceil \mu M^{d-\alpha \beta}\rceil$, where $\mu \in (0,1)$ is chosen small enough to ensure $m\le M^d$. Define the payoff functions as follows:
	\[
	f_1(x) = \frac{1}{2} + \sum_{j=1}^{m} M^{-\beta} C \phi\l( M[x-a_j] \r), \qquad f_2(x) = \frac{1}{2},
	\]
	and assume that covariates are distributed uniformly. Similar to the proof of Theorem 4.1 in \cite{rigollet2010nonparametric}, one can show that the margin condition and smoothness condition in Assumptions \ref{assumption-margin} and \ref{assumption-Holder-smoothness} are satisfied for the constructed problem instance.
	
	\paragraph{Step 2.} Next, we lower bound the regret of $\texttt{ABSE}(\tilde{\beta})$ under the constructed problem instance. To do so, we use the same exact terminology and notation as in \cite{perchet2013multi}; for the sake of brevity, we do not re-introduce the notation here. By construction, for all  bins $\sfB$ with $|\sfB| = 2^{-k}$, $k=0,1,\dots, k_0$, we have $\underline{\cI}_{\sfB} = \cK = \{1,2\}$. Define the event $\cW_{\sfB,s} \coloneqq \{\underline{\cI}_{\sfB} \subseteq \rI_{\sfB,s} \} = \{\rI_{\sfB,s} = \cK \} $ and $\cV_{\sfB} \coloneqq \bigcap\limits_{\sfB^\prime\in \cP(\sfB)}\cW_{\sfB^\prime,t_{\sfB}}$. Let\vspace{-0.2cm}
	\[
	\cA_1
	\coloneqq
	\l\{\exists t\le T;  \exists\sfB \in \cL_t; \exists s\le l_{\sfB}: \rI_{\sfB,s} \not= \cK  \text{ and } |\sfB| \ge 2^{-k_0+1}\r\}
	\]
	denote the event where one of the arms is eliminated in at least one of the bins at depth less than $k_0$. One has:\vspace{-0.2cm}
	\begin{align}\label{eq:ABSE-misspecif1-prob-decomp}
	\rPlr{\cA_1}
	&\le
	\sum_{k=1}^{k_0-1}
	\sum_{|\sfB|=2^{-k}}
	\rPlr{\cV_{\sfB} \cap \bar{\cW}_{\sfB,t_{\sfB}}}.
	\end{align}
	Note that for any bin $\sfB$ with $|\sfB| \ge 2^{-k_0+1}$,
	$
	\l|\bar{f}^{(1)}_{\sfB}
	- \bar{f}^{(2)}_{\sfB}\r|
	<
	c_0
	|\sfB|^\beta
	\le
	\frac{\epsilon_{\sfB, l_\sfB}}{2}.
$
	This implies that $\cW_{\sfB}$ can only happen if either $\bar{f}^{(1)}_{\sfB}$ or $\bar{f}^{(2)}_{\sfB}$ does not belong to its respective confidence interval $[\bar{Y}^{(1)}_{\sfB,s} \pm \epsilon_{\sfB, s}]$ or $[\bar{Y}^{(2)}_{\sfB,s} \pm \epsilon_{\sfB, s}]$ for some $s\le l_{\sfB}$. Therefore, since $-\bar{f}^{(i)}_{\sfB}\le Y_s-\bar{f}^{(i)}_{\sfB}\le 1-\bar{f}^{(i)}_{\sfB}$,
	\begin{equation}\label{eq:ABSE-misspecif1-prob-upper}
	\rPlr{\cV_{\sfB} \cap \bar{\cW}_{\sfB,t_{\sfB}}}
	\le
	\rPlr{
		\exists s \le l_\sfB; \exists i\in\cK : \l| \bar{Y}^{(i)}_{\sfB,s} - \bar{f}^{(i)}_{\sfB}\r|	 \ge \frac{\epsilon_{\sfB, s}}{4}
	}
	\le \frac{4 l_{\sfB}}{T |\sfB|^d}.
	\end{equation}
	Putting together \eqref{eq:ABSE-misspecif1-prob-decomp} and \eqref{eq:ABSE-misspecif1-prob-upper}, one obtains
	\begin{align}\label{eq:Part1-A1}
	\rPlr{\cA_1}
	&\le
	\sum_{k=1}^{k_0-1}
	\frac{4 C_l 2^{-2\tilde{\beta}k} \log\l( T2^{(2\tilde{\beta}+d)k} \r)}{T 2^{-kd}}
	\le
	\frac{4 C_l 2^{-(2\tilde{\beta}-d) k_0} \log\l( T^2 \r)}{T}
	\le
	8 C_l  T^{ \frac{-4\tilde{\beta}}{2\tilde{\beta}+d} } \log T.
	\end{align}
	
	\paragraph{Step 3.}
	Let $\tilde{c} \coloneqq 2^{1-d-2\tilde{\beta}}c_0^{-2}\log 2$ and define
	\[
	\cA_2
	\coloneqq
	\l\{\exists t\le \tilde{c}T/2;  \exists\sfB \in \cL_t: |\sfB| \ge 2^{-k_0+1}\r\}
	\]
	to be the event that for some $t\le \tilde{c}T/2$ some bin at depth $k_0$ becomes live. Note that for a bin $\sfB$ to become live by $t=\lfloor \tilde{c}T/2\rfloor$, we need $l_{\sfp(\sfB)}$ number of covariates to fall into its parent $\sfp(\sfB)$ by $t=\lfloor \tilde{c}T/2\rfloor$. Let $Z_{\sfB,t}=\Indlr{X_t \in \sfp(\sfB)}$. Note that $|Z_t| \le 1$, $\rE Z_t = |\sfB|^d$, and $\Var Z_{\sfB,t} \le \rE Z_{\sfB,t}^2 = |\sfB|^d$. Hence, one can apply the Bernstein's inequality in Lemma \ref{lemma-Bernstein-inequality} to to $\sum\limits_{t=1}^{\lfloor \tilde{c}T/2\rfloor}Z_{\sfB,t}$ for $|\sfB| = 2^{-k_0}$ to obtain
	\begin{align}\label{eq:Part1-A2}
	\rPlr{\cA_2}
	&\le
	\sum_{|\sfB| = 2^{-k_0}} \rPlr{\sum_{t=1}^{\lfloor \tilde{c}T/2\rfloor}Z_{\sfB,t} \ge l_{\sfp(\sfB)}}
	\nonumber
	\\
	&\overset{(a)}{\le}
	\sum_{|\sfB| = 2^{-k_0}} \rPlr{\sum_{t=1}^{\lfloor \tilde{c}T/2\rfloor}Z_{\sfB,t} \ge c_0^{-2} |\sfp(\sfB)|^{-2\tilde{\beta}}}
	\nonumber
	\\
	&\le 		
	2^{k_0d}\exp\l(
	-\frac{c_0^{-4}2^{4(k_0-1)\tilde{\beta}-1}/2}{\tilde{c}T2^{(k_0-1)d-1}+c_0^{-2}2^{2(k_0-1)\tilde{\beta}-1}/3}
	\r)
	\nonumber
	\\
	&=
	2^{k_0d}\exp\l(
	-\frac{c_0^{-2}2^{2(k_0-1)\tilde{\beta}-1}/2}{\frac{1}{2}+1/3}
	\r)
	\nonumber
	\\
	&\le
	c_1T^{\frac{d}{2\tilde{\beta}+d}} \exp\l( -c_2 T^{\frac{2\tilde{\beta}}{2\tilde{\beta}+d}} \r) \le c_3T^{-1},
	\end{align}
	for some constants $c_1,c_2,c_3>0$, where (a) follows from  $l_\sfB$
	$
	l_\sfB
	\ge
	c_0^{-2} |\sfB|^{-2\tilde{\beta}}.
	$ by the definition of $l_\sfB$.
	
	\paragraph{Step 4.} Let $S\coloneqq \l\{x\in[0,1]^d: f_1(x)\not = \frac{1}{2}  \r\}$. Define the event
	\[
	\cA_3 \coloneqq \l\{ \sum_{t=1}^{\lfloor \tilde{c}T/2\rfloor}\Indlr{X_t \in S} < \tilde{c}mM^{-d}T/4 \r\}.
	\]
	Define $Z_t \coloneqq \Indlr{X_t \in S}$ and note that $|Z_t| \le 1, \rE Z_t = mM^d$, and $\Var Z_t\le \rE Z_t = mM^d$. As a result we can apply the Bernstein's inequality in Lemma \ref{lemma-Bernstein-inequality} to obtain
	\begin{align}\label{eq:Part1-A3}
	\rPlr{\cA_3}
	&\le
	\exp\l( -  \tilde{c}mM^{-d}T/20\r )
	\le
	\exp\l( -c_5 T^{\frac{2\tilde{\beta}+d-\alpha \tilde{\beta}}{2\tilde{\beta}+d}} \r)
	\overset{(a)}{\le}
	\exp\l( -c_4 T^{\frac{2\tilde{\beta}}{2\tilde{\beta}+d}} \r)
	\le
	c_5 T^{-1},
	\end{align}
	for some constants $c_4,c_5>0$, where (a) follows from the assumption that $\alpha\le \frac{1}{\beta}\le \frac{1}{\tilde{\beta}}$.
	
	\paragraph{Step 5.} Note that on the event $ \bar{\cA}_1 \cap \bar{\cA}_2$, the $\texttt{ABSE}(\tilde{\beta})$ has not eliminated any arms over any region of the covariate space up to time $t=\lfloor \tilde{c}T/2\rfloor$. On the other hand, on the event $\bar{\cA}_3$, up to time $t=\lfloor \tilde{c}T/2\rfloor$, at least $\tilde{c}mM^{-d}T/4$ number of covariates have fallen into $S$, where the first arm is strictly optimal. Recall the definition of the inferior sampling rate in \eqref{eq:inferior-sampling-def}. One has:\vspace{-0.1cm}
	\begin{align*}
	\mathcal{S}^{\texttt{ABSE}(\tilde{\beta})}(\sfP;T)
	&\ge \mathbb{E}^\pi \left[ \sum\limits_{t=1}^T \Indlr{ f_{\pi^\ast_t}(X_t) \neq f_{\pi_t}(X_t)}\middle | \bar{\cA}_1 \cap \bar{\cA}_2 \cap \bar{\cA}_3 \right]
	\rPlr{\bar{\cA}_1 \cap \bar{\cA}_2 \cap \bar{\cA}_3 }
	\\
	&\overset{(a)}{\ge}
	\tilde{c}mM^{-d}T/8 \l(1- 8 C_l  T^{ \frac{-4\tilde{\beta}}{2\tilde{\beta}+d} } \log T  - c_3 T^{-1} -c_5 T^{-1} \r)
	\\
	&
	\ge
	c_6 T^{\frac{\alpha \tilde{\beta}}{2\tilde{\beta}+d}},
	\end{align*}
	for some constant $c_6>0$, where (a) follows from \eqref{eq:Part1-A1}, \eqref{eq:Part1-A2}, and \eqref{eq:Part1-A3}. Using this inequality along with Lemma \ref{lemma-regret-inferior-sampling-rate}, the result follows. \hfill$\blacksquare$

	\subsection{Analysis of Part 2 of Example \ref{exp:cost-smoothness-misspecification}}
\paragraph{Step 1.} let $\tilde k \coloneqq \l\lceil \frac{\log_2\l(T/2\log 2\r)}{ (2\tbeta+1) }\r\rceil$, $M\coloneqq2^{\tilde k}$, and $\cB \coloneqq  \l\{ \sfB_m, \, m=1,\dots, M^{d} \r\}$ be a re-indexed collection of the hypercubes\vspace{-0.1cm}
\[
\sfB_m = \sfB_{\sfm} \coloneqq \l\{ x \in [0,1]^d:\, \frac{\sfm_i-1}{2^l} \le x_i \le \frac{\sfm_i}{2^l},\; i\in\{1,\dots,d\} \r\},\vspace{-0.2cm}
\]
for $\sfm=(\sfm_1,\dots,\sfm_d)$ with $\sfm_i \in \{1,\dots,M\}$. Consider the regular grid $\cQ = \l\{ a_1, a_2,\dots,  a_{M^{d}}\r\}$, where $a_k$ denotes the center of bin $\sfB_k, k=1,\dots, M^d$. Define $C \coloneqq 2^{\beta - 1} L \wedge \frac{1}{4}$ and let $\phi$ be defined as follows:
\[
\phi(x)
= \begin{cases}
	(1-|x_1|)^\beta & \text{if } |x_1|\le1 \\
	0 & \text{o.w.}
\end{cases}.
\]

\noindent Define $m\coloneqq 2 \times \lceil \mu M^{1-\alpha \beta}\rceil$, where $\mu \in (0,1)$ is chosen small enough to ensure $m\le M^d$. Define the payoff functions as follows:
\[
f_1(x) = \frac{1}{2} + \sum_{j=1}^{m} M^{-\beta} C (-1)^j \phi\l( M[x_1-\tilde a_j] \r), \qquad f_2(x) = \frac{1}{2},
\]
where $\tilde a_j = \frac{j + \frac{1}{2}}{M}$ for each $j=1,\ldots,m$, and assume that covariates are distributed uniformly. Similar to the proof of Theorem 4.1 in \cite{rigollet2010nonparametric}, one can show that the margin condition and smoothness condition in Assumptions \ref{assumption-margin} and \ref{assumption-Holder-smoothness} are satisfied for the constructed problem instance.

\paragraph{Step 2.} Next, we lower bound the regret of $\texttt{ABSE}(\tilde{\beta})$ under the constructed problem instance. To do so, we use the same exact terminology and notation as in \cite{perchet2013multi}; for the sake of brevity, we do not re-introduce the notation here. By construction, for all  bins $\sfB$ with $|\sfB| = 2^{-k}$, $k=0,1,\dots, k_0$, we have
\[
\bar{f}^{(i)}_{\sfB} = \frac{1}{P_X(\sfB) }\int_{\sfB} f_{k}(x)ddP_X(x) =
\frac{1}{2}, \qquad i \in \cK.
\]
Define the event $\cW_{\sfB,s} \coloneqq \{\rI_{\sfB,s} = \cK \} $ and $\cV_{\sfB} \coloneqq \bigcap\limits_{\sfB^\prime\in \cP(\sfB)}\cW_{\sfB^\prime,t_{\sfB}}$. Let\vspace{-0.2cm}
\[
\cA_1
\coloneqq
\l\{\exists t\le T;  \exists\sfB \in \cL_t; \exists s\le l_{\sfB}: \rI_{\sfB,s} \not= \cK  \text{ and } |\sfB| \ge 2^{-k_0+1}\r\}
\]
denote the event where one of the arms is eliminated in at least one of the bins at depth less than $k_0$. One has:\vspace{-0.2cm}
\begin{align}\label{eq:ABSE-misspecif2-prob-decomp}
	\rPlr{\cA_1}
	&\le
	\sum_{k=1}^{k_0-1}
	\sum_{|\sfB|=2^{-k}}
	\rPlr{\cV_{\sfB} \cap \bar{\cW}_{\sfB,t_{\sfB}}}.
\end{align}
Note that for any bin $\sfB$ with $|\sfB| \ge 2^{-k_0+1}$,
$
\l|\bar{f}^{(1)}_{\sfB}
- \bar{f}^{(2)}_{\sfB}\r|
=
0
<
\frac{\epsilon_{\sfB, l_\sfB}}{2}.
$
This implies that $\cW_{\sfB}$ can only happen if either $\bar{f}^{(1)}_{\sfB}$ or $\bar{f}^{(2)}_{\sfB}$ does not belong to its respective confidence interval $[\bar{Y}^{(1)}_{\sfB,s} \pm \epsilon_{\sfB, s}]$ or $[\bar{Y}^{(2)}_{\sfB,s} \pm \epsilon_{\sfB, s}]$ for some $s\le l_{\sfB}$. Therefore, since $-\bar{f}^{(i)}_{\sfB}\le Y_s-\bar{f}^{(i)}_{\sfB}\le 1-\bar{f}^{(i)}_{\sfB}$,
\begin{equation}\label{eq:ABSE-misspecif2-prob-upper}
	\rPlr{\cV_{\sfB} \cap \bar{\cW}_{\sfB,t_{\sfB}}}
	\le
	\rPlr{
		\exists s \le l_\sfB; \exists i\in\cK : \l| \bar{Y}^{(i)}_{\sfB,s} - \bar{f}^{(i)}_{\sfB}\r|	 \ge \frac{\epsilon_{\sfB, s}}{4}
	}
	\le \frac{4 l_{\sfB}}{T |\sfB|^d}.
\end{equation}
Putting together \eqref{eq:ABSE-misspecif2-prob-decomp} and \eqref{eq:ABSE-misspecif2-prob-upper}, one obtains
\begin{align}\label{eq:Part2-A1}
	\rPlr{\cA_1}
	&\le
	\sum_{k=1}^{k_0-1}
	\frac{4 C_l 2^{-2\tilde{\beta}k} \log\l( T2^{(2\tilde{\beta}+d)k} \r)}{T 2^{-kd}}
	\le
	\frac{4 C_l 2^{-(2\tilde{\beta}-d) k_0} \log\l( T^2 \r)}{T}
	\le
	8 C_l  T^{ \frac{-4\tilde{\beta}}{2\tilde{\beta}+d} } \log T.
\end{align}

\paragraph{Step 3.}
Let $\tilde{c} \coloneqq 2^{1-d-2\tilde{\beta}}c_0^{-2}\log 2$ and define
\[
\cA_2
\coloneqq
\l\{\exists t\le \tilde{c}T/2;  \exists\sfB \in \cL_t: |\sfB| < 2^{-k_0+1}\r\}
\]
to be the event that for some $t\le \tilde{c}T/2$ some bin at depth $k_0$ becomes live. Note that for a bin $\sfB$ to become live by $t=\lfloor \tilde{c}T/2\rfloor$, we need $l_{\sfp(\sfB)}$ number of covariates to fall into its parent $\sfp(\sfB)$ by $t=\lfloor \tilde{c}T/2\rfloor$. Let $Z_{\sfB,t}=\Indlr{X_t \in \sfp(\sfB)}$. Note that $|Z_t| \le 1$, $\rE Z_t = |\sfB|^d$, and $\Var Z_{\sfB,t} \le \rE Z_{\sfB,t}^2 = |\sfB|^d$. Hence, one can apply the Bernstein's inequality in Lemma \ref{lemma-Bernstein-inequality} to to $\sum\limits_{t=1}^{\lfloor \tilde{c}T/2\rfloor}Z_{\sfB,t}$ for $|\sfB| = 2^{-k_0}$ to obtain
\begin{align}\label{eq:Part2-A2}
	\rPlr{\cA_2}
	&\le
	\sum_{|\sfB| = 2^{-k_0}} \rPlr{\sum_{t=1}^{\lfloor \tilde{c}T/2\rfloor}Z_{\sfB,t} \ge l_{\sfp(\sfB)}}
	\nonumber
	\\
	&\overset{(a)}{\le}
	\sum_{|\sfB| = 2^{-k_0}} \rPlr{\sum_{t=1}^{\lfloor \tilde{c}T/2\rfloor}Z_{\sfB,t} \ge c_0^{-2} |\sfp(\sfB)|^{-2\tilde{\beta}}}
	\nonumber
	\\
	&\le 		
	2^{k_0d}\exp\l(
	-\frac{c_0^{-4}2^{4(k_0-1)\tilde{\beta}-1}/2}{\tilde{c}T2^{(k_0-1)d-1}+c_0^{-2}2^{2(k_0-1)\tilde{\beta}-1}/3}
	\r)
	\nonumber
	\\
	&=
	2^{k_0d}\exp\l(
	-\frac{c_0^{-2}2^{2(k_0-1)\tilde{\beta}-1}/2}{\frac{1}{2}+1/3}
	\r)
	\nonumber
	\\
	&\le
	c_1T^{\frac{d}{2\tilde{\beta}+d}} \exp\l( -c_2 T^{\frac{2\tilde{\beta}}{2\tilde{\beta}+d}} \r) \le c_3T^{-1},
\end{align}
for some constants $c_1,c_2,c_3>0$, where (a) follows from  $l_\sfB$
$
l_\sfB
\ge
c_0^{-2} |\sfB|^{-2\tilde{\beta}}.
$ by the definition of $l_\sfB$.

\paragraph{Step 2.} Let $S_1 \coloneqq \l\{x\in[0,1]^d: f_1(x) > \frac{1}{2}  \r\}$ and $S_2 \coloneqq \l\{x\in[0,1]^d: f_1(x) < \frac{1}{2}  \r\}$. Define the events
\[
\cA_{31} \coloneqq \l\{ \sum_{t=1}^{T}\Indlr{X_t \in S_1} < \tilde{c}mM^{-d}T/8 \r\}, \qquad \cA_{32} \l\{ \sum_{t=1}^{T}\Indlr{X_t \in S_2} < \tilde{c}mM^{-d}T/8 \r\},
\]
and let $\cA_{3} = \cA_{31} \cap \cA_{32}$.
Define $Z_t \coloneqq \Indlr{X_t \in S_1}$ and note that $|Z_t| \le 1, \rE Z_t = \tilde{c}mM^{-1}/4$, and $\Var Z_t\le \rE Z_t = \tilde{c}mM^{-1}/4$. As a result we can apply the Bernstein's inequality in Lemma \ref{lemma-Bernstein-inequality} to obtain
\begin{align*}
	\rPlr{\cA_{31}}
	&\le
	\exp\l( -  \tilde{c}mM^{-1}T/40\r )
	\le
	\exp\l( -c_5 T^{\frac{2\tilde{\beta}+d-\alpha {\beta}}{2\tilde{\beta}+d}} \r)
	\overset{(a)}{\le}
	\exp\l( -c_4 T^{\frac{2\tilde{\beta}}{2\tilde{\beta}+d}} \r)
	\le
	c^\prime_5 T^{-1},
\end{align*}
for some constants $c_4,c^\prime_5>0$, where (a) follows from the assumption that $\alpha\le \frac{1}{\beta}$. A similar upper bound can be shown for the probability of the event $\cA_{32}$, which implies
\begin{align}\label{eq:Part2-A3}
	\rPlr{\cA_{3}}
	&
	\le
	c_5 T^{-1},
\end{align}
for some constants $c_5>0$.

\paragraph{Step 5.} Note that on the event $ \bar{\cA}_1 \cap \bar{\cA}_2$, the $\texttt{ABSE}(\tilde{\beta})$ has not eliminated any arms over any region of the covariate space up to time $t=\lfloor \tilde{c}T/2\rfloor$. On the other hand, on the event $\bar{\cA}_3$, up to time $t=\lfloor \tilde{c}T/2\rfloor$, at least $\tilde{c}mM^{-d}T/8$ number of covariates have fallen into $S_1$ and also $S_2$, where the first arm and the second arm are strictly optimal, respectively. Recall the definition of the inferior sampling rate in \eqref{eq:inferior-sampling-def}. One has:\vspace{-0.1cm}
\begin{align*}
	\mathcal{S}^{\texttt{ABSE}(\tilde{\beta})}(\sfP;T)
	&\ge \mathbb{E}^\pi \left[ \sum\limits_{t=1}^T \Indlr{ f_{\pi^\ast_t}(X_t) \neq f_{\pi_t}(X_t)}\middle | \bar{\cA}_1 \cap \bar{\cA}_2 \cap \bar{\cA}_3 \right]
	\rPlr{\bar{\cA}_1 \cap \bar{\cA}_2 \cap \bar{\cA}_3 }
	\\
	&\overset{(a)}{\ge}
	\tilde{c}mM^{-1}T/8 \l(1- 8 C_l  T^{ \frac{-4\tilde{\beta}}{2\tilde{\beta}+d} } \log T  - c_3 T^{-1} -c_5 T^{-1} \r)
	\\
	&
	\ge
	c_6 T^{\frac{\alpha \beta}{2\tilde{\beta}+d}},
\end{align*}
for some constant $c_6>0$, where (a) follows from \eqref{eq:Part2-A1}, \eqref{eq:Part2-A2}, and \eqref{eq:Part2-A3}. Using this inequality along with Lemma \ref{lemma-regret-inferior-sampling-rate}, the result follows. \hfill$\blacksquare$

	\section{Proof of auxiliary lemmas}\label{app:auxiliary}

	\subsection{Proof of Lemma \ref{lemma-multip-presrves-smoothness}}
	\begin{lemma*}
		Suppose $f\in \cH_{\rR^d}(\beta, L)$ for some $\cX \in \rR^d$,  $0<\beta\le 1$ and $L>0$, and define the function $g$ such that $g(x) = C^{-\beta} f(Cx)$ for all $x\in \rR^d$ and some $C>0$. Then, $g \in   \cH_{\rR^d}(\beta, L)$.
	\end{lemma*}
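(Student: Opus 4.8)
The plan is to exploit the fact that $0<\beta\le 1$ forces $\lfloor\beta\rfloor=0$, so that Definition~\ref{def-holder-class} collapses to the classical H\"older condition: a function $h$ lies in $\cH_{\rR^d}(\beta,L)$ precisely when $h$ is continuous (zero times continuously differentiable) and $|h(x')-h(x)|\le L\|x-x'\|_\infty^\beta$ for all $x,x'\in\rR^d$, because the degree-$\lfloor\beta\rfloor$ Taylor expansion $h_x$ is just the constant $h_x(x')=h(x)$. I would state this reduction explicitly as the first step, since it is the only place where the hypothesis $\beta\le 1$ enters.

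Next I would verify the regularity requirement: since $f\in\cH_{\rR^d}(\beta,L)$ is in particular continuous, and $x\mapsto Cx$ is continuous, the composition $x\mapsto f(Cx)$ is continuous, hence so is $g(x)=C^{-\beta}f(Cx)$. This handles the ``$\lfloor\beta\rfloor$ times continuously differentiable'' clause (with $\lfloor\beta\rfloor=0$).

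The final step is a direct estimate. For arbitrary $x,x'\in\rR^d$, write
\[
\l|g(x')-g(x)\r|
=
C^{-\beta}\l|f(Cx')-f(Cx)\r|
\le
C^{-\beta}L\,\|Cx'-Cx\|_\infty^{\beta}
=
C^{-\beta}L\,C^{\beta}\|x'-x\|_\infty^{\beta}
=
L\,\|x'-x\|_\infty^{\beta},
\]
where the inequality uses that $f\in\cH_{\rR^d}(\beta,L)$ applied at the points $Cx,Cx'$, and the middle equality uses positive homogeneity of $\|\cdot\|_\infty$. Combining continuity of $g$ with this bound gives $g\in\cH_{\rR^d}(\beta,L)$.

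There is no real obstacle here: the statement is essentially a scaling identity. The only point that deserves care is the observation that $\lfloor\beta\rfloor=0$ for $0<\beta\le1$, so one need not worry about matching derivatives of $f$ and $g$ (which would be the subtle part if $\beta>1$); the chain rule would then introduce factors of $C$ that interact with the normalization $C^{-\beta}$, but that case is outside the scope of this lemma.
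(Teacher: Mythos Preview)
Your proof is correct and is essentially identical to the paper's: both reduce to the one-line estimate $|g(x)-g(y)|=C^{-\beta}|f(Cx)-f(Cy)|\le C^{-\beta}L\|Cx-Cy\|_\infty^\beta=L\|x-y\|_\infty^\beta$. The paper omits the explicit remarks about $\lfloor\beta\rfloor=0$ and continuity that you include, but the argument is the same.
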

	
	\begin{proof}
		For any $x,y\in \rR^d$, one has
		\begin{align*}
		\l| g(x) - g(y)\r|
		&=
		C^{-\beta} \l|  f(Cx) -  f(Cy)\r|
		\le
		C^{-\beta}L \l\|  Cx -  Cy\r\|_\infty^\beta
		=
		L \l\|  x -  y\r\|_\infty^\beta.
		\end{align*}
		This concludes the proof.
	\end{proof}

	\subsection{Proof of Lemma \ref{lemma-max-presrves-smoothness}}
	\begin{lemma*}
		Suppose $f,g\in \cH_{\cX}(\beta, L)$ for some $\cX \subseteq \rR^d$,  $0<\beta\le 1$ and $L>0$, and define the functions $h_1 \coloneqq \max(f,g)$ and $h_2 \coloneqq \min(f,g)$. Then, $h_1, h_2\in   \cH_{\cX}(\beta, L)$.
	\end{lemma*}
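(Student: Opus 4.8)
Since $0<\beta\le 1$ we have $\lfloor\beta\rfloor=0$, so the degree-$\lfloor\beta\rfloor$ Taylor expansion of any function at a point is just its value there, $f_x(x')=f(x)$. Hence membership in $\cH_\cX(\beta,L)$ reduces to two conditions: the function is continuous on $\cX$ (``$\lfloor\beta\rfloor$ times continuously differentiable''), and $|f(x)-f(x')|\le L\|x-x'\|_\infty^\beta$ for all $x,x'\in\cX$. The plan is to check both conditions for $h_1=\max(f,g)$ and $h_2=\min(f,g)$ using only these reduced hypotheses on $f$ and $g$.

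First I would record the elementary pointwise estimate: for any reals $a,b,c,e$,
\[
\bigl|\max(a,b)-\max(c,e)\bigr|\le \max\bigl(|a-c|,\,|b-e|\bigr),
\]
which follows by a short case analysis (assume without loss of generality that $\max(a,b)\ge\max(c,e)$ and that $a=\max(a,b)$; then $0\le \max(a,b)-\max(c,e)\le a-c\le|a-c|$). The same bound for $\min$ follows from $\min(a,b)=-\max(-a,-b)$, or by an identical case analysis. Applying this with $a=f(x)$, $b=g(x)$, $c=f(x')$, $e=g(x')$ gives, for all $x,x'\in\cX$,
\[
\bigl|h_1(x)-h_1(x')\bigr|
\le \max\bigl(|f(x)-f(x')|,\,|g(x)-g(x')|\bigr)
\le \max\bigl(L\|x-x'\|_\infty^\beta,\,L\|x-x'\|_\infty^\beta\bigr)
= L\|x-x'\|_\infty^\beta,
\]
and likewise $|h_2(x)-h_2(x')|\le L\|x-x'\|_\infty^\beta$.

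It remains to note that $h_1$ and $h_2$ are continuous on $\cX$: both $f$ and $g$ are continuous (being $\beta$-H\"older with $\beta>0$), and the pointwise maximum and minimum of two continuous functions are continuous. Combining continuity with the H\"older bound just derived shows $h_1,h_2\in\cH_\cX(\beta,L)$, which completes the argument. I do not expect any genuine obstacle here; the only point that requires care is observing at the outset that $\lfloor\beta\rfloor=0$ collapses Definition~\ref{def-holder-class} to ordinary $\beta$-H\"older continuity, after which everything is elementary.
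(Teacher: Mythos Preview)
Your proposal is correct and takes essentially the same approach as the paper: both arguments reduce to a short case analysis on which of $f$ or $g$ attains the maximum at the two points in question. You package this as the real-number inequality $|\max(a,b)-\max(c,e)|\le\max(|a-c|,|b-e|)$ and then apply it, whereas the paper carries out the case split directly on $h_1(x)$ and $h_1(y)$; the underlying reasoning is the same.
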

	\begin{proof}
		We only prove the result for the function $h_1$. A similar analysis can be used for $h_2.$ Fix some $x,y\in \cX$. If $h_1(x)=f(x)$ and $h_1(y)=f(y)$, or $h_1(x)=g(x)$ and $h_1(y)=g(y)$ then, one has
		\[
		|h_1(x) - h_1(y)| \le L\|x-y \|_\infty^\beta.
		\]
		Now suppose $h_1(x)=f(x)$ and $h_1(y)=g(y)$. Without loss of generality, assume that $f(x) \le g(y) $ then, one has
		\[
		|h_1(x) - h_1(y)| \le
		|g(x) - g(y)| \le
		L\|x-y \|_\infty^\beta.
		\]
		The case $h_1(x)=f(x)$ and $h_1(y)=g(y)$ can be analyzed similarly. This concludes the proof.
	\end{proof}

	\subsection{Proof of Lemma \ref{Tsybakov2008introduction-lemma-2.6}}\vspace{-0.1cm}
	\begin{lemma*}
		Let $\rho_0,\rho_1$ be two probability distributions supported on some set $\cX$, with $\rho_0$ absolutely continuous with respect to $\rho_1$. Then for any measurable function $\Psi : \cX \rightarrow \{0, 1\}$, one has:\vspace{-0.2cm}
		\begin{equation*}
		\mathbb{P}_{\rho_0} \{\Psi(X) = 1\} + \mathbb{P}_{\rho_1} \{\Psi(X) = 0\} \ge \frac{1}{2} \exp(-\mathrm{KL}(\rho_0,\rho_1)).\vspace{-0.1cm}
		\end{equation*}
	\end{lemma*}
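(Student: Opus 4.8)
The plan is the classical two-step reduction behind Le Cam's two-point method: first pass to densities and bound the sum of the two error probabilities below by a testing affinity, then bound that affinity below in terms of $\mathrm{KL}(\rho_0,\rho_1)$ via a Cauchy--Schwarz/Jensen argument (the Bretagnolle--Huber inequality). If $\mathrm{KL}(\rho_0,\rho_1)=\infty$ the right-hand side is $0$ and there is nothing to prove, so I would assume it is finite.

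\emph{Step 1.} Fix a common dominating measure, e.g.\ $\mu=\rho_0+\rho_1$, and let $p_i=d\rho_i/d\mu$. Since $\rho_0\ll\rho_1$, one checks that $\{p_0>0\}\subseteq\{p_1>0\}$ up to a $\mu$-null set, so $\log(p_0/p_1)$ is well defined $\rho_0$-a.s. For any measurable $\Psi:\cX\to\{0,1\}$ the sets $\{\Psi=1\}$ and $\{\Psi=0\}$ partition $\cX$, and since $p_0\ge\min(p_0,p_1)$ on $\{\Psi=1\}$ and $p_1\ge\min(p_0,p_1)$ on $\{\Psi=0\}$,
\[
\mathbb{P}_{\rho_0}\{\Psi(X)=1\}+\mathbb{P}_{\rho_1}\{\Psi(X)=0\}
=\int_{\{\Psi=1\}}p_0\,d\mu+\int_{\{\Psi=0\}}p_1\,d\mu
\ge\int\min(p_0,p_1)\,d\mu .
\]
This reduces the lemma to proving $\int\min(p_0,p_1)\,d\mu\ge\tfrac12\exp(-\mathrm{KL}(\rho_0,\rho_1))$.

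\emph{Step 2.} Let $A=\int\sqrt{p_0p_1}\,d\mu$ denote the Hellinger affinity. Using $\min(p_0,p_1)\,\max(p_0,p_1)=p_0p_1$ together with Cauchy--Schwarz,
\[
A^2=\left(\int\sqrt{\min(p_0,p_1)}\,\sqrt{\max(p_0,p_1)}\,d\mu\right)^2
\le\left(\int\min(p_0,p_1)\,d\mu\right)\left(\int\max(p_0,p_1)\,d\mu\right),
\]
and since $\int\max(p_0,p_1)\,d\mu=2-\int\min(p_0,p_1)\,d\mu\le2$, this yields $\int\min(p_0,p_1)\,d\mu\ge\tfrac12 A^2$. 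It remains to lower bound $A$: writing $A=\int_{\{p_0>0\}}p_0\sqrt{p_1/p_0}\,d\mu=\mathbb{E}_{\rho_0}\!\big[\exp(-\tfrac12\log(p_0/p_1))\big]$ and applying Jensen's inequality to the convex function $u\mapsto e^{-u/2}$ gives $A\ge\exp(-\tfrac12\,\mathbb{E}_{\rho_0}[\log(p_0/p_1)])=\exp(-\tfrac12\,\mathrm{KL}(\rho_0,\rho_1))$. Chaining the three displayed inequalities proves the claim.

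I do not anticipate a genuine obstacle: the argument is short and self-contained. The only points needing care are the measure-theoretic bookkeeping in Step 1 (well-definedness of $\log(p_0/p_1)$ under $\rho_0\ll\rho_1$, and that the identity $A=\mathbb{E}_{\rho_0}[\sqrt{p_1/p_0}]$ is unaffected by the set $\{p_0=0\}$), and observing that $\int\max(p_0,p_1)\,d\mu\le2$ is used only as a crude bound, which is why the constant $\tfrac12$---rather than something sharper---appears.
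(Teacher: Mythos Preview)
Your proof is correct and follows the same two-step structure as the paper: first reduce the sum of error probabilities to the testing affinity $\int\min\{d\rho_0,d\rho_1\}$, then bound that affinity below by $\tfrac12\exp(-\mathrm{KL}(\rho_0,\rho_1))$. The only difference is that the paper dispatches the second step by citing Lemma~2.6 of Tsybakov (2008), whereas you supply a self-contained Bretagnolle--Huber argument (Cauchy--Schwarz to pass to the squared Hellinger affinity, then Jensen to reach the KL bound). Your version is more explicit but not genuinely different in approach.
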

	
	\begin{proof}
		Define $\mathcal{B}$ to be the event that $\Psi(X) = 1$. One has\vspace{-0.1cm}
		\begin{align*}
		\mathbb{P}_{\rho_0} \{\Psi(X) = 1\} + \mathbb{P}_{\rho_1} \{\Psi(X) = 0\}
		=
		\mathbb{P}_{\rho_0} \{\mathcal{B}\} + \mathbb{P}_{\rho_1} \{\bar{\mathcal{B}}\}
		\ge
		\int \min \{d\rho_0, d\rho_1\}
		\ge
		\frac{1}{2} \exp(-\mathrm{KL}(\rho_0,\rho_1)),\vspace{-0.1cm}
		\end{align*}
		where the last inequality follows from \citealt[Lemma 2.6]{Tsybakov2008introduction}.
	\end{proof}

	\vspace{-0.1cm}
	\subsection{Proof of Lemma \ref{lemma:product-Holder-smooth-funcs}}\vspace{-0.1cm}
	\begin{lemma*}
		Suppose $f,g\in \cH_{\cX}(\beta, L)$ for some $\cX \subseteq [0,1]$,  $\beta>0$, and $L>0$, and define the function $h \coloneqq f \cdot g$ as the product of $f$ and $g$. Then, $h \in \cH(\beta, L^\prime)$ for some $L^\prime > 0$.
	\end{lemma*}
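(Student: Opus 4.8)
The plan is to work in dimension one (automatic, since $\mathcal{X}\subseteq[0,1]$ forces $d=1$), write $n\coloneqq\lfloor\beta\rfloor$ so that $\beta-n\in(0,1]$, and build the argument around the elementary identity that the degree-$n$ Taylor polynomial of a product equals the degree-$n$ truncation of the product of the degree-$n$ Taylor polynomials. First I would record a boundedness fact that supplies all the constants below: from the Hölder condition one gets, by a short descending induction on the order of differentiation, that $f^{(i)},g^{(i)}$ for $0\le i\le n$ are uniformly bounded on $\mathcal{X}$ by a finite constant $M$ --- the $n$-th derivatives are $(\beta-n)$-Hölder, hence of oscillation at most a constant on the bounded set $\mathcal{X}$, which (after adding a finite value at any reference point) bounds $f^{(n)},g^{(n)}$; the lower-order derivatives and the functions themselves are then bounded in turn. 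The Leibniz rule then gives that $h=f\cdot g$ is $n$ times continuously differentiable with $h^{(j)}=\sum_{i=0}^{j}\binom{j}{i}f^{(i)}g^{(j-i)}$ bounded on $\mathcal{X}$ for all $j\le n$, which is the differentiability part of Definition~\ref{def-holder-class}.

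Next I would fix $x,x'\in\mathcal{X}$, put $P\coloneqq f_x$ and $Q\coloneqq g_x$ (the degree-$n$ Taylor polynomials at $x$, as in Definition~\ref{def-holder-class}), and write $[\,\cdot\,]_{\le n}$ for truncation of a polynomial to its monomials of degree at most $n$. Comparing coefficients via the Leibniz rule shows that $h_x$ and $[PQ]_{\le n}$ agree in all degrees $\le n$, hence $h_x=[PQ]_{\le n}$. Writing $f(x')=P(x')+R_f$ and $g(x')=Q(x')+R_g$ with $|R_f|,|R_g|\le L\,\|x-x'\|_\infty^\beta$, multiplying out and subtracting $h_x=[PQ]_{\le n}$ yields
\begin{equation*}
h(x')-h_x(x') \;=\; \bigl(PQ-[PQ]_{\le n}\bigr)(x') \;+\; P(x')\,R_g \;+\; Q(x')\,R_f \;+\; R_f\,R_g .
\end{equation*}

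Finally I would bound the four terms, using $t\coloneqq\|x-x'\|_\infty\le1$ (since $\mathcal{X}\subseteq[0,1]$). The polynomial $PQ-[PQ]_{\le n}$ is a sum of monomials $c_k(x'-x)^k$ with $n+1\le k\le 2n$ and $|c_k|$ controlled by $M$ from the first step; since $k\ge n+1>\beta$ and $t\le1$ we have $t^k\le t^\beta$, so this term is $O(t^\beta)$. The values $P(x'),Q(x')$ are bounded (bounded coefficients, $t\le1$), so $|P(x')R_g|+|Q(x')R_f|=O(t^\beta)$, and $|R_fR_g|\le L^2t^{2\beta}\le L^2t^\beta$. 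Adding up gives $|h(x')-h_x(x')|\le L'\,\|x-x'\|_\infty^\beta$ for some $L'>0$ (depending on $\beta,L$ and the sup-norms of the derivatives of $f,g$, which the statement allows), i.e.\ $h\in\mathcal{H}_{\mathcal{X}}(\beta,L')$. The only genuinely delicate point is the preliminary boundedness of the derivatives on a possibly non-closed $\mathcal{X}$; that is where I would be most careful, leaning on the bounded diameter of $\mathcal{X}$ and the fact that the top-order derivatives are $(\beta-n)$-Hölder. Everything else is the Leibniz rule and the inequality $t\le 1$.
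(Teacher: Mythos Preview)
Your proof is correct and follows essentially the same approach as the paper: both arguments rest on the identity $h_x=[f_xg_x]_{\le\lfloor\beta\rfloor}$ (the Leibniz rule), then bound the discrepancy using the H\"older remainders of $f,g$ together with the boundedness of the Taylor polynomials on $\mathcal{X}\subseteq[0,1]$. Your four-term expansion of $(P+R_f)(Q+R_g)-[PQ]_{\le n}$ is equivalent to the paper's three-term telescoping decomposition, and you are if anything more careful than the paper in justifying the uniform boundedness of the derivatives.
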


	\begin{proof}
		Note that $h$ is $\lfloor \beta \rfloor$ times continuously differentiable. Hence, we only need to show that there exists some $L^\prime > 0$ such that
		for any $x,x^\prime \in \cX$,\vspace{-0.2cm}
		\[
		\l| h(x^\prime) -h_x(x^\prime) \r|
		\le
		L^\prime\|x-x^\prime\|_\infty^{\beta}.\vspace{-0.2cm}
		\]
		By the triangle inequality, one has\vspace{-0.15cm}
		\begin{align}\label{eq:product-Holder-smooth-funcs1}
			\l| h(x^\prime) -h_x(x^\prime) \r|
			\le
			\l| (f\cdot g)_x(x^\prime) -f_x(x^\prime)\cdot g_x(x^\prime) \r|
			&+
			\l| f_x(x^\prime)\cdot g(x^\prime)-f_x(x^\prime)\cdot g_x(x^\prime) \r|
			\nonumber
			\\
			&+
			\l| f_x(x^\prime)\cdot g(x^\prime)-f(x^\prime)\cdot g(x^\prime) \r|.
		\end{align}
		Since $\cX \subseteq [0,1]$ and $f,g\in \cH_{\cX}(\beta, L)$, one has for some $L_1, L_2>0$: \vspace{-0.15cm}
		\begin{align}\label{eq:product-Holder-smooth-func2}
			\l| f_x(x^\prime)\cdot g(x^\prime)-f_x(x^\prime)\cdot g_x(x^\prime) \r|
			=
			|f_x(x^\prime)| \cdot \l|  g(x^\prime)- g_x(x^\prime) \r|
			\le
			L_1 \|x-x^\prime\|_\infty^{\beta};
			\end{align}
			\vspace{-1cm}
			\begin{align}
			\label{eq:product-Holder-smooth-func3}
			  \l| f_x(x^\prime)\cdot g(x^\prime)-f(x^\prime)\cdot g(x^\prime) \r|
			  =
			  \l| f_x(x^\prime) -f(x^\prime)\r| |g(x^\prime)|
			  \le
			  L_2 \|x-x^\prime\|_\infty^{\beta}.
		\end{align}
		Furthermore,  let $\{a_s\}_{0 \le s \le \lfloor \beta \rfloor}$, $\{b_s\}_{0 \le s \le \lfloor \beta \rfloor}$, and $\{c_s\}_{0 \le s \le \lfloor \beta \rfloor}$ be the coefficients of the Taylor expansions    $f_x(x^\prime), g_x(x^\prime)$, and $h_x(x^\prime)$, respectively. Notably, $c_s = \sum_{s^\prime = 0}^s a_{s^\prime} b_{s-s^\prime}.$ This equality implies that \vspace{-0.2cm}
		\begin{equation}\label{eq:product-Holder-smooth-funcs4}
		\l| (f\cdot g)_x(x^\prime) -f_x(x^\prime)\cdot g_x(x^\prime) \r|
		=
		\l|
		\sum_{s=\lfloor \beta \rfloor+1}^{2\lfloor \beta \rfloor}
		\sum_{s^\prime =0 }^s a_{s^\prime} b_{s-s^\prime} (x-x^\prime)^s
		\r|
		\le
		L_3 \|x-x^\prime\|_\infty^{\beta},\vspace{-0.2cm}
		\end{equation}
				for some $L_3>0$. Then, the result follows from putting together \eqref{eq:product-Holder-smooth-funcs1}, \eqref{eq:product-Holder-smooth-func2}, \eqref{eq:product-Holder-smooth-func3}, and \eqref{eq:product-Holder-smooth-funcs4}.
	\end{proof}

\color{black}

\section{Numerical Analysis}\label{appendix:numerics}

\subsection{Different parameters}
In this section we provide the auxiliary results of the numerical study in \S\ref{sec-numerics}, which also includes experiments with different values for smoothness parameter $\beta$ and horizon length $T$. Let $\hat{\mathcal{R}}^\pi(\sfP;T)$ be empirical cumulative regret of policy $\pi$ in the simulation then, the average Relative Loss, $\mathrm{RL}^\pi(\sfP;T) $, with respect the \texttt{ABSE}($\beta$) is defined as follows:
\[
\text{RL}^\pi(\sfP;T)
\coloneqq
\frac{\hat{\mathcal{R}}^\pi(\sfP;T) - \hat{\mathcal{R}}^{\texttt{ABSE}(\beta)}(\sfP;T)}{\hat{\mathcal{R}}^{\texttt{ABSE}(\beta)}(\sfP;T)}.
\]
Table \ref{table-exp211_fixed_T} (Table \ref{table-exp212_fixed_T}) provides the average cumulative regret and Relative Loss for  Setting I (Setting II) for a fixed horizon length $T=2\times 10^6$ and different smoothness parameters $\beta \in \{0.85, 0.9, 0.95\}$ ($\beta \in \{0.45, 0.5, 0.55\}$). Table \ref{table-exp211_fixed_beta} (Table \ref{table-exp212_fixed_beta}) provides the average cumulative regret and Relative Loss for  Setting I (Setting II) for a fixed smoothness parameter $\beta = 0.9$ ($\beta = 0.5$) and different horizon lengths $T \in \{2\times 10^6, 2.5\times 10^6, 3\times 10^6\}$. These tables show that the results of numerical analysis in $\S$\ref{sec-numerics} are consistent across different smoothness values and larger horizon lengths.

\begin{table}[H]
	\centering
	\scriptsize
	\makebox[\textwidth][c]{
	\begin{tabular}{|l|l|l|l|l|l|l|l|l|l|l|l|l|l|l|l|}
		\hline
		&
		\textbf{\texttt{ABSE}(\boldmath{$ \beta$}) } &
		\texttt{SACB}&
		\multicolumn{13}{c|}{\textbf{\texttt{ABSE}(\boldmath{$ \tilde \beta$}) }}
		\\
		\boldmath{$\beta$} &  &  & \textbf{0.4} & \textbf{0.45} & \textbf{0.5} & \textbf{0.55} & \textbf{0.6} & \textbf{0.65} & \textbf{0.7} & \textbf{0.75} & \textbf{0.8} & \textbf{0.85} & \textbf{0.9} & \textbf{0.95} & \textbf{1}
		\\
		\hline
		\boldmath{$0.85$} & 3.38 & 4.08 & 4.29 & 4.30 & 4.30 & 4.30 & 4.30 & 4.30 & 4.29 & 3.92& 4.13 & 3.38 & 3.10 & 2.98 & 2.73\\
		\hline
		\boldmath{$0.9$}& 1.88 & 2.30 & 2.72 & 2.72 & 2.72 & 2.72 & 2.72 & 2.73 & 2.59 & 2.42 & 2.39 & 2.01 & 1.88 & 1.71 & 1.58 \\
		\hline
		\boldmath{$0.95$}& 2.01 & 2.63 & 3.38 & 3.38 & 3.39 & 3.39 & 3.38 & 3.39 & 3.06 & 2.92 & 2.75 & 2.35 & 2.24 & 2.01 & 1.82
		\\
		\hline
		
	\end{tabular}
}
	\scriptsize
	\makebox[\textwidth][c]{
	\begin{tabular}{|l|l|l|l|l|l|l|l|l|l|l|l|l|l|l|}
		\hline
		&
		\texttt{SACB}&
		\multicolumn{13}{c|}{\textbf{\texttt{ABSE}(\boldmath{$ \tilde \beta$}) }}
		\\
		\boldmath{$\beta$} &  & \textbf{0.4} & \textbf{0.45} & \textbf{0.5} & \textbf{0.55} & \textbf{0.6} & \textbf{0.65} & \textbf{0.7} & \textbf{0.75} & \textbf{0.8} & \textbf{0.85} & \textbf{0.9} & \textbf{0.95} & \textbf{1}
		\\
		\hline
		\boldmath{$0.85$} & 20\% & 27\% & 27\% & 27\% & 27\% & 27\% & 27\% & 26\% & 16\% & 22\% & 0\% & -8\% & -11\% & -19\%\\
		\hline
		\boldmath{$0.9$} & 22\% & 44\% & 44\% & 45\% & 45\% & 45\% & 45\% & 37\% & 28\% & 27\% & 7\% & 0\% & -8\% & -15\%
		\\
		\hline
		\boldmath{$0.95$}& 31\% & 68\% & 68\% & 68\% & 68\% & 68\% & 68\% & 52\% & 45\% & 36\% & 17\% & 11\% & 0\% & -9\% \\
		\hline
		
	\end{tabular}
}
	\caption{\small Results for Setting I with fixed horizon $T=2\times 10^6$, and different choices of $\beta$. \textit{Above:} Average cumulative regret divided by $10^4$; \textit{Below:} Relative Loss. }
	\label{table-exp211_fixed_T}
\end{table}

\begin{table}[H]
	\centering
	\scriptsize
	\makebox[\textwidth][c]{
	\begin{tabular}{|l|l|l|l|l|l|l|l|l|l|l|l|l|l|l|l|}
		\hline
		&
		\textbf{\texttt{ABSE}(\boldmath{$ \beta$}) } &
		\texttt{SACB}&
		\multicolumn{13}{c|}{\textbf{\texttt{ABSE}(\boldmath{$ \tilde \beta$}) }}
		\\
		\boldmath{$\beta$} &  &  & \textbf{0.4} & \textbf{0.45} & \textbf{0.5} & \textbf{0.55} & \textbf{0.6} & \textbf{0.65} & \textbf{0.7} & \textbf{0.75} & \textbf{0.8} & \textbf{0.85} & \textbf{0.9} & \textbf{0.95} & \textbf{1}
		\\
		\hline
		\boldmath{$0.45$} & 0.75 & 1.27 & 0.55 & 0.75 & 1.04 & 1.46 & 2.08 & 2.91 & 4.04 & 5.55 & 5.92 & 5.92 & 5.92 & 5.91 & 5.91\\
		\hline
		\boldmath{$0.5$}& 0.81 & 1.16 & 0.48 & 0.62 & 0.81 & 1.10 & 1.52 & 2.09 & 2.88 & 3.89 & 4.08 & 4.06 & 4.07 & 4.08 & 4.07
		 \\
		\hline
		\boldmath{$0.55$}&  0.88 & 1.39 & 0.46 & 0.54 & 0.66 & 0.88 & 1.16 & 1.56 & 2.09 & 2.75 & 2.77 & 2.77 & 2.77 & 2.77 & 2.76\\
		\hline
		
	\end{tabular}
}
%
	\scriptsize
	\makebox[\textwidth][c]{
	\begin{tabular}{|l|l|l|l|l|l|l|l|l|l|l|l|l|l|l|}
		\hline
		&
		\texttt{SACB}&
		\multicolumn{13}{c|}{\textbf{\texttt{ABSE}(\boldmath{$ \tilde \beta$}) }}
		\\
		\boldmath{$\beta$} &  & \textbf{0.4} & \textbf{0.45} & \textbf{0.5} & \textbf{0.55} & \textbf{0.6} & \textbf{0.65} & \textbf{0.7} & \textbf{0.75} & \textbf{0.8} & \textbf{0.85} & \textbf{0.9} & \textbf{0.95} & \textbf{1}
		\\
		\hline
		\boldmath{$0.45$} & 68\% & -25\% & 0\% & 38\% & 94\% & 176\% & 286\% & 436\% & 635\% & 685\% & 685\% & 685\% & 683\% & 684\% \\
		\hline
		\boldmath{$0.5$} & 42\% & -41\% & -24\% & 0\% & 35\% & 86\% & 156\% & 252\% & 376\% & 399\% & 396\% & 397\% & 399\% & 397\%
		 \\
		\hline
		\boldmath{$0.55$}& 58\% & -47\% & -38\% & -24\% & 0\% & 32\% & 77\% & 138\% & 213\% & 214\% & 214\% & 214\% & 215\% & 214\% \\
		\hline
		
	\end{tabular}
}
	\caption{\small Results for Setting II with fixed horizon $T=2\times 10^6$, and different choices of $\beta$. \textit{Above:} Average cumulative regret divided by $10^3$t; \textit{Below:} Relative Loss. }
\label{table-exp212_fixed_T}
\end{table}

\begin{table}[H]
\makebox[\textwidth][c]{
	\scriptsize
	\begin{tabular}{|l|l|l|l|l|l|l|l|l|l|l|l|l|l|l|l|}
		\hline
		&
		\textbf{\texttt{ABSE}(\boldmath{$ \beta$}) } &
		\texttt{SACB}&
		\multicolumn{13}{c|}{\textbf{\texttt{ABSE}(\boldmath{$ \tilde \beta$}) }}
		\\
		\boldmath{$T$} &  &  & \textbf{0.4} & \textbf{0.45} & \textbf{0.5} & \textbf{0.55} & \textbf{0.6} & \textbf{0.65} & \textbf{0.7} & \textbf{0.75} & \textbf{0.8} & \textbf{0.85} & \textbf{0.9} & \textbf{0.95} & \textbf{1}
		\\
		\hline
		\boldmath{$2\times10^6$} & 1.88 & 2.30 & 2.72 & 2.72 & 2.72 & 2.72 & 2.72 & 2.73 & 2.59 & 2.42 & 2.39 & 2.01 & 1.88 & 1.71 & 1.58\\
		\hline
		\boldmath{$2.5\times10^6$}&  1.97 & 2.45 & 2.94 & 2.95 & 2.95 & 2.95 & 2.95 & 2.95 & 2.94 & 2.55 & 2.64 & 2.23 & 1.97 & 1.90 & 1.66\\
		\hline
		\boldmath{$3\times10^6$}& 2.09 & 2.60 & 3.14 & 3.14 & 3.14 & 3.14 & 3.14 & 3.11 & 3.14 & 2.71 & 2.70 & 2.44 & 2.09 & 2.03 & 1.74
		\\
		\hline
		
	\end{tabular}
}
\centering
	\makebox[\textwidth][c]{
	\scriptsize
	\begin{tabular}{|l|l|l|l|l|l|l|l|l|l|l|l|l|l|l|}
		\hline
		&
		\texttt{SACB}&
		\multicolumn{13}{c|}{\textbf{\texttt{ABSE}(\boldmath{$ \tilde \beta$}) }}
		\\
		\boldmath{$T$}  &  & \textbf{0.4} & \textbf{0.45} & \textbf{0.5} & \textbf{0.55} & \textbf{0.6} & \textbf{0.65} & \textbf{0.7} & \textbf{0.75} & \textbf{0.8} & \textbf{0.85} & \textbf{0.9} & \textbf{0.95} & \textbf{1}
		\\
		\hline
		\boldmath{$2\times10^6$} & 22\% & 44\% & 44\% & 45\% & 45\% & 45\% & 45\% & 37\% & 28\% & 27\% & 7\% & 0\% & -8\% & -15\%\\
		\hline
		\boldmath{$2.5\times10^6$} & 23\% & 49\% & 49\% & 49\% & 49\% & 49\% & 49\% & 48\% & 29\% & 33\% & 12\% & 0\% & -3\% & -15\%
		\\
		\hline
		\boldmath{$3\times10^6$} & 24\% & 49\% & 49\% & 49\% & 50\% & 50\% & 48\% & 50\% & 29\% & 28\% & 16\% & 0\% & -3\% & -16\% \\
		\hline
		
	\end{tabular}
}
		\caption{\small Results for Setting I with fixed smoothness $\beta=0.9$ and different values of horizon length $T$. \textit{Above:} Average cumulative regret divided by $10^4$; \textit{Below:} Relative Loss. }
	\label{table-exp211_fixed_beta}
\end{table}

\begin{table}[H]
\centering
\makebox[\textwidth][c]{
	\scriptsize
	\begin{tabular}{|l|l|l|l|l|l|l|l|l|l|l|l|l|l|l|l|}
		\hline
		&
		\textbf{\texttt{ABSE}(\boldmath{$ \beta$}) } &
		\texttt{SACB}&
		\multicolumn{13}{c|}{\textbf{\texttt{ABSE}(\boldmath{$ \tilde \beta$}) }}
		\\
		\boldmath{$T$} &  &  & \textbf{0.4} & \textbf{0.45} & \textbf{0.5} & \textbf{0.55} & \textbf{0.6} & \textbf{0.65} & \textbf{0.7} & \textbf{0.75} & \textbf{0.8} & \textbf{0.85} & \textbf{0.9} & \textbf{0.95} & \textbf{1}
		\\
		\hline
		\boldmath{$2\times10^6$} &  0.81 & 1.16 & 0.48 & 0.62 & 0.81 & 1.10 & 1.52 & 2.09 & 2.88 & 3.89 & 4.08 & 4.06 & 4.07 & 4.08 & 4.07\\
		\hline
		\boldmath{$2.5\times10^6$}& 0.85 & 1.42 & 0.50 & 0.64 & 0.85 & 1.17 & 1.61 & 2.23 & 3.08 & 4.21 & 5.07 & 5.08 & 5.08 & 5.09 & 5.07
		\\
		\hline
		\boldmath{$3\times10^6$}& 0.89 & 1.46 & 0.52 & 0.67 & 0.89 & 1.20 & 1.69 & 2.36 & 3.28 & 4.53 & 6.09 & 6.08 & 6.09 & 6.11 & 6.08 \\
		\hline
		
	\end{tabular}
}
\makebox[\textwidth][c]{
	\scriptsize
	\begin{tabular}{|l|l|l|l|l|l|l|l|l|l|l|l|l|l|l|}
		\hline
		&
		\texttt{SACB}&
		\multicolumn{13}{c|}{\textbf{\texttt{ABSE}(\boldmath{$ \tilde \beta$}) }}
		\\
		\boldmath{$T$}  &  & \textbf{0.4} & \textbf{0.45} & \textbf{0.5} & \textbf{0.55} & \textbf{0.6} & \textbf{0.65} & \textbf{0.7} & \textbf{0.75} & \textbf{0.8} & \textbf{0.85} & \textbf{0.9} & \textbf{0.95} & \textbf{1}
		\\
		\hline
		\boldmath{$2\times10^6$} & 42\% & -41\% & -24\% & 0\% & 35\% & 86\% & 156\% & 252\% & 376\% & 399\% & 396\% & 397\% & 399\% & 397\% \\
		\hline
		\boldmath{$2.5\times10^6$}& 67\% & -40\% & -24\% & 0\% & 37\% & 88\% & 162\% & 261\% & 394\% & 495\% & 496\% & 496\% & 498\% & 495\%
		\\
		\hline
		\boldmath{$3\times10^6$} & 64\% & -40\% & -23\% & 0\% & 35\% & 90\% & 165\% & 269\% & 409\% & 585\% & 584\% & 584\% & 587\% & 584\% \\
		\hline
		
	\end{tabular}
}
	\caption{\small Results for Setting II with fixed smoothness $\beta=0.5$ and different values of horizon length $T$. \textit{Above:} Average cumulative regret divided by $10^3$; \textit{Below:} Relative Loss. }
	\label{table-exp212_fixed_beta}
\end{table}

\subsection{Different payoff functions} \label{apendix-simulation-random-payoffs}
In addition, we consider a different Setting III with random payoffs:
	\[
	f_k(x)
	=
	\begin{cases}
	\frac{1  + L_1(\frac{1}{2})^\beta - L_1x^\beta}{2} & \text{ if } 0 \le  x \le \frac{1}{2};\\
	\frac{1}{2} + g(2x - 1)   & \text{ if } \frac{1}{2} <  x \le 1,
	\end{cases}
	\qquad \forall k \in \cK,
	\]
	where $g$ is a random fractional Brownian motion \citep{mandelbrot1968fractional} with parameter $H=\beta$ (which is guaranteed to generate a $\beta$-H\"{o}lder function almost surely), or a random Brownian Bridge (which is guaranted to generate a function that is $\beta$-H\"{o}lder for any $\beta < \frac{1}{2}$ almost surely), and scaled to the range $[-\frac{1}{2}, \frac{1}{2}]$.  We note that the first case in the definition of the above payoffs is to make sure payoffs are self-similar.

Table \ref{table-fbm} (Table \ref{table-bb}) provides the average cumulative regret and Relative Loss when payoffs are random fractional Brownian motions with $H = 0.5 $ (Brownian bridge) for a fixed horizon length $T=2\times 10^6$. We note that one could draw the same high-level conclusions that we made based on the results of Settings I and II, also through the results of this new setting. However, since the first two settings are designed to demonstrate the cost of smoothness under-estimation for \texttt{ABSE}, they tend to result in a larger gap between the performance of \texttt{ABSE}$(\tilde{\beta})$ and \texttt{ABSE}$(\beta)$. In addition, since randomly generated functions do not behave as abruptly as in Setting II, one needs less exploration to identify the sub-optimal arm in each region, and hence, over-estimation of smoothness, in the range $\tilde{\beta} < 1$ that we have considered, results in better performance.

\begin{table}[H]
	\centering
	\makebox[\textwidth][c]{
		\scriptsize
		\begin{tabular}{|l|l|l|l|l|l|l|l|l|l|l|l|l|}
			\hline
			&
			\textbf{\texttt{ABSE}(\boldmath{$ \beta$}) } &
			\texttt{SACB}&
			\multicolumn{10}{c|}{\textbf{\texttt{ABSE}(\boldmath{$ \tilde \beta$}) }}
			\\
			\boldmath{$T$} &  &  & \textbf{0.1} & \textbf{0.2} & \textbf{0.3} & \textbf{0.4} & \textbf{0.5} & \textbf{0.6} & \textbf{0.7} & \textbf{0.8} & \textbf{0.9} & \textbf{1}
			\\
			\hline
			\boldmath{$2\times10^6$} &  8.19 & 14.39 & 19.69 & 18.39 & 18.25 & 14.58 & 8.19 & 4.97 & 3.04 & 2.02 & 1.33 & 0.95 \\
			\hline

		\end{tabular}
	}
	\makebox[\textwidth][c]{
		\scriptsize
		\begin{tabular}{|l|l|l|l|l|l|l|l|l|l|l|l|l|}
			\hline
			&
			\texttt{SACB}&
			\multicolumn{10}{c|}{\textbf{\texttt{ABSE}(\boldmath{$ \tilde \beta$}) }}
			\\
			\boldmath{$T$} &  & \textbf{0.1} & \textbf{0.2} & \textbf{0.3} & \textbf{0.4} & \textbf{0.5} & \textbf{0.6} & \textbf{0.7} & \textbf{0.8} & \textbf{0.9} & \textbf{1}
			\\
			\hline
			\boldmath{$2\times10^6$} & 75\% & 140\% & 124\% & 122\% & 77\% & 0\% & -39\% & -62\% & -75\% & -83\% & -88\%\\
			\hline

		\end{tabular}
	}
	\caption{\small Results when payoffs are random fractional Brownian motion with $H=0.5$ \textit{Above:} Average cumulative regret divided by $10^4$; \textit{Below:} Relative Loss. }
	\label{table-fbm}
\end{table}

\begin{table}[H]
	\centering
	\makebox[\textwidth][c]{
		\scriptsize
		\begin{tabular}{|l|l|l|l|l|l|l|l|l|l|l|l|l|}
			\hline
			&
			\textbf{\texttt{ABSE}(\boldmath{$ \beta$}) } &
			\texttt{SACB}&
			\multicolumn{10}{c|}{\textbf{\texttt{ABSE}(\boldmath{$ \tilde \beta$}) }}
			\\
			\boldmath{$T$} &  &  & \textbf{0.1} & \textbf{0.2} & \textbf{0.3} & \textbf{0.4} & \textbf{0.5} & \textbf{0.6} & \textbf{0.7} & \textbf{0.8} & \textbf{0.9} & \textbf{1}
			\\
			\hline
			\boldmath{$2\times10^6$} &  5.59 & 5.60 & 6.02 & 5.66 & 5.62 & 5.60 & 5.59 & 5.24 & 4.26 & 3.54 & 2.67 & 2.22 \\
			\hline

		\end{tabular}
	}
	\makebox[\textwidth][c]{
		\scriptsize
		\begin{tabular}{|l|l|l|l|l|l|l|l|l|l|l|l|l|}
			\hline
			&
			\texttt{SACB}&
			\multicolumn{10}{c|}{\textbf{\texttt{ABSE}(\boldmath{$ \tilde \beta$}) }}
			\\
			\boldmath{$T$} &  & \textbf{0.1} & \textbf{0.2} & \textbf{0.3} & \textbf{0.4} & \textbf{0.5} & \textbf{0.6} & \textbf{0.7} & \textbf{0.8} & \textbf{0.9} & \textbf{1}
			\\
			\hline
			\boldmath{$2\times10^6$} & 0.12\% & 7.74\% & 1.26\% & 0.54\% & 0.18\% & 0.0\% & -6.25\% & -23.79\% & -36.7\% & -52.27\% & -60.16\%  \\
			\hline

		\end{tabular}
	}
	\caption{\small Results when payoffs are random Brownian bridges. \textit{Above:} Average cumulative regret divided by $10^4$; \textit{Below:} Relative Loss. }
	\label{table-bb}
\end{table}

\section{Adaptivity to smoothness with full-feedback} \label{appendix-full-feedback}
In this section, we consider a setting where at every time step, the agent observes the reward of both arms as opposed to the reward of the selected arm. We refer to this setting as the full-feedback setting. We establish that even with full feedback the optimal regret rate is characterized by \eqref{eq:opt_rate}:

\begin{theorem}
	Fix some \Holder\  exponent $0< \beta \le 1$, some margin parameter $0 < \alpha \le \max\l\{1,\frac{1}{\beta}\r\}$ and covariate dimension $d$. Then, in the full-feedback setting, for any horizon length $T \ge 1$ and admissible policy $\pi$, the worst-case regret is lower bounded as follows
	\[
	\sup_{\sfP \in \cP(\beta, \alpha,d)}\mathcal{R}^\pi(\sfP;T)
	\ge
	C T^{\eta(\beta, \alpha, d)},
	\]
	where $\eta(\beta, \alpha, d)$ is given in \eqref{eq:opt_rate} and $C$ is independent of $T$.
\end{theorem}
\begin{proof}
	The following proof  adopts the lower bound proof in \cite{rigollet2010nonparametric} and  consists of the following steps:
	
	\paragraph{Step 1 (Preliminaries).} Recall the definition of inferior sampling rate $\mathcal{S}^\pi(\sfP;T)$ in \eqref{eq:inferior-sampling-def}. By Lemma \ref{lemma-regret-inferior-sampling-rate}, it would suffice to show that there exists a problem instances $\sfP \in  \cP(\beta, \alpha,d)$ such that
	\[
	\mathcal{S}^\pi(\sfP;T)
	\ge
	C^\prime T^{1-\frac{\beta \alpha}{2\beta + d}},
	\]
	for some $C^\prime$ independent of $T$.  Fix a uniform covariate distribution $\bm{\mathrm{P}}_X$. For any policy $\pi$ and function $f:[0,1]^d \rightarrow [0,1]$, denote by $\cS^\pi(f;T)$ the inferior sampling rate of $\pi$ when $\bm{\mathrm{P}}_X$ is the covariate distribution, $\rElr{Y_{1,t} \;\middle|\; X_t} = f(X_t)$, and $\rElr{Y_{2,t} \;\middle|\; X_t} = \frac{1}{2}$.  Notably, the oracle policy $\pi_f^\ast$ is given by $\pi_f^\ast(x) = 2-\Indlr{f(x) \ge \frac{1}{2}}$. We further denote by $\rP_{\pi,f}$ and $\rE_{\pi,f}$ the corresponding probability and expectation.
	
	\paragraph{Step 2 (Problem construction).}  Let $\Delta = T^{-\frac{\beta}{2\beta + d}}$, $M=\mu \Delta^{\alpha - \frac{d}{\beta}}$ for some constant $\mu > 0$, and \vspace{-0.2cm}
	\begin{align*}
	\psi(x)
	\coloneqq
	\begin{cases}
	\l| 1-\|x\|_\infty \r|^\beta & \text{if } 0\le \|x\|_\infty \le 1;\\
	0 & \text{o.w.}
	\end{cases}
	\end{align*}
	Note that $\psi\in \cH_{\rR^d}(\beta,1)$. Define the hypercube $H_0 \coloneqq [0, \Delta^{\frac{\alpha}{d}}]^d$, and let grid $G$ partition this hypercube into $M $ disjoint hypercubes $\l( H_m \r)_{m\in \{1,\dots,M\}}$ of equal side-length. Let $a_m\in \rR^d, m\in \{1,\dots,M\}$, be the center of the hypercube $H_m$. Define the function\vspace{-0.15cm}
	\[
	\phi_m(x) \coloneqq \Delta \cdot \psi_{\gamma}\l(  \Delta^{-\frac{1}{\beta}} [x-a_m]\r) .\vspace{-0.15cm}
	\]
	By Lemmas \ref{lemma-multip-presrves-smoothness}, $\phi_m \in \cH(\beta,L)$ since $C_{\phi}\le L$.  For a given vector $\vectorgreek{\omega} \in \{-1,1\}^M$ with elements $\omega_m$, define the first arm's payoff function as follows:
	\[
	f^{\vectorgreek{\omega}}(x)
	\coloneqq
	\frac{1}{2} + \sum_{m=1}^M C_\phi \omega_m\phi_m(x).
	\]
	
	\paragraph{Step 3 (Desirable event).} For $m \in \{1,\dots,M\}$, define
	$
	Q_{m}\coloneqq \sum_{t=1}^T  \Indlr{X_t \in H_m} \eqqcolon  \sum_{t=1}^T Z_{m,t}
	$
	to be the number of times periods at which the realized covariates belong to the hypercube $H_m$. Define
	$
	\cA\coloneqq \l\{\exists m \in \{1,\dots,M\}:  Q_{m} < \frac{1}{2}T\Delta^{\frac{d}{\beta}} \text{ or } Q_{m} > \frac{3}{2}T\Delta^{\frac{d}{\beta}} \r\}
	$
	to be the event where $Q_{m}$ is less than $\frac{1}{2}T\Delta^{\frac{d}{\beta}} $ or larger than $\frac{3}{2}T\Delta^{\frac{d}{\beta}} $ for at least one value of $m\in\{1,\dots,M\}$. Note that\vspace{-0.15cm}
	\[
	\rPlr{\cA}
	\le
	\sum_{m=1}^{M}
	\rPlr{Q_{m} < \frac{1}{2}T\Delta^{\frac{d}{\beta}} } + \rPlr{Q_{m} < \frac{3}{2}T\Delta^{\frac{d}{\beta}} }.\vspace{-0.1cm}
	\]
	In order to bound each of the summands on the right hand side of the above inequality, one may apply Bernstein's inequality in Lemma \ref{lemma-Bernstein-inequality} to $Q_m$: Note that since $ \Delta^{\frac{d}{\beta}}  T\le \rE Z_{m,t} \le T\Delta^{\frac{d}{\beta}} $, $|Z_{m,t}|\le 1$, and $\Var Z_{m,t} \le \rE Z_{m,t}^2 \le 2\bar{\rho}\Delta^{\frac{d}{\beta}} $, one obtains:\vspace{-0.2cm}
	\begin{align*}
	\rPlr{\cA}
	&
	\overset{(a)}{\le}
	M\exp\l(-c_1T\Delta^{\frac{d}{\beta}} /5\r)
	\\
	&
	\overset{}{\le}
	\mu T^{\frac{\alpha\beta -d}{2\beta +d}}
	\exp\l( -c_2 T^{\frac{2\beta}{2\beta+d}} \r)
	\overset{}{\le}
	c_3 T^{-3},
	\end{align*}
	for large enough $T$ and constants $c_1,c_2,c_3>0$, where (a) follows from the definition of $M$ and $\Delta$.
	For any problem instance $\sfP$ and horizon length $T$, denote the inferior sampling rate of $\pi$ when the event $\cA$ does not occur by\vspace{-0.15cm}
	\[
	\bar{\cS}^\pi(\sfP; T) \coloneqq \mathbb{E}^\pi \left[ \sum\limits_{t=1}^T \Indlr{ f_{\pi^\ast_t}(X_t) \neq f_{\pi_t}(X_t)} \;\middle |\; \bar{\cA} \; \right].\vspace{-0.15cm}
	\]
	 Note that\vspace{-0.15cm}
	\begin{equation*}\label{eq-inf-sampling-rate-A-fails-full-feed}
	\l( 1- \rPlr{\cA} \r)\bar{\cS}^\pi(\sfP; T)
	\le
	\cS^\pi(\sfP; T)
	\le
	\bar{\cS}^\pi(\sfP; T) + T \rPlr{\cA},\vspace{-0.2cm}
	\end{equation*}
	which implies that\vspace{-0.05cm}
	\begin{equation}\label{eq-relationship-S-and-S-bar-full-feed}
	\l| \bar{\cS}^\pi(\sfP; T)
	-
	\cS^\pi(\sfP; T)\r|
	\le c_4T^{-2}.\vspace{-0.05cm}
	\end{equation}
	For the rest of the proof, all probabilities and expectations will be computed conditional on $\bar \cA$.

	\paragraph{Step 4 (Lower bounding information sampling rate).} 	One has
	\begin{align}\label{eq-full-feed-lower-1}
		\sup_{\vectorgreek{\omega} \in \{-1,1\}^M}\bar{\cS}^\pi(f^{\vectorgreek{\omega}}; T)
		&=
		\sup_{\vectorgreek{\omega} \in \{-1,1\}^M}
		\sum_{t = 1}^T \rE_{\pi,f^\vomega} \l[\Indlr{2\pi_t(X_t) \neq 3 - \sgn(f^{\vectorgreek{\omega}}(X_t))} \;\middle |\; \bar{\cA} \; \r]
		\nonumber
		\\
		&=
		\sup_{\vectorgreek{\omega} \in \{-1,1\}^M}
		\sum_{m=1}^M\sum_{t = 1}^T \rE_{\pi,f^\vomega} \l[\Indlr{2\pi_t(X_t) \neq 3 - \omega_m, X_t \in H_m} \;\middle |\; \bar{\cA} \; \r]
		\nonumber
		\\
		&\ge
		\frac{1}{2^M}
		\sum_{m=1}^M\sum_{t = 1}^T\sum_{\vectorgreek{\omega} \in \{-1,1\}^M} \rE_{\pi,f^\vomega} \l[\Indlr{2\pi_t(X_t) \neq 3 - \omega_m, X_t \in H_m} \;\middle |\; \bar{\cA} \; \r]
	\end{align}
	Now observe that the summation $\sum_{\vectorgreek{\omega} \in \{-1,1\}^M}[\dots]$ can be decomposed as
	\[
	R_{m,t}
	\coloneqq
	\sum_{\vectorgreek{\omega}_{[-m]} \in \{-1,1\}^{M-1}} \sum_{i \in _\{-1,1\}} \rE_{\pi,f^{\vomega_{[-m]}^i}} \l[\Indlr{2\pi_t(X_t) \neq 3 - i, X_t \in H_m} \;\middle |\; \bar{\cA} \; \r],
	\]
	where $\vomega_{[-m]} = (\omega_1, \dots, \omega_{m-1}, \omega_{m+1},\dots, \omega_M)$ and $\vomega_{[-m]}^{i} = (\omega_1, \dots, \omega_{m-1}, i, \omega_{m+1},\dots, \omega_M)$ for $i\in \{-1, 1\}$. Using Lemma \ref{Tsybakov2008introduction-lemma-2.6}, and denoting by $\bar{P}_{X}^m$ the conditional probability $\rPlr{\cdot \;\middle |\; \bar{\cA}, X_t \in H_m \; }$, one has
	\begin{align}\label{eq-full-feed-lower-2}
		\sum_{i \in _\{-1,1\}} \rE_{\pi,f^{\vomega_{[-m]}^i}} \l[\Indlr{2\pi_t(X_t) \neq 3 - i, X_t \in H_m} \;\middle |\; \bar{\cA} \; \r]
		&=
		\Delta^{\frac{d}{\beta}} \sum_{i \in _\{-1,1\}} \rE_{\pi,f^{\vomega_{[-m]}^i}} \l[\bar{P}_{X}^m\l\{2\pi_t(X_t) \neq 3 - i\r\}\;\middle |\; \bar{\cA} \;  \r]
		\nonumber
		\\
		&\ge
		\Delta^{\frac{d}{\beta}} \exp\l[\mathrm{KL}\l(\bar{\rP}^t_{\pi,f^{\vomega_{[-m]}^{-1}}} \times \bar{P}_{X}^m, \bar{\rP}^t_{\pi,f^{\vomega_{[-m]}^1}}\times \bar{P}_{X}^m\r)\r]
		\nonumber
		\\
		&=
		\Delta^{\frac{d}{\beta}} \exp\l[\mathrm{KL}\l(\bar{\rP}^t_{\pi,f^{\vomega_{[-m]}^{-1}}}, \bar{\rP}^t_{\pi,f^{\vomega_{[-m]}^1}}\r)\r],
	\end{align}
	where the probability measures $\bar{\rP}_{\pi,f^{\vomega_{[-m]}^{i}}}$, for $ i\in \{-1,1\}$ are conditional on the event $\bar{\cA}$. For any $t=2,\dots, T$, let $\cF_t$ denote the $\sigma$-algebra generated by the information available at time $t$ immediately after observing $X_t$, i.e., $\cF_t = \sigma\l(X_t,\l(\pi_s, X_s,Y_{\pi_s,s}\r)_{s=1,\dots,t-1} \r)$. Define the conditional distribution $\bar{\rP}^{\cdot | \cF_t}_{\pi,f}$ of the random couple $(X_t, Y_{\pi_t,t})$ conditioned on $\cF_t$ and the event $\bar{\cA}$. Applying the chain rule for KL divergence, we find that for any $t=1,\dots, T$ and any functions $f,g: \cX \rightarrow [0,1]$, we have
	\begin{align*}
		\mathrm{KL}\l(\bar{\rP}^t_{\pi,f}, \bar{\rP}^t_{\pi,g}\r)
		&=
		\mathrm{KL}\l(\bar{\rP}^{t-1}_{\pi,f}, \bar{\rP}^{t-1}_{\pi,g}\r) +
		\bar{\rE}^{t-1}_{\pi,f}\l[	\mathrm{KL}\l(\bar{\rP}^{\cdot | \cF_t}_{\pi,f}, \bar{\rP}^{\cdot | \cF_t}_{\pi,g}\r) \r]
		\\
		&=
		\mathrm{KL}\l(\bar{\rP}^{t-1}_{\pi,f}, \bar{\rP}^{t-1}_{\pi,g}\r) +
		\bar{\rE}^{t-1}_{\pi,f}\l[	\mathrm{KL}\l(\bar{\rP}^{Y_{\pi_t,t} | \cF_t}_{\pi,f}, \bar{\rP}^{Y_{\pi_t,t} | \cF_t}_{\pi,g}\r) \r],
	\end{align*}
	where $\bar{\rP}^{Y_{\pi_t,t} | \cF_t}_{\pi,f}$ denotes the conditional distribution of $Y_{\pi_t,t}$ given $\cF_t$ and $\bar{A}$. Since for any $\vomega$ we have $\rElr{Y_{\pi_t,t}} \in (\frac{1}{2} - \tau, \frac{1}{2} + \tau)$ for some $\tau \in (0,\frac{1}{2})$, we can apply Lemma 4.1 in \cite{rigollet2010nonparametric} to derive the following bound:
	\begin{align*}
		\l[	\mathrm{KL}\l(\bar{\rP}^{Y_{\pi_t,t} | \cF_t}_{\pi,f^{\vomega_{[-m]}^{-1}}}, \bar{\rP}^{Y_{\pi_t,t} | \cF_t}_{\pi,f^{\vomega_{[-m]}^{1}}}\r) \r]
		&\le
		\frac{1}{\kappa^2} \l(f^{\vomega_{[-m]}^{-1}} - f^{\vomega_{[-m]}^{1}} \r)^2 \Indlr{\pi_t(X_t)=1, X_t \in H_m}
		\\
		&\le
		\frac{4C_\phi^2\Delta^2}{\kappa^2} \l(f^{\vomega_{[-m]}^{-1}} - f^{\vomega_{[-m]}^{1}} \r)^2 \Indlr{X_t \in H_m}.
	\end{align*}
	By induction, the last two displays yield that for any $t = 1,\dots, T$,
	\[
	\mathrm{KL}\l(\bar{\rP}^t_{\pi,f^{\vomega_{[-m]}^{-1}}}, \bar{\rP}^t_{\pi,f^{\vomega_{[-m]}^1}}\r)
	\le
	\frac{4C_\phi^2\Delta^2}{\kappa^2} \sum_{t=1}^T \Indlr{X_t \in H_m}
	\overset{(a)}{\le}
	\frac{6\bar{\rho}C_\phi^2\Delta^2T\Delta^{\frac{d}{\beta}}}{\kappa^2}
	\overset{(b)}{=}
	\frac{6\bar{\rho}C_\phi^2}{\kappa^2},
	\]
	where (a) follows since we assume event $\bar{\cA}$ holds and (b) follows from the definition of $\Delta$. Combining the above inequality and \eqref{eq-full-feed-lower-2} one has
	\[
	R_{m,t}
	\ge
	\frac{2^{M-1}\Delta^{\frac{d}{\beta}}}{4} \exp\l(-	\frac{6\bar{\rho}C_\phi^2}{\kappa^2}\r).
	\]
	This inequality along with \eqref{eq-full-feed-lower-1}, results in the desired lower bound for the inferior sampling rate:
	\[
	\sup_{\vectorgreek{\omega} \in \{-1,1\}^M}\bar{\cS}^\pi(f^{\vectorgreek{\omega}}; T)
		\ge
		\frac{2^{M-1}MT\Delta^{\frac{d}{\beta}}}{4\cdot 2^M} \exp\l(-	\frac{6\bar{\rho}C_\phi^2}{\kappa^2}\r)
		\ge C^\prime T^{1-\frac{\beta \alpha}{2\beta+d}},
	\]
	for some constant $C^\prime$ independent of $T$. This inequality along with \eqref{eq-relationship-S-and-S-bar-full-feed} gives the desired result for large enough $T$. This concludes the proof.
\end{proof}

We next detail a meta policy (see Algorithm \ref{algorithm-FFAB}) that is smoothness-adaptive under the full-feedback assumption. This policy  integrates some collection of non-adaptive policies $\{\pi_0(\beta_0)\}_{\beta_0 \in [\lbeta , \ubeta]}$ that are rate-optimal under accurate tuning of the smoothness parameter.

The key idea of this policy is to consider a collection of smoothness parameters $\beta_j
\leftarrow
\lbeta + j \delta \beta_T
$ over interval $[\lbeta, \ubeta]$ and to initialize a separate policy $\pi_0(\beta_j)$ for each smoothness parameter. Our meta policy keeps track of the regret that each of these policies have incurred up to each time step $t$ (which can be computed due to the full-feedback assumption) and selects the arm suggested by the policy that has incurred minimum regret so far. Let $j^\ast \coloneqq \max\l\{j: \beta_j \le \beta\r\}$ be the largest $j$ for which $\beta_j$ is smaller than the true smoothness parameter $\beta$. We know that policy $\pi_0(\beta_{j^\ast})$ will incur a regret that is near-optimal. As a result, by following the policy that has incurred minimum regret up to each time step, we will make sure that the cumulative regret of our meta policy will not exceed the regret of $\pi_0(\beta_{j^\ast})$ up to a multiplicative factor which depends on the number of policies $\pi_0(\beta_{j})$. We next formalize this policy.

\bigskip
\begin{algorithm}[H]\footnotesize
	\caption{Full-Feedback Adaptive Bandits}\label{algorithm-FFAB}
	\textbf{Input:} Set of non-adaptive policies $\{\pi_0(\beta_0)\}_{\beta_0 \in [\lbeta , \ubeta]}$, horizon length $T$, minimum and maximum smoothness exponents $\underline{\beta}$ and $\ubeta$, a tuning parameter $\gamma$.
	\\
	\textbf{Initialize:} $\delta\beta_T \leftarrow  \frac{(2\overline{ \beta} + d)^2}{(\underline{\beta}+d-1)\log T}$, $
	\beta_j
	\leftarrow
	\lbeta + j \delta \beta_T
	$ and $R_{j, 1} \leftarrow 0$ for $j\in \l\{0, 1, \dots, \left\lfloor \frac{(\underline{\beta}+d-1)(\ubeta - \lbeta)\log T}{(2\overline{ \beta} + d)^2} \right\rfloor\r\}$\\
	\For{$t = 1, \dots$}{
		Determine the policy with minimum regret so far: $j_t \leftarrow \argmin R_{j, t}$ \\		
		Pull the arm suggested by the policy $\pi_0(\beta_{j_t})$: $\pi_t \leftarrow \pi_0(\beta_{j_t}, X_t)$ \\	
		Observe feedback $Y_{k,t}$ for all $k \in \cK$	\\
		Advance the policy $\pi_0(\beta_{j_t})$ by feeding observation $Y_{\pi_t,t}$ into it\\
		Update the regret of the policy $\pi_0(\beta_{j_t})$: $R_{j,t} \leftarrow R_{j, t - 1} + \Indlr{j_t = j} \cdot\l( \max_{k\in \cK} Y_{k,t} - Y_{\pi_t,t} \r)$
			}
\end{algorithm}\normalsize
\bigskip

 The next theorem establishes that, when coupled with appropriate off-the-shelf non-adaptive policies, the above meta policy guarantees optimal regret rate up to poly-logarithmic terms, and smoothness-adaptive performance as stated in Definition \ref{def:adaptivity}.

\begin{theorem}[Smoothness-adaptive policy under full-feedback]\label{theorem-GSE-ABSE-regret-bound}
	Let $\pi$ be the \texttt{SACB} policy detailed in Algorithm~\ref{algorithm-FFAB}, and let $\{\pi_0(\beta_0)\}_{\beta_0 \in [\lbeta , \ubeta]}$ be a set of non-adaptive policies such that if initialized with the true smoothness parameter, for any $\lbeta \le \beta_0 \le \ubeta$, $\alpha \le \frac{1}{\min\{1,\beta_0 \}}$, and $T\ge 1$, it satisfies the following upper bound on the regret:\vspace{-0.15cm}
	\[
	\sup_{\sfP \in \cP(\beta_0, \alpha, d)}\mathcal{R}^{\pi_0(\beta_0)}(\sfP;T)
	\le
	\bar{C}_0 \l( \log T \r)^{\iota_0(\beta_0, \alpha, d)} T^{\zeta(\beta_0, \alpha,d)},\vspace{-0.05cm}
	\]
	for some $\iota_0(\beta_0, \alpha, d)$ and a constant $\bar{C}_0>0$ that is independent of $T$, where the function $\zeta(\beta_0, \alpha,d)$ is given in (\ref{eq:opt_rate}).
	Then, there exists $\bar{C}>0$, such that for any problem instance $\sfP\in \cP(\beta, \alpha, d)$ with $\lbeta \le \beta \le \ubeta$, $\alpha \le \frac{1}{\min\{1,\beta \}}$ and any horizon length $T$:
	\vspace{-0.10cm}
	\[
	\mathcal{R}^\pi(\sfP;T)
	\le
	\bar{C}  T^{\zeta(\beta, \alpha,d)} \l( \log T \r)^{1+\iota_0(\beta - \delta\beta, \alpha, d)} .\vspace{0.05cm}
	\]
\end{theorem}

\begin{proof}
	For each $j$, let $\bar{t}_j$ be the last time step the corresponding policy has been selected. Recall the definition of $j^\ast \coloneqq \max\l\{j: \beta_j \le \beta\r\}$ which is the largest $j$ for which $\beta_j$ is smaller than the true smoothness parameter $\beta$. For each $j$, one has\vspace{-0.15cm}
	\[
	\rElr{R_{j, \bar{t}_j}}
	 \overset{(a)}{\le}
	  \rElr{R_{j, \bar{t}_j - 1}} + 1
	  \overset{(b)}{\le}
	  \rElr{R_{j^\ast, \bar{t}_j - 1}} + 1
	  \le
	  \rElr{R_{j^\ast, \bar{t}_{j^\ast} - 1}} + 1
	  \le
	  \bar{C}_0 \l( \log T \r)^{\iota_0(\beta_{j^\ast}, \alpha, d)} T^{\zeta(\beta_{j^\ast}, \alpha,d)} + 1.\vspace{-0.15cm}
	\]
	Noting $\beta_{j^\ast} \ge \beta - \delta\beta_T$ and that $\mathcal{R}^\pi(\sfP;T) = \sum_{j} \rElr{R_{j, \bar{t}_j}}$, the result follows.
\end{proof}
\color{black}

\end{document}